\theoremstyle{plain}
\newtheorem{theorem}{Theorem}[section]
\newtheorem{proposition}[theorem]{Proposition}
\newtheorem{lemma}[theorem]{Lemma}
\newtheorem{corollary}[theorem]{Corollary}
\theoremstyle{definition}
\newtheorem{definition}[theorem]{Definition}
\newtheorem{assumption}[theorem]{Assumption}
\theoremstyle{remark}
\newtheorem{remark}[theorem]{Remark}
\crefname{assumption}{Assumption}{Assumptions}
\newcommand{\bfA}{\mathbf{A}}
\newcommand{\bfB}{\mathbf{B}}
\newcommand{\bfD}{\mathbf{D}}
\newcommand{\bfU}{\mathbf{U}}
\newcommand{\bfV}{\mathbf{V}}
\newcommand{\bfW}{\mathbf{W}}
\newcommand{\bfX}{\mathbf{X}}
\newcommand{\ind}{\mathbf{1}}
\newcommand{\bfu}{\mathbf{u}}
\newcommand{\bfw}{\mathbf{w}}
\newcommand{\bfx}{\mathbf{x}}
\newcommand{\bfy}{\mathbf{y}}
\DeclareMathOperator{\var}{Var}
\newcommand{\R}{\mathbb{R}}
\def\bi{\begin{itemize}}
	\def\ei{\end{itemize}}
\def\be{\begin{equation}}
	\def\ee{\end{equation}}
\def\bea{\begin{eqnarray}}
	\def\eea{\end{eqnarray}}
\def\({\left(}
\def\){\right)}
\def\1{\mathbf{1}}
\def \bx{\mathbf{x}}
\def \bX{\mathbf{X}}
\def \by{\mathbf{y}}
\DeclareMathOperator{\uniform}{Uniform}
\def \din{{d}}
\DeclareMathOperator{\cvd}{\mathtt{d}}
\let\cite\citep
\DeclareMathOperator{\cvdto}{\overset{\cvd}{\to}}
\newcommand{\nnodes}{m}
\newcommand{\relu}{\sigma}
\newcommand{\brelu}{\boldsymbol\relu}
\newcommand{\ntker}{\Theta_{\nnodes}}
\newcommand{\ntgram}{\widehat\Theta_{\nnodes}}
\newcommand{\ntgrampart}[1]{\ntgram^{(#1)}}
\newcommand{\ntkerlim}{\Theta^\ast}
\newcommand{\ntgramlim}{\widehat\Theta^\ast}
\DeclareMathOperator{\idmat}{I}
\DeclareMathOperator{\eigmin}{\operatorname{eig}_{\min}}
\DeclareMathOperator{\eigmax}{\operatorname{eig}_{\max}}
\DeclareMathOperator{\iid}{\overset{\text{iid}}{\sim}}
\newcommand{\paramall}{\bfW}
\newcommand{\param}{\bfw}
\newcommand{\paramj}{\param_{j}}
\newcommand{\paramtj}{{\param_{tj}}}
\newcommand{\paramsj}{{\param_{sj}}}
\newcommand{\lamj}{\lambda_{\nnodes,j}}
\newcommand{\lamjpart}[1]{\lamj^{(#1)}}
\newcommand{\aj}{a_j}
\DeclareMathOperator{\diag}{diag}
\newcommand{\tillam}{\widetilde\lambda}
\DeclareMathOperator{\trace}{\text{trace}}
\newcommand{\bfbeta}{\boldsymbol{\beta}}
\title{Over-parameterised Shallow Neural Networks with Asymmetrical Node Scaling: Global Convergence Guarantees and Feature Learning}
\author{
	\name Fran\c{c}ois Caron \email caron@stats.ox.ac.uk \\
	\addr Department of Statistics \\
	University of Oxford, United Kingdom
	\AND
	\name Fadhel Ayed \email fadhel.ayed@gmail.com \\
	\addr Huawei Technologies \\
	Paris, France
	\AND
	\name Paul Jung \email paul.jung@gmail.com \\
	\addr Department of Mathematics \\
	Fordham University, USA
	\AND
	\name Hoil Lee \email hoil.lee@alumni.kaist.ac.kr \\
	\addr Samsung SDS \\
	South Korea
	\AND
	\name Juho Lee \email juholee@kaist.ac.kr \\
	\addr Kim Jaechul Graduate School of AI \\
	KAIST, South Korea
	\AND
	\name Hongseok Yang \email hongseok.yang@kaist.ac.kr \\
	\addr School of Computing \\
	KAIST, South Korea
}
\begin{document}

	\maketitle
	\begin{abstract}
		We consider gradient-based optimisation of wide, shallow neural networks, where the output of each hidden node is scaled by a positive parameter. The scaling parameters are non-identical, differing from the classical Neural Tangent Kernel (NTK) parameterisation.
		We prove that for large such neural networks, with high probability, gradient flow and gradient descent converge
		to a global minimum \emph{and} can learn features in some sense, unlike in the NTK parameterisation. We perform experiments illustrating our theoretical results and discuss the benefits of such scaling in terms of prunability and transfer learning.
	\end{abstract}

	\section{Introduction}
	\label{introduction}
	The training of neural networks typically involves the minimisation of a non-convex objective function. However, first-order optimisation methods, such as gradient descent (GD) and its variants, often find solutions with low training error.
	To gain a better understanding of this phenomenon, one fruitful direction of research has been to analyse properties of GD training
	of over-parameterised, large-width neural networks; that is, neural networks where the number $\nnodes$ of hidden nodes in a
	given layer is very large. In particular, under a ``$\sqrt{1/\nnodes}$'' scaling of the hidden nodes, \citet{Jacot2018} have
	shown that, as the number of nodes $m$ tends to infinity, the solution obtained by GD achieves zero training error,
	and coincides with that of kernel regression under a so-called limiting \textit{Neural Tangent Kernel} (NTK). Under the same node scaling, called \textit{NTK scaling}, quantitative theoretical guarantees for the global convergence and generalisation
	properties have then been obtained for large (but finite) width neural networks \cite{Du2019,Du2019a,Oymak2020,Arora2019a,Bartlett2021}.
	However, it has been noted in a number of articles \cite{Chizat2019,Yang2019,Arora2019a, Yang2021} that under NTK scaling,
	feature learning does not occur and GD training is performed in a \textit{lazy-training} regime, in contrast with the typical
	feature-learning regime exhibited in deep neural networks.

	\paragraph{Main contributions.}  We investigate global convergence properties and feature learning in gradient-type training of large-width feedforward neural networks (FFNNs) under a more general asymmetrical node scaling. In particular, each hidden node $j=1,\ldots,\nnodes$ has a fixed node-specific scaling $\sqrt{\lamj}$ with
	\begin{align}
		\lamj=
		\gamma \cdot \frac{1}{\nnodes}+(1-\gamma) \cdot \frac{\tillam_j}{\sum_{k=1}^{\nnodes} \tillam_k} \label{eq:lamj}
	\end{align}
	where $\gamma\in[0,1]$ and $1\geq \tillam_1\geq \tillam_2\geq \ldots \geq 0$ are nonnegative fixed scalars with $\sum_{j=1}^\infty \tillam_j= 1$. Note that $\gamma=1$ corresponds to the $\sqrt{1/m}$ NTK scaling. If $\gamma<1$, the node scaling is necessarily asymmetrical for large-width networks.
	Two typical examples of the scalars $(\tillam_j)_{j \geq 1}$ are (a) $\tillam_j=6\pi^{-2}j^{-2}$
	for all $j\geq1$, and (b) $\tillam_1 = \ldots = \tillam_K =1/K$ and $\tillam_j=0$ for all $j> K$, for a fixed $K$.
	
	We consider a shallow FFNN with a smooth activation function and without bias, where the first layer weights are trained
	via gradient flow or descent and empirical risk minimisation under the $\ell_2$ loss. We show that, under
	similar assumptions as in \citet{Du2019,Du2019a} on the data, activation function, and initialisation, when
	the number of hidden nodes $\nnodes$ is sufficiently large: (i) if $\gamma>0$, the training error goes to $0$
	at a linear rate with high probability; and (ii) feature learning (in the sense of the definitions given in \Cref{sec:defFL}) occurs if and only if $\gamma<1$.
	We provide numerical experiments which illustrate the theoretical results and demonstrate empirically that such node-scaling is also useful for pruning and for transfer learning.
	
	\paragraph{Organisation of the paper.} \Cref{sec:related} discusses related work.  \Cref{sec:problemsetup}
	introduces the FFNN model with asymmetrical node scaling, gradient flow or gradient descent updates,
	and the main assumptions on the data, activation function, and initialisation. \Cref{sec:limitingNTGinit}
	discusses the properties of the NTK of such a model at initialisation, and its infinite-width limit.
	\Cref{sec:globalconvergence,sec:globalconvergenceGD} derive our main results on the convergence to a global minimum
	of gradient flow and gradient descent and sketch their proofs. \Cref{sec:featurelearning} gives the main results
	regarding feature learning. \Cref{sec:experiments} describes our experiments on simulated and real datasets,
	whose results illustrate our theoretical results and their potential applications. The Supplementary Material
	contains detailed proofs, as well as results on additional convergence of gradient flow and feature learning
	when using the ReLU activation function.

	\paragraph{Notations.} For an integer $n\geq 1$, let $[n]=\{1,\ldots,n\}$. 	For a multivariate real-valued function $f:\mathbb R^n\to\mathbb R$, the gradient $\nabla_\mathbf{v} f(\mathbf{v})$ is the $n$-dimensional column vector of partial derivatives $\nabla_\mathbf{v} f(\mathbf{v})= (\frac{\partial f}{\partial v_1}(\mathbf{v}),\ldots,\frac{\partial f}{\partial v_n}(\mathbf{v}))^\top$ where $\mathbf{v}=(v_1,\ldots,v_n)^\top$.
	For a square matrix $B$, we denote its minimum and maximum eigenvalues by $\eigmin(B)$
	and $\eigmax(B)$, respectively. For a vector $\mathbf{v} \in\mathbb R^n$, we write $B=\diag(\mathbf{v})$
	for the $n$-by-$n$ diagonal matrix with $B_{ii}=v_i$ for $i \in [n]$.

	\section{Related work}
	\label{sec:related}
	\textbf{Large-width FFNNs.} The analysis of large-width FFNNs goes back to \citet{Neal1996} who showed the connection between Gaussian processes and FFNNs in the large-width limit. Recent work has explored this connection under varying assumptions \cite{Matthews2018,Lee2018,Yang2019,Favaro2020,Bracale2021,Lee2022,Jung2021}.
	
	\textbf{Large-width FFNNs under NTK scaling.} Following the seminal work of \citet{Jacot2018}, a number of articles have investigated the benefits of over-parameterisation for gradient descent training, with the ``$1/\sqrt{m}$'' NTK scaling \cite{Arora2019,Du2019,Du2019a,Lee2019a,Zou2019,Oymak2020,Zou2020}. Crucially, when the width of the network is large enough with respect to the size of the training set, the training loss converges to a global minimum at a linear rate under gradient flow or gradient descent. However, under this symmetrical NTK scaling, the hidden-layer features do not move significantly when the width is large, and the scaling has been coined lazy-training regime for this reason \cite{Chizat2019,Woodworth2020}.
	
	\textbf{Large-width FFNNs under mean-field scaling.}  An alternative scaling is the ``$1/m$'' mean-field scaling \cite{Rotskoff2018,Mei2018,Mei2019,Chizat2019,Sirignano2020,Ghorbani2020,Chen2021,Tao21}.
	This scaling is also equivalent, up to the so-called abc-scaling symmetry \citep{Yang2021}, to the  $\mu P$ parameterisation of \citet{Yang2021} in the case of shallow networks.
	Feature learning is known to occur under this mean-field scaling.
	Also, \citet{chizat2018global} showed that under the same scaling, if the training of the model converges, it converges to a global minimum.

	\textbf{Asymmetrical scaling in FFNNs. } The idea of using asymmetrical scaling parameters in the context of GD optimisation of deep FFNNs has been previously introduced by \citet{Wolinski2020a}. The focus of \citet{Wolinski2020a} was on the (empirical) usefulness in terms of pruning. Indeed, our experiments in \cref{sec:experiments} are also in line with their findings. The work of \citet{Wolinski2020a}, however, only considered asymmetrical scaling with $\gamma=0$ (no fixed part), and did not investigate global convergence properties under such scaling. The properties of random FFNNs under \textit{random} asymmetrical node scaling in the large-width limit has also been considered by \citet{Lee2022}; but this paper did not investigate the training properties under gradient flow or gradient descent.

	\section{Problem setup}
	
	\label{sec:problemsetup}
	
	\subsection{Statistical model}
	\label{sec:statisticalmodel}

	We consider a shallow FFNN with one hidden layer of $\nnodes\geq 1$ hidden nodes and a scalar output. To simplify the analysis, we assume that there is no bias term. Let $\bx\in\mathbb R^{\din}$ be some input vector, where $\din\geq 1$ is the input dimension. The model is defined as
	\begin{equation}
		\label{eq:FFNN1}
		f_{\nnodes}(\bx;\paramall) = \sum_{j=1}^{\nnodes} \sqrt{\lamj}  \aj \relu(Z_{j}(\bx; \paramall))
		\quad\text{with}\quad
		Z_{j}(\bx;\paramall) = \frac{1}{\sqrt{\din}} \paramj^\top \bx~~\text{for}\ j \in [\nnodes]
	\end{equation}
	where $f_{\nnodes}(\bx;\paramall)$ is the scalar output of the FFNN, $Z_{j}(\bx;\paramall)$ is the pre-activation of the $j$-th hidden node, $\relu:\mathbb R\to\mathbb R$ is the activation function,
	$\paramj\in\mathbb R^{\din}$ is the column vector of weights between node $j$ of the hidden layer and the input nodes, and $\aj\in\mathbb R$ is the weight between the hidden node $j$ and the output node. The $\lamj$'s for $j \in [\nnodes]$ are nonnegative scaling parameters for the hidden nodes. The parameters to be optimised are contained in $\paramall$ which
	is an $m\din$-dimensional column vector $(\param_1^\top, \ldots,\param_{\nnodes}^\top)^\top$.  We assume that $\relu$ admits a derivative $\relu'$.
	
	For $n\geq 1$, let $\brelu:\mathbb R^n\to\mathbb  R^n$ (resp. $\brelu':\mathbb R^n\to\mathbb  R^n$) be the vector-valued multivariate function that applies the real-valued function $\relu$ (resp. $\relu'$) element-wise on each of the $n$ input variables. To simplify the analysis, we assume throughout this article that the output weights $(\aj)_{j\in[m]}$ are randomly initialised and fixed afterwards:
	\begin{align}
		\aj\iid \uniform(\{-1,1\}),~~j\geq 1.
		\label{eq:priorU}
	\end{align}
	This simplifying assumption is often made when analysing large shallow networks (see e.g. \cite{Du2019,Bartlett2021}), and the analysis can also be extended to the case where both layers are trained.
	The scaling parameters $(\lamj)_{j\in[m]}$ are fixed and parameterised as in \cref{eq:lamj}.
	By construction, we have $\lambda_{m,1}>0$ and $\sum_{j=1}^{\nnodes} \lamj= 1$ for all $\nnodes\geq1$.  Recall that
	the case $\gamma=1$ corresponds to NTK scaling. Also, note that our model covers finite FFNNs: when
	\[
	\gamma=0
	\quad\text{and}\quad
	\tillam_j =
	\begin{cases}
		1/K & \text{if}\ j \in [K]
		\\
		0 & \text{otherwise}
	\end{cases}
	\]
	for some $K\leq m$, the model becomes a finite network of width $K$. In the experiments,
	we will consider the special case where $(\tillam_j)_{j\geq 1}$ are the probability masses of a Zipf law:
	\begin{align}
		\tillam_j = \frac{1}{\zeta(1/\alpha)}\frac{1}{j^{1/\alpha}},~~j\geq 1\label{eq:tildelambdazipf}
	\end{align}
	for some $\alpha\in(0,1)$, where $\zeta$ is the Riemann zeta function.
	The parameter $\alpha$ tunes how quickly  $\tillam_j$ decreases with $j$,
	smaller values corresponding to more rapid decrease and more asymmetry.

	\subsection{Training}

	Let $\mathcal{D}_n =\{(\bx_i,y_i)\}_{i \in [n]}$ be the training dataset, where $n\geq 1$ is the number of observations. Let $\bX$ denote the $n$-by-$\din$ matrix whose $i$th row is $\bx_i^\top$.  We want to minimise the empirical risk under $\ell_2$ loss.	
	Consider the objective function
	\begin{align}
		L_\nnodes(\paramall) =\frac{1}{2}\sum_{i=1}^n (y_i-f_{\nnodes}(\bx_i;\paramall))^2 \label{eq:objectivefunction}
	\end{align}	
	which is non-convex in general. For a given dataset $\mathcal{D}_n$, width $\nnodes\geq 1$, output weights $(\aj)_{j\in[m]}$, and scaling parameters $(\lamj)_{j \in [\nnodes]}$, we aim to estimate the trainable parameters $\paramall$ by minimising $L_\nnodes(\paramall)$ using gradient flow or gradient descent. Let $\paramall_0$ be some initialisation. In gradient flow,
	$(\paramall_t)_{t>0}$ is the solution to the following ordinary differential equation (ODE):
	\begin{equation*}
		\frac{d\paramall_t}{dt} = -\nabla_\paramall L_\nnodes(\paramall_t)
	\end{equation*}
	with $\lim_{t\to 0}\paramall_t=\paramall_0$. Let $\paramtj$ be the value of the parameter $\paramj$ at time $t$, and define $Z_{tj}(\bx) =Z_{j}(\bx;\paramall_t)$. Note that
	$\nabla_{\paramj} f_{\nnodes}(\bx;\paramall) =\sqrt{\lamj}  \aj \relu'(Z_{j}(\bx;\paramall))\cdot \frac{1}{\sqrt{\din}}\bx.$  Under gradient flow, for $j \in [\nnodes]$,
	\[
	\frac{d\paramtj}{dt}
	= \left(\sum_{i=1}^n  (y_i-f_{\nnodes}(\bx_i;\paramall_t))\nabla_{\paramj} f_{\nnodes}(\bx_i;\paramall_t)\right)
	= \left(\frac{ \sqrt{\lamj}\aj}{\sqrt{\din}} \sum_{i=1}^n  (y_i-f_{\nnodes}(\bx_i;\paramall_t))\relu'(Z_{tj}(\bx_i))\bx_i\right).
	\]
	Thus, the derivatives associated with each hidden node $j$ are scaled by $\sqrt{\lamj}$. For an input $\bx\in\mathbb R^{\din}$, the output of the FFNN therefore satisfies the ODE
	\[
	\frac{df_{\nnodes}(\bx;\paramall_t)}{dt}
	= \left(\nabla_\paramall f_{\nnodes}(\bx;\paramall_t)^\top \frac{d\paramall_t}{dt}\right)
	= \left(\sum_{i=1}^n (y_i-f_{\nnodes}(\bx_i;\paramall_t)) \ntker(\bx,\bx_i;\paramall_t)\right),
	\]
	where $\ntker:\mathbb R^{\din}\times\mathbb R^{\din}\to\mathbb R$ is the neural tangent kernel
	for the network $f_\nnodes(\bx;\paramall)$:
	\begin{align}
		\ntker(\bx,\bx';\paramall)
		=\frac{\bx^\top \bx'}{\din} \sum_{j=1}^\nnodes \lamj \relu'(Z_{j}(\bx;\paramall))\relu'(Z_{j}(\bx';\paramall)).\label{eq:NTK}
	\end{align}
	The associated  neural tangent Gram (NTG) matrix $\ntgram(\bX;\paramall)$ is the $n$-by-$n$ positive semi-definite matrix whose $(i,j)$-th entry is $\ntker(\bx_i,\bx_j;\paramall)$. It takes the form
	\begin{align}
		\ntgram(\bX;\paramall)=  \frac{1}{\din}\sum_{j=1}^\nnodes \lamj\diag\!\left(\brelu'\!\left(\frac{\bX \paramj}{\sqrt{\din}}\right)\!\right)\!\bX \bX^\top \diag\!\left(\brelu'\!\left(\frac{\bX \paramj}{\sqrt{\din}}\right)\!\right)\!.\label{eq:NTG}
	\end{align}
	
	Gradient descent is a discretisation of gradient flow. Under gradient descent, the parameters are updated by
	\begin{equation}
		\paramall_{s} = \paramall_{s-1} - \eta \nabla_\paramall L_\nnodes(\paramall_{s-1})
		\quad \text{for all $s \in \mathbb{N}$},
	\end{equation}
	where $\eta > 0$ is a learning rate. These updates give rise to the family $(\paramall_s)_{s \in \mathbb{N}\cup\{0\}}$ indexed by discrete time steps $s = 0,1,2,\ldots$, rather than continuous times $t \geq 0$.

	\subsection{Main assumptions}
	\label{sec:assumptions}

        Throughout the paper, we assume that the activation function $\sigma$
        satisfies the following standard condition: for all
        random variables $Z \sim \mathcal{N}(0, s^2)$ for
        some $s > 0$,
        \begin{equation}
                \label{eqn:assumption-finite-positive-activation}
                |\mathbb{E}_Z[\sigma(Z)]| < \infty
                \qquad\text{and}\qquad
                0 < \mathbb{E}_{Z}[\sigma(Z)^2] < \infty
        \end{equation}
        This assumption is made all the time, and so we do not mention its
        use explicitly in the paper.
	
	The results of this article on global convergence and feature learning use several further assumptions. The first set of these assumptions, which are mild and similar to other assumptions used in the literature, is on the training dataset $\mathcal{D}_n =\{(\bx_i,y_i)\}_{i \in [n]}$.
	\begin{assumption}[Dataset]\label{assump:data} (a) All inputs are non-zero and have norms at most $1$:  $0 < \|\bx_i\| \leq 1$ for all $i \geq 1$. (b) For all $i\neq i'$ and $c\in \mathbb{R}$, $\bx_i\neq c\bx_{i'}$. (c) There is $C>0$ such that $|y_i|\leq C$ for all $i\geq 1$.
	\end{assumption}
	The next assumption concerns the activation function $\relu$. Standard activation functions (softplus, tanh, sigmoid, swish) satisfy this assumption, but not the ReLU. However, some of our results, such as global
	convergence of gradient flow and feature-learning results, also hold in the ReLU case, as shown in \cref{sec:results-relu} in the Supplementary Material.

		\begin{assumption}[Activation function]
		The activation function is analytic, with $|\relu'(x)|\leq 1$ and $|\relu''(x)|\leq M$ for some $M>0$, but it is not a polynomial.
		\label{assump:activation}
	\end{assumption} The last assumption, which is standard, is on the initialisation of the weights.
	
	\begin{assumption}[Initialisation]		
		For $j \in \mathbb{N}$, $\param_{0j} \iid \mathcal{N}(  0,\idmat_{\din})$, where $\idmat_{\din}$ is the $\din$-by-$\din$ identity matrix.
		\label{assump:init_V}
	\end{assumption}

	\section{
		Neural Tangent Kernel at initialisation and its limit}
	\label{sec:limitingNTGinit}
	\paragraph{Mean  NTG at initialisation and its minimum eigenvalue.} Let $\paramall_0$ be a
	random initialisation from \cref{assump:init_V}. Consider the mean NTK at initialisation
	\begin{equation}
		\ntkerlim(\bx,\bx')
		= \mathbb E \left [ \ntker(\bx,\bx';\paramall_0 )\right ]
		= \frac{\bx^\top \bx'}{\din} \mathbb E \left [ \relu'\left(\frac{1}{\sqrt{\din}} \param_{01}^\top \bx\right)\relu'\left(\frac{1}{\sqrt{\din}} \param_{01}^\top \bx'\right)\right ].\label{eq:ntkerlim}
	\end{equation}
	The mean NTK, which is also, by the law of large numbers, the limiting NTK under $1/\sqrt{m}$ scaling \cite{Jacot2018},
	does not depend on $(\lamj)_{j\geq 1}$ nor $m$. Let $\ntgramlim(\bX)=\mathbb E[ \ntgram(\bX; \paramall_0) ]$ be the
	associated $n$-by-$n$ mean
	NTG matrix at initialisation, whose $(i,j)$-th entry is $\ntkerlim(\bx_i,\bx_j)$. Let $\kappa_n= \eigmin(\ntgramlim(\bX))$
	be the minimum eigenvalue of the mean NTG matrix at initialisation. This minimum eigenvalue plays an important role
	in the analysis of global convergence properties in the lazy-training regime.
	Many authors (see e.g. \cite{ElKaroui2010,Nguyen2021}) have shown that, under some assumptions on the data, activation function, and initialisation, $\kappa_n$ is strictly positive or bounded away from zero.
	 such a result, under the Assumptions of \cref{sec:assumptions}.
	
	\begin{proposition}[{\cite[Theorem 3.1]{Du2019} and \cite[Proposition F.1]{Du2019a}}]
		When \cref{assump:data,assump:activation,assump:init_V} hold,
		we have $\kappa_n>0$.
	\end{proposition}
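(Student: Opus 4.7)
The starting point is the factorisation
\[
\mathbf{v}^\top \ntgramlim(\bX)\,\mathbf{v} \;=\; \mathbb{E}_{\param\sim\cN(0,\idmat_\din)}\!\left[\left\|\frac{1}{\sqrt{\din}} \sum_{i=1}^n v_i\, \relu'\!\left(\frac{\param^\top \bx_i}{\sqrt{\din}}\right) \bx_i\right\|^2\right],
\]
which follows by writing $\ntgramlim(\bX)=\tfrac{1}{\din}\mathbb{E}[D_\param \bX\bX^\top D_\param]$ with $D_\param=\diag(\brelu'(\bX\param/\sqrt{\din}))$. Hence $\kappa_n=0$ iff there exists a nonzero $\mathbf{v}\in\bbR^n$ such that the vector-valued function $\Phi_\mathbf{v}(\param):=\sum_i v_i\, \relu'(\param^\top \bx_i/\sqrt{\din})\,\bx_i$ vanishes almost surely under the Gaussian law. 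The plan is to rule this out, separately in each case of \cref{assump:activation}, using the geometric hypotheses from \cref{assump:data}.

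For the ReLU case, $\relu'(x)=\ind_{\{x>0\}}$ is piecewise constant and jumps by $1$ across each hyperplane $H_i=\{\param:\param^\top \bx_i=0\}$. By \cref{assump:data}(b), the inputs are pairwise non-parallel, so the hyperplanes $H_1,\dots,H_n$ are pairwise distinct. Fix $i_0\in[n]$ and choose a point $\param^*\in H_{i_0}\setminus\bigcup_{i\neq i_0}H_i$, which exists because a hyperplane cannot be covered by finitely many distinct hyperplanes. On a small open neighbourhood $U$ of $\param^*$ disjoint from $\bigcup_{i\neq i_0}H_i$, the quantities $\relu'(\param^\top \bx_i/\sqrt{\din})$ for $i\neq i_0$ are constant, so $\Phi_\mathbf{v}$ is piecewise constant on $U\setminus H_{i_0}$ with a jump equal to $v_{i_0}\bx_{i_0}$ across $H_{i_0}$. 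If $\Phi_\mathbf{v}\equiv 0$ a.e., both sides' values must vanish, forcing $v_{i_0}\bx_{i_0}=0$, and then $v_{i_0}=0$ by \cref{assump:data}(a). Running this over all $i_0$ gives $\mathbf{v}=0$.

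For the analytic case, the map $\param\mapsto \relu'(\param^\top\bx_i/\sqrt{\din})$ is real analytic on $\bbR^\din$, hence so is $\Phi_\mathbf{v}$. Since the Gaussian measure is mutually absolutely continuous with Lebesgue measure, $\Phi_\mathbf{v}=0$ a.s.\ implies $\Phi_\mathbf{v}\equiv 0$ on $\bbR^\din$. Expanding $\relu'(x)=\sum_{k\geq 0} c_k x^k$ and equating homogeneous components gives the polynomial identities
\[
\sum_{i=1}^n v_i (\param^\top \bx_i)^k\, \bx_i \;=\; 0 \quad \text{for every } k \text{ with } c_k\neq 0.
\]
Because $\relu$ is not a polynomial, neither is $\relu'$, so $c_k\neq 0$ for infinitely many $k$. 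Contracting with an arbitrary $\mathbf{u}\in\bbR^\din$ and specialising $\param$ along directions yields, for each admissible $k$, the identity $\sum_i v_i(\mathbf{u}^\top\bx_i)(\bfe^\top\bx_i)^k=0$ for all unit $\bfe$. Since infinitely many $k$ are available and the scalars $\bfe^\top \bx_i$ are pairwise distinct for generic $\bfe$ (thanks to \cref{assump:data}(b) ruling out $\bx_i=c\bx_{i'}$), a Vandermonde argument on these $n$ distinct values extracts $v_i(\mathbf{u}^\top\bx_i)=0$ for each $i$; taking $\mathbf{u}=\bx_i$ and using $\|\bx_i\|>0$ gives $v_i=0$.

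I expect the main obstacle to be the Vandermonde extraction in the analytic case: one must argue that the indices $k$ with $c_k\neq 0$ are rich enough to invert a Vandermonde-type system and that the genericity of $\bfe$ making the $\bfe^\top\bx_i$ pairwise distinct can be guaranteed from \cref{assump:data}(b) alone (the set of bad $\bfe$ is a finite union of hyperplanes). The ReLU argument, by contrast, is essentially combinatorial and is the shorter half. Both steps are exactly what is carried out in \cite[Thm.~3.1]{Du2019} and \cite[Prop.~F.1]{Du2019a}, and I would simply verify that the hypotheses on data and activation are equivalent to the ones stated here and invoke those results.
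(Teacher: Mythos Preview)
The paper does not prove this proposition at all; it simply cites \cite{Du2019,Du2019a} and adds a one-line remark that their unit-norm hypothesis can be weakened to $0<\|\bx_i\|\leq 1$. Your proposal ends in exactly the same place---after sketching the argument you explicitly say you would invoke those cited results---so the approaches coincide.

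As a side remark on your sketch: the ReLU half is correct and is indeed the argument in \cite{Du2019}. In the analytic half, the obstacle you flag is real. Having $c_k\neq 0$ for infinitely many $k$ and the $\mathbf{e}^\top\bx_i$ pairwise distinct is \emph{not} by itself enough to invert the system $\sum_i b_i(\mathbf{e}^\top\bx_i)^k=0$: a generalized Vandermonde matrix with an arbitrary infinite set of exponents can be singular when the nodes have mixed signs (e.g.\ $a_1=1$, $a_2=-1$ with only even $k$). One needs either to choose $\mathbf{e}$ so that the $|\mathbf{e}^\top\bx_i|$ are pairwise distinct (which \cref{assump:data}(b) does allow, since $\bx_i\pm\bx_{i'}\neq 0$) and then argue linear independence of the functions $t\mapsto\relu'(t\,\mathbf{e}^\top\bx_i)$, or to follow the somewhat different route taken in \cite{Du2019a}. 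Since you already plan to defer to those references, this does not affect the correctness of your proposal.
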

	
	\begin{remark}
		\citet{Du2019,Du2019a} make the assumption that each $\bx_i$ has unit norm. But their proof holds under the less
		strict \cref{assump:data}(a). The above proposition also holds if \cref{assump:activation} is replaced by
		the assumption that $\sigma$ is the ReLU function.
	\end{remark}

	\paragraph{Limiting NTG.} To give some intuition, we now describe the limiting behaviour of the NTG, for a fixed sample size $n$, as the width $\nnodes$ goes to infinity. The proof, given in \cref{sec:prooflimitingNTG} in the Supplementary Material, follows from the triangle inequality and the law of large numbers, together with  $|\relu'(z)|\leq 1$ and $\sum_{j\geq 1}  \tillam_j =1$.
	
	\begin{proposition}\label{prop:limitingNTG}
		Consider a sequence $(\param_{0j})_{j\geq 1}$ of iid random vectors distributed as in \cref{assump:init_V}.
		Suppose \cref{assump:activation} holds. Then,
		\begin{align}
			\ntgram(\bX;\paramall_0)\to  \widehat\Theta_\infty(\bX;\paramall_0)
		\end{align}
		almost surely as $\nnodes\to\infty$, where
		$\widehat\Theta_\infty(\bX;\paramall_0)=\gamma\ntgramlim(\bX)+(1-\gamma)\widehat\Theta_\infty^{(2)}(\bX;\paramall_0)$,
		with $\widehat\Theta_\infty^{(2)}(\bX;\paramall_0)$ being the following random positive semi-definite matrix:
		\begin{equation}
			\widehat\Theta_\infty^{(2)}(\bX;\paramall_0) =
			\frac{1}{\din}\sum_{j=1}^\infty \widetilde\lambda_j\diag \left(\brelu'\left(\frac{\bX \param_{0j}}{\sqrt{\din}}\right)\right)\bX \bX^\top \diag\left(\brelu'\left(\frac{\bX \param_{0j}}{\sqrt{\din}}\right)\right).
		\end{equation}
		Also, $\mathbb E [\widehat\Theta_\infty(\bX;\paramall_0)]=\mathbb E [\widehat\Theta_\infty^{(2)}(\bX;\paramall_0)]=\ntgramlim(\bX)$, and
		\begin{align}
			\mathbb E \left [ \Vert\widehat\Theta_\infty(\bX;\paramall_0)- \ntgramlim(\bX)\Vert_F^2 \right ]
			=C_0(\bX)(1-\gamma)^2\sum_{j\geq 1}\widetilde\lambda^2_j \label{eq:totalvarianceNTGatinit}
		\end{align}
		where $\Vert\cdot\Vert_F$ denotes the Frobenius norm, and $C_0(\bX)> 0$ is a positive constant equal to
		\begin{align*}
			\sum_{1\leq i,i'\leq n} {\!\!\left(\frac{\bx_{i}^\top \bx_{i'}}{d}\right)\!}^2
			\var\!\left(\relu'\!\left(\frac{1}{\sqrt{\din}} \param_{01}^\top \bx_i\right)\!\relu'\!\left(\frac{1}{\sqrt{\din}} \param_{01}^\top \bx_{i'}\right)\!\right)\!.
		\end{align*}
		
	\end{proposition}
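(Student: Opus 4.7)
The plan is to decompose $\ntgram(\bX;\paramall_0)$ along the two-part structure of \cref{eq:lamj}, handle each piece separately by the strong law of large numbers (for the $\gamma$-part) and a deterministic tail bound (for the $(1-\gamma)$-part), then read off the mean and the variance from elementary second-moment computations. Introduce the iid random matrices
\begin{align*}
A_j \defeq \frac{1}{\din}\diag\!\left(\brelu'\!\left(\frac{\bX\param_{0j}}{\sqrt{\din}}\right)\!\right)\bX\bX^\top \diag\!\left(\brelu'\!\left(\frac{\bX\param_{0j}}{\sqrt{\din}}\right)\!\right), \qquad j\geq 1,
\end{align*}
so that combining \cref{eq:NTG} with \cref{eq:lamj} gives
\begin{align*}
\ntgram(\bX;\paramall_0) = \frac{\gamma}{\nnodes}\sum_{j=1}^\nnodes A_j \;+\; \frac{1-\gamma}{\sum_{k=1}^\nnodes \tillam_k}\sum_{j=1}^\nnodes \tillam_j A_j.
\end{align*}
Under \cref{assump:init_V}, the $A_j$'s are iid with $\mathbb E[A_1]=\ntgramlim(\bX)$; the bounds $|\relu'|\leq 1$ (from \cref{assump:activation}) and $\|\bx_i\|\leq 1$ (from \cref{assump:data}(a)) give the uniform entrywise bound $|(A_j)_{i,i'}|\leq 1/\din$.

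For the first sum I would apply the strong law of large numbers entrywise to get $\nnodes^{-1}\sum_{j=1}^\nnodes A_j \to \ntgramlim(\bX)$ almost surely. For the second sum, the scalar prefactor satisfies $\sum_{k=1}^\nnodes \tillam_k \to 1$, and the uniform entry bound yields the deterministic tail estimate
\begin{align*}
\Big\|\sum_{j>N}\tillam_j A_j\Big\|_F \;\leq\; \frac{n}{\din}\sum_{j>N}\tillam_j \;\xrightarrow[N\to\infty]{}\; 0,
\end{align*}
so $\sum_{j\geq 1}\tillam_j A_j = \widehat\Theta_\infty^{(2)}(\bX;\paramall_0)$ converges absolutely in Frobenius norm and $\sum_{j=1}^\nnodes \tillam_j A_j \to \widehat\Theta_\infty^{(2)}(\bX;\paramall_0)$ almost surely. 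Combining the two limits produces the claimed almost-sure limit. Positive semi-definiteness of $\widehat\Theta_\infty^{(2)}$ is then immediate, since $A_j = \din^{-1}(D_j\bX)(D_j\bX)^\top$ with $D_j=\diag(\brelu'(\bX\param_{0j}/\sqrt{\din}))$ is PSD and the $\tillam_j$ are nonnegative.

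The expectation identities follow directly: $\mathbb E[\widehat\Theta_\infty^{(2)}]=\sum_{j\geq 1}\tillam_j\ntgramlim(\bX)=\ntgramlim(\bX)$ since $\sum_{j\geq 1}\tillam_j=1$, and a convex combination then gives $\mathbb E[\widehat\Theta_\infty]=\ntgramlim(\bX)$. For the variance identity I would factor the $(1-\gamma)^2$ out using $\widehat\Theta_\infty-\ntgramlim=(1-\gamma)(\widehat\Theta_\infty^{(2)}-\ntgramlim)$, then expand $\|\cdot\|_F^2$ entrywise and invoke independence of the $A_j$ across $j$ to obtain
\begin{align*}
\var\!\left((\widehat\Theta_\infty^{(2)})_{i,i'}\right) = \sum_{j\geq 1}\tillam_j^2\,\var((A_1)_{i,i'}).
\end{align*}
Pulling the factor $((\bx_i^\top\bx_{i'})/\din)^2$ out of $\var((A_1)_{i,i'})$ and summing over $i,i'$ yields exactly $C_0(\bX)\sum_{j\geq 1}\tillam_j^2$.

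There is no real obstacle: the only points that warrant a little care are the almost-sure convergence of the random matrix-valued series $\sum_{j\geq 1}\tillam_j A_j$ and the exchange of expectation with summation in the variance computation, both of which are handled by the uniform bound $|(A_j)_{i,i'}|\leq 1/\din$ together with $\sum_{j\geq 1}\tillam_j<\infty$.
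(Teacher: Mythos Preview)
Your proof is correct and follows essentially the same route as the paper: decompose $\ntgram$ along the two summands of \cref{eq:lamj}, apply the strong law of large numbers to the $\gamma/\nnodes$ part, and control the $(1-\gamma)$ part via the uniform bound $|\relu'|\leq 1$ together with $\sum_{j\geq 1}\tillam_j=1$. Your write-up is in fact more complete than the paper's appendix proof, which only spells out the almost-sure convergence and leaves the expectation and variance identities implicit; your treatment of those (factoring out $(1-\gamma)^2$, then using independence across $j$ to reduce to $\sum_j\tillam_j^2\,\var((A_1)_{i,i'})$) is exactly right. One cosmetic point: you invoke \cref{assump:data}(a) for the bound $|(A_j)_{i,i'}|\leq 1/\din$, but the proposition does not assume it; this is harmless, since $|(A_j)_{i,i'}|\leq |\bx_i^\top\bx_{i'}|/\din$ already gives a fixed (data-dependent) uniform bound, which is all the argument needs.
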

	
	When $\gamma=1$ (NTK scaling), the NTG converges to a constant matrix, and solutions obtained by gradient flow coincide with that of kernel regression. Whenever $\gamma<1$, \cref{prop:limitingNTG} shows that the  NTG is random at initialisation, even in the infinite-width limit, contrary to that of NTK scaling. As shown in  \cref{eq:totalvarianceNTGatinit}, the departure from the symmetric regime, as measured by the total variance of the limiting random NTG, can be quantified by the nonnegative constant $(1-\gamma)^2\sum_{j\geq 1}\widetilde\lambda_j^2 \in[0,1].$
	When this constant is close to 0, we approach the kernel regime, and increasing this value leads to a departure from the regime.
	The quantity $\sum_{j\geq 1}\widetilde\lambda_j^2\in(0,1]$ is always strictly positive. More rapid decrease of the $\widetilde\lambda_j$ as $j$ increases will lead to higher values of $\sum_{j\geq 1}\widetilde\lambda_j^2$. For example, when using the Zipf weights in \cref{eq:tildelambdazipf}, we have  $\sum_{j\geq 1}\widetilde\lambda_j^2=\frac{\zeta(2/\alpha)}{\zeta(1/\alpha)^2}$, which decreases with $\alpha$, as shown in \cref{fig:zipflawalpha} in the Supplementary Material.

	This section has described the behaviour of the NTG at initialisation in the infinite-width limit, and has provided intuition on the node-scaling parameters. The next three sections contain results on global convergence and feature learning properties of large, but finite, FFNNs under such asymmetrical scaling.

	\section{Global convergence analysis for gradient flow}
	\label{sec:globalconvergence}

Our global convergence theorem, which is given below, explains what happens during training via gradient flow.
	Recall that $\kappa_n$ is the minimum eigenvalue of the mean NTG matrix $\ntgramlim(\bX)$ at initialisation.
	Our theorem says that with high probability, (i) the loss decays exponentially fast with respect to $\kappa_n$ and
	training time $t$, and (ii) the  NTG and weights $\param_{tj}$ change, respectively, by
	\[
	\|\ntgram(\bX;\paramall_t)-\ntgram(\bX;\paramall_0)\|_2=O\left(\frac{n^3 \sum_{j=1}^{\nnodes}\lamj^2}{\kappa_n^2 d^{3}\gamma^2}
	+\frac{n^2 \sqrt{\sum_{j=1}^{\nnodes}\lamj^2}}{\kappa_n d^2\gamma}\right)
	\quad\text{and}\quad
	\|\param_{tj} - \param_{0j}\|=O\left(\frac{n\lamj^{1/2}}{\kappa_n\din^{1/2}\gamma}\right).
	\]
	Define
	\begin{equation}
		\label{eqn:definition-C1}
		C_1 = \sup_{c \in (0,1]}\mathbb{E}_{z\sim\mathcal{N}(0,1)}\left[\relu\left(\frac{cz}{\sqrt{d}}\right)^2\right].
	\end{equation}

	\begin{theorem}(Global convergence, gradient flow)
		\label{thm:upper-bound:smooth-activation}
		Let $\delta \in (0,1)$. Suppose \cref{assump:data,assump:activation,assump:init_V} hold, and that
		\[
		\gamma > 0,
		\quad\text{and}\quad
		\nnodes
		\geq
		\max \bigg(
		\frac{2^3n\log \frac{2n}{\delta}}{\kappa_n\din},\;
		\frac{2^{10} n^3M^{2}(C^2 + C_1)}{\kappa_n^3d^3\gamma^2\delta},\;
		\frac{2^{15} n^4M^2(C^2 + C_1)}{\kappa^4_n d^4\gamma^2\delta}
		\bigg),
		\]
		where $C$ is the bound on the $y_i$'s in \cref{assump:data}.
		Then, with probability at least $1-\delta$, the following properties hold for all $t \geq 0$:
		\begin{enumerate}[nosep]
			\item[(a)] $\eigmin(\ntgram(\bX;\paramall_t)) \geq \frac{\gamma \kappa_n}{4}$;
			\item[(b)] $L_\nnodes(\paramall_t) \leq e^{-(\gamma\kappa_n t)/2} L_\nnodes(\paramall_0)$;
			\item[(c)] $\|\param_{tj} - \param_{0j}\| \leq
			\frac{n\sqrt{\lamj}}{\kappa_n\din^{1/2}} \sqrt{\frac{2^7(C^2 + C_1)}{\gamma^2\delta}}$ for all $j  \in [\nnodes]$;
			\item[(d)] $\|\ntgram(\bX;\paramall_t)-\ntgram(\bX;\paramall_0)\|_2 \leq
			\Big(
			\frac{2^7 n^3M^{2}(C^2 + C_1)}{\kappa_n^2 d^{3}\gamma^2\delta} \cdot  {} \sum_{j=1}^{\nnodes}\lamj^2\Big)
			+
			\Big(\frac{2^5n^2M(C^2+C_1)^{1/2}}{\kappa_n d^2\gamma\delta^{1/2}} \cdot
			\sqrt{\sum_{j=1}^{\nnodes}\lamj^2}\Big)$.
		\end{enumerate}
	\end{theorem}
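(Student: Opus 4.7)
The plan is to follow a bootstrap argument analogous to the ReLU case. Define the stopping time
\[ t^\star = \sup\{t\geq 0 : \eigmin(\ntgram(\bX;\paramall_s))\geq \gamma\kappa_n/4 \text{ for all } s\in[0,t]\}, \]
and aim to show that, on a high-probability event measurable with respect to $\paramall_0$ alone, $t^\star=\infty$. Given this, conclusions (a)--(d) follow from a cascade: (b) is a Grönwall consequence of the eigenvalue invariant on $[0,t^\star)$, (c) follows by integrating the parameter ODE using (b), and (d) is derived from (c) via the smoothness of $\relu'$. Finally, (d) will be small enough for $t<t^\star$ that Weyl's inequality forces the invariant to persist, so $t^\star=\infty$ by a continuity/contradiction argument.

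The initialisation event is the intersection of two parts. First, $\eigmin(\ntgram(\bX;\paramall_0))\geq \gamma\kappa_n/2$. Decompose $\ntgram(\bX;\paramall_0)=\gamma\ntgrampart{1}(\bX;\paramall_0)+(1-\gamma)\ntgrampart{2}(\bX;\paramall_0)$ with $\ntgrampart{2}\succeq 0$, so it suffices to lower bound $\eigmin(\ntgrampart{1}(\bX;\paramall_0))\geq \kappa_n/2$. Since $\ntgrampart{1}(\bX;\paramall_0)$ is an average of $m$ iid positive semidefinite matrices of spectral norm at most $n/d$ whose mean is $\ntgramlim(\bX)$, a matrix Chernoff bound gives this under the first of the three lower bounds on $m$ in the theorem. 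Second, $L_\nnodes(\paramall_0)\leq 2n(C^2+C_1)/\delta$, obtained by Markov's inequality from $\mathbb{E}[L_\nnodes(\paramall_0)]\leq n(C^2+C_1)$; the key computation is $\mathbb{E}[f_\nnodes(\bx_i;\paramall_0)^2]=\sum_j \lamj\mathbb{E}[\relu(Z_{0j}(\bx_i))^2]\leq C_1$, which uses independence of the $a_j$ and $\param_{0j}$, the identity $\sum_j\lamj=1$, and the definition of $C_1$ with $c=\|\bx_i\|\leq 1$.

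On $[0,t^\star)$, (b) follows from $\tfrac{d}{dt}L_\nnodes(\paramall_t)=-(y-f)^\top \ntgram(\bX;\paramall_t)(y-f)\leq -\tfrac{\gamma\kappa_n}{2}L_\nnodes(\paramall_t)$ and Grönwall. For (c), $\|d\paramtj/dt\|\leq \sqrt{\lamj/d}\sqrt{n}\sqrt{2L_\nnodes(\paramall_t)}$ (using $|\relu'|\leq 1$, $\|\bx_i\|\leq 1$, and Cauchy--Schwarz over $i$), so integrating via (b) and the initial-loss bound yields $\|\paramtj-\param_{0j}\|\leq \sqrt{\lamj}\,K_0$ with $K_0=\tfrac{n}{\kappa_n\sqrt{d}}\sqrt{2^7(C^2+C_1)/(\gamma^2\delta)}$. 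For (d), set $v_{ij}=\relu'(\bx_i^\top\paramtj/\sqrt{d})-\relu'(\bx_i^\top\param_{0j}/\sqrt{d})$ and $\relu'_{0ij}=\relu'(\bx_i^\top\param_{0j}/\sqrt{d})$, and expand entrywise,
\[ \ntgram(\bX;\paramall_t)_{ii'}-\ntgram(\bX;\paramall_0)_{ii'}=\sum_{j=1}^\nnodes \lamj\frac{\bx_i^\top\bx_{i'}}{d}\bigl(v_{ij}v_{i'j}+v_{ij}\relu'_{0i'j}+\relu'_{0ij}v_{i'j}\bigr). \]
The $M$-Lipschitz property of $\relu'$ combined with (c) gives $|v_{ij}|\leq M\sqrt{\lamj}\,K_0/\sqrt{d}$, so the quadratic-in-$v$ part contributes at most $M^2K_0^2 d^{-2}\sum_j\lamj^2$ per entry, and the linear-in-$v$ part at most $2MK_0 d^{-3/2}\sum_j\lamj^{3/2}$ per entry.

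The main technical obstacle lies in the dependence on $(\lamj)$. A direct reading of the linear-in-$v$ bound produces $\sum_j\lamj^{3/2}$, which does not match the theorem. The essential refinement is Cauchy--Schwarz: $\sum_j\lamj^{3/2}\leq (\sum_j\lamj)^{1/2}(\sum_j\lamj^2)^{1/2}=\sqrt{\sum_j\lamj^2}$, using $\sum_j\lamj=1$. Together with Frobenius aggregation over the $n^2$ entries and the explicit form of $K_0$, this produces exactly the two terms displayed in (d). Finally, the second and third lower bounds on $m$ in the theorem ensure that each term in (d) is at most $\gamma\kappa_n/8$ on $[0,t^\star)$, so $\|\ntgram(\bX;\paramall_t)-\ntgram(\bX;\paramall_0)\|_2\leq \gamma\kappa_n/4$; Weyl's inequality then gives $\eigmin(\ntgram(\bX;\paramall_t))\geq \gamma\kappa_n/4$ strictly throughout $[0,t^\star)$, contradicting $t^\star<\infty$ by continuity. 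Hence $t^\star=\infty$ and (a) holds globally, closing the bootstrap.
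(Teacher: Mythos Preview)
Your bootstrap argument has a genuine gap at the closing step. You assert that ``the second and third lower bounds on $\nnodes$ in the theorem ensure that each term in (d) is at most $\gamma\kappa_n/8$,'' but this is false: the two terms in (d) scale with $\sum_{j}\lamj^2$ and $\sqrt{\sum_{j}\lamj^2}$, and neither of these tends to zero as $\nnodes\to\infty$. Indeed $\sum_{j}\lamj^2\to(1-\gamma)^2\sum_{j\geq 1}\tillam_j^2>0$ whenever $\gamma<1$, so no lower bound on $\nnodes$ can force $\|\ntgram(\bX;\paramall_t)-\ntgram(\bX;\paramall_0)\|_2$ below $\gamma\kappa_n/4$. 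The Weyl step therefore does not close the loop, and $t^\star=\infty$ remains unproven. This is not a technicality: under asymmetric scaling the full NTG genuinely moves by an order-one amount during training, which is precisely the feature-learning phenomenon the theorem is meant to accommodate.

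The paper's remedy is to run the eigenvalue bootstrap on the \emph{symmetric part} $\ntgrampart{1}$ alone, built from the weights $\lamjpart{1}=\gamma/\nnodes$, while using $\ntgrampart{2}\succeq 0$ to pass from $\eigmin(\ntgrampart{1})$ to $\eigmin(\ntgram)$. Repeating your entrywise expansion with $\lamjpart{1}$ in place of $\lamj$, and still using the bound $|v_{ij}|\leq M\sqrt{\lamj}\,K_0/\sqrt{d}$ from (c), the change $\|\ntgrampart{1}(\bX;\paramall_t)-\ntgrampart{1}(\bX;\paramall_0)\|_2$ picks up factors $\sum_j \lamjpart{1}\lamj=\gamma/\nnodes$ and $\sqrt{\gamma/\nnodes}$ in place of $\sum_j\lamj^2$ and $\sqrt{\sum_j\lamj^2}$; these \emph{do} vanish for large $\nnodes$, and the second and third lower bounds on $\nnodes$ in the theorem are calibrated exactly to make each below $\gamma\kappa_n/8$. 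Your derivations of (b), (c), and (d) on $[0,t^\star)$ are otherwise sound; the point is that (d) is a \emph{conclusion} of the theorem, not the quantity that maintains the eigenvalue invariant.
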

	
	The above theorem implies that, whenever $\gamma>0$, the training error converges to 0 exponentially fast. Additionally, the weight change is bounded by a factor $\sqrt{\lamj}$ and the   NTG change is bounded by a factor $\sqrt{\sum_{j=1}^{\nnodes}\lamj^{2}}$.
	We have (see \cref{sec:lambdasproperties} in the Supplementary Material) that as $m\to\infty$,
	\[
	\lamj\to(1-\gamma)\tillam_j\, \text{ for all $j\geq 1$}
	\qquad
	\text{and}
	\qquad
	\sum_{j=1}^{\nnodes}\lamj^{2}\to (1-\gamma)^{2}\sum_{j=1}^{\infty}\tillam_j^{2}.
	\]
	If $\tillam_j>0$ (note that we necessarily have $\tillam_1>0$),
	the upper bound in (c) is therefore vanishing in the infinite-width limit if and only if $\gamma=1$ (lazy-training regime). Similarly, the upper bound in (d) is vanishing if and only if $\gamma=1$. Although we were not able to obtain matching lower bounds, we show in \cref{sec:featurelearning} that feature learning arises whenever $\gamma<1$.
	
	\begin{remark}
                We make two comments on \Cref{thm:upper-bound:smooth-activation}.
                First, although $d$ represents the input dimension,
                all the occurrences of $d$ in the theorem, such as those
                in the lower bound of the width $m$,
                do not come from the complexity of the input dimension. Instead, it
                comes from the fact that our model uses the $1/\sqrt{d}$ scaling
                when computing the pre-activation values of the first layer. If
                this scalining were removed in our model, the statement of the theorem
                would not include $d$ (i.e.,
                we would have the theorem with $d$ set to $1$). Second, a result
                similar to \Cref{thm:upper-bound:smooth-activation} also holds
                for the ReLU activation function. See \cref{thm:upper-bound:relu} in
                the Supplementary Material.
	\end{remark}

	\paragraph{Sketch of the proof.}
	
	We give here a sketch of the proof of \cref{thm:upper-bound:smooth-activation}
	(and of \cref{thm:upper-bound:relu}, its ReLU counterpart, in the Supplementary Material). The detailed proofs are given
	in \cref{sec:proofboundrelu,sec:proofboundsmooth} in the Supplementary Material, with secondary
	lemmas given in \cref{sec:secondarythm:init,sec:secondarythm:dynamics} there. The structures of the proofs
	of \cref{thm:upper-bound:relu,thm:upper-bound:smooth-activation} are similar to that of \cite[Theorem 3.2]{Du2019}, which showed analogous results on the global convergence under NTK scaling. However, there are some key differences which we highlight below.
	
	Gradient flow converges to a global minimum of the objective function if  the minimum eigenvalue of the   NTG matrix $\ntgram(\bX;\paramall_t)$ is bounded away from zero, for $m$ sufficiently large, by some positive constant for all $t\geq 0$. In the NTK scaling
	case ($\gamma=1$), \citet{Du2019} showed that the following is satisfied, for $m$ sufficiently large, with high probability: (i) the NTG matrix at initialisation is close to its mean, and the minimum eigenvalue is close to that of the mean NTG, (ii) the weights $\paramtj$ are nearly constant in time, which implies that (iii) the NTG matrix is nearly constant in time, hence (iv) the minimum eigenvalue of the NTG matrix at time $t$ is close to its value at initialisation, which is bounded away from zero.
	
	However, in the case of asymmetrical node scaling ($\gamma<1$), none of the points (i-iv) holds. At initialisation, the random   NTG matrix may be significantly different from its mean. Additionally, both the weights and the   NTG matrix substantially change over time. This therefore requires a somewhat different approach that we now describe.

	Let $\lamjpart1= \gamma /\nnodes$ and $\lamjpart2=((1-\gamma)\tillam_j) / \sum_{k=1}^{\nnodes} \tillam_k$, and note that $\lamjpart1+\lamjpart2=\lamj$. For $k\in\{1,2\}$, let $\ntgrampart{k}$ be the $n$-by-$n$ symmetric positive semi-definite matrices defined by \cref{eq:NTG}, with $\lamj$ replaced by either $\lamjpart{k}$. Note that
	\begin{align}
		\ntgram(\bX;\paramall_t)=\ntgrampart1(\bX;\paramall_t)+\ntgrampart2(\bX;\paramall_t)
		\label{eq:decompositionNTG}
	\end{align}
	with $\mathbb E[\ntgrampart{1}(\bX;\paramall_0)]=\gamma \ntgramlim(\bX)$.

	The key idea of the proof is to use the above decomposition of the NTG matrix as a sum of two terms, and to show that,
	while the second term may change over time, the first term is close to its mean at initialisation, and does not change much over time.
	The important points of the proof are as follows. For large $m$, with high probability:
	(i) $\ntgrampart1(\bX;\paramall_0)$ is close to its mean $\gamma \ntgramlim(\bX)$ and its minimum eigenvalue is therefore lower bounded by $(\gamma\kappa_n) / 2$;

The important points of the proof are as follows. For large $m$, with high probability:
(i) $\ntgrampart1(\bX;\paramall_0)$ is close to its mean $\gamma \ntgramlim(\bX)$ and its minimum eigenvalue is therefore lower bounded by $(\gamma\kappa_n) / 2$; (ii) while the weights $\paramall_t$ may change significantly over time, $\ntgrampart1(\bX;\paramall_t)$ remains nearly constant over time; (iii) as a result, the minimum eigenvalue of $\ntgrampart1(\bX;\paramall_t)$ can be lower bounded by $(\gamma\kappa_n)/4$; (iv) this implies that the minimum eigenvalue of the overall   NTG matrix $\ntgram(\bX;\paramall_t)$ is lower bounded by $(\gamma\kappa_n)/4$.
	
Since showing that the first part of the NTK/NTG in \Cref{eq:decompositionNTG} does not change in the limit of $m\to\infty$ is a key component of the proof, we give here an outline of it in the simplified case of $d=1$ and $\sigma$ being smooth. Let
\[
\ntker^{(1)}(\bx,\bx';\paramall_t)=\bx \bx' \frac{\gamma}{m}\sum_{j=1}^m  \sigma'(\param_{tj} \bx)\sigma'(\param_{tj} \bx')
\]
be the first part of the NTK. We have, over gradient flow,
\begin{align*}
\left |\frac{d\ntker^{(1)}(\bx,\bx';\paramall_t)}{dt}\right|
&
=\left |\bx \bx' \frac{\gamma}{m}\sum_{j=1}^m \Big(\bx \sigma''(\param_{tj} \bx)\sigma'(\param_{tj} \bx')+\bx' \sigma'(\param_{tj} \bx)\sigma''(\param_{tj} \bx')\Big)\frac{d\param_{tj}}{dt}\right |\\
&\leq 2M \frac{\gamma}{m}\sum_{j=1}^m \left|\frac{d\param_{tj}}{dt}\right|,
\end{align*}
where the last inequality follows from the triangle inequality and \cref{assump:data,assump:activation}. Furthermore,
\begin{align*}
\left|\frac{d\paramtj}{dt}\right| &=\sqrt{\lamj}\left| \sum_{i=1}^n  (y_i-f_{\nnodes}(\bx_i;\paramall_t))\relu'(Z_{tj}(\bx_i))\bx_i\right|\\
&\leq \sqrt{\lamj} \sum_{i=1}^n  \left|y_i-f_{\nnodes}(\bx_i;\paramall_t)\right|\\
&\leq\sqrt{\lamj}2n  L_m(\paramall_t),
\end{align*}	
where the first inequality follows from the triangle inequality and \cref{assump:data,assump:activation}, and the second inequality follows from Cauchy-Schwarz. The change in the first part of the NTK is thus bounded by a quantity involving $\frac{\gamma}{m}\sum_{j=1}^m \sqrt{\lamj}$. Under the scaling in \Cref{eq:lamj},
$\frac{\gamma}{m}\sum_{j=1}^m \sqrt{\lamj}\to 0$ as $m\to\infty$ (see \cref{sec:lambdasproperties} in the Supplementary Material). Hence, the change in the first part of the NTK/NTG becomes asymptotically small as the width $m$ increases. It is worth noting that for the full NTK, i.e. the sum of both parts in \Cref{eq:decompositionNTG}, a similar derivation leads to an upper bound of the order of $(\frac{\gamma}{m}\sum_{j=1}^m \sqrt{\lamj}) + (1-\gamma)\frac{\sum_{j=1}^m \tillam_j\sqrt{\lamj}}{\sum_{k=1}^{\nnodes} \tillam_k}$. Due to the second term, this quantity does not converges to 0 as $m\to\infty$, unless $\gamma=1$ (symmetric case).

	\section{Global convergence analysis for gradient descent}
	\label{sec:globalconvergenceGD}

	Let $\mathbf{y} \in \mathbb{R}^n$ be the vector of outputs $(y_1,\ldots,y_n)^\top$  in the training dataset $\mathcal{D}_n = \{(\bx_i,y_i)\}_{i \in [n]}$. For each gradient-descent step $s$, let $\mathbf{u}_s \in \mathbb{R}^n$ be the outputs at step $s$ based on the inputs in $\mathcal{D}_n$, that is, $\mathbf{u}_s = (f_m(\bx_1;\paramall_s),\ldots,f_m(\bx_n;\paramall_s))^\top$. The following convergence theorem intuitively says that if the learning rate $\eta$, of gradient descent, is sufficiently small and the width of the network is large enough, then with high probability,
	the training error of the network decays exponentially fast to $0$. We remark that our node scaling is independent of the learning rate $\eta$. Thus, the only conditions on $\eta$ in this work are in this section where we extend our results from gradient flow to gradient descent.
	\begin{theorem}
		\label{thm:convergence-gd:smooth-case}
		Consider $\delta \in (0,1)$. Suppose \cref{assump:data,assump:activation,assump:init_V} hold, and that $\gamma > 0$, and
		\[
		0 < \eta < \min\left(\frac{2}{\gamma \kappa_n},\,\frac{\gamma \kappa_n d^2}{8 n^2},\, \frac{\gamma \kappa_n d^2 \delta^{1/2}}{2^{9/2} n^2M(C^2 + C_1)^{1/2}} \right),
		\]
		where $C$ and $C_1$ are from \cref{assump:data,eqn:definition-C1}.
		Let $\beta = (1 - \eta \gamma \kappa_n/2)^{1/2}$. If
		\[
		m
		\geq
		\max\bigg(\frac{2^3 n \log\frac{2n}{\delta}}{\kappa_n d},\;
		\frac{2^5 \eta^2 n^3 M^2  (C^2+C_1)}{   \kappa_n d^3(1 - \beta)^2\delta},\;
		\frac{2^{11}\eta^2 n^4 M^2  (C^2 + C_1)}{ \kappa_n^2 d^4(1 - \beta)^2\delta}
		\bigg),
		\]
		then with probability at least $1 - \delta$,
		\begin{equation}
			\label{eqn:convergence-gd:smooth-case:0}
			\Vert \mathbf{y} - \mathbf{u}_s \Vert^2 \leq (1 - \alpha)^s \Vert \mathbf{y} - \mathbf{u}_0 \Vert^2
			\  \text{for all $s \in \mathbb{N} \cup \{0\}$}.
		\end{equation}
	\end{theorem}
	Note that the condition on the learning rate requires $\eta = O(\gamma\kappa_n/n^2)$. Thus, the best possible convergence rate from the theorem is $(1-(\eta \gamma \kappa_n / 2)) = (1 - (C_0 \gamma^2 \kappa_n^2 / n^2))$ for some constant $C_0$.
	
	The proof is by induction on the gradient-descent step~$s$, and is described
	in detail in \cref{appendix:global-convergence-gd} in the Supplementary Material.
	It is similar to the proof of \cite[Theorem 5.1]{Du2019a}, but the two proofs differ significantly because, as in the case of gradient flow, the weights $\paramsj$ and the Gram matrix $\ntgram(\bX;\paramall_s)$ change during gradient descent in our case, while they remain nearly constant in the case of \cite{Du2019a}.

\section{Feature learning analysis}
\label{sec:featurelearning}

In this section, we present some results about feature learning. We focus on node scalings $(\lamj)_{j \in [\nnodes]}$ of the form in \cref{eq:lamj}, both asymmetrical ($\gamma<1$) and symmetrical ($\gamma=1$; that is, $\lambda_{m,j}=1/m$), but we also discuss alternative parameterisations such as mean-field and $\mu P$. For shallow neural networks trained by gradient descent, mean-field and $\mu P$ parameterisations are equivalent~\cite{Yang2021}, and correspond to node scalings $\lambda_{m,j}=1/m^2$ and learning rate $\eta=\eta_0 m$ for some $\eta_0>0$. We start with some definitions of feature learning in the context of potentially asymmetric scalings, generalising existing definitions. We then present feature learning results first under a linear activation function, and next under a general nonlinear activation function.

\subsection{Definitions}
\label{sec:defFL}

	\begin{definition}[Feature learning]
	\label{def:feature}
	Let $(\param_{0j})_{j\geq 1}$ be a sequence of random initialisations for nodes $j\geq 1$. We will say
	that {\it feature learning} occurs during training
	if\footnote{In \cite{Yang2021}, feature learning was defined in the
	large-width limits for stable and nontrivial parameterisations.
	Nontrivial means that the neural network function $f_m$ is not constant in time. Stable means that both the preactivations and activations
	have $\Theta(1)$ coordinates at initialisation and $O(1)$ coordinates throughout
	training. Both of these properties are satisfied by our model.}
	\begin{align}\label{def:fl}
		\liminf_{m\to\infty} \frac{\sum_{j=1}^\nnodes \lamj \Big( \relu(Z_{j}(\bx;\paramall_t))- \relu(Z_{j}(\bx;\paramall_0))\Big)^2}{\sum_{k=1}^\nnodes \lambda_{m,k} \Big(\relu(Z_{k}(\bx;\paramall_0))\Big)^2}>0
	\end{align}
	almost surely, or in probability, for some $t \in (0,\infty]$ and $\bx$. Here $t = \infty$ refers
	to the case that the ratio in the above inequality is the limit as $t$ tends to $\infty$.
\end{definition}

The left-hand-side quantity in \Cref{def:fl} corresponds to the relative change in
the scaled $m$-dimensional feature maps between time 0 and time $t$. \Cref{def:feature}
matches the definitions of \cite[Definitions 3.5, H.2, H.9]{Yang2021} or
\cite[Proposition 3.2]{Frei2023} in the case of symmetrical NTK or mean-field node
scalings.

\begin{remark}
As already noted by \cite[Remark H.10]{Yang2021}, \cref{def:feature} is a relatively weak notion of feature learning. It only requires a change in the feature map for some $t \in (0,\infty]$ and some $\bx$, and does not relate to the relevance of the learn features for prediction. However, we show empirically in \Cref{sec:experiments} that the feature learning property leads to better performances in terms of prunability and transfer learning.
\end{remark}

The previous definition ensures that a change occurs in the feature map. However, it may still be the case that the contributions from all the individual nodes remain asymptotically infinitesimally small, in such a way that there are no nodes representing important features in the network. This is problematic if one is interested in pruning the nodes of the network, as we show theoretically for a linear activation in \cref{sec:featurelearninglinear}, and empirically in \Cref{sec:experiments}. We introduce below the stronger definition of \textit{non-uniform feature learning}.

	\begin{definition}[Non-uniform feature learning]
	Let $(\param_{0j})_{j\geq 1}$ be a sequence of random initialisations for nodes $j\geq 1$. We will say that {\it non-uniform feature learning} occurs during training if
	\begin{align}\label{def:nofl}
		\liminf_{m\to\infty} \frac{\max_{j \in [\nnodes]} \lamj \Bigr( \relu(Z_{j}(\bx;\paramall_t))- \relu(Z_{j}(\bx;\paramall_0))\Bigr)^2}{\sum_{k=1}^\nnodes \lambda_{m,k} \Bigr(\relu(Z_{k}(\bx;\paramall_0))\Bigr)^2} >0
	\end{align}
	{almost surely} or in probability, for some $t \in (0,\infty]$ and $\bx$. \label{def:nonuniformfeature}
\end{definition}

Note that non-uniform feature learning implies feature learning, but the converse does not hold. For instance, feature learning holds under the mean-field parameterisation, but not non-uniform feature learning (see the next subsection for an illustration with a linear activation).
	
\subsection{Linear activation function}
\label{sec:featurelearninglinear}
	
	We now describe, with the following theorem, analytic results in the case of a linear activation function.
	Although for fixed second-layer weights, the NTK does not change in this linear-activation case,
	the evolution of the weights provides useful insights into the differences between the symmetrical
	and asymmetrical scalings in terms of weight change.
	\begin{theorem}
		\label{thm:featurelearninglinear}
		Assume that the activation function is $\sigma(x)=x$, i.e., the identity map. Let $(\lamj)$ be some node scalings, not necessarily of the form \eqref{eq:lamj}\footnote{Note that we assume none of \cref{assump:data,assump:activation,assump:init_V}
			here.}. Let $\bfX=\bfU\bfD\bfV^\top$ be a reduced singular value decomposition of $\bfX$, where $\bfU$ is an $n\times k$ matrix with orthonormal columns, $\bfD$ is a diagonal $k\times k$ matrix, $\bfV$ is a $d\times k$ matrix with orthonormal columns, and $k\leq \min(n,\din)$ is the rank of $\bfX$. For all $j \in [m]$, the difference between the solution of gradient descent\footnote{with a step-size less than $d(D_{\max}^2\sum_k \lambda_{m,k})^{-1}$, where $D_{\max}$ is the largest entry of $\bfD$.}
/flow $\param_{\infty j}$ and the initialisation $\param_{0 j}$ is given by
		\begin{equation}
			\label{eq:featurelearninglinear-weights}
			\param_{\infty j} - \param_{0 j} = \frac{\sqrt{\lamj}}{\sum_{k=1}^m \lambda_{m,k}}  a_j (\bfbeta_\infty - \bfV \bfV^\top \bfbeta_0)
		\end{equation}
		where $\bfbeta_0 = \sum_{j = 1}^m \sqrt{\lambda_{m,j}} a_j \param_{0j}$, and
		$\bfbeta_{\infty}=\sqrt{d}\, \bfV \bfD^{-1}\bfU^\top \bfy$ is the minimum-norm solution of
		\[
		\mathop{\mathrm{argmax}}_{\bfbeta}\; \frac{1}{2}\|\bfy - \frac{1}{\sqrt{d}} \bfX {\bfbeta}\|^2.
		\]
		The learnt function is
		\begin{equation}
			\label{eq:featurelearninglinear-function}
			f_m(\bx;\paramall_\infty)=\frac{1}{\sqrt{\din}}\bx^\top \left ( \sum_{j = 1}^m \sqrt{\lambda_{m,j}} a_j \param_{\infty j}\right )=\frac{1}{\sqrt{\din}}\bx^\top \left (\bfbeta_{\infty} +(I_{\din}- \bfV \bfV^\top) \bfbeta_0  \right).
		\end{equation}
	\end{theorem}
	The proof of this theorem is given
	in \cref{sec:app:linear} in the Supplementary Material.
	
	\Cref{thm:featurelearninglinear} says that
	along the dimensions spanned by the data, the weight vector of a node $j$ moves
	by a quantity proportional to $\sqrt{\lamj}$ towards the minimum-norm solution. The form of the learnt function
	in \cref{eq:featurelearninglinear-function} implies that the contribution of each hidden node $j$ to the function's output
	is proportional to $\sqrt{\lambda_{m,j}} a_j \param_{\infty j}$. The next theorem analyses
	the asymptotic behaviour of this contribution in the infinite-width limit and the associated feature learning properties. It shows that, under the scaling of \cref{eq:lamj}, both feature learning and non-uniform feature learning occur if and only if $\gamma<1$.

	\begin{theorem}[Feature learning - linear activation]
		\label{thm:featurelearninglinear-infinite}
		Assume the setting of \cref{thm:featurelearninglinear}, and that \cref{assump:init_V} holds.
\begin{itemize}
\item Under the node scalings \eqref{eq:lamj}, both feature learning (\Cref{def:feature}) and non-uniform feature learning (\Cref{def:nonuniformfeature}) hold if and only if $\gamma<1$.
\item Under the mean-field scaling, feature learning (\Cref{def:feature}) holds, but non-uniform feature learning (\Cref{def:nonuniformfeature}) does not.
\end{itemize}
The contribution from the $j$th node is marginally normally distributed with
		\[
		\sqrt{\lambda_{m,j}} a_j \param_{\infty j}
		\;\sim\;
		\mathcal N\left(\frac{\lamj}{\sum_k \lambda_{m,k} } \bfbeta_{\infty},\; \lamj \left(I_\din-\frac{\lamj}{\sum_k \lambda_{m,k}} \bfV\bfV^\top \right)\right)
		\quad\text{for all $\nnodes \geq 1$ and $1\leq j\leq\nnodes$}.
		\]
This implies that, under the mean-field scaling, contributions are asymptotically vanishing. Under the scaling \eqref{eq:lamj}, we have
		\begin{equation}
			\sqrt{\lambda_{m,j}} a_j \param_{\infty j}
			\;\cvdto\;
			\mathcal N\left((1-\gamma)\tillam_j \bfbeta_{\infty},\; (1-\gamma)\tillam_j \left(I_\din-(1-\gamma)\tillam_j\bfV\bfV^\top\right)\right)
			\quad\text{as $\nnodes\to\infty$}.
			\label{eq:convergencewinf}
		\end{equation}
		As a result, if $\gamma<1$ and $\tillam_j>0$, the contribution from the $j$th node to the output is non-vanishing in the infinite-width limit.
	\end{theorem}

	Before moving on to the case of the nonlinear activation function, we analyse the consequence of pruning
	nodes of a linear network based on the scaling parameters $\lamj$. The result of this analysis
	is given in the following proposition, which shows the benefit of our asymmetric scaling in pruning.
	\begin{proposition}
		\label{thm:featurelearninglinearpruning}
		Assume the setting of \cref{thm:featurelearninglinear}, and that \cref{assump:init_V} holds. Assume also the node scaling \eqref{eq:lamj}. Let $\rho \in(0,1)$.
		Consider the following pruned network that is obtained by keeping
		the $\lfloor\rho m\rfloor$ hidden nodes with largest scalings $\lamj$
		and pruning the other nodes:
		\[
		\widetilde f_{m,\rho}(\bx;\paramall_\infty)
		=
		\left(
		\sum_{j = 1}^{\lfloor \rho m \rfloor} \sqrt{\lambda_{m,j}} a_j \frac{\param_{\infty j}^\top\, \bx}{\sqrt{d}}
		\right)
		=
		\frac{1}{\sqrt{\din}}\bx^\top
		\left(\sum_{j = 1}^{\lfloor \rho m \rfloor} \sqrt{\lambda_{m,j}} a_j \param_{\infty j}\right).
		\]
		Then, for all $\varepsilon>0$, we have the following bound on the pruning error:
		\[
		\Pr\left(\left|\widetilde f_{m,\rho}(\bx;\paramall_\infty)-f_{m}(\bx;\paramall_\infty)\right|>\varepsilon\right)
		\;\leq\;
		\frac{\|\bx\|}{\varepsilon\sqrt{\din}} \left(\left(\left\| \bfbeta_\infty \right\|+\sqrt{\din}\right)\left(\sum_{j>\lfloor \rho m \rfloor} \lambda_{m,j}\right)+\sqrt{\din\sum_{j>\lfloor \rho m \rfloor} \lambda_{m,j}}
		\right).
		\]
		Since  $\sum_{j>\lfloor \rho m \rfloor} \lambda_{m,j}\to \gamma(1-\rho)$ as $\nnodes\to\infty$,
		the above bound implies the following. If $\gamma=0$ (no symmetric part), then
		$\Pr(|\widetilde f_{m,\rho}(\bx;\paramall_\infty)-f_{m}(\bx;\paramall_\infty)|>\varepsilon)\to 0$
		and so the network can be compressed to a smaller network via pruning. Otherwise, the pruning error is controlled by the proportion of the symmetric
		part $\gamma$ in the infinite-width limit:
		\begin{align*}
			\lim_{m\to\infty}\Pr\left(\left|\widetilde f_{m,\rho}(\bx;\paramall_\infty)-f_{m}(\bx;\paramall_\infty)\right|>\varepsilon\right)
			\;\leq\;
			\frac{\|\bx\|}{\varepsilon\sqrt{\din}} \left(\left(\left\| \bfbeta_\infty \right\|+\sqrt{\din}\right)\gamma(1-\rho)+\sqrt{\din \gamma(1-\rho)}
			\right).
		\end{align*}
	\end{proposition}

\subsection{Nonlinear activation function}
	\label{sec:featurelearningnonlinear}
		
We now analyse feature learning in our model when the activation function is nonlinear. Our analysis assumes
the following two changes in our setup:
\begin{assumption}[Zeroed initialisation]
	\label{assump:zeroed-initialisation}
	We assume the model has the following form:
	\[
		f_\nnodes(\bx;\paramall) =
		\left(\sum_{j = 1}^{\nnodes} \sqrt{\lamj} \aj \sigma(Z_j(\bx;\paramall))\right)
		-
		\left(\sum_{j = 1}^{\nnodes} \sqrt{\lamj} \aj \sigma(Z_j(\bx;\paramall_0))\right).
	\]
	That is, we subtract from the original model, a duplicate version whose parameters are set to $\paramall_0$
	and are unchanging throughout training. This is a commonly used simplification
	in theoretical analyses of neural networks. It ensures that at initialisation, the model satisfies
	\[
		f_\nnodes(\bx;\paramall_0) = 0 \text{ for all } \bx \in \R^d.
	\]
\end{assumption}
\begin{assumption}[Random outputs]
	\label{assump:random-outputs}
	We regard the outputs $y_1,\ldots,y_n$ as random variables, so that the probabilities in \cref{def:fl,def:nofl}
	refer to the randomness of the outputs as well. We further assume that
	$y_1,\ldots,y_n$ are independent and continuous (i.e., the distribution of $y_i$
	has a density with respect to Lebesgue measure), and that they are also independent from
	$\paramall_0$ and the $\aj$'s.
	Note that we still treat the inputs $x_1,\ldots,x_n$ as deterministic variables.
	This assumption is met if, for example, there exists a true generating function $f^*$ such that
	$y_i = f^*(x_i) + \epsilon_i(x_i)$ where the $\epsilon_i(x_i)$ are independent continuous noise variables.
\end{assumption}

Our analysis on feature learning considers gradient descent with a learning rate $\eta$ that
does not depend on~$\nnodes$ just as we did in \cref{sec:globalconvergenceGD}. This is in contrast to parameterisations such as mean-field parameterisation where
the learning rate has a scaling dependent on $\nnodes$.
However, let us note that key reasoning steps in our proofs also apply to
such $\nnodes$-dependent learning rates after minor modifications, allowing  us to recover existing feature-learning results as we will explain shortly after
\cref{thm:featurelearning-smooth}. The full proofs of all the theorems in this subsection
are given in \cref{sec:featurelearning-proof} in the Supplementary Material. Also, the theorems in this subsection have
counterparts that hold for the ReLU activation function. \cref{sec:featurelearning-relu} in the Supplementary Material
contains those feature-learning results for the ReLU case.

We show that
if the activation function is continuously differentiable and its derivative is always positive, as in the case of sigmoid,
then after the first gradient-descent step, (i) both feature learning and non-uniform feature
learning in \cref{def:feature,def:nonuniformfeature} occur (\cref{thm:featurelearning-smooth})
and (ii) the squared norm of each weight vector $\param_j$ changes almost surely by the amount $\Omega(\tillam_1)$
in the infinite-width limit (\cref{thm:weight-norm-square-change-smooth}).
\begin{theorem}
	\label{thm:featurelearning-smooth}
	Suppose that \cref{assump:data,assump:init_V,assump:zeroed-initialisation,assump:random-outputs} hold.
	Suppose also  that $\gamma < 1$ and that the activation function $\sigma$ is continuously differentiable with $\sigma'(x) > 0$ for all $x \in \R$. Let $i \in [n]$. Then, both feature learning
	and non-uniform feature learning occur after the first gradient-descent step with respect to the input $\bx_i$ in the almost-sure sense, i.e., the following inequalities hold almost surely:
	\begin{align*}
		& \liminf_{\nnodes \to \infty} \frac{\sum_{j = 1}^{\nnodes} \lamj \Big(\sigma(Z_j(\bx_i;\paramall_1)) - \sigma(Z_j(\bx_i;\paramall_0))\Big)^2}{\sum_{j = 1}^\nnodes \lamj \Big(\sigma(Z_j(\bx_i;\paramall_0))\Big)^2} > 0
		\\
		& \qquad \text{and}\qquad \liminf_{\nnodes \to \infty} \frac{\max_{j \in [\nnodes]} \lamj \Big(\sigma(Z_j(\bx_i;\paramall_1)) - \sigma(Z_j(\bx_i;\paramall_0))\Big)^2}{\sum_{j=1}^\nnodes \lamj \Big(\sigma(Z_j(\bx_i;\paramall_0))\Big)^2} > 0.
	\end{align*}
\end{theorem}

\paragraph{Sketch of the proof.}
Since non-uniform feature learning implies feature learning, we prove the former only.
The denominator $\sum_{j=1}^\nnodes \lamj (\sigma(Z_j(\bx_i;\paramall_0)))^2$ in the condition for
non-uniform feature learning converges to a
positive finite value almost surely as $m$ tends to $\infty$. Thus, it is enough to prove that
\[
	\liminf_{\nnodes \to \infty} \left(\max_{j \in [\nnodes]}\lamj\left(\sigma(Z_j(\bx_i;\paramall_1)) - \sigma(Z_j(\bx_i;\paramall_0))\right)^2\right) > 0
	\quad\text{almost surely},
\]
which is implied by
\begin{equation}
	\label{eqn:featurelearning-smooth:proof-sketch:0}
	\liminf_{\nnodes \to \infty}\left(\sigma(Z_1(\bx_i;\paramall_1)) - \sigma(Z_1(\bx_i;\paramall_0))\right)^2 > 0
	\quad\text{almost surely}.
\end{equation}
Note that
the limits from above are not redundant since $\paramall_1$ depends on $\nnodes$.
The sufficient condition in \cref{eqn:featurelearning-smooth:proof-sketch:0} can be simplified further. The assumptions of the theorem allow us
to use the inverse function theorem to deduce that the condition in \cref{eqn:featurelearning-smooth:proof-sketch:0} holds whenever
\begin{equation}
	\label{eqn:featurelearning-smooth:proof-sketch:1}
	\liminf_{\nnodes \to \infty}\left(Z_1(\bx_i;\paramall_1) - Z_1(\bx_i;\paramall_0)\right)^2 > 0
	\quad\text{almost surely.}
\end{equation}
The majority of the detailed proof concerns proving the condition in \cref{eqn:featurelearning-smooth:proof-sketch:1}. To that end, we compute the following $\nnodes$-independent
lower bound: for all $\nnodes$,
\begin{align}
	\label{eqn:featurelearning-smooth:proof-sketch:2}
	\Big(Z_1(\bx_i;\paramall_1) - Z_1(\bx_i;\paramall_0)\Big)^2
	& {} =
	\frac{\eta^2\lambda_{\nnodes,1}}{d^2}\left(
		\sum_{i' = 1}^n y_{i'}
		\left(\sigma'\left(\frac{\param_{01}^\top \bx_{i'}}{\sqrt{d}}\right)
		\bx_{i'}^\top \bx_i\right)
	\right)^2
	\\
	\label{eqn:featurelearning-smooth:proof-sketch:3}
	& {} \geq
	\frac{\eta^2(1-\gamma)\tillam_{1}}{d^2}\left(
		\sum_{i' = 1}^n y_{i'}
		\left(\sigma'\left(\frac{\param_{01}^\top \bx_{i'}}{\sqrt{d}}\right)
		\bx_{i'}^\top \bx_i\right)
	\right)^2.
\end{align}
Then, we show that the right-hand side is almost surely positive, which implies the conclusion
of the theorem. The justification of this almost-sure positivity relies on \cref{assump:random-outputs,assump:data}, the positivity of $\sigma'$, and the assumption that $\gamma < 1$. Concretely,
the assumption on $\gamma$ implies that $\eta^2(1-\gamma)\tillam_{1} / d^2 > 0$,
while \cref{assump:random-outputs,assump:data} and the positivity of $\sigma'$ imply that the squared sum on the right-hand side is almost surely a positive random variable.

We point out that the proof also reveals that if we were to allow the
learning rate $\eta$ to depend on $\nnodes$ in a particular way, then feature
learning could occur even for $\gamma = 1$ (although the condition for
non-uniform feature learning might fail). For instance, when $\eta = \sqrt{\nnodes}$ and $\gamma = 1$,
\cref{eqn:featurelearning-smooth:proof-sketch:2} still holds,
and its right-hand side is almost surely positive because
$\eta^2 \lambda_{\nnodes,1}/d^2 = 1/d^2 > 0$. This almost-sure positivity
and the assumptions on $\sigma$ then imply that
\begin{equation}
\label{eqn:featurelearning-smooth:proof-sketch:4}
\mathbb{E}[(\sigma(Z_1(\bx_i;\paramall_1)) - \sigma(Z_1(\bx_i;\paramall_0)))^2] > 0,
\end{equation}
which in turn ensures that the condition for feature learning holds, i.e., almost surely,
\begin{align*}
	\liminf_{\nnodes \to \infty}
	\frac{
		\sum_{j = 1}^{\nnodes} \lamj \big(\sigma(Z_j(\bx_i;\paramall_1)) - \sigma(Z_j(\bx_i;\paramall_0))\big)^2
	}{
		\sum_{j = 1}^\nnodes \lamj \big(\sigma(Z_j(\bx_i;\paramall_0))\big)^2
	}
	& =
	\frac{
		\lim_{\nnodes \to \infty} \frac{1}{\nnodes} \sum_{j=1}^\nnodes
	    \big(\sigma(Z_j(\bx_i;\paramall_1)) - \sigma(Z_j(\bx_i;\paramall_0))\big)^2
	}{
		\lim_{\nnodes \to \infty} \frac{1}{\nnodes} \sum_{j=1}^\nnodes
		\sigma(Z_j(\bx_i;\paramall_0))^2
	}
	\\
	&
	{} =
	\frac{
		\mathbb{E}\big[
	    \big(\sigma(Z_1(\bx_i;\paramall_1)) - \sigma(Z_1(\bx_i;\paramall_0))\big)^2\big]
	}{
		\mathbb{E}[
		\sigma(Z_1(\bx_i;\paramall_0))^2
		]
	}
	> 0.
\end{align*}
However, note that in this case,
by \cite[Theorem 3.3]{Yang2021}, the model becomes unstable and could blow up (cf. the footnote regarding \cref{def:fl}).
Also, if $\eta = \nnodes$ and we were to use $1/\nnodes^2$ for $\lamj$ (i.e., mean-field
parameterisation), the essentially same argument would apply and lead to
feature learning,
although such $\lamj$'s are not covered by our setup. Just as before,
\cref{eqn:featurelearning-smooth:proof-sketch:2} would hold
and its right-hand side would be almost surely positive in this case,
because $\eta^2 \lambda_{\nnodes,1}/d^2 = 1/d^2 > 0$. This almost-sure positivity implies
the inequality in \cref{eqn:featurelearning-smooth:proof-sketch:4},
which then gives the condition for feature learning because the
$\liminf$
formula in the definition of feature learning can be simplified to
the same ratio of expectations as the one before.

\begin{theorem}
\label{thm:weight-norm-square-change-smooth}
Suppose \cref{assump:data,assump:init_V,assump:zeroed-initialisation,assump:random-outputs} hold.
Suppose also that the activation function $\sigma$ is continuously differentiable
and satisfies $\sigma'(x) > 0$ for all $x \in \R$. Then, for all $j$, the following holds almost surely:
\begin{equation}
\label{eqn:weight-norm-square-change-smooth:0}
	\liminf_{\nnodes \to \infty}
	\left\|\left.\nabla_{\param_{tj}} L(\paramall_t)\right|_{t=0}\right\|^2
	\geq
	\frac{(1-\gamma)\tillam_j}{d}
	\left|\sum_{i = 1}^n \sum_{i'=1}^n
		y_iy_{i'}
		\left(\bx_{i}^\top \bx_{i'}
			\sigma'\left(\frac{\param_{0j}^\top \bx_i}{\sqrt{d}}\right)
			\sigma'\left(\frac{\param_{0j}^\top \bx_{i'}}{\sqrt{d}}\right)
		\right)
	\right|.
\end{equation}
In particular, if $\gamma < 1$ and $\tillam_j > 0$, then the lower bound is positive almost surely so that
we have
\[
	\liminf_{\nnodes \to \infty}
	\left\|\left.\nabla_{\param_{tj}} L(\paramall_t)\right|_{t=0}\right\|^2 > 0
\]
with probability $1$.
\end{theorem}

\paragraph{Sketch of the proof.}
The lower bound in \cref{eqn:weight-norm-square-change-smooth:0} is obtained by
a relatively straightforward calculation since the calculation of the gradient is simplified due to \cref{assump:zeroed-initialisation}. The second part of the theorem follows
from the fact that when conditioned on $\param_{0j}$, the absolute value on the right-hand side of \cref{eqn:weight-norm-square-change-smooth:0} is a continuous
random variable that depends only on the $y_i$'s, and so it is strictly positive with probability one.

\section{Experiments}
\label{sec:experiments}

We use here a (smooth) swish activation function  $\sigma(z)=z/(1+e^{-z})$. We obtained quantitatively similar results with the ReLU activation function; see \cref{sec:additionalexperiments-relu} in the Supplementary Material.

\subsection{Simulated data.}

\subsubsection{Illustration of the main results}
We first illustrate our theory on simulated data.\footnote{The code can found at \url{https://github.com/juho-lee/asymmetrical_scaling}} We generate $n=100$ observations where for $i \in [n]$, $\bx_i$ is $d=50$ dimensional and sampled uniformly on the unit sphere, and
$y_i = \frac{5}{d} \sum_{j=1}^d \sin (\pi x_{i,j}) + \varepsilon_i$ where $\varepsilon_i \iid \mathcal{N}(0, 1)$. We use the FFNN of \cref{sec:statisticalmodel}, with the swish activation function, $\nnodes=2000$ hidden nodes, and $\lamj$ as in \cref{eq:lamj,eq:tildelambdazipf}.
We consider the four values $(\gamma,\alpha)\in\{(1,-), (0.5,0.7), (0.2, 0.5), (0, 0.4)\}$. For each setting, we run GD with a learning rate of 1.0 for 50\,000 steps, which is repeated five times to get average results.
We summarise the results in \cref{fig:simulated}, which shows the training error and the evolution of the weights,   NTG, and minimum eigenvalue of the   NTG as a function of the GD iterations. We see a clear correspondence between the theory and the empirical results. For $\gamma>0$, GD achieves near-zero training error. The minimum eigenvalue and training rates increase with $\gamma$. For $\gamma=1$, we have the highest minimum eigenvalue and the fastest training rate, but there is no/very little feature learning: the weights and the   NTG do not change significantly over the GD iterations. When $\gamma<1$, there is clear evidence of feature learning: both the weights and the   NTG change significantly over time; the smaller the $\gamma$ and $\alpha$, the more feature learning arises.

\begin{figure*}
	\centering
	\includegraphics[width=0.24\linewidth]{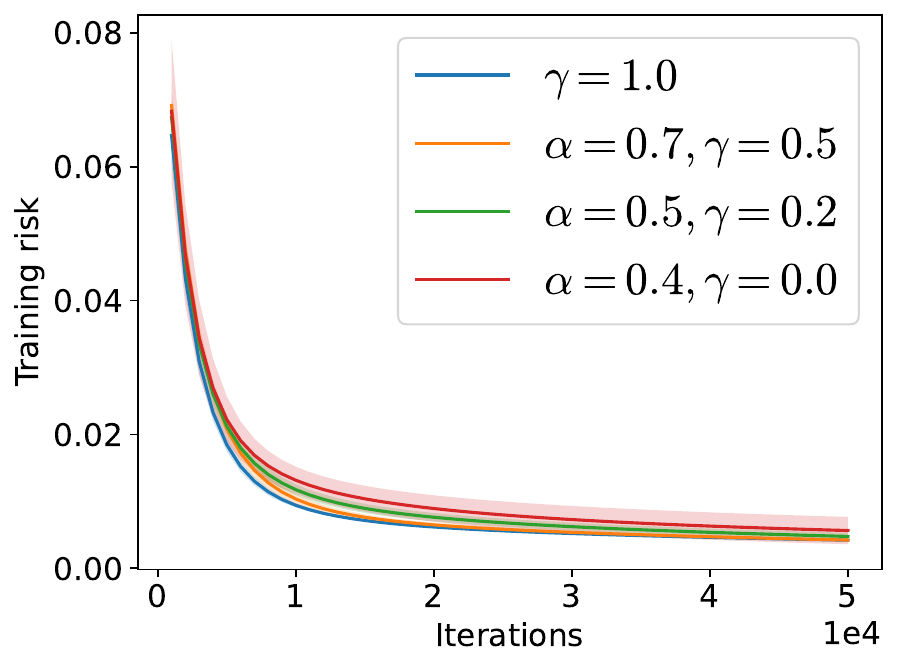}
	\includegraphics[width=0.24\linewidth]{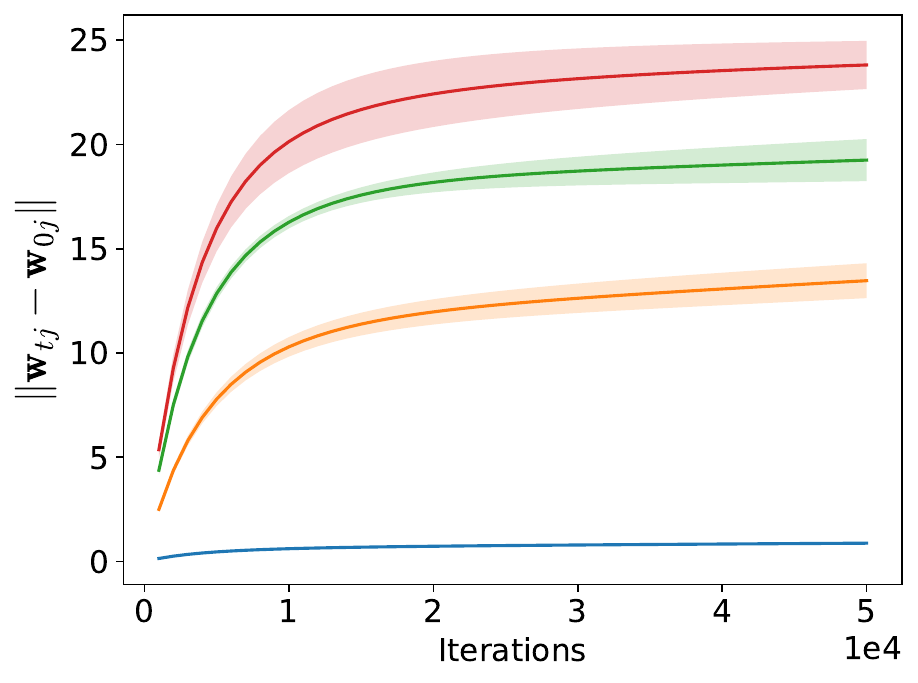}
	\includegraphics[width=0.24\linewidth]{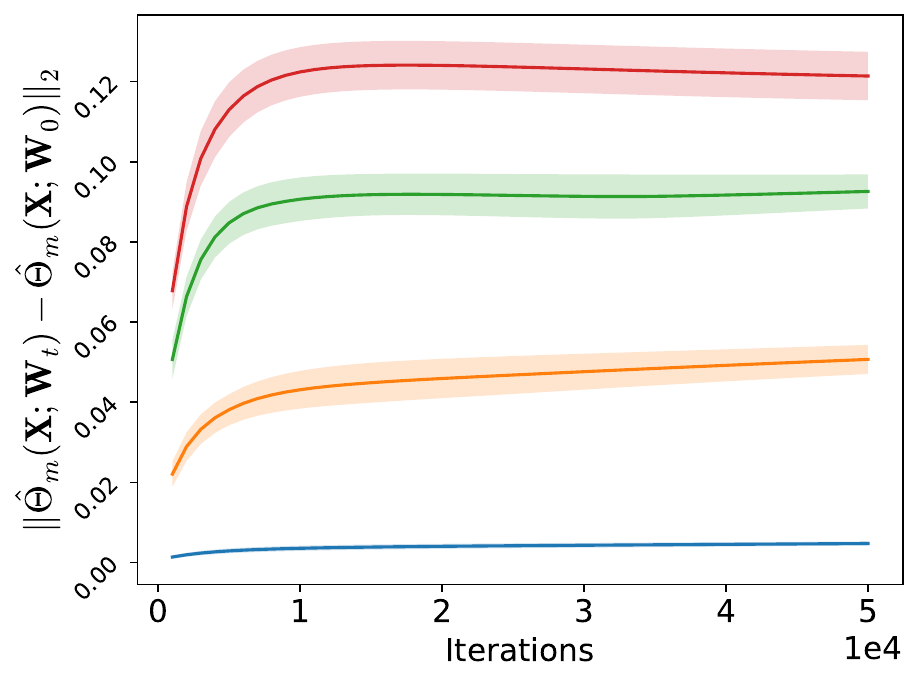}
	\includegraphics[width=0.24\linewidth]{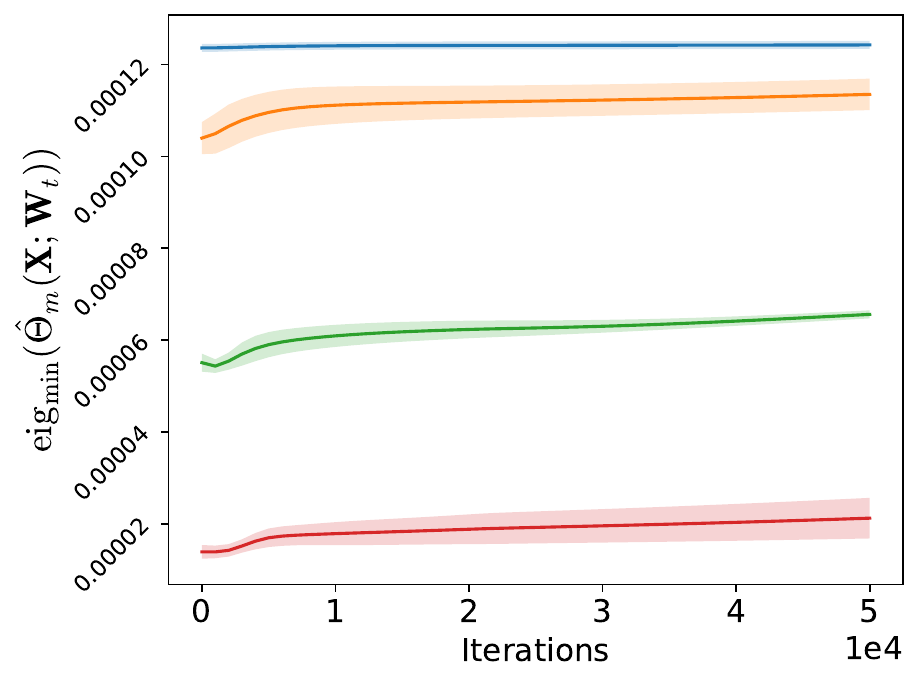}
	\caption{Results on simulated data. From left to right, 1) training risks, 2) differences in weight norms $\Vert \mathbf{w}_{tj} - \mathbf{w}_{0j}\Vert$ with the $j$'s being those neurons which have maximal differences at the end of the training, 3) differences in  NTG matrices, and 4) minimum eigenvalues of   NTG matrices.}
	\label{fig:simulated}
\end{figure*}

\subsubsection{Improved generalization: Single ReLU experiment.}
To illustrate the benefits of asymmetrical scaling, we consider here the scenario where the function to learn is a single-unit ReLU, a setting known to be challenging for the lazy-training regime \citep{malach2021quantifying}. Consider the following data-generating process:
\begin{eqnarray*}
	X &\sim& \mathcal{N}(0, I_d) \\
	Y & = & \sigma (w_0^T X)
\end{eqnarray*}
where $w_0 = (1, .., 1) \in \mathbb{R}^d$ and $\sigma$ is the ReLU activation function. In this experiment, we sample a training dataset of $n=100$ samples in dimension $d=10$. As previously, we train fully connected neural networks composed of a single hidden layer, with different node-scaling strategies. The width in all models is set to $P=2000$. The generalization error is computed on 5000 samples from the same single ReLU data-generating process. All experiments are repeated 5 times, the training and testing datasets are resampled for each run. The results are reported in Figure \ref{fig:simul_alignment}. We notice that at the end of the training, the error is near zero for all scalings. Examining the error on the test set, one can see that, as expected, the symmetrical $\gamma=1$ NTK scaling generalizes poorly in this setting. The asymmetrical scalings, on the other hand, perform significantly better, illustrating the benefits of this strategy in terms of generalization error. For completeness, we also report the results using the standard pytorch initialization. We can see that in this setting of a sparse generating process, the standard initialization also generalizes poorly compared to the asymmetrical model.

\begin{figure}
	\centering
	\includegraphics[width=0.4\linewidth]{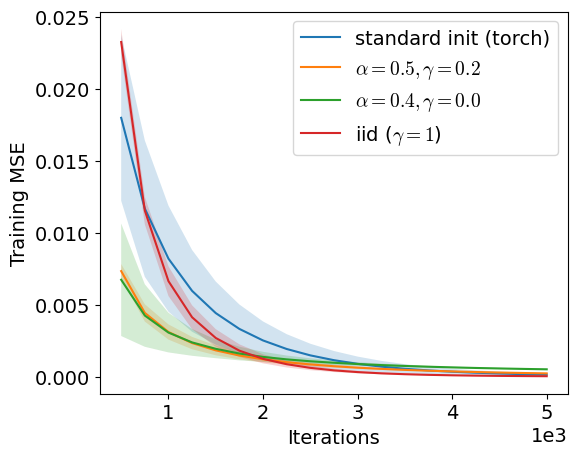}
	\includegraphics[width=0.4\linewidth]{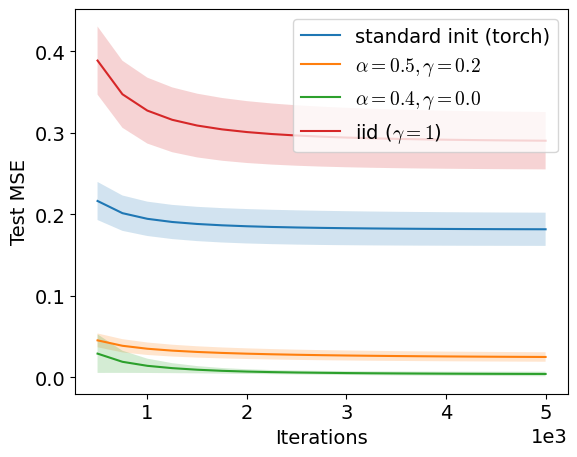}
	\caption{Results on simulated data from a single ReLU unit. Evolution of the training error (left) and test error (right) as a function of the training iteration.}
	\label{fig:simul_alignment}
\end{figure}

\subsection{Real data}

\subsubsection{Regression}
We also validate our model on four 
regression datasets from the UCI repository\footnote{\url{https://archive.ics.uci.edu/ml/datasets.php}}: \texttt{concrete} ($(n,d)\,{=}\,(1030,9)$), \texttt{energy} ($(n,d)\,{=}\,(768,8)$), \texttt{airfoil} ($(n,d)\,{=}\,(1503,6)$), and \texttt{plant} ($(n,d)\,{=}\,(9568,4)$).
We split each dataset into training (40\%) , test (20\%), and validation sets (40\%), and use the validation set to test transfer learning. We use the same parameters as for the simulated data, but train our FFNNs for 100 000 steps in each run.
To further highlight the presence of feature learning in our model, we test the transferability of features learnt from our networks as follows. We first split the validation set into a held-out training set (50\%) and a test set (50\%), and extract features of the held-out training set using the FFNNs trained on the original training set. Features are taken to be the outputs of the hidden layers, so each data point in the validation set is represented with a $\nnodes=2000$ dimensional vector. Then, we sort feature dimensions with respect to feature importance measured as $(\lambda_{m,j}\Vert \mathbf{w}_{tj}\Vert^2)_{j \in [m]}$ and use the top-$k$ of these to train an external model. The chosen external model is a FFNN with a single hidden layer having 64 neurons and ReLU activation, which is trained for 5000 GD steps with a learning rate of 1.0. Our theory suggests that smaller $\gamma$ and $\alpha$ likely lead to better transfer learning.  A subset of our results appears in \cref{fig:regression}; see \cref{appendix_results} for additional results. In line with the simulated data experiments, we observe a stronger presence of feature learning, in terms of weight-norm changes and   NTG changes, for smaller values of $\gamma$ and $\alpha$. Also, we observe that models with smaller values of $\gamma$ have lower risks when a small number of features are used for the transfer. The interpretation is that those models are able to learn a sufficient number of representative features using relatively fewer neurons.

\begin{figure*}
	\centering
	\includegraphics[width=0.24\linewidth]{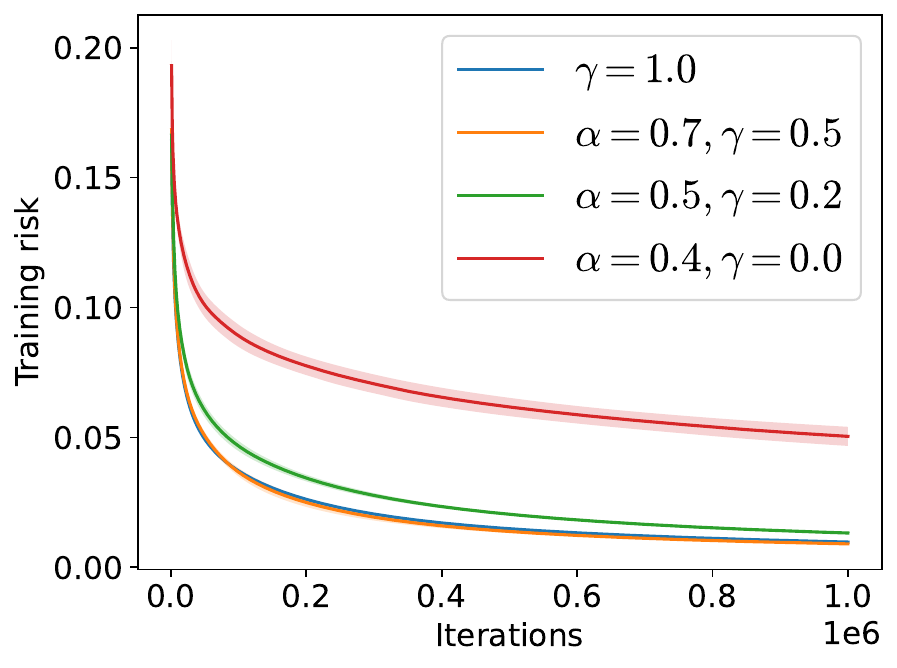}
	\includegraphics[width=0.24\linewidth]{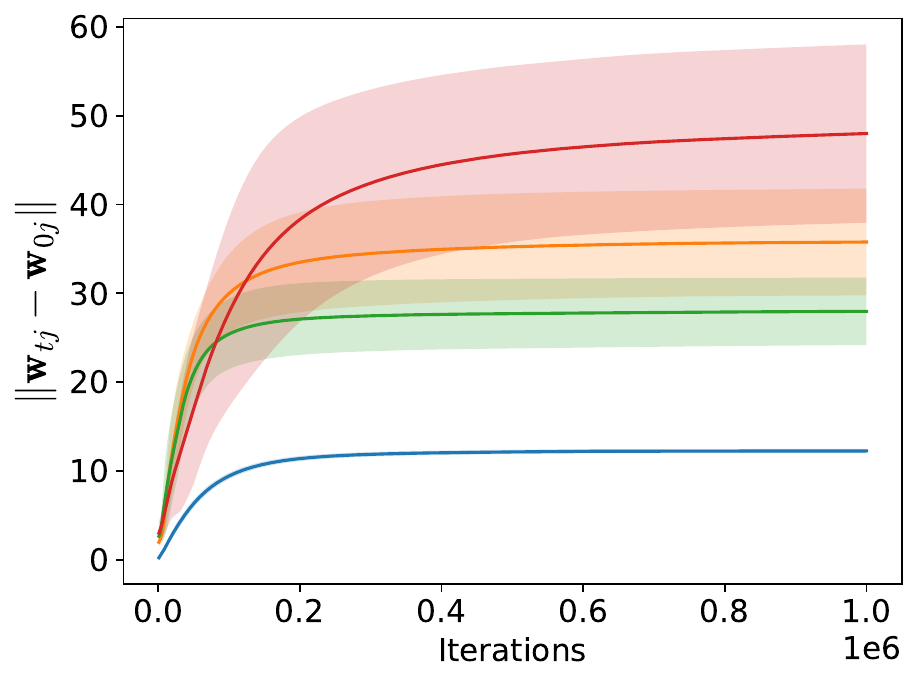}
	\includegraphics[width=0.24\linewidth]{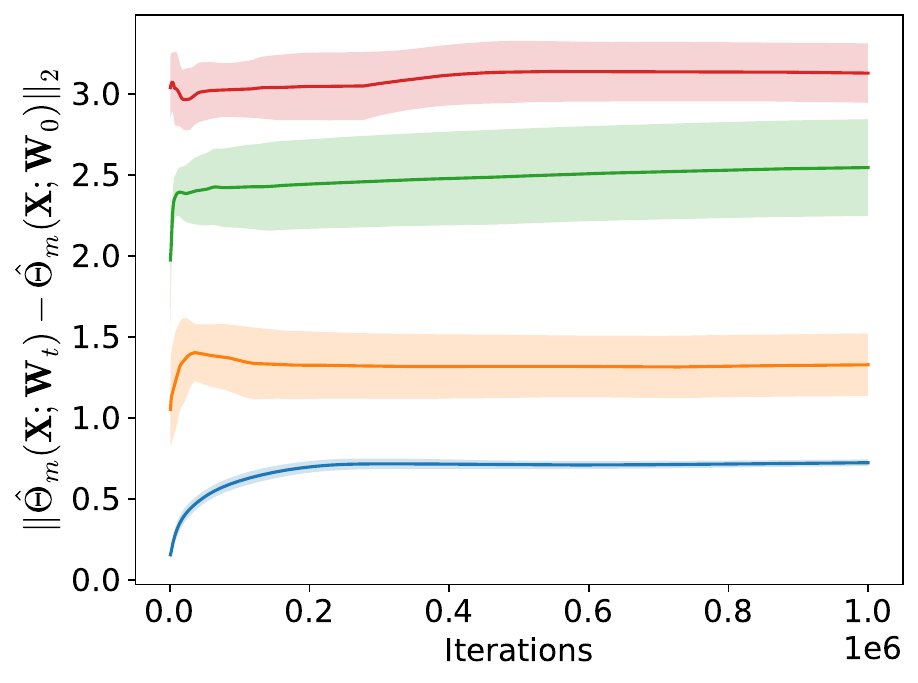}
	\includegraphics[width=0.24\linewidth]{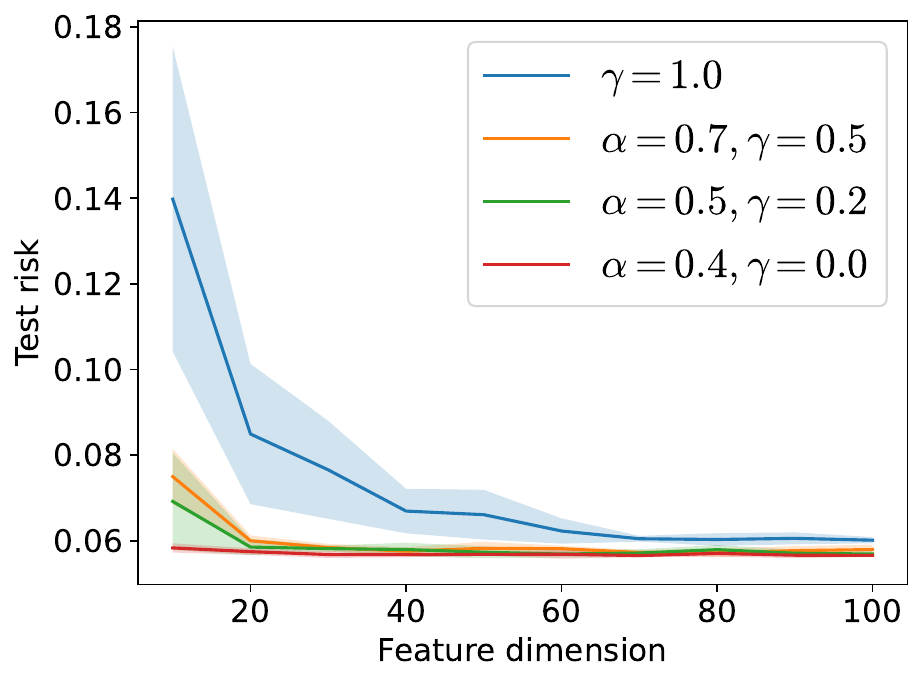}
	\caption{A subset of results for the regression experiments. From left to right, 1) training risks for dataset \texttt{concrete} , 2) differences in weight norms $\Vert \mathbf{w}_{tj} - \mathbf{w}_{0j}\Vert$ with the $j$'s being the neurons having the maximal difference at the end of the training for dataset \texttt{energy}, 3) differences in   NTG matrices for  dataset \texttt{airfoil}, and
		4) test risks of transferred models for dataset \texttt{plant}. }
	\label{fig:regression}
	\vspace{-1em}
\end{figure*}

\subsubsection{Classification}
We apply our model on two image classification tasks. The first is small-scale using the setting assumed in our theory, while the second is larger-scale using a more realistic setting. In addition to the transferability experiment described before, 
we test the prunability of the FFNNs. We gradually prune hidden nodes with small feature importance and measure risks after pruning. Feature importance is measured as above. Our theory suggests that models with smaller $\gamma$ and/or $\alpha$ values are likely to be more robust with respect to pruning, as long as $\gamma<1$. \citet{Wolinski2020a} had similar empirical findings on the benefits of asymmetrical scaling for network pruning when $\gamma=0$. \\[0.5ex]
\textbf{MNIST.} We take a subset of size 5000 from the MNIST dataset and train the same models used in the previous experiments. We also test pruning and transfer learning, where we use an additional subset of size 5000 to train an external FFNN having a single hidden layer with 128 nodes. To match our theory, instead of using cross-entropy loss, we use the MSE loss by treating one-hot class labels as continuous-valued targets. The outputs of the models are 10 dimensional, so we compute the   NTG matrices using only the first dimension of the outputs. In general, we get similar results in line with our previous experiments. The pruning and transfer learning results are displayed in \cref{fig:mnist}. For other results, see \cref{fig:mnist_all} in the Supplementary Material.\\[0.5ex]
\textbf{CIFAR.} We consider a more challenging image classification task of CIFAR--10 and CIFAR--100. The datasets have 60\,000 images with 50\,000 for used training and the rest used for testing. There are, respectively, 10 and 100 different classes. We show the benefits of asymmetrical node scaling hold for this more challenging problem. In many applications, one uses a  large model pre-trained on a general task and then performs fine-tuning or transfer learning to adapt it to the task at hand. We implement this approach on a ResNet-18 model, pre-trained on ImageNet data. With this model, we transform each original image to a vector of dimension $512$. We then train shallow FFNNs as described in \cref{sec:statisticalmodel}, with $\nnodes=2000$ and output dimension $10$ (resp. $100$). This experiment differs from previous cases as 1) we use stochastic GD with a mini-batch size of $64$ instead of full batch GD; 2) we use cross-entropy loss instead of MSE; and 3) both layers are trained. All experiments are run five times, with the learning rate $5.0$. \Cref{fig:cifar100} shows the pruning results for the same four values of pairs $(\gamma, \alpha)$ as above, for CIFAR--100. Similar results are obtained for CIFAR--10; see \cref{appendix_results}. Similar conclusions as before hold here, even though the theory does not apply directly.

\begin{figure*}
	\centering
	\includegraphics[width=0.24\linewidth]{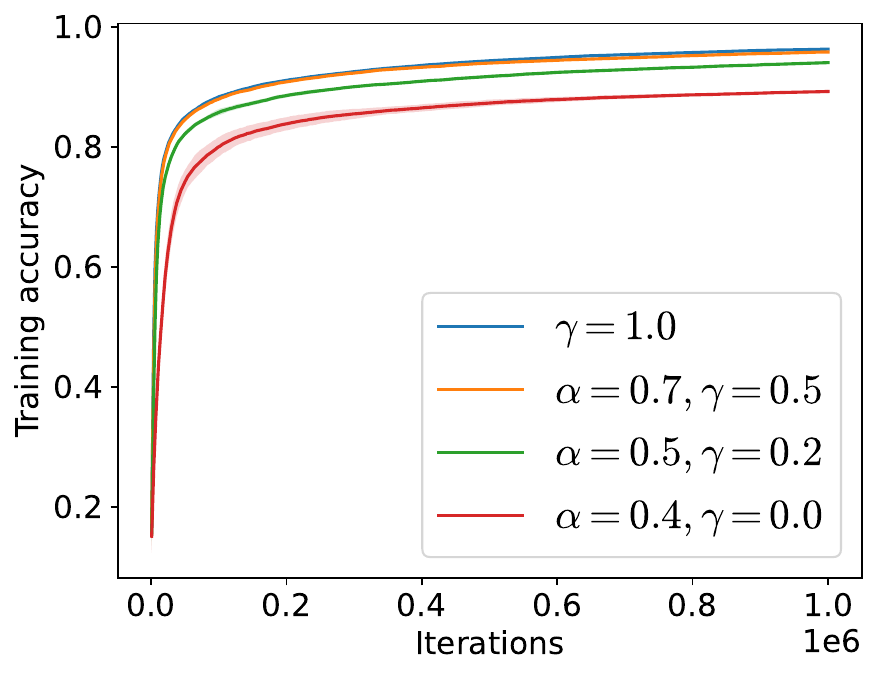}
	\includegraphics[width=0.24\linewidth]{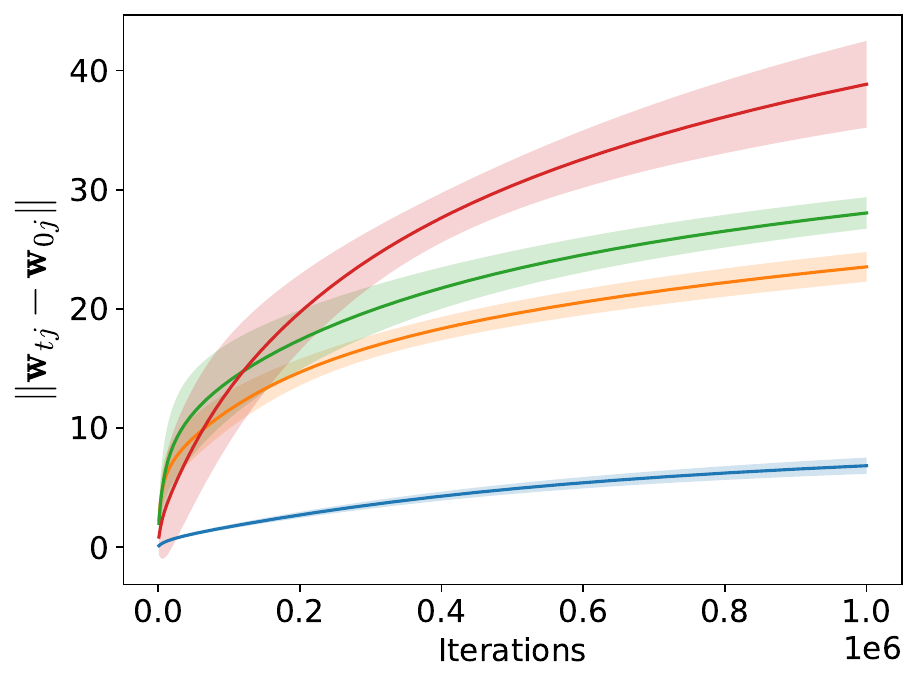}
	\includegraphics[width=0.24\linewidth]{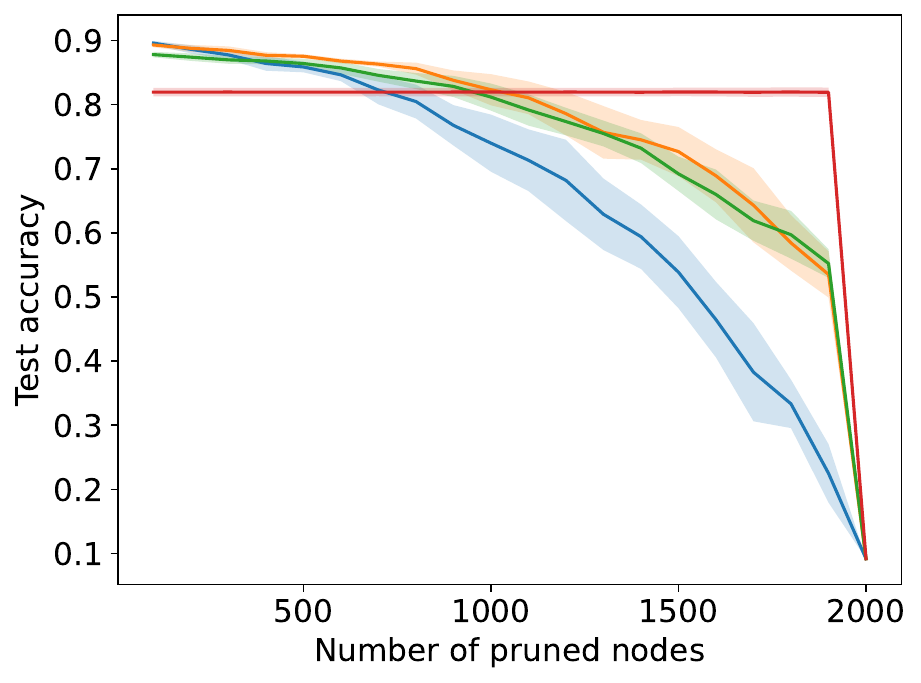}
	\includegraphics[width=0.24\linewidth]{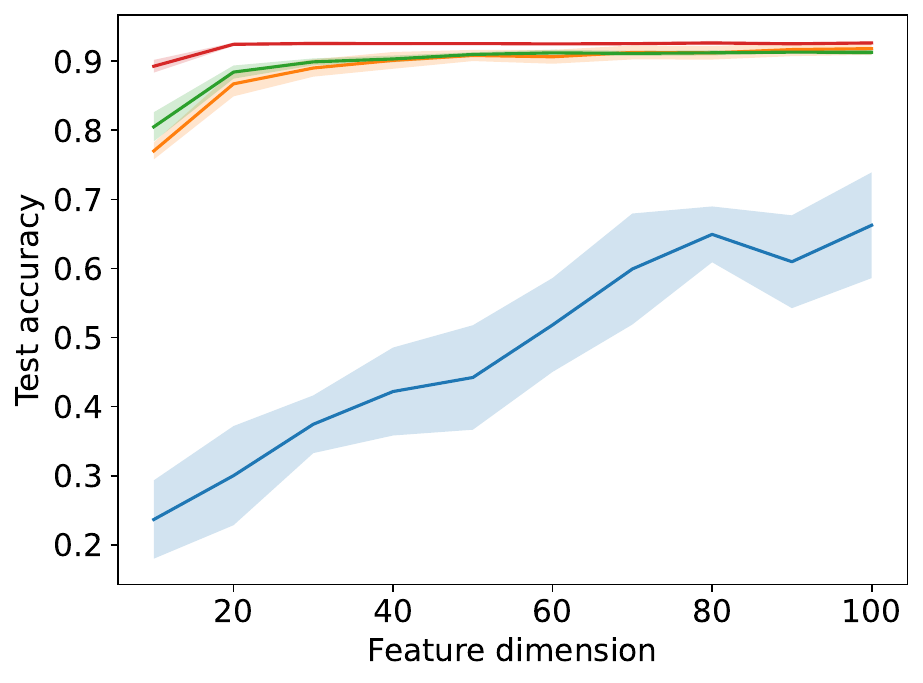}
	\caption{A subset of results for MNIST dataset. From left to right, 1) training risks, 2) differences in weight norms, 3) test accuracies of pruned models, and 4) test accuracies of transferred models.}
	\label{fig:mnist}
	\vspace{-0.5em}
\end{figure*}

\begin{figure*}
	\centering
	\includegraphics[width=0.24\linewidth]{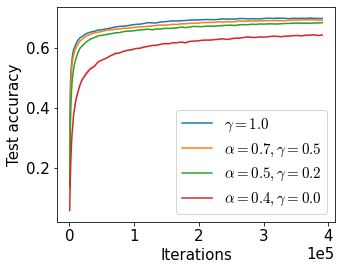}
	\includegraphics[width=0.24\linewidth]{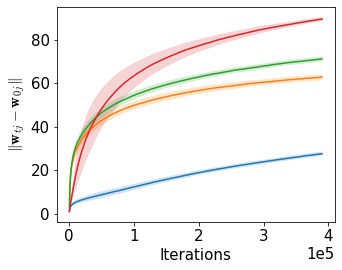}
	\includegraphics[width=0.24\linewidth]{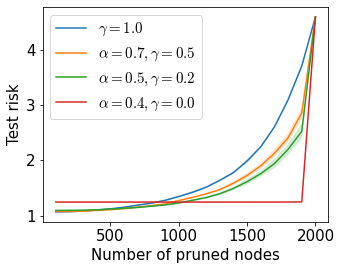}
	\includegraphics[width=0.24\linewidth]{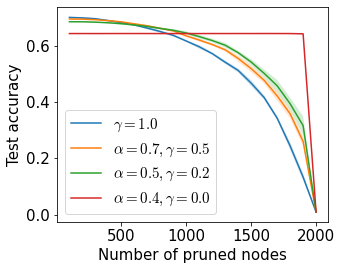}
	\caption{Results for \texttt{CIFAR--100}. From left to right, 1) test risk through training, 2) differences in weight norms $\Vert \mathbf{w}_{tj} - \mathbf{w}_{0j}\Vert$ with the $j$'s being the neurons having the maximal difference at the end of training, 3) test risks of pruned models, and 4) test accuracies of pruned models.}
	\label{fig:cifar100}
	\vspace{-1em}
\end{figure*}

\section{Discussion, limitations and further work}

We have shown that under an asymmetrical scaling of the nodes of a neural network,
it is possible to achieve both zero training error and feature learning, when the width of the neural network is sufficiently large.
We considered two definitions of feature learning. The first definition is a minor generalisation of the notion of feature learning from
\cite{Yang2021}, and it is defined as a change in the feature map.
We proposed a second definition, called non-uniform feature learning, which
additionally requires that the contributions of some individual nodes remain non-negligible in the asymptotic limit. We showed that under our asymmetric scaling and additional conditions, both definitions hold, whereas for the standard NTK, neither does,
and for the mean field, only the standard definition holds. We demonstrate empirically that having non-uniform feature learning is particularly important when we consider transfer learning and pruning.
Our definitions of feature learning relate to the change in the feature map.
As already noted by \cite{Yang2021}, it is a relatively weak definition of feature learning, as it does not connect the weight change with the generalisation properties.
Experimentally, we found that in some case (e.g. single ReLU), the asymmetrical, unpruned network provides the best test error,
while in others (MNIST and CIFAR), the unpruned symmetrical scaling gave the best test accuracy.
An interesting avenue of research is to investigate theoretically the generalisation properties of such asymmetrical scaling. We note that the approaches used for the symmetric NTK~\cite{Arora2019a}, which rely on the limiting kernel, cannot be applied to our setting, due to the evolving kernel.

In this article, we have assumed an iid Gaussian initialisation for the weights (\cref{assump:init_V}), which is a standard assumption in the analysis of large-width neural networks \cite{Du2019,Du2019a,Oymak2020,Nguyen2021}. Our results rely on the fact that the minimum eigenvalue $\kappa_n$ of the mean NTK at initialisation is strictly positive; this result was demonstrated by \cite[Proposition F.1]{Du2019a} under the iid Gaussian initialisation. An interesting direction of research would be to investigate whether the results derived in this paper hold under other, possibly non-iid, initialisation schemes. In particular, the case of orthogonal initialisations would be of particular interest \cite{Hu2020,Huang2021}.

The asymmetrical parameterisation in \Cref{eq:lamj} is rather general, and only requires the $\widetilde\lambda_j$ to be summable. A natural default choice, taken in this article, is to take a power function $\propto j^{-1/\alpha}$ where $0<\alpha<1$. Other parameterisations are also possible, such as $\widetilde\lambda_j = 1/K$ for $j=1,\ldots,K$ and 0 otherwise. One could also choose other scalings such as $(e-1)\exp(-j)$ or $C/(j\log^2 (j+1))$.

\section*{Acknowledgements}

We would like to thank Taeyoung Kim for helpful discussions, and the anonymous reviewers for their useful comments that helped improve the paper. HY was supported by the National
Research Foundation of Korea (NRF) grant funded by the
Korean Government (MSIT) (No. RS-2023-00279680). JL acknowledges support from Institute for Information \& communications Technology Planning \& Evaluation(IITP) grant funded by the Korea government(MSIT) (RS-2019-II190075, Artificial Intelligence Graduate School Program(KAIST)).

\bibliography{scalemixture_ntk}

\begin{thebibliography}{38}
\providecommand{\natexlab}[1]{#1}
\providecommand{\url}[1]{\texttt{#1}}
\expandafter\ifx\csname urlstyle\endcsname\relax
  \providecommand{\doi}[1]{doi: #1}\else
  \providecommand{\doi}{doi: \begingroup \urlstyle{rm}\Url}\fi

\bibitem[Arora et~al.(2019{\natexlab{a}})Arora, Du, Hu, Li, and
  Wang]{Arora2019a}
S.~Arora, S.~Du, W.~Hu, Z.~Li, and R.~Wang.
\newblock Fine-grained analysis of optimization and generalization for
  overparameterized two-layer neural networks.
\newblock In \emph{International Conference on Machine Learning}, pp.\
  322--332. PMLR, 2019{\natexlab{a}}.

\bibitem[Arora et~al.(2019{\natexlab{b}})Arora, Du, Hu, Li, Salakhutdinov, and
  Wang]{Arora2019}
S.~Arora, S.~S. Du, W.~Hu, Z.~Li, R.~Salakhutdinov, and R.~Wang.
\newblock On exact computation with an infinitely wide neural net.
\newblock \emph{Advances in Neural Information Processing Systems}, 32,
  2019{\natexlab{b}}.

\bibitem[Bartlett et~al.(2021)Bartlett, Montanari, and Rakhlin]{Bartlett2021}
P.~Bartlett, A.~Montanari, and A.~Rakhlin.
\newblock Deep learning: a statistical viewpoint.
\newblock \emph{Acta numerica}, 30:\penalty0 87--201, 2021.

\bibitem[Bracale et~al.(2021)Bracale, Favaro, Fortini, and
  Peluchetti]{Bracale2021}
D.~Bracale, S.~Favaro, S.~Fortini, and S.~Peluchetti.
\newblock Large-width functional asymptotics for deep {G}aussian neural
  networks.
\newblock In \emph{International Conference on Learning Representations}, 2021.

\bibitem[Chen et~al.(2021)Chen, Vanden-Eijnden, and Bruna]{Chen2021}
Z.~Chen, E.~Vanden-Eijnden, and J.~Bruna.
\newblock On feature learning in neural networks with global convergence
  guarantees.
\newblock In \emph{International Conference on Learning Representations}, 2021.

\bibitem[Chizat \& Bach(2018)Chizat and Bach]{chizat2018global}
L.~Chizat and F.~Bach.
\newblock On the global convergence of gradient descent for over-parameterized
  models using optimal transport.
\newblock \emph{Advances in neural information processing systems}, 31, 2018.

\bibitem[Chizat et~al.(2019)Chizat, Oyallon, and Bach]{Chizat2019}
L.~Chizat, E.~Oyallon, and F.~Bach.
\newblock On lazy training in differentiable programming.
\newblock \emph{Advances in Neural Information Processing Systems}, 32, 2019.

\bibitem[Du et~al.(2019{\natexlab{a}})Du, Lee, Li, Wang, and Zhai]{Du2019a}
S.~Du, J.~Lee, H.~Li, L.~Wang, and X.~Zhai.
\newblock Gradient descent finds global minima of deep neural networks.
\newblock In \emph{International Conference on Machine Learning}, pp.\
  1675--1685. PMLR, 2019{\natexlab{a}}.

\bibitem[Du et~al.(2019{\natexlab{b}})Du, Zhai, Poczos, and Singh]{Du2019}
S.~Du, X.~Zhai, B.~Poczos, and A.~Singh.
\newblock Gradient descent provably optimizes over-parameterized neural
  networks.
\newblock In \emph{International Conference on Learning Representations},
  2019{\natexlab{b}}.

\bibitem[El~Karoui(2010)]{ElKaroui2010}
N.~El~Karoui.
\newblock The spectrum of kernel random matrices.
\newblock \emph{The Annals of Statistics}, 38\penalty0 (1):\penalty0 1--50,
  2010.

\bibitem[Favaro et~al.(2020)Favaro, Fortini, and Peluchetti]{Favaro2020}
S.~Favaro, S.~Fortini, and S.~Peluchetti.
\newblock Stable behaviour of infinitely wide deep neural networks.
\newblock In \emph{International Conference on Artificial Intelligence and
  Statistics}, pp.\  1137--1146, 2020.

\bibitem[Frei et~al.(2023)Frei, Chatterji, and Bartlett]{Frei2023}
S.~Frei, N.~S. Chatterji, and P.~L. Bartlett.
\newblock Random feature amplification: Feature learning and generalization in
  neural networks.
\newblock \emph{Journal of Machine Learning Research}, 24\penalty0
  (303):\penalty0 1--49, 2023.

\bibitem[Ghorbani et~al.(2020)Ghorbani, Mei, Misiakiewicz, and
  Montanari]{Ghorbani2020}
B.~Ghorbani, S.~Mei, T.~Misiakiewicz, and A.~Montanari.
\newblock When do neural networks outperform kernel methods?
\newblock \emph{Advances in Neural Information Processing Systems},
  33:\penalty0 14820--14830, 2020.

\bibitem[Hu et~al.(2020)Hu, Xiao, and Pennington]{Hu2020}
W.~Hu, L.~Xiao, and J.~Pennington.
\newblock Provable benefit of orthogonal initialization in optimizing deep
  linear networks.
\newblock In \emph{International Conference on Learning Representations}, 2020.

\bibitem[Huang et~al.(2021)Huang, Du, and Xu]{Huang2021}
W.~Huang, W.~Du, and R.~Xu.
\newblock On the neural tangent kernel of deep networks with orthogonal
  initialization.
\newblock In \emph{Proceedings of the Thirtieth International Joint Conference
  on Artificial Intelligence}. International Joint Conferences on Artificial
  Intelligence Organization, 2021.

\bibitem[Jacot et~al.(2018)Jacot, Gabriel, and Hongler]{Jacot2018}
A.~Jacot, F.~Gabriel, and C.~Hongler.
\newblock Neural tangent kernel: Convergence and generalization in neural
  networks.
\newblock In \emph{Advances in Neural Information Processing Systems}, pp.\
  8571--8580, 2018.

\bibitem[Jung et~al.(2023)Jung, Lee, Lee, and Yang]{Jung2021}
P.~Jung, H.~Lee, J.~Lee, and H.~Yang.
\newblock $\alpha $-stable convergence of heavy-tailed infinitely-wide neural
  networks.
\newblock \emph{Advances in Applied Probability}, 55\penalty0 (4), 2023.

\bibitem[Lee et~al.(2023)Lee, Ayed, Jung, Lee, Yang, and Caron]{Lee2022}
H.~Lee, F.~Ayed, P.~Jung, J.~Lee, H.~Yang, and F.~Caron.
\newblock Deep neural networks with dependent weights: {G}aussian process
  mixture limit, heavy tails, sparsity and compressibility.
\newblock \emph{Journal of Machine Learning Research}, 24:\penalty0 1--78,
  2023.

\bibitem[Lee et~al.(2018)Lee, Bahri, Novak, Schoenholz, Pennington, and
  Sohl-Dickstein]{Lee2018}
J.~Lee, Y.~Bahri, R.~Novak, S.~S. Schoenholz, J.~Pennington, and
  J.~Sohl-Dickstein.
\newblock Deep neural networks as {G}aussian processes.
\newblock In \emph{International Conference on Learning Representations}, 2018.

\bibitem[Lee et~al.(2019)Lee, Xiao, Schoenholz, Bahri, Novak, Sohl-Dickstein,
  and Pennington]{Lee2019a}
J.~Lee, L.~Xiao, S.~Schoenholz, Y.~Bahri, R.~Novak, J.~Sohl-Dickstein, and
  J.~Pennington.
\newblock Wide neural networks of any depth evolve as linear models under
  gradient descent.
\newblock \emph{Advances in Neural Information Processing Systems}, 32, 2019.

\bibitem[Malach et~al.(2021)Malach, Kamath, Abbe, and
  Srebro]{malach2021quantifying}
E.~Malach, P.~Kamath, E.l Abbe, and N.~Srebro.
\newblock Quantifying the benefit of using differentiable learning over tangent
  kernels.
\newblock In \emph{International Conference on Machine Learning}, pp.\
  7379--7389. PMLR, 2021.

\bibitem[Matthews et~al.(2018)Matthews, Hron, Rowland, Turner, and
  Ghahramani]{Matthews2018}
A.~G. de~G. Matthews, J.~Hron, M.~Rowland, R.~E. Turner, and Z.~Ghahramani.
\newblock {G}aussian process behaviour in wide deep neural networks.
\newblock In \emph{International Conference on Learning Representations}, 2018.

\bibitem[Mei et~al.(2018)Mei, Montanari, and Nguyen]{Mei2018}
S.~Mei, A.~Montanari, and P.-M. Nguyen.
\newblock A mean field view of the landscape of two-layer neural networks.
\newblock \emph{Proceedings of the National Academy of Sciences}, 115\penalty0
  (33):\penalty0 E7665--E7671, 2018.

\bibitem[Mei et~al.(2019)Mei, Misiakiewicz, and Montanari]{Mei2019}
S.~Mei, T.~Misiakiewicz, and A.~Montanari.
\newblock Mean-field theory of two-layers neural networks: dimension-free
  bounds and kernel limit.
\newblock In \emph{Conference on Learning Theory}, pp.\  2388--2464. PMLR,
  2019.

\bibitem[Neal(1996)]{Neal1996}
R.~M. Neal.
\newblock Priors for infinite networks.
\newblock In \emph{Bayesian Learning for Neural Networks}, pp.\  29--53.
  Springer New York, 1996.

\bibitem[Nguyen et~al.(2021)Nguyen, Mondelli, and Montufar]{Nguyen2021}
Q.~Nguyen, M.~Mondelli, and G.~F. Montufar.
\newblock Tight bounds on the smallest eigenvalue of the neural tangent kernel
  for deep {ReLU} networks.
\newblock In \emph{International Conference on Machine Learning}, pp.\
  8119--8129. PMLR, 2021.

\bibitem[Oymak \& Soltanolkotabi(2020)Oymak and Soltanolkotabi]{Oymak2020}
S.~Oymak and M.~Soltanolkotabi.
\newblock Toward moderate overparameterization: Global convergence guarantees
  for training shallow neural networks.
\newblock \emph{IEEE Journal on Selected Areas in Information Theory},
  1\penalty0 (1):\penalty0 84--105, 2020.

\bibitem[Rotskoff \& Vanden-Eijnden(2018)Rotskoff and
  Vanden-Eijnden]{Rotskoff2018}
G.~Rotskoff and E.~Vanden-Eijnden.
\newblock Parameters as interacting particles: long time convergence and
  asymptotic error scaling of neural networks.
\newblock \emph{Advances in Neural Information Processing Systems}, 31, 2018.

\bibitem[Sirignano \& Spiliopoulos(2020)Sirignano and
  Spiliopoulos]{Sirignano2020}
J.~Sirignano and K.~Spiliopoulos.
\newblock Mean field analysis of neural networks: A law of large numbers.
\newblock \emph{SIAM Journal on Applied Mathematics}, 80\penalty0 (2):\penalty0
  725--752, 2020.

\bibitem[Tao et~al.(2021)Tao, Zhi-Qin, Zheng, and Yaoyu]{Tao21}
L.~Tao, J.~X. Zhi-Qin, M.~Zheng, and Z.~Yaoyu.
\newblock Phase diagram for two-layer {ReLU} neural networks at infinite-width
  limit.
\newblock \emph{Journal of Machine Learning Research}, 22:\penalty0 1--47,
  2021.

\bibitem[Tropp(2012)]{tropp2012user}
Joel~A Tropp.
\newblock User-friendly tail bounds for sums of random matrices.
\newblock \emph{Foundations of computational mathematics}, 12\penalty0
  (4):\penalty0 389--434, 2012.

\bibitem[Wolinski et~al.(2020)Wolinski, Charpiat, and Ollivier]{Wolinski2020a}
P.~Wolinski, G.~Charpiat, and Y.~Ollivier.
\newblock Asymmetrical scaling layers for stable network pruning.
\newblock \emph{OpenReview Archive}, 2020.

\bibitem[Woodworth et~al.(2020)Woodworth, Gunasekar, Lee, Moroshko, Savarese,
  Golan, Soudry, and Srebro]{Woodworth2020}
B.~Woodworth, S.~Gunasekar, J.~D. Lee, E.~Moroshko, P.~Savarese, I.~Golan,
  D.~Soudry, and N.~Srebro.
\newblock Kernel and rich regimes in overparametrized models.
\newblock In \emph{Conference on Learning Theory}, pp.\  3635--3673. PMLR,
  2020.

\bibitem[Yang(2019)]{Yang2019}
G.~Yang.
\newblock Wide feedforward or recurrent neural networks of any architecture are
  {G}aussian processes.
\newblock In \emph{Advances in Neural Information Processing Systems}, pp.\
  9947--9960, 2019.

\bibitem[Yang \& Hu(2021)Yang and Hu]{Yang2021}
G.~Yang and E.~Hu.
\newblock Tensor programs iv: Feature learning in infinite-width neural
  networks.
\newblock In \emph{International Conference on Machine Learning}, pp.\
  11727--11737. PMLR, 2021.

\bibitem[Yang et~al.(2022)Yang, Hu, Babuschkin, Sidor, Liu, Farhi, Ryder,
  Pachocki, Chen, and Gao]{yang2022tensor}
Greg Yang, Edward~J Hu, Igor Babuschkin, Szymon Sidor, Xiaodong Liu, David
  Farhi, Nick Ryder, Jakub Pachocki, Weizhu Chen, and Jianfeng Gao.
\newblock Tensor programs v: Tuning large neural networks via zero-shot
  hyperparameter transfer.
\newblock \emph{arXiv preprint arXiv:2203.03466}, 2022.

\bibitem[Zou \& Gu(2019)Zou and Gu]{Zou2019}
D.~Zou and Q.~Gu.
\newblock An improved analysis of training over-parameterized deep neural
  networks.
\newblock \emph{Advances in Neural Information Processing Systems}, 32, 2019.

\bibitem[Zou et~al.(2020)Zou, Cao, Zhou, and Gu]{Zou2020}
D.~Zou, Y.~Cao, D.~Zhou, and Q.~Gu.
\newblock Gradient descent optimizes over-parameterized deep {ReLU} networks.
\newblock \emph{Machine learning}, 109\penalty0 (3):\penalty0 467--492, 2020.

\end{thebibliography}
\bibliographystyle{tmlr}

\newpage


\clearpage
\appendix

\renewcommand{\theequation}{S.\arabic{equation}} 
\renewcommand{\thefigure}{S.\arabic{figure}} 
\setcounter{figure}{0}  
\setcounter{equation}{0}  

{\vbox{\hsize\textwidth
{\LARGE\bf\sffamily Over-parameterised Shallow Neural Networks with Asymmetrical Node Scaling: Global Convergence Guarantees and Feature Learning: Supplementary Material\par} }}
\bigskip

This Supplementary Material is organised as follows. \Cref{sec:results-relu} presents additional results on global convergence and feature learning when the activation function is the (non-smooth) ReLU function. In particular, \cref{thm:upper-bound:relu} states conditions for the global convergence of gradient flow in the ReLU case, and is similar to \cref{thm:upper-bound:smooth-activation} (smooth case) in the main paper.
\cref{sec:discussion-relu} discusses some open problems in our framework when dealing with the ReLU activation function. Useful bounds and identities are presented in \cref{sec:usefulbounds}. \cref{sec:prooflimitingNTG} gives a proof of the proposition regarding the structure of the limiting NTG at initialisation while \cref{sec:secondarythm:init} provides a secondary proposition regarding the minimum eigenvalue of the NTG at initialisation. \cref{sec:secondarythm:dynamics} states and proves secondary lemmas on gradient flow dynamics. \cref{sec:proofboundrelu} and \ref{sec:proofboundsmooth} give details of the main proof for global convergence of gradient flow, respectively for the ReLU and smooth case. The proofs are rather short and mostly build on the secondary lemmas and propositions of \cref{sec:secondarythm:init,sec:secondarythm:dynamics}. \cref{appendix:global-convergence-gd} gives a detailed proof for global convergence of gradient descent in the smooth case. The proof builds on results of convergence of gradient flow. \cref{sec:featurelearning-proof} gives proofs of the feature-learning results for the smooth case in \cref{sec:featurelearning}, and
\cref{sec:featurelearning-relu-proofs} presents proofs of the corresponding feature-learning results for the ReLU case in \cref{sec:featurelearning-relu}. \cref{appendix_results} provides additional experiments to those of \cref{sec:experiments}, under a smooth activation. Finally, \cref{sec:additionalexperiments-relu} provides detailed results on the same experiments as in \cref{sec:experiments}, but with the ReLU activation instead of the Swish activation function used in the main paper.

\newpage

\section{Results for the ReLU activation function}
\label{sec:results-relu}

Although we assume a smooth activation function in the main text of the paper (\cref{assump:activation}), some of the results remain true when we drop this assumption and use the ReLU activation function instead. In this section, we explain these results for ReLU. Throughout the section, we assume a weak derivative $\sigma'(x) = \ind_{\{x > 0\}}$ of the ReLU activation function $\sigma$.

\subsection{Global convergence under gradient flow}

Our global convergence theorem under gradient flow in the main text (\cref{thm:upper-bound:smooth-activation}) has a
counterpart for the ReLU case, which is given below. This counterpart says that when we train the network with the ReLU activation, with high probability,
the loss decays exponentially fast with respect to $\kappa_n$ and the training time $t$, and the weights $\param_{tj}$ and the NTG matrix
respectively change by
\[
	O\left(\frac{n\lamj^{1/2}}{\kappa_n\din^{1/2}}\right)
	\quad\text{and}\quad
	O\left(\frac{n^2\sum_{j=1}^\nnodes \lamj^{3/2}}{\kappa_n\din^{3/2}}+\frac{n^{3/2}\sqrt{\sum_{j=1}^\nnodes \lamj^{3/2}}}{\kappa_n^{1/2} \din^{5/4}}\right).
\]
	\begin{theorem}[Global convergence, gradient flow, ReLU]
		\label{thm:upper-bound:relu}
		Consider $\delta \in (0,1)$. Let
		$D_0 = \sqrt{2C^2 + (2/\din)}$.
		Assume \cref{assump:data,assump:init_V}, and the use of the ReLU activation function. Also, assume $\gamma > 0$ and
		\begin{align*}
			\nnodes
			\geq
			\max \left(
			\frac{2^{3}n\log \frac{4n}{\delta}}{\kappa_n \din},\;
			\frac{2^{25}n^4 D_0^2}{\kappa_n^4\din^{3}\gamma^2 \delta^{5}},\;
			\frac{2^{35}n^6 D_0^2}{\kappa_n^6\din^{5}\gamma^2\delta^{5}}
			\right).
		\end{align*}
		Then, with probability at least $1-\delta$, the following properties hold for all $t \geq 0$:
		\begin{enumerate}[nosep]
			\item[(a)] $\eigmin(\ntgram(\bX;\paramall_t)) \geq \frac{\gamma \kappa_n}{4}$;
			\item[(b)] $L_{\nnodes}(\paramall_t) \leq e^{-(\gamma\kappa_n t)/2} L_{\nnodes}(\paramall_0)$;
			\item[(c)] $\|\param_{tj} - \param_{0j}\| \leq \frac{2^3nD_0 }{\kappa_n\din^{1/2}\gamma\delta^{1/2}} \sqrt{\lamj}$ for all $j \in [\nnodes]$;
			\item[(d)] $
			\|\ntgram(\bX;\paramall_t)-\ntgram(\bX;\paramall_0)\|_2
			\leq
			\Big(\frac{2^9n^2D_0}{\kappa_n\din^{3/2}\gamma\delta^{5/2}}
			\cdot \sum_{j=1}^{\nnodes}\lamj^{3/2}\Big)
			+
			\Big(\frac{2^6 n^{3/2}D_0^{1/2}}{\kappa_n^{1/2}\din^{5/4}\gamma^{1/2}\delta^{5/4}}
			\cdot
			\sqrt{\sum_{j=1}^{\nnodes}\lamj^{3/2}}\Big)$.
		\end{enumerate}
	\end{theorem}
The proof of the theorem is given in \cref{sec:proofboundrelu}, and uses \cref{lem:putting-together-via-contradiction-argument:ReLU,prop:ntk-initialisation,lem:exp decay,lem:3.2:ReLU}.

The theorem guarantees that whenever $\gamma>0$, the training error converges to 0 exponentially fast. Also, it implies that the weight change is bounded by a factor $\sqrt{\lamj}$, and the NTG change is bounded by a factor $\sqrt{\sum_{j=1}^{\nnodes}\lamj^{3/2}}$. As we show in \cref{sec:lambdasproperties}, as $m$ tends to $\infty$,
\[
\lamj\to(1-\gamma)\tillam_j\ \ \text{for every $j\geq 1$},\quad
\text{and}\quad
\sum_{j=1}^{\nnodes}\lamj^{3/2}\to (1-\gamma)^{3/2}\sum_{j=1}^{\infty}\tillam_j^{3/2}.
\]
Thus, when $\tillam_j>0$ (note that we necessarily have $\tillam_1>0$), the upper bound in (c) is vanishing in the infinite-width limit if and only if $\gamma=1$ (NTK regime); similarly, the upper bound in (d) is vanishing if and only if $\gamma=1$. In fact, both feature learning and non-uniform feature learning in high-probability versions of \cref{def:feature,def:nonuniformfeature}  occur whenever $\gamma < 1$, as we will show in the next subsection.

\subsection{Results on feature learning}
\label{sec:featurelearning-relu}

We present feature-learning results for the ReLU activation. The proofs of the theorems in this subsection appear in
\cref{sec:featurelearning-relu-proofs}

We start with a result that corresponds to \cref{thm:featurelearning-smooth} in the smooth-activation case.
The result says that if $\gamma < 1$ and the activation function is ReLU, then after the first step of gradient descent,
both feature learning and non-uniform feature learning occur in a slightly weaker sense
than that of \cref{def:feature,def:nonuniformfeature} where we have substituted the almost-sure conditions with corresponding high-probability conditions.
\begin{theorem}
	\label{thm:featurelearningrelu-general}
	Suppose that \cref{assump:data,assump:init_V,assump:zeroed-initialisation,assump:random-outputs} hold.
	Suppose also that $\gamma < 1$ and that the activation function $\sigma$ is ReLU. If $\tillam_k>0$, then with probability at least $1-(1/2)^k$, the following inequalities hold for all $i \in [n]$:
	\begin{align}
		\label{eqn:featurelearningrelu-general:0}
		&
		\liminf_{\nnodes \to \infty}
		\frac{
			\sum_{j = 1}^{\nnodes} \lamj \Big(\sigma(Z_j(\bx_i;\paramall_1)) - \sigma(Z_j(\bx_i;\paramall_0))\Big)^2
		}{
			\sum_{j = 1}^{\nnodes} \lamj \Big(\sigma(Z_j(\bx_i;\paramall_0))\Big)^2
		} > 0
		\\
		\label{eqn:featurelearningrelu-general:0b}
		& \qquad \text{and}\qquad
		\liminf_{\nnodes \to \infty}
		\frac{
			\max_{j \in [\nnodes]} \lamj \Big(\sigma(Z_j(\bx_i;\paramall_1)) - \sigma(Z_j(\bx_i;\paramall_0))\Big)^2
		}{
			\sum_{j = 1}^{\nnodes} \lamj \Big(\sigma(Z_j(\bx_i;\paramall_0))\Big)^2
		} > 0
	\end{align}
\end{theorem}

As we mentioned already, the proof of \cref{thm:featurelearningrelu-general} appears in \cref{sec:featurelearning-relu-proofs}.
Here we explain the key steps of the proof. Note that the condition for non-uniform feature learning in \cref{eqn:featurelearningrelu-general:0b}
implies that for feature learning in \cref{eqn:featurelearningrelu-general:0}. Thus, we focus on
proving the former condition.
The crux of proving the condition in \cref{eqn:featurelearningrelu-general:0b}
lies in the derivation of the following lower bound:
\begin{multline*}
	\liminf_{\nnodes \to \infty}
	\left(\max_{j \in [\nnodes]} \lamj \Big(\sigma(Z_j(\bx_i;\paramall_1)) - \sigma(Z_j(\bx_i;\paramall_0))\Big)^2\right)
	\\
	{} \geq
	\max_{j \in [k]}
	\left(
	\ind_{\left\{\param_{0j}^\top \bx_{i} \geq 0\right\}}
	\cdot
	\min\left\{
		\frac{\eta^2 c^2 (1-\gamma)^2\tillam_j^2}{d^2},\,
		\frac{(1-\gamma)\tillam_j (\param_{0j}^\top\bx_i)^2}{d}
	\right\}\right)
\end{multline*}
where $c$ is a positive real-valued continuous random variable that depends only on the outputs $y_1,\ldots,y_n$.
In particular, $c$ does not depend on $\paramall_0$ nor $\nnodes$, and moreover $c^2>0$ almost surely.
The assumptions of the theorem and the properties of $c$ imply that the above lower bound is strictly positive if
$\param_{0j}^\top \bx_i > 0$ for some $j$, and this latter condition happens with probability at least $1-(1/2)^k$,
which gives the claim of the theorem.

Recall that by definition, $\tillam_1$ is always positive. Thus, \cref{thm:featurelearningrelu-general}
implies that the inequalities in \cref{eqn:featurelearningrelu-general:0,eqn:featurelearningrelu-general:0b}
(which are the conditions for feature learning and non-uniform feature learning stated in \cref{def:feature,def:nonuniformfeature}
excepting the almost-sure condition)
hold with probability at least $1/2$ for \emph{any} choice of the node-scaling parameters.
Another immediate and perhaps more important consequence of the theorem is that,
if {\it all} the $\tillam_j$'s are positive,
then both feature learning and non-uniform feature learning occur, precisely in the
sense of \cref{def:feature,def:nonuniformfeature}.
This is because,
in that case, the inequalities
in \cref{eqn:featurelearningrelu-general:0,eqn:featurelearningrelu-general:0b} hold
with probability at least $1-(1/2)^k$ for all $k$ by \cref{thm:featurelearningrelu-general},
but this implies that both inequalities hold almost surely. The next corollary
states this consequence more explicitly.
\begin{corollary}
	\label{cor:featurelearningrelu}
	Suppose \cref{assump:data,assump:init_V,assump:zeroed-initialisation,assump:random-outputs} hold.
	Suppose also that $\gamma < 1$ and that the activation function $\sigma$ is ReLU. Let $i \in [n]$.
	If $\tillam_j > 0$ for all $j$, the following inequalities hold almost surely:
	\begin{align*}
		&
		\liminf_{\nnodes \to \infty}
		\frac{
			\sum_{j = 1}^{\nnodes} \lamj \Big(\sigma(Z_j(\bx_i;\paramall_1)) - \sigma(Z_j(\bx_i;\paramall_0))\Big)^2
		}{
			\sum_{j = 1}^{\nnodes} \lamj \Big(\sigma(Z_j(\bx_i;\paramall_0))\Big)^2
		} > 0
		\\
		& \qquad \text{and}\qquad
		\liminf_{\nnodes \to \infty}
		\frac{
			\max_{j \in [\nnodes]} \lamj \Big(\sigma(Z_j(\bx_i;\paramall_1)) - \sigma(Z_j(\bx_i;\paramall_0))\Big)^2
		}{
			\sum_{j = 1}^{\nnodes} \lamj \Big(\sigma(Z_j(\bx_i;\paramall_0))\Big)^2
		} > 0.
	\end{align*}
\end{corollary}

Our next result about the ReLU activation function is a counterpart of \cref{thm:weight-norm-square-change-smooth} in the smooth-activation case. It
 says that for all $j$, if $\gamma < 1$ and $\tillam_j > 0$,
then with probability at least $1/2$, the first step
of gradient descent induces a non-zero change in the squared norm of the weight vector
$\param_j$ in the infinite-width limit.
The result also suggests that the change in the squared norm is proportional to $\tillam_j$.
\begin{theorem}
	\label{thm:weight-norm-square-change-relu}
	Suppose \cref{assump:data,assump:init_V,assump:zeroed-initialisation,assump:random-outputs} hold.
	Suppose also that the activation function $\sigma$ is ReLU. Then, for all $j$, the following holds almost surely:
	\begin{equation}
		\label{eqn:weight-norm-square-change-relu:0}
		\liminf_{\nnodes \to \infty}
		\left\|\left.\nabla_{\param_{tj}} L(\paramall_t)\right|_{t=0}\right\|^2
		\geq
		\frac{(1-\gamma)\tillam_j}{d}
		\left|\sum_{i = 1}^n \sum_{i'=1}^n
			y_iy_{i'}
			\left(\bx_{i}^\top \bx_{i'}
				\ind_{\{\param_{0j}^\top \bx_i \geq 0\}}
				\ind_{\{\param_{0j}^\top \bx_{i'} \geq 0\}}
			\right)
		\right|.
	\end{equation}
	In particular, if $\gamma < 1$ and $\tillam_j > 0$, then with probability at least $1/2$,
	the above lower bound is positive so that
	\[
		\liminf_{\nnodes \to \infty}
		\left\|\left.\nabla_{\param_{tj}} L(\paramall_t)\right|_{t=0}\right\|^2 > 0.
	\]
\end{theorem}

\subsection{Discussion}
\label{sec:discussion-relu}

\cref{thm:upper-bound:relu} is the counterpart of \cref{thm:upper-bound:smooth-activation} for the global convergence of gradient flow with the ReLU activation function. Despite empirical evidence from \cref{sec:additionalexperiments-relu} suggesting that similar convergence results could potentially be applicable to GD in the ReLU context, we have yet to substantiate this with a comprehensive proof. The proof of the global convergence of GD with smooth activation provided in \cref{appendix:global-convergence-gd} relies on a Taylor approximation. This necessitates the activation function $\relu$ to be twice differentiable. It is worth noting that, in the symmetric NTK case, the global convergence of GD with the ReLU activation has been shown by \citet[Section 4]{Du2019}. Their proof, however, critically relies on the fact that the weights remain stationary throughout the iterations of GD, which is not the scenario we are dealing with here when $\gamma>0$. As such, it remains a compelling open question to determine whether the global convergence of GD can be proven within our specific framework when employing the ReLU activation function.

\section{Useful bounds and identities}
\label{sec:usefulbounds}	
	\subsection{Matrix Chernoff inequalities}

 The following matrix bounds can be found in \cite{tropp2012user}.
	\begin{proposition}\label{prop:matrixChernoff}
		Consider a finite sequence $(X_1,X_2,\ldots,X_p)$ of independent, random, positive semi-definite $n\times n$
		matrices with $\eigmax(X_j)\leq R$ almost surely for all $j \in [p]$, for some $R>0$. Define
		\begin{align*}
			\mu_{\min} =\eigmin \left (\sum_{j=1}^p \mathbb E[X_j]\right )~~\text{ and }~~\mu_{\max} &=\eigmax \left (\sum_{j=1}^p \mathbb E[X_j]\right ).
		\end{align*}
		Then, for all $\delta \in [0,1)$,
		\begin{align*}
			\Pr\left ( \eigmin\left (\sum_{j=1}^p X_j\right )\leq (1-\delta)\mu_{\min} \right )
            & {} \leq n\left [\frac{e^{-\delta}}{(1-\delta)^{1-\delta}}\right ]^{\mu_{\min}/R}
            \leq ne^{-\delta^2\mu_{\min}/(2R)}.
        \end{align*}
        Also, for all $\delta\geq0$,
        \begin{align*}
			\Pr\left ( \eigmax\left (\sum_{j=1}^p X_j\right )\geq (1+\delta)\mu_{\max} \right )
            & {} \leq n\left [\frac{e^{\delta}}{(1+\delta)^{1+\delta}}\right ]^{\mu_{\max}/R}
            {} \leq ne^{-\delta^2\mu_{\max}/((2+\delta)R)}.
		\end{align*}

  \end{proposition}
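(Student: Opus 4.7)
The plan is to adapt the scalar Chernoff argument to the operator setting via Tropp's matrix Laplace transform framework. There are three main ingredients: a trace-exponential tail inequality, Lieb's concavity theorem to decouple the independent summands, and a per-summand operator MGF bound that exploits the boundedness assumption $\eigmax(X_j)\leq R$.

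First, applying Markov's inequality to $\exp(\theta\,\eigmax(\cdot))$ together with the elementary bound $e^{\eigmax(A)}\leq \tr\exp(A)$ valid for self-adjoint $A$ gives, for any $\theta>0$,
\[
\Pr\Bigl(\eigmax\bigl(\textstyle\sum_{j=1}^p X_j\bigr)\geq (1+\delta)\mu_{\max}\Bigr)
\leq e^{-\theta(1+\delta)\mu_{\max}}\,\mathbb{E}\bigl[\tr\exp(\theta\textstyle\sum_j X_j)\bigr],
\]
with an analogous inequality for the lower tail of $\eigmin$ upon flipping the sign of $\theta$. Next, I would invoke Lieb's theorem, which asserts the concavity of $A\mapsto \tr\exp(H+\log A)$ on positive definite $A$; combined with Jensen's inequality and induction on $p$, this yields the matrix CGF subadditivity
\[
\mathbb{E}\bigl[\tr\exp(\theta\textstyle\sum_j X_j)\bigr]\leq \tr\exp\Bigl(\textstyle\sum_{j=1}^p \log\mathbb{E}[e^{\theta X_j}]\Bigr).
\]

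To control each term inside the trace exponential, the scalar inequality $e^{\theta x}\leq 1+\tfrac{e^{\theta R}-1}{R}x$ on $x\in[0,R]$ lifts by the spectral theorem to the operator inequality $e^{\theta X_j}\preceq I+\tfrac{e^{\theta R}-1}{R}X_j$; after taking expectations and using operator monotonicity of $\log$ together with $\log(I+A)\preceq A$, one obtains $\log\mathbb{E}[e^{\theta X_j}]\preceq \tfrac{e^{\theta R}-1}{R}\mathbb{E}[X_j]$. Summing over $j$, invoking Loewner monotonicity of the trace exponential and the bound $\tr\exp(A)\leq n\,e^{\eigmax(A)}$, the upper-tail probability is at most $n\exp\bigl(\tfrac{e^{\theta R}-1}{R}\mu_{\max}-\theta(1+\delta)\mu_{\max}\bigr)$. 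Optimising with $\theta=\tfrac{1}{R}\log(1+\delta)$ reproduces the first displayed bound; the analogous choice $\theta=-\tfrac{1}{R}\log(1-\delta)>0$ gives the lower-tail bound.

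The main obstacle is Lieb's concavity theorem itself: it is the only non-elementary operator-theoretic input, and it replaces the multiplicativity of scalar MGFs across independent summands, a property that fails for non-commuting matrices. Once it is granted, the rest of the argument is a mechanical translation of the Cram\'er--Chernoff method. The simpler subgaussian forms $e^{-\delta^2\mu_{\min}/(2R)}$ and $e^{-\delta^2\mu_{\max}/((2+\delta)R)}$ follow from the elementary convexity inequalities $(1-\delta)\log(1-\delta)+\delta\geq \delta^2/2$ on $[0,1)$ and $(1+\delta)\log(1+\delta)-\delta\geq \delta^2/(2+\delta)$ on $[0,\infty)$. Since the proposition is cited verbatim from \cite{tropp2012user}, I would simply reference the monograph for the final calculus steps.
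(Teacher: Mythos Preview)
Your sketch is correct and follows exactly the argument in \cite{tropp2012user}; note, however, that the paper does not actually prove this proposition at all --- it simply cites Tropp's user-friendly tail bounds and moves on. So your proposal does strictly more than the paper, and the route you outline (matrix Laplace transform, Lieb's concavity, spectral lifting of the scalar Chernoff bound, then optimisation in $\theta$) is precisely the one in the cited reference.
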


\subsection{Some identities on  $(\lamj)_{j\in[\nnodes]}$}
\label{sec:lambdasproperties}

	The following proposition summarises a number of useful properties on the scaling parameters defined by \eqref{eq:lamj}.
	\begin{proposition}
		For all $\nnodes\geq 1$,
		\begin{align}
			\label{eq:sum-lambdas-one}
			\sum_{j=1}^{\nnodes} \lamj =1,
			\\
			\label{eq:sum-lambdas-bounds}
			\sqrt{\gamma\nnodes}\leq \sum_{j=1}^{\nnodes} \sqrt{\lamj} \leq \sqrt{\nnodes}.
		\end{align}
		For every $r>1$, as $\nnodes\to\infty$,
		\begin{align}
			\label{eq:sum-powered-lambdas-asymptotic}
			\sum_{j=1}^{\nnodes} \lamj^r \sim \sum_{j=1}^{\nnodes} \left(\lamj^{(2)}\right)^r \to (1-\gamma)^r\sum_{j \geq 1} \tillam_j^r.
		\end{align}

Finally, we have, for all $\rho\in(0,1)$,
\begin{align}
\lim_{m\to \infty} \sum_{j = \lfloor \rho m \rfloor +1}^m \lambda_{m,j}=\gamma(1-\rho).
\label{eq:sum-lambda-abovegammam}
\end{align}
	\end{proposition}

	\begin{proof}
		\Cref{eq:sum-lambdas-one} follows from the definition of $\lamj$ as shown below:
		\[
		\sum_{j=1}^{\nnodes} \lamj
		=
		\sum_{j=1}^{\nnodes} \left(\frac{\gamma}{\nnodes}+(1-\gamma)  \frac{\tillam_j}{\sum_{k=1}^{\nnodes} \tillam_k}\right)
		=
		\gamma +
		(1-\gamma) \sum_{j=1}^{\nnodes}  \frac{\tillam_j}{\sum_{k=1}^{\nnodes} \tillam_k}
		=
		\gamma+ (1-\gamma)
		= 1.
		\]
		In \cref{eq:sum-lambdas-bounds}, the upper bound follows from Cauchy-Schwarz and \cref{eq:sum-lambdas-one}, and the lower bound from
		the definition of $\lamj$:
		\begin{align*}
			\sqrt{\gamma\nnodes }
			= \sum_{j=1}^\nnodes \sqrt{\frac{\gamma}{\nnodes}}
			\leq \sum_{j=1}^\nnodes \sqrt{\lamj}
			\leq \sqrt{\sum_{j=1}^\nnodes \lamj}\sqrt{\sum_{j=1}^\nnodes 1} = 1 \cdot \sqrt{\nnodes}.
		\end{align*}
		For \cref{eq:sum-powered-lambdas-asymptotic}, we note the following bounds on the sum of the $\lamj^r$ for all $r > 1$:
		\[
		\sum_{j=1}^\nnodes \left(\lambda^{(2)}_{\nnodes,j}\right)^r
		\leq
		\sum_{j=1}^\nnodes \left(\lambda_{\nnodes,j}\right)^r
		\leq
		\left(\left[\sum_{j=1}^\nnodes \left(\lambda^{(1)}_{\nnodes,j}\right)^r\right]^{1/r}
		+ \left[\sum_{j=1}^\nnodes \left(\lambda^{(2)}_{\nnodes,j}\right)^r\right]^{1/r}\right)^{r}
		\]
		where the second inequality uses the Minkowski inequality. But as $\nnodes \to \infty$,
		the term $\sum_{j = 1}^\nnodes (\lambda^{(1)}_{\nnodes,j})^r=\gamma^r\nnodes^{-(r-1)}\to 0$.
		Furthermore, as $\nnodes \to \infty$,
		\[
		\sum_{j=1}^\nnodes \left(\lambda^{(2)}_{\nnodes,j}\right)^r
		=
		\frac{(1-\gamma)^r}{\left(\sum_{k = 1}^\nnodes \tilde{\lambda}_k\right)^r} \sum_{j = 1}^\nnodes \tilde{\lambda}_j^r
		\to
		(1-\gamma)^r \sum_{j  \geq 1} \tilde{\lambda}_j^r
		\]
		because $(\sum_{k \geq 1} \tilde{\lambda}_k)^r = 1$.

Finally, we prove \cref{eq:sum-lambda-abovegammam}. For all $\rho\in(0,1)$, we have
\begin{align*}
\sum_{j = \lfloor \rho m \rfloor +1}^m \lambda_{m,j}= \frac{\gamma (m-\lfloor \rho m \rfloor)}{m} + (1-\gamma) \frac{\sum_{j=\lfloor \rho m \rfloor}^m \widetilde\lambda_{j}}{\sum_{j=1}^m \widetilde\lambda_j}.
\end{align*}
By sandwiching, $\frac{\gamma (m-\lfloor \rho m \rfloor)}{m}\to \gamma (1-\rho)$. Additionally, the series $\sum_{j=1}^m \widetilde\lambda_j$ converges to 1. Thus, its tail converges to 0 and  $\sum_{j=\lfloor \rho m \rfloor+1}^m \widetilde\lambda_{j}\to 0$.
\end{proof}

\cref{fig:zipflawalpha} shows the value of $\sum_{j\geq 1}\widetilde\lambda_j^2=\frac{\zeta(2/\alpha)}{\zeta(1/\alpha)^2}$ as a function of $\alpha$, when using Zipf weights \cref{eq:tildelambdazipf}.

 \begin{figure}
\begin{center}
\includegraphics[width=6cm]{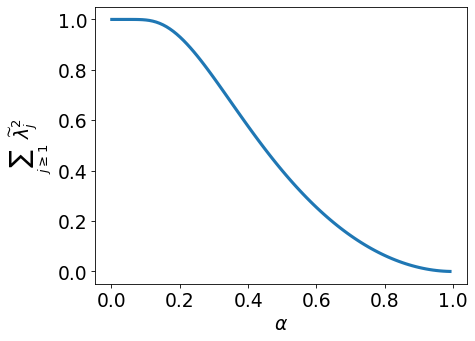}
\end{center}
\caption[$\sum_{j=1}^\infty\widetilde\lambda_j^2$ as a function of $\alpha$]{Value of $\sum_{j=1}^\infty\widetilde\lambda_j^2$ as a function of $\alpha$,  where $(\widetilde\lambda_j)_{j\geq 1}$ are defined as in~\cref{eq:tildelambdazipf},  As $\alpha\to 1$, it converges to 0, which corresponds to the kernel regime.}
\label{fig:zipflawalpha}
\end{figure}

\section{Proof of \cref{prop:limitingNTG} on the limiting NTG}
\label{sec:prooflimitingNTG}
This proposition holds also under the ReLU activation case. In what follows, we will give a proof that works for both the smooth activation function and ReLU.

It is sufficient to look at the convergence of individual entries of the NTG matrix; that is, to show that, for each pair $1\leq i,i'\leq n$,
\begin{equation}
\begin{aligned}
\ntker(\bx_i,\bx_{i'};\paramall_{0})
=
\frac{\bx_i^\top \bx_{i'}}{d}
\times\bigg( & \frac{\gamma}{m}\sum_{j=1}^\nnodes  \relu'(Z_{j}(\bx_i;\paramall_0))\relu'(Z_{j}(\bx_{i'};\paramall_0))
\\
&
{} +\frac{(1-\gamma)}{\sum_{k=1}^\nnodes \tillam_k}
\sum_{j=1}^\nnodes \tillam_j \relu'(Z_{j}(\bx_i;\paramall_0))\relu'(Z_{j}(\bx_{i'};\paramall_0))
\bigg)
\end{aligned}
\label{eq:limitingNTG:eq1}
\end{equation}
tends to
\begin{align}
\gamma \ntkerlim(\bx_i,\bx_{i'})+\frac{(1-\gamma)}{\din} \bx_i^\top \bx_{i'} \sum_{j=1}^\infty \tillam_j \relu'(Z_{j}(\bx_i;\paramall_0))\relu'(Z_{j}(\bx_{i'};\paramall_0))
\label{eq:limitingNTG:eq2}
\end{align}
almost surely as $\nnodes\to\infty$. Using the fact that $|\relu'(z)|\leq 1$ and the triangle inequality, the modulus of the difference between the RHS of \cref{eq:limitingNTG:eq1} and \cref{eq:limitingNTG:eq2} is upper bounded by
\begin{align*}
&
\left|\frac{\bx_i^\top \bx_{i'}}{d}\right|
\left(\gamma  \left |
\left(\frac{1}{m}\sum_{j=1}^\nnodes  \relu'(Z_{j}(\bx_i;\paramall_0))\relu'(Z_{j}(\bx_{i'};\paramall_0))\right)
- \mathbb E[\relu'(Z_{1}(\bx_i;\paramall_0))\relu'(Z_{1}(\bx_{i'};\paramall_0))]
\right | \right.\\
&
\qquad\qquad\qquad
\left.
{}
+(1-\gamma) \left[\left(\frac{1}{\sum_{j=1}^\nnodes\tillam_j}-1\right)\sum_{j=1}^\nnodes\tillam_j+\sum_{j=\nnodes+1}^\infty\tillam_j\right]\right)
\\
& =\Bigg|\frac{\bx_i^\top \bx_{i'}}{d}\Bigg|
\Bigg(\gamma  \Bigg|
\bigg( \frac{1}{m}\sum_{j=1}^\nnodes  \relu'(Z_{j}(\bx_i;\paramall_0))\relu'(Z_{j}(\bx_{i'};\paramall_0)) \bigg) - \mathbb E[\relu'(Z_{1}(\bx_i;\paramall_0))\relu'(Z_{1}(\bx_{i'};\paramall_0))]
\Bigg|
\\
&
\qquad\qquad\qquad
{}+2 (1-\gamma) \left[1-\sum_{j=1}^\nnodes\tillam_j\right]\Bigg)
\end{align*}
which tends to 0 almost surely as $\nnodes$ tends to infinity using the law of large numbers and the fact that $\sum_{j=1}^\infty \tillam_j= 1$.

\section{Secondary Proposition - NTG at initialisation}
\label{sec:secondarythm:init}

	The following proposition is a corollary of Lemma 4 in  \cite{Oymak2020}.  It holds under both the ReLU and smooth activation cases. A proof is included for completeness.
	
	\begin{proposition}
		\label{prop:ntk-initialisation}
		Let $\delta\in(0,1)$. Assume \cref{assump:data,assump:init_V}, $\gamma>0$, and $\nnodes\geq \frac{2^3n\log \frac{n}{\delta}}{\kappa_n\din}$. Also, assume that the activation function satisfies \cref{assump:activation} or it is ReLU. Then, with probability at least $1-\delta$,
		\begin{align*}
			\eigmin(\ntgram(\bX;\paramall_0))\geq \eigmin(\ntgram^{(1)}(\bX;\paramall_0))> \frac{\gamma\kappa_n}{2}>0.
		\end{align*}
	\end{proposition}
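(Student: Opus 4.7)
The first inequality $\eigmin(\ntgram(\bX;\paramall_0))\geq \eigmin(\ntgrampart{1}(\bX;\paramall_0))$ is immediate from the decomposition \cref{eq:decompositionNTG}: since $\ntgrampart{2}(\bX;\paramall_0)$ is positive semi-definite by construction, for any unit vector $v$ we have $v^\top \ntgram(\bX;\paramall_0) v \geq v^\top \ntgrampart{1}(\bX;\paramall_0) v$, and taking the infimum over $v$ gives the claim. So the real work is to prove the strict lower bound on $\eigmin(\ntgrampart{1}(\bX;\paramall_0))$.

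The plan is to apply the matrix Chernoff inequality (\cref{prop:matrixChernoff}) to the decomposition of $\ntgrampart{1}(\bX;\paramall_0)$ as a sum of $\nnodes$ independent PSD matrices, one per hidden node. Concretely, I would write
\[
\ntgrampart{1}(\bX;\paramall_0) = \sum_{j=1}^{\nnodes} X_j, \qquad X_j \defeq \frac{\gamma}{\nnodes\din}\diag\!\left(\brelu'\!\left(\tfrac{\bX \param_{0j}}{\sqrt{\din}}\right)\!\right)\!\bX \bX^\top \diag\!\left(\brelu'\!\left(\tfrac{\bX \param_{0j}}{\sqrt{\din}}\right)\!\right)\!,
\]
where the $X_j$'s are iid PSD matrices. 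Using $|\relu'(z)|\leq 1$ (which holds for both ReLU and the smooth activation under \cref{assump:activation}) together with $\|\bx_i\|\leq 1$ from \cref{assump:data}(a), the diagonal factors are bounded in operator norm by $1$, so $X_j \preceq \tfrac{\gamma}{\nnodes\din}\bX\bX^\top$ and hence
\[
\eigmax(X_j) \leq \tfrac{\gamma}{\nnodes\din}\,\eigmax(\bX\bX^\top) \leq \tfrac{\gamma}{\nnodes\din}\,\tr(\bX\bX^\top) \leq \tfrac{\gamma n}{\nnodes\din}\defeq R.
\]
The expected sum is $\sum_j \mathbb{E}[X_j] = \gamma\, \ntgramlim(\bX)$, so $\mu_{\min}=\gamma\kappa_n$.

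Applying \cref{prop:matrixChernoff} with deviation parameter $\tfrac{1}{2}$ yields
\[
\Pr\!\left(\eigmin(\ntgrampart{1}(\bX;\paramall_0)) \leq \tfrac{\gamma\kappa_n}{2}\right) \leq n\, \exp\!\left(-\frac{(1/2)^2 \gamma\kappa_n}{2R}\right) = n\,\exp\!\left(-\frac{\kappa_n \nnodes \din}{8n}\right).
\]
Imposing that the right-hand side be at most $\delta$ gives the width condition $\nnodes \geq \frac{2^3 n \log(n/\delta)}{\kappa_n \din}$ stated in the proposition, and on this event we obtain $\eigmin(\ntgrampart{1}(\bX;\paramall_0)) > \tfrac{\gamma\kappa_n}{2}>0$, as desired.

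The only mildly delicate step is the uniform operator-norm bound $\eigmax(X_j)\leq R$, which relies on the two ingredients $|\relu'|\leq 1$ and $\|\bx_i\|\leq 1$; everything else is bookkeeping. Note that this argument uses neither the analytic/ReLU distinction nor the value of $\gamma$ beyond $\gamma>0$, and it concerns only the $\ntgrampart{1}$ component, for which the scalings $\lamjpart{1}=\gamma/\nnodes$ are symmetric across nodes --- this is precisely why a Chernoff-type concentration argument works here even when $\gamma<1$ makes the full $\ntgram(\bX;\paramall_0)$ random in the large-width limit (as highlighted by \cref{prop:limitingNTG}).
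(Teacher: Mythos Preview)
Your proposal is correct and follows essentially the same approach as the paper: decompose $\ntgrampart{1}(\bX;\paramall_0)$ as a sum of $\nnodes$ iid PSD matrices, bound each summand's operator norm by $\gamma n/(\nnodes\din)$ via $|\relu'|\leq 1$ and $\tr(\bX\bX^\top)\leq n$, and apply the matrix Chernoff lower-tail bound with deviation parameter $1/2$. The constants and the resulting width condition match the paper exactly.
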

	
	\begin{proof}
		We follow here the proof of Lemma 4 in \cite{Oymak2020}.
		\begin{align*}
			\ntgram(\bX;\paramall)&=\frac{1}{\din}\sum_{j=1}^\nnodes \lamj A_j\\
			&=\frac{1}{\din}\sum_{j=1}^\nnodes \lamj^{(1)} A_j+\frac{1}{\din}\sum_{j=1}^\nnodes \lamj^{(2)} A_j
		\end{align*}
		where
		$$
		A_j=\diag(\brelu'(\bX \paramj/\sqrt{\din}))\bX \bX^\top \diag(\brelu'(\bX \paramj/\sqrt{\din})).
		$$
		Let $\ntgram^{(1)}(\bX;\paramall)=\frac{1}{\din}\sum_{j=1}^\nnodes \lamj^{(1)} A_j=\frac{\gamma}{\nnodes\din}\sum_{j=1}^\nnodes  A_j$. Note that $\eigmin(\ntgram(\bX;\paramall))\geq\eigmin(\ntgram^{(1)}(\bX;\paramall))$ a.s., and
		$$
		\mathbb E[\ntgram^{(1)}(\bX;\paramall_0)]=\gamma\ntgramlim(\bX)
		$$
		where $\ntgramlim(\bX)$ is defined in \cref{eq:ntkerlim}. We have, for all $j\geq 1$,
		\begin{equation}
		\label{eqn:ntk-initialisation:0}
                \begin{aligned}
		\|A_j\|_2 =\eigmax(A_j)
                & \leq
                \eigmax(\diag(\brelu'(\bX \paramj/\sqrt{\din}))^2) \eigmax(\bX \bX^\top)
		\leq
		\eigmax(\bX \bX^\top)
                \\
                & \leq
		\trace(\bX \bX^\top)
		\leq
		n.
                \end{aligned}
		\end{equation}
		At initialisation, $A_1,A_2,\ldots,A_\nnodes$ are independent random matrices. Using matrix Chernoff inequalities (see \cref{prop:matrixChernoff}), we obtain, for all $\epsilon\in[0,1)$,
		\begin{align*}
			\Pr\left (\eigmin(\ntgram(\bX;\paramall_0))\leq (1-\epsilon)\gamma\kappa_n   \right )\leq n  e^{-\epsilon^2 \nnodes\kappa_n\din/(2n)}.
		\end{align*}
		
                Let $\delta\in(0,1)$. Taking $\epsilon=1/2$, we have that, if $\frac{\nnodes\kappa_n\din}{2^3n}\geq \log \frac{n}{\delta}$, then
		$$
		\Pr\left (\eigmin(\ntgram(\bX;\paramall_0))\leq \frac{\gamma\kappa_n}{2}   \right )\leq \delta.
		$$
	\end{proof}

	\section{Secondary Lemmas on gradient flow dynamics}
\label{sec:secondarythm:dynamics}

	The proof technique used to prove \cref{thm:upper-bound:relu,thm:upper-bound:smooth-activation} is similar to that of  \cite{Du2019} (NTK scaling). In particular, we provide in this section Lemmas similar to Lemmas 3.2, 3.3 and 3.4 in \cite{Du2019}, but adapted to our setting. \cref{lem:exp decay} is an adaptation of Lemma 3.3. \cref{lem:3.2:ReLU,lem:3.2:smooth-activation} are adaptations of Lemma 3.2, respectively for the ReLU and smooth activation cases. \cref{lem:putting-together-via-contradiction-argument:ReLU,lem:putting-together-via-contradiction-argument:smooth-activation} are adaptations of Lemma 3.4, respectively for the ReLU and smooth activation cases.

 \subsection{Lemma on exponential decay of the empirical risk and scaling of the weight changes}

 The following lemma is an adaptation of Lemma 3.3 of \cite{Du2019}, and applies to both the ReLU and smooth activation cases. It shows that, if the minimum eigenvalue of the NTG matrix is bounded away from 0, gradient flow converges to a global minimum exponentially fast.
 Recall that $\by=(y_1,\ldots,y_n)^\top \in \mathbb{R}^n$.
	
	\begin{lemma}\label{lem:exp decay}
		Let $t>0$ and $\zeta>0$. Assume \cref{assump:data} and $\eigmin(\ntgram(\bX;\paramall_s))\geq \frac{\zeta}{2}$ for all $0\leq s\leq t$. Also, assume that the activation function satisfies \cref{assump:activation} or it is ReLU. Then,
		$$
		L_\nnodes(\paramall_t)\leq e^{-\zeta t}L_\nnodes(\paramall_0),
		$$
		and for all $j \in [\nnodes]$,
		\begin{align}
			\|\param_{tj} -\param_{0j}\| \leq \sqrt{\frac{n\lamj}{\din}} \left\|\by-\bfu_0\right\|\frac{2}{\zeta},\label{eq:Vtjbound}
		\end{align}
		where $\bfu_0 = (f_\nnodes(\bx_1;\paramall_0),\ldots,f_\nnodes(\bx_n;\paramall_0))^\top \in \mathbb{R}^n$.
	\end{lemma}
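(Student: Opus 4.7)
The plan is to work with the residual vector $\bfu_s - \by$ and exploit the closed-form ODE for the output already derived in the excerpt. Writing $\bfu_s = (f_\nnodes(\bx_1;\paramall_s),\ldots,f_\nnodes(\bx_n;\paramall_s))^\top$, the evolution equation for the output together with the definition of the NTG matrix gives, in vector form,
\begin{equation*}
\frac{d\bfu_s}{ds} = \ntgram(\bX;\paramall_s)(\by - \bfu_s).
\end{equation*}
So first I would differentiate the squared residual and obtain
\begin{equation*}
\frac{d}{ds}\|\by - \bfu_s\|^2 = -2(\by - \bfu_s)^\top \ntgram(\bX;\paramall_s)(\by - \bfu_s) \leq -\zeta\,\|\by - \bfu_s\|^2,
\end{equation*}
where the last inequality uses the hypothesis $\eigmin(\ntgram(\bX;\paramall_s)) \geq \zeta/2$. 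Gr\"onwall's inequality then yields $\|\by - \bfu_s\|^2 \leq e^{-\zeta s}\|\by - \bfu_0\|^2$, and since $L_\nnodes(\paramall_s) = \tfrac12\|\by - \bfu_s\|^2$ this is exactly the claimed exponential decay.

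Next I would turn to the weight bound. From the explicit expression for $d\paramsj/ds$ given in the excerpt,
\begin{equation*}
\left\|\frac{d\paramsj}{ds}\right\|
= \left\|\sqrt{\lamj}\,\frac{a_j}{\sqrt{\din}}\sum_{i=1}^n \relu'(Z_{sj}(\bx_i))\,\bx_i\,(y_i - f_\nnodes(\bx_i;\paramall_s))\right\|.
\end{equation*}
Using $|a_j|=1$, $|\relu'|\leq 1$ and $\|\bx_i\|\leq 1$ (\cref{assump:data}(a), \cref{assump:activation}), Cauchy--Schwarz gives
\begin{equation*}
\left\|\frac{d\paramsj}{ds}\right\| \leq \sqrt{\frac{\lamj}{\din}}\,\sqrt{n}\,\|\by - \bfu_s\| \leq \sqrt{\frac{n\lamj}{\din}}\,e^{-\zeta s/2}\,\|\by - \bfu_0\|.
\end{equation*}
Integrating from $0$ to $t$ and using $\int_0^t e^{-\zeta s/2}\,ds \leq 2/\zeta$ yields
\begin{equation*}
\|\param_{tj} - \param_{0j}\| \leq \sqrt{\frac{n\lamj}{\din}}\,\|\by - \bfu_0\|\,\frac{2}{\zeta},
\end{equation*}
which is the desired bound.

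There is no substantial obstacle here; the lemma is a standard consequence of the NTK evolution equation, and the key observation is simply that the per-node factor $\sqrt{\lamj}$ that appears in $d\paramsj/ds$ propagates directly into the final bound (this is what makes the small-$\lamj$ nodes barely move, and will be central to the feature-learning discussion later). The only points that require care are (i) the factor of $2$ in the exponential rate — the hypothesis is on $\eigmin \geq \zeta/2$, but the Gr\"onwall argument doubles it to $\zeta$ — and (ii) keeping track of the $\|\by-\bfu_0\|$ factor on the right-hand side rather than replacing it by its trivial upper bound, since downstream applications in \cref{thm:upper-bound:relu,thm:upper-bound:smooth-activation} will control $\|\by-\bfu_0\|$ via concentration at initialisation.
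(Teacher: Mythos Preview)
Your proposal is correct and follows essentially the same approach as the paper: derive the ODE for $\bfu_s$, apply Gr\"onwall to get exponential decay of $L_\nnodes$, then bound $\|d\paramsj/ds\|$ by $\sqrt{n\lamj/\din}\,\|\by-\bfu_s\|$ using $|\relu'|\leq 1$, $\|\bx_i\|\leq 1$ and Cauchy--Schwarz, and integrate. The only cosmetic difference is that the paper works with the squared norm and applies Cauchy--Schwarz componentwise, while you bound the norm directly; both yield the same inequality.
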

	
	\begin{proof}
		For $0 \leq s \leq t$,
		write $\bfu_s=(f_\nnodes(\bx_1;\paramall_s),\ldots,f_\nnodes(\bx_n;\paramall_s))^\top$. We have
		\[
		\frac{d}{ds}\bfu_s = \ntgram(\bX;\paramall_s)(\bfy-\bfu_s).
		\]
		It follows that
		\begin{align*}
			\frac{dL_\nnodes(\paramall_s)}{ds}&=-(\bfy-\bfu_s)^\top\ntgram(\bX;\paramall_s)(\bfy-\bfu_s)\leq -\frac{\zeta}{2} (\bfy-\bfu_s)^\top(\bfy-\bfu_s)=-\zeta L_\nnodes(\paramall_s).
		\end{align*}
		Using Gr\"onwall's inequality, we obtain
		\begin{align*}
			L_\nnodes(\paramall_t)\leq e^{-\zeta t}L_\nnodes(\paramall_0).
		\end{align*}
		For $0\leq s\leq t$, using the Cauchy-Schwarz inequality, we get
		\begin{align*}
			\left \|\frac{d\paramsj}{ds} \right \|^2&=\left \| \sqrt{\lamj} \frac{\aj}{\sqrt{\din}} \sum_{i=1}^n  \relu'(Z_{sj}(\bx_i))\bx_i \cdot (y_i-f_\nnodes(\bx_i;\paramall_s))\right \|^2
			\\
			&= \frac{\lamj}{\din}\sum_{k=1}^{\din} \left(\sum_{i=1}^n  \relu'(Z_{sj}(\bx_i))x_{ik} \cdot (y_i-f_\nnodes(\bx_i;\paramall_s)) \right)^2
			\\
			&\leq \frac{\lamj}{\din}
			\sum_{k=1}^{\din}
			\left(\sum_{i=1}^n  x_{ik}^2\right)
			\left(\sum_{i=1}^n \relu'(Z_{sj}(\bx_i))^2 (y_i-f_\nnodes(\bx_i;\paramall_s))^2\right)
			\\
			&= \frac{\lamj}{\din}
			\left(\sum_{i=1}^n \relu'(Z_{sj}(\bx_i))^2 (y_i-f_\nnodes(\bx_i;\paramall_s))^2\right)
			\left(\sum_{k=1}^{\din} \sum_{i=1}^n  x_{ik}^2\right)
			\\
			&\leq \frac{\lamj}{\din}
			\left(\sum_{i=1}^n (y_i-f_\nnodes(\bx_i;\paramall_s))^2\right)
			\left(\sum_{i=1}^{n} \sum_{k=1}^\din x_{ik}^2\right)
			\\
			&\leq \frac{n\lamj}{\din} \|\by-\bfu_s\|^2
			\\
			&\leq \frac{n\lamj}{\din} \|\by-\bfu_0\|^2 e^{-\zeta s}.
		\end{align*}
		Integrating and using Minkowski's integral inequality,
		we obtain
		\begin{align*}
			\|\paramtj-\param_{0j}\|=\left\|\int_0^t \frac{d}{ds}\param_{sj} ds\right\|&\leq\int_0^t  \left\| \frac{d}{ds}\param_{sj} \right\|ds\\
			&\leq \sqrt{\frac{n\lamj}{\din}} \left\|\by-\bfu_0\right\|\int_0^t e^{-\zeta s/2}ds\\
			&\leq \sqrt{\frac{n\lamj}{\din}} \left\|\by-\bfu_0\right\|\frac{2}{\zeta}.
		\end{align*}
	\end{proof}

From now on, the proofs for the ReLU and smooth-activation cases slightly differ.

\subsection{Lemma bounding the NTK change and minimum eigenvalue - ReLU case}

	The next lemma and its proof are similar to Lemma 3.2 in \cite{Du2019} and its proof. Recall that
	$0 < \|\bx_i\| \leq 1$ for every $i \in [n]$, and the $\param_{0j}$ are iid $\mathcal N(0, \idmat_d)$.
	\begin{lemma}\label{lem:3.2:ReLU}
		Let $\delta \in (0,1)$, and $c_{\nnodes,j}>0$ for every $j \in [\nnodes]$. Assume that \cref{assump:data,assump:init_V} holds and the activation function is ReLU.
		Then, with probability at least $1-\delta$, the following holds. For every $\paramall=(\param_1^\top,\ldots,\param_\nnodes^\top)^\top$, if it satisfies
		\[
		\|\param_{0j}-\paramj\| \leq \frac{\delta^2 c_{\nnodes,j}}{4}\quad \text{for all $j \in [\nnodes]$,}
		\]
		we have
		\[
		\left\| \ntgram^{(s)}(\mathbf{X};\paramall)-\ntgram^{(s)}(\mathbf{X};\paramall_{0})\right\|_2
		\leq
		\frac{n}{\din}\sum_{j=1}^{\nnodes}\lamj^{(k)}c_{\nnodes,j}
		+
		\frac{2 n}{\din} \,
		\sqrt{\sum_{j=1}^{\nnodes}\lamj^{(k)}c_{\nnodes,j}}
		\qquad\text{for all $k \in [2]$}
		\]
		and
		\begin{align}
			\label{eqn:3.2:ReLU:0}
			\eigmin(\ntgram(\bX;\paramall))\geq
			\eigmin(\ntgram^{(1)}(\bX;\paramall_0))
			-
			\left(\frac{n\gamma}{\din \nnodes}\sum_{j=1}^{\nnodes} c_{\nnodes,j}
			+
			\frac{2 n \gamma}{\din \nnodes^{1/2}}
			\sqrt{\sum_{j=1}^{\nnodes} c_{\nnodes,j}}\right).
		\end{align}
	\end{lemma}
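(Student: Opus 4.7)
The plan is to exploit the ReLU-specific fact that $\relu'(z)=\mathbf{1}\{z>0\}$, so the only quantity in $\ntgrampart{k}$ that depends on $\paramall$ is the sign pattern of the preactivations. The change $\Delta \coloneqq \ntgrampart{k}(\bX;\paramall)-\ntgrampart{k}(\bX;\paramall_0)$ is therefore entirely driven by those $(i,j)$ pairs at which this sign flips, and the flip count can be controlled a priori using only $\paramall_0$. This will allow the high-probability event to be defined on the initialisation alone and then applied uniformly over every $\paramall$ in the prescribed ball.

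Concretely, I will introduce the initialisation-only event $E_{ij}\coloneqq\{|\param_{0j}^\top \bx_i|\leq \tfrac{1}{4}\delta^2 c_{\nnodes,j}\|\bx_i\|\}$ for each $i\in[n]$, $j\in[\nnodes]$. Under the hypothesis $\|\paramj-\param_{0j}\|\leq \tfrac{1}{4}\delta^2 c_{\nnodes,j}$, a Cauchy--Schwarz bound on $|(\paramj-\param_{0j})^\top \bx_i|$ shows that any sign flip of $\relu'(Z_j(\bx_i;\cdot))$ between $\paramall_0$ and $\paramall$ forces $E_{ij}$ to hold. Since $\param_{0j}^\top \bx_i/\|\bx_i\|\sim\mathcal{N}(0,1)$, Gaussian anti-concentration gives $\Pr(E_{ij})\leq \sqrt{2/\pi}\cdot\tfrac{1}{4}\delta^2 c_{\nnodes,j}$. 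Let $F_{ij}$ denote the flip indicator and $T_j=\sum_i F_{ij}\leq \tilde T_j \coloneqq \sum_i \mathbf{1}_{E_{ij}}$. Then the $\tilde T_j$ are independent across $j$ (since the $\param_{0j}$ are), with $\mathbb{E}[\tilde T_j]\leq n\delta^2 c_{\nnodes,j}/4$, so Markov's inequality applied to $\sum_j \lamj^{(k)}\tilde T_j$ (whose expectation is at most $(n\delta^2/4)\sum_j \lamj^{(k)} c_{\nnodes,j}$) yields, with probability at least $1-\delta$, $\sum_j \lamj^{(k)} T_j \leq (n\delta/4)\sum_j \lamj^{(k)} c_{\nnodes,j}$.

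To convert the flip count into a norm bound, set $D_j \coloneqq \diag(\brelu'(\bX\paramj/\sqrt\din))$, $D_{0j}$ analogously, and $E_j\coloneqq D_j-D_{0j}$. The expansion
\[
D_j\bX\bX^\top D_j - D_{0j}\bX\bX^\top D_{0j} = D_{0j}\bX\bX^\top E_j + E_j\bX\bX^\top D_{0j} + E_j\bX\bX^\top E_j,
\]
combined with $\|\bX\|_2\leq\sqrt n$, the elementary identity $\|E_j\bX\|_F^2\leq T_j$ (using $\|\bx_i\|\leq 1$), and $\sum_j \lamj^{(k)}\leq 1$, gives
\[
\|\Delta\|_2 \leq \frac{2\sqrt n}{\din}\sum_j \lamj^{(k)}\sqrt{T_j} + \frac{1}{\din}\sum_j \lamj^{(k)} T_j.
\]
Applying Cauchy--Schwarz in the form $\sum_j \lamj^{(k)}\sqrt{T_j}\leq \sqrt{\sum_j \lamj^{(k)}T_j}$ and then substituting the probabilistic bound from the previous paragraph (noting that both $\sqrt{\delta/4}$ and $\delta/4$ are $\leq 1$) produces the stated operator-norm bound.

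The eigenvalue bound follows at once: $\ntgrampart{2}(\bX;\paramall)$ is PSD, so $\eigmin(\ntgram(\bX;\paramall))\geq \eigmin(\ntgrampart{1}(\bX;\paramall))$, and Weyl's inequality combined with the operator-norm bound specialised to $k=1$ (where $\lamj^{(1)}=\gamma/\nnodes$) delivers the stated inequality. The main obstacle is that the bound must hold uniformly over all $\paramall$ in the ball rather than at a single chosen point; this is exactly why the high-probability event $\{E_{ij}\}_{i,j}$ must be defined in terms of $\paramall_0$ alone, so that the deterministic implication ``if $\paramj$ lies in the ball and the $(i,j)$-th sign flips, then $E_{ij}$ holds'' can be propagated through the operator-norm calculation for every admissible $\paramall$ simultaneously.
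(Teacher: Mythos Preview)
Your approach is essentially the paper's: both control ReLU sign flips through an initialisation-only event, bound the flip probability by Gaussian anti-concentration, apply Markov to a $\lamj^{(k)}$-weighted flip count, and convert this into an operator-norm bound via a matrix expansion. The paper factorises through the Jacobian $\nabla_\paramall f_\nnodes^{(k)}$ and uses $\|AA^\top-BB^\top\|_2\leq\|A-B\|_2^2+2\|B\|_2\|A-B\|_2$; you instead work neuron-by-neuron with $D_j\bX\bX^\top D_j-D_{0j}\bX\bX^\top D_{0j}$. These are equivalent rearrangements of the same idea.

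Two slips to repair. First, you invoke Markov once ``with probability at least $1-\delta$'' for a fixed $k$, but the lemma asks for both $k\in\{1,2\}$ on a single event; a union bound with $\delta/2$ each closes this and the constants still go through. Second, and more consequentially, in your Cauchy--Schwarz step you write $\sum_j \lamj^{(k)}\sqrt{T_j}\leq\sqrt{\sum_j\lamj^{(k)}T_j}$ using $\sum_j\lamj^{(k)}\leq 1$, thereby discarding the factor $\sqrt{\sum_j\lamj^{(k)}}=\sqrt{\gamma_k}$. This is harmless for the first displayed inequality of the lemma, but it does \emph{not} deliver \cref{eqn:3.2:ReLU:0}: specialising your operator-norm bound to $k=1$ yields a cross term $\tfrac{2n\sqrt\gamma}{\din\sqrt\nnodes}\sqrt{\sum_j c_{\nnodes,j}}$, which is strictly weaker than the claimed $\tfrac{2n\gamma}{\din\sqrt\nnodes}\sqrt{\sum_j c_{\nnodes,j}}$ since $\sqrt\gamma\geq\gamma$. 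The fix is immediate---retain the $\sqrt{\gamma_k}$ factor from Cauchy--Schwarz when treating $k=1$ before substituting the Markov estimate. The paper achieves the analogous sharpening by bounding $\|\nabla_\paramall f_\nnodes^{(1)}(\bX;\paramall_0)\|_2\leq\sqrt{n\gamma/\din}$ rather than $\sqrt{n/\din}$.
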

	
	\begin{proof}
		For $k \in [2]$, let
		\begin{align*}
			f_{\nnodes}^{(k)}(-;\paramall) & : \R^\din \to \R,
			&
			f_{\nnodes}^{(k)}(\mathbf{x};\paramall) & =\sum_{j=1}^{\nnodes}\sqrt{\lamj^{(k)}}\aj\relu(Z_{j}(\mathbf{x};\paramall)).
		\end{align*}
		Define $\nabla_{\paramall}f_{\nnodes}^{(k)}(\mathbf{X};\paramall)$ to be the $n$-by-$(\nnodes d)$
		matrix whose $i$-th row is the $\nnodes d$-dimensional row vector
		$(\nabla_{\paramall}f_{\nnodes}^{(k)}(\mathbf{x}_{i};\paramall))^{\top}$.
		
		Note that for all $k \in [2]$,
		\begin{align}
                    \nonumber
				& \left\|\ntgram^{(k)}(\mathbf{X};\paramall)-\ntgram^{(k)}(\mathbf{X};\paramall_{0})\right\|_2
                    \\
                    \nonumber
				& \qquad {} =
				\left\|
				\nabla_{\paramall}f_{\nnodes}^{(k)}(\mathbf{X};\paramall)\nabla_{\paramall}f_{\nnodes}^{(k)}(\mathbf{X};\paramall)^{\top}
				-\nabla_{\paramall}f_{\nnodes}^{(k)}(\mathbf{X};\paramall_{0})\nabla_{\paramall}f_{\nnodes}^{(k)}(\mathbf{X};\paramall_{0})^{\top}
				\right\|_2
				\\
                    \label{eqn:3.2:ReLU:1}
				& \qquad {} \leq
				\left\|
				\nabla_{\paramall}f_{\nnodes}^{(k)}(\mathbf{X};\paramall)-
				\nabla_{\paramall}f_{\nnodes}^{(k)}(\mathbf{X};\paramall_{0})\right\|_2^{2}
				\\
                    \nonumber
				& \qquad
				\phantom{{} \leq {}}
				{} + 2
				\left\| \nabla_{\paramall}f_{\nnodes}^{(k)}(\mathbf{X};\paramall_{0})\right\|_2
				\left\| \nabla_{\paramall}f_{\nnodes}^{(k)}(\mathbf{X};\paramall)-\nabla_{\paramall}f_{\nnodes}^{(k)}(\mathbf{X};\paramall_{0})\right\|_2.
		\end{align}
		The justification of the inequality from above is given below (which is an expanded version
		of the three equations (364-366) in \cite{Bartlett2021}): for all $n$-by-$(pd)$ matrices $A$ and $B$,
		\begin{align*}
			\left\| AA^{\top}-BB^{\top}\right\|_2
			& {} =
			\left\|\frac{1}{2}(A-B)(A+B)^{\top}+\frac{1}{2}(A+B)(A-B)^{\top}\right\|_2
			\\
			& {} \leq
			\frac{1}{2}\left(\left\| (A-B)(A+B)^{\top}\right\|_2 +\left\|(A+B)(A-B)^{\top}\right\|_2 \right)
			\\
			& {} \leq
			\frac{1}{2}\left(\left\| A-B\right\|_2 \times\left\|(A+B)^{\top}\right\|_2 +\left\| A+B\right\|_2 \times\left\|(A-B)^{\top}\right\|_2 \right)
			\\
			& {} =
			\left\| A-B\right\|_2 \times\left\| A+B\right\|_2
			\\
			& {} \leq
			\left\| A-B\right\|_2 \times\left(  \left\| A-B+B\right\|_2 +\left\| B\right\|_2 \right)
			\\
			& {} \leq
			\left\| A-B\right\|_2 \times\left(  \left\| A-B\right\|_2 +2\left\| B\right\|_2 \right).
		\end{align*}

		Coming back to the inequality in \cref{eqn:3.2:ReLU:1}, we next bound the two terms
		$\left\| \nabla_{\paramall}f_{\nnodes}^{(k)}(\mathbf{X};\paramall_{0})\right\|_2 $ and
		$\left\| \nabla_{\paramall}f_{\nnodes}^{(k)}(\mathbf{X};\paramall)-\nabla_{\paramall}f_{\nnodes}^{(k)}(\mathbf{X};\paramall_{0})\right\|_2$
		there.
		
		We bound the first term as follows:
		\begin{align}
			\nonumber
			\left\| \nabla_{\paramall}f_{\nnodes}^{(k)}(\mathbf{X};\paramall_{0})\right\|_2^{2}
			\leq
			\left\| \nabla_{\paramall}f_{\nnodes}^{(k)}(\mathbf{X};\paramall_{0})\right\|_{F}^{2}
			&
			{} =
			\sum_{i=1}^{n}\sum_{j=1}^{\nnodes}\left\|{\nabla_{\paramj} f_{\nnodes}^{(k)}(\mathbf{x}_{i};\paramall_{0})}\right\|^{2}
			\\
			\nonumber
			&
			{} =
			\sum_{i=1}^{n}\sum_{j=1}^{\nnodes}\lamj^{(k)}\left| \relu'(Z_{j}(\mathbf{x}_{i};\paramall_{0}))\right|^{2}\frac{\left\Vert
				\mathbf{x}_{i}\right\Vert ^{2}}{d}
			\\
			\label{eqn:3.2:ReLU:2}
			&  \leq\frac{n}{d}\sum_{j=1}^{\nnodes}\lamj^{(k)}\leq\frac{n}{d}\gamma_k
		\end{align}
  where $\gamma_1=\gamma$ and $\gamma_2=1-\gamma$.
		The second inequality uses the assumption that $|\relu'(x)| \leq 1$ for all $x \in \mathbb{R}$ and $\|\bx_i\| \leq 1$ for all $i \in [n]$.
		The third inequality follows from the fact that $\sum_{j = 1}^{\nnodes} \lamj^{(k)} \leq \sum_{j=1}^{\nnodes} \lamj = 1$.
		
		For the second term, we recall that
		$Z_{j}(\mathbf{x}_{i};\paramall)=\frac{1}{\sqrt{d}}\param_{j}^{\top}\mathbf{x}_{i}$. Using this fact, we
		derive an upper bound for the second term as follows:
		\begin{align}
			\nonumber
			& \left\| \nabla_{\paramall}f_{\nnodes}^{(k)}(\mathbf{X};\paramall)-\nabla_{\paramall} f_{\nnodes}^{(k)}(\mathbf{X};\paramall_{0})\right\|_2^{2}
                \\
                \nonumber
			& \qquad
			{} \leq
			\left\| \nabla_{\paramall}f_{\nnodes}^{(k)}(\mathbf{X};\paramall)-\nabla_{\paramall}f_{\nnodes}^{(k)}(\mathbf{X};\paramall_{0})\right\| _{F}^{2}
			\\
			\nonumber
			& \qquad
			{} =
			\sum_{i=1}^{n}\sum_{j=1}^{\nnodes}
			\left\|
			{\nabla_{\paramj} f_{\nnodes}^{(k)}(\mathbf{x}_{i};\paramall)}
			- {\nabla_{\paramj} f_{\nnodes}^{(k)}(\mathbf{x}_{i};\paramall_{0})}
			\right\|^{2}
			\\
			\nonumber
			& \qquad
			{} =
			\sum_{i=1}^{n}\sum_{j=1}^{\nnodes}
			\left\|
			\sqrt{\lamj^{(k)}}\aj\frac{\mathbf{x}_{i}}{\sqrt{d}}
			\left[  \relu'(Z_{j}(\mathbf{x}_{i};\paramall))-\relu'(Z_{j}(\mathbf{x}_{i};\paramall_{0}))\right]
			\right\|^{2}
			\\
			\label{eqn:3.2:ReLU:3}
			& \qquad
			{} =
			\frac{1}{d}\sum_{i=1}^{n}
			\sum_{j=1}^{\nnodes}
			\left\| \mathbf{x}_{i} \right\|^{2}
			\lamj^{(k)}\left|  \relu'(Z_{j}(\mathbf{x}_{i};\paramall))-\relu'(Z_{j}(\mathbf{x}_{i};\paramall_{0}))\right|^{2}.
		\end{align}
		In the rest of the proof, we will derive a probabilistic bound on the upper bound just obtained, and show the conclusions claimed in the lemma.
		
		For any $\epsilon>0$, $i \in [n]$, and $j \in [\nnodes]$, we define the event
		$$
		A_{i,j}(\epsilon)=\left\{\exists \paramj \text{ s.t. } \left\|\param_{0j} -\paramj\right\|\leq \epsilon \text{ and } \relu'(\param_j^{\top} \bx_i)\neq \relu'(\param_{0j}^{\top} \bx_i)\right\}.
		$$
		If this event happens, we have $|\param_{0j}^{\top} \bx_i| \leq \epsilon$. To see this, assume that $A_{i,j}(\epsilon)$ holds with $\paramj$ as a witness of the existential quantification, and note that since
		the norm of $\bx_i$ is at most $1$,
		\[
		\left|\param_{0j}^\top \bx_i - \param_j^\top \bx_i \right|
		\leq
		\left\|\param_{0j} - \paramj \right\| \left\|\bx_i\right\|
		\leq
		\epsilon.
		\]
		If $\param_{0j}^\top \bx_i > 0$, then $\param_j^\top \bx_i \leq 0$ and thus
		\[
		\param_{0j}^\top \bx_i \leq \epsilon + \param_j^\top \bx_i < \epsilon.
		\]
	 Alternatively, if $\param_{0j}^\top \bx_i \leq 0$, then $\param_j^\top \bx_i > 0$ and thus
		\[
		-\param_{0j}^\top \bx_i \leq \epsilon - \param_j^\top \bx_i \leq \epsilon.
		\]
		 In both cases, we have the desired $|\param_{0j}^{\top} \bx_i| \leq \epsilon$.
		
		Using the observation that we have just explained and the fact that $\param_{0j}^{\top}\bx_i\sim\mathcal{N}(0,\|\bx_i\|^2)$, we obtain, for a random variable $N \sim \mathcal{N}(0,1)$,
		\begin{align}
			\nonumber
			\Pr(A_{i,j}(\epsilon))
			\leq \Pr\left (|N|\leq \frac{\epsilon}{\|\bx_i\|}\right )
			& {} =
			\mathrm{erf}\left(\frac{\epsilon}{\|\bx_i\| \sqrt{2}}\right)
			\\
			\nonumber
			& {}
			\leq
			\sqrt{1 - \exp\left(-\left(4\left(\frac{\epsilon}{ \|\bx_i\| \sqrt{2}}\right)^2\right)/\pi\right)}
			\\
			\label{eqn:3.2:ReLU:4}
			&
			\leq
			\sqrt{\frac{2\epsilon^2}{\|\bx_i\|^2\pi}}
			\leq \frac{\epsilon}{\|\bx_i\|},
		\end{align}
		where the second inequality uses $\mathrm{erf}(x) \leq \sqrt{1-\exp(-(4x^2)/\pi)}$. Let $\Psi(\paramall_0)$ be the
		constraint on $\paramall = (\param_1^\top,\ldots,\param_m^\top)^\top$ defined by
		\[
		\paramall\in \Psi(\paramall_0) \iff
		\|\param_{0{j'}} - \param_{j'}\| \leq \frac{\delta^2c_{\nnodes,j'}}{4} \ \text{for all $j' \in [\nnodes]$.}
		\]
		Then, for all $k = 1,2$, we have
		\begin{align*}
		        &
			\mathbb{E}\left[
			\sup_{\paramall\in \Psi(\paramall_0)}
			\left\| \nabla_{\paramall}f_{\nnodes}^{(k)}(\mathbf{X};\paramall)-\nabla_{\paramall} f_{\nnodes}^{(k)}(\mathbf{X};\paramall_{0})\right\|_2^{2}\right]
			\\
			&
			\qquad {} \leq
			\frac{1}{d}\sum_{i=1}^{n}\sum_{j=1}^{\nnodes}\|\bx_i\|^2 \lamj^{(k)}
			\mathbb{E}\left[\sup_{\paramall\in \Psi(\paramall_0)} \left|\relu'(Z_{j}(\mathbf{x}_{i};\paramall))-\relu'(Z_{j}(\mathbf{x}_{i};\paramall_{0}))\right|^{2}\right]
			\\
			&
			\qquad {} \leq
			\frac{1}{d}\sum_{i=1}^{n}\sum_{j=1}^{\nnodes}\|\bx_i\|^2 \lamj^{(k)}
			\Pr\left(\exists \paramall \in \Psi(\paramall_0) \ \text{s.t.}\ \relu'(Z_{j}(\mathbf{x}_{i};\paramall)) \neq \relu'(Z_{j}(\mathbf{x}_{i};\paramall_{0}))\right)
			\\
   			&
			\qquad {} =
			\frac{1}{d}\sum_{i=1}^{n}\sum_{j=1}^{\nnodes}\|\bx_i\|^2 \lamj^{(k)}
			\Pr\left(\exists \param_j\ \text{s.t.}\
   \|\param_{0j} - \param_j\| \leq \frac{\delta^2c_{\nnodes,j}}{4} \ \text{and}\ \relu'(\param_j^\top \mathbf{x}_i) \neq \relu'(\param_{0j}^\top \mathbf{x}_i)\right)
			\\
			&
			\qquad {} \leq
			\frac{1}{d}\sum_{i=1}^{n}\sum_{j=1}^{\nnodes} \|\bx_i\|^2 \lamj^{(k)}
			\Pr\left(A_{i,j}(\delta^2 c_{\nnodes,j}/4)\right)
			\\
			&
			\qquad {} \leq
			\frac{(\delta^2/4)}{\din}\sum_{i=1}^{n}\sum_{j=1}^{\nnodes}\|\bx_i\| \lamj^{(k)}c_{\nnodes,j}
			\\
			&
			\qquad {} \leq
			\frac{n(\delta^2/4)}{\din}\sum_{j=1}^{\nnodes}\lamj^{(k)}c_{\nnodes,j}.
		\end{align*}
		The first inequality uses the bound in \cref{eqn:3.2:ReLU:3}, and the fourth inequality uses the inequality derived in \cref{eqn:3.2:ReLU:4}.
		
		We bring together the bound on the expectation just shown and also the bounds proved in \cref{eqn:3.2:ReLU:1,eqn:3.2:ReLU:2}. Recalling that $\gamma_1=\gamma$ and $\gamma_2=1-\gamma$, we have
		\begin{align*}
			&
			\mathbb{E}\left[
			\sup_{\paramall\in \Psi(\paramall_0)}
			\left\|\ntgram^{(k)}(\mathbf{X};\paramall)-\ntgram^{(k)}(\mathbf{X};\paramall_{0})\right\|_2
			\right]
			\\
			&  {} \leq
			\mathbb{E}\left[
			\sup_{\paramall\in \Psi(\paramall_0)}
			\left\|
			\nabla_{\paramall}f_{\nnodes}^{(k)}(\mathbf{X};\paramall)-
			\nabla_{\paramall}f_{\nnodes}^{(k)}(\mathbf{X};\paramall_{0})\right\|_2^{2}\right]
			\\
			& \qquad
			{} + 2 \,
			\mathbb{E}\left[
			\sup_{\paramall\in \Psi(\paramall_0)}
			\left\| \nabla_{\paramall}f_{\nnodes}^{(k)}(\mathbf{X};\paramall_{0})\right\|_2
			\left\| \nabla_{\paramall}f_{\nnodes}^{(k)}(\mathbf{X};\paramall)-\nabla_{\paramall}f_{\nnodes}^{(k)}(\mathbf{X};\paramall_{0})\right\|_2\right]
			\\
			& {} \leq
			\mathbb{E}\left[
			\sup_{\paramall\in \Psi(\paramall_0)}
			\left\|
			\nabla_{\paramall}f_{\nnodes}^{(k)}(\mathbf{X};\paramall)-
			\nabla_{\paramall}f_{\nnodes}^{(k)}(\mathbf{X};\paramall_{0})\right\|_2^{2}\right]
			\\
			& \qquad
			{} + 2
			\sqrt{\frac{n}{d}\gamma_k} \,
			\mathbb{E}\left[
			\sup_{\paramall\in \Psi(\paramall_0)}
			\left\| \nabla_{\paramall}f_{\nnodes}^{(k)}(\mathbf{X};\paramall)-\nabla_{\paramall}f_{\nnodes}^{(k)}(\mathbf{X};\paramall_{0})\right\|_2\right]
			\\
			& {} \leq
			\mathbb{E}\left[
			\sup_{\paramall\in \Psi(\paramall_0)}
			\left\|
			\nabla_{\paramall}f_{\nnodes}^{(k)}(\mathbf{X};\paramall)-
			\nabla_{\paramall}f_{\nnodes}^{(k)}(\mathbf{X};\paramall_{0})\right\|_2^{2}\right]
			\\
			& \qquad
			{} + 2
			\sqrt{\frac{n}{d}\gamma_k} \,
			\sqrt{
				\mathbb{E}\left[
				\sup_{\paramall\in \Psi(\paramall_0)}
				\left\| \nabla_{\paramall}f_{\nnodes}^{(k)}(\mathbf{X};\paramall)-\nabla_{\paramall}f_{\nnodes}^{(k)}(\mathbf{X};\paramall_{0})\right\|^2_2\right]}
			\\
			& {} \leq
			\frac{n(\delta^2/4)}{d}\sum_{j=1}^{\nnodes}\lamj^{(k)}c_{\nnodes,j}
			+
			2
			\sqrt{\frac{n}{d}\gamma_k} \,
			\sqrt{\frac{n(\delta^2/4)}{d}\sum_{j=1}^{\nnodes}\lamj^{(k)}c_{\nnodes,j}}
			\\
			& {} \leq
			\frac{\delta}{2} \left(
			\frac{n}{d}\sum_{j=1}^{\nnodes}\lamj^{(k)}c_{\nnodes,j}
			+
			\frac{2 n}{d}\,
			\sqrt{\gamma_k\sum_{j=1}^{\nnodes}\lamj^{(k)}c_{\nnodes,j}}\right).
		\end{align*}
		The third inequality uses Jensen's inequality, and the last uses the fact that $\delta/2 \geq (\delta/2)^2$.
		Hence, for each $k = 1,2$, by Markov inequality, we have, with probability at least $1-(\delta/2)$,
		\[
		\sup_{\paramall\in \Psi(\paramall_0)}
		\left\|\ntgram^{(k)}(\mathbf{X};\paramall)-\ntgram^{(k)}(\mathbf{X};\paramall_{0})\right\|_2
		\leq
		\frac{n}{d}\sum_{j=1}^{\nnodes}\lamj^{(k)}c_{\nnodes,j}
		+
		\frac{2 n}{d}\sqrt{\gamma_k} \,
		\sqrt{\sum_{j=1}^{\nnodes}\lamj^{(k)}c_{\nnodes,j}}.
		\]
		By union bound, the conjunction of the above inequalities for the $k=1$ and $k=2$ cases holds with probability at least $1-\delta$.
		
		We prove the last remaining claim using the following lemma.

		If $A$ and $B$ are real symmetric matrices, then
		\begin{align*}
			\eigmin(A)\geq \eigmin(B)-\|A-B\|_2,
		\end{align*}
		which holds because
		\begin{align*}
		\eigmin(A) = \eigmin(B + (A - B))
        & {} \geq \eigmin(B) + \eigmin(A-B)
        \\
        & {} \geq \eigmin(B) - \eigmax(B-A)
        \\
        & {} \geq \eigmin(B) - \|B-A\|_2 = \eigmin(B) - \|A-B\|_2.
		\end{align*}
		Thus,
		\begin{align*}
			&
			\inf_{\paramall\in \Psi(\paramall_0)}
			\left(\eigmin(\ntgram^{(1)}(\bX;\paramall))\right)
			\\
			& \qquad {}\geq
			\eigmin(\ntgram^{(1)}(\bX;\paramall_0))
			- \sup_{\paramall\in \Psi(\paramall_0)} \left\|\ntgram^{(1)}(\bX;\paramall) - \ntgram^{(1)}(\bX;\paramall_0)\right\|_2
			\\
			&\qquad {} \geq
			\eigmin(\ntgram^{(1)}(\bX;\paramall_0))
			-
			\left(\frac{n}{d}\sum_{j=1}^{\nnodes}\lamj^{(1)}c_{\nnodes,j}
			+
			\frac{2 n}{d}
			\sqrt{\gamma\sum_{j=1}^{\nnodes}\lamj^{(1)}c_{\nnodes,j}}\right)
			\\
			& \qquad {} =
			\eigmin(\ntgram^{(1)}(\bX;\paramall_0))
			-
			\left(\frac{n\gamma}{d\nnodes}\sum_{j=1}^{\nnodes} c_{\nnodes,j}
			+
			\frac{2 n \gamma}{d \nnodes^{1/2}}
			\sqrt{\sum_{j=1}^{\nnodes} c_{\nnodes,j}}\right)
		\end{align*}
		holds with probability at least $1-\delta$. \cref{eqn:3.2:ReLU:0} then follows from the fact that for all $\paramall$, $\eigmin(\ntgram(\bX;\paramall))\geq\eigmin(\ntgram^{(1)}(\bX;\paramall))$.
	\end{proof}

\subsection{Lemma on a sufficient condition for \cref{thm:upper-bound:relu} - ReLU case}

	We now bring together the results from \cref{prop:ntk-initialisation,lem:exp decay,lem:3.2:ReLU}, and identify a sufficient condition for \cref{thm:upper-bound:relu}, which corresponds to the condition in Lemma 3.4 in~\cite{Du2019}.

 \begin{lemma}
	\label{lem:putting-together-via-contradiction-argument:ReLU}
        Consider $\delta \in (0,1)$.
		Assume that \cref{assump:data,assump:init_V} hold, the activation function is ReLU, and $c_{\nnodes,j} > 0$ for all $j \in [\nnodes]$. Also, assume that $\gamma > 0$ and
		\[
		\nnodes \geq \max\left(
		\left(\frac{8n \log \frac{4n}{\delta}}{d \kappa_n}\right),\;
		\left(\frac{8n}{d\kappa_n}\sum_{j=1}^{\nnodes} c_{\nnodes,j}\right),\;
		\left(\frac{16^2n^2}{d^2\kappa_n^2} \sum_{j=1}^{\nnodes} c_{\nnodes,j}\right)\right).
		\]
		Define
		\[
		R'_{\nnodes,j} = \sqrt{\frac{n\lamj}{\din}} \left\|\by-\bfu_0\right\|\frac{4}{\gamma \kappa_n}
		\quad\text{and}\quad
		R_{\nnodes,j} = \frac{\delta^2 c_{\nnodes,j}}{64}.
		\]
If $R'_{\nnodes,j} <  R_{\nnodes,j}$ for all $j \in [\nnodes]$ with probability at least $1-\frac{\delta}{2}$, then on an event with probability at least $1-\delta$, we have that for all $j \in [\nnodes]$, $R'_{\nnodes,j} <  R_{\nnodes,j}$ and the following properties also hold for all $t \geq 0$:
		\begin{enumerate}
			\item[(a)] $\eigmin(\ntgram(\bX;\paramall_t)) \geq \frac{\gamma \kappa_n}{4}$;
			\item[(b)] $L_\nnodes(\paramall_t) \leq e^{-(\gamma \kappa_n t)/2} L_\nnodes(\paramall_0)$;
			\item[(c)] $\|\param_{tj} - \param_{0j}\| \leq R'_{\nnodes,j}$ for all $j \in [\nnodes]$; and
			\item[(d)] $\|\ntgram(\bX;\paramall_t)-\ntgram(\bX;\paramall_0)\|_2 \leq \frac{n}{d}\sum_{j=1}^{\nnodes}\lamj c_{\nnodes,j}
			+
			\frac{2\sqrt{2} \cdot n}{d} \,
			\sqrt{\sum_{j=1}^{\nnodes}\lamj c_{\nnodes,j}}$.
		\end{enumerate}
\end{lemma}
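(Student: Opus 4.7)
The plan is to follow a standard bootstrap / continuity argument. First, I would set up three favourable events whose union bound gives total probability at least $1-\delta$. \emph{Event A}: applying \cref{prop:ntk-initialisation} with parameter $\delta/4$ (this is why the lemma requires $m \geq 8n\log(4n/\delta)/(d\kappa_n)$) gives $\eigmin(\ntgrampart{1}(\bX;\paramall_0)) \geq \gamma\kappa_n/2$ with probability at least $1-\delta/4$. \emph{Event B}: the hypothesis of the lemma gives $R'_{\nnodes,j} < R_{\nnodes,j}$ for all $j$, with probability at least $1-\delta/2$. \emph{Event C}: applying \cref{lem:3.2:ReLU} with parameter $\delta/4$ gives its conclusions on the event of probability at least $1-\delta/4$; crucially, the constraint $\|\param_{0j}-\paramj\| \leq (\delta/4)^2 c_{\nnodes,j}/4 = \delta^2 c_{\nnodes,j}/64 = R_{\nnodes,j}$ aligns exactly with our definition of $R_{\nnodes,j}$. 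By a union bound, all three events hold simultaneously with probability at least $1-\delta$, and I would work on that intersection for the rest of the proof.

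Next, I would define the stopping time
\[
T = \sup\bigl\{\,t \geq 0 \;:\; \|\param_{sj}-\param_{0j}\| \leq R_{\nnodes,j}\ \text{for all}\ s \in [0,t]\ \text{and}\ j \in [\nnodes]\,\bigr\},
\]
which is strictly positive by continuity since $\param_{0j}-\param_{0j}=0$. The goal is to show $T = \infty$ by contradiction. For any $t < T$, Event C applies (since the weight constraint is satisfied on $[0,t]$) and yields the lower bound
\[
\eigmin(\ntgram(\bX;\paramall_t)) \geq \eigmin(\ntgrampart{1}(\bX;\paramall_0)) - \Bigl(\tfrac{n\gamma}{d\nnodes}\textstyle\sum_j c_{\nnodes,j} + \tfrac{2n\gamma}{d\nnodes^{1/2}}\sqrt{\sum_j c_{\nnodes,j}}\Bigr).
\]
Using Event A and the two growth conditions on $m$ (which were chosen precisely to make each of the two correction terms bounded by $\gamma\kappa_n/8$), this gives $\eigmin(\ntgram(\bX;\paramall_t)) \geq \gamma\kappa_n/2 - \gamma\kappa_n/4 = \gamma\kappa_n/4$, establishing property (a) for $t < T$.

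With property (a) in hand for $s \in [0,t]$, I would invoke \cref{lem:exp decay} with $\zeta = \gamma\kappa_n/2$ to obtain both the exponential decay (property (b)) and the weight bound $\|\param_{tj}-\param_{0j}\| \leq R'_{\nnodes,j}$ (property (c)). By Event B, $R'_{\nnodes,j} < R_{\nnodes,j}$ strictly, for every $j$. This is the crux of the contradiction: if $T < \infty$, then continuity would force $\|\param_{Tj}-\param_{0j}\| = R_{\nnodes,j}$ for some $j$, yet the strict inequality $\|\param_{tj}-\param_{0j}\| \leq R'_{\nnodes,j} < R_{\nnodes,j}$ for all $t < T$ rules this out. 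Hence $T = \infty$ and properties (a)-(c) hold for all $t \geq 0$.

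Finally, for (d) I would apply Event C's bound on each part $\ntgrampart{k}$ separately, use the decomposition in \cref{eq:decompositionNTG} and the triangle inequality, and then combine the two square-root terms via $\sqrt{a}+\sqrt{b} \leq \sqrt{2}\sqrt{a+b}$ together with $\lamj^{(1)} + \lamj^{(2)} = \lamj$; this yields the claimed bound with the constant $2\sqrt{2}$. The main subtlety of the whole argument is not in any single step but in coordinating the three probability budgets and ensuring the $\delta$-dependence in \cref{lem:3.2:ReLU}'s weight constraint ($\delta^2 c/4$) is matched to our target radius ($\delta^2 c/64$) by the reparametrisation $\delta \mapsto \delta/4$; once that alignment is made explicit, the continuity argument closes cleanly.
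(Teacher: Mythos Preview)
Your proposal is correct and follows essentially the same approach as the paper: the same three events (the hypothesised $R'_{\nnodes,j}<R_{\nnodes,j}$ event with budget $\delta/2$, and \cref{prop:ntk-initialisation} and \cref{lem:3.2:ReLU} each with budget $\delta/4$), the same continuity/stopping-time contradiction to propagate the weight constraint, and the same application of \cref{lem:exp decay} with $\zeta=\gamma\kappa_n/2$. Your explicit identification of the $\delta\mapsto\delta/4$ reparametrisation that aligns \cref{lem:3.2:ReLU}'s radius $(\delta/4)^2 c_{\nnodes,j}/4$ with $R_{\nnodes,j}$, and your use of $\sqrt{a}+\sqrt{b}\leq\sqrt{2}\sqrt{a+b}$ for part (d), match the paper's computations exactly.
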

	\begin{proof}
		Suppose $R'_{\nnodes,j} < R_{\nnodes,j}$ for all $j \in [\nnodes]$ on some event $A'$ having probability  at least $1 - \frac{\delta}{2}$. Also, we would like to instantiate \cref{prop:ntk-initialisation} and \cref{lem:3.2:ReLU} with $\delta/4$, so that each of their claims holds with probability at least $1 - \frac{\delta}{4}$.
  Let $A$ be the intersection of $A'$ with the event that the conjunction of the two claims in \cref{prop:ntk-initialisation} and \cref{lem:3.2:ReLU} hold with $\delta/4$. By the union bound, $A$ has probability at least $1-\delta$.
    We will show that on the event $A$, the four claimed properties of the lemma hold.
		
		It will be sufficient to show that
		\begin{equation}
			\label{eqn:putting-together-via-contradiction-argument:ReLU:0}
			\|\param_{sj} -\param_{0j}\| \leq R_{\nnodes,j} \quad\text{for all $j \in [\nnodes]$ and $s \geq 0$}.
		\end{equation}
		To see why doing so is sufficient, pick an arbitrary $t_0 \geq 0$, and assume the above inequality for all $s \geq 0$.
		Then, by event $A$ and \cref{lem:3.2:ReLU}, for all $0 \leq s \leq t_0$, we have the following upper bound on the change of the Gram matrix from time $0$ to $s$,
		and the following lower bound on the smallest eigenvalue of
		$\ntgram(\bX;\paramall_s)$:
		\begin{align*}
			\left\|\ntgram(\bX;\paramall_s)-\ntgram(\bX;\paramall_0)\right\|_2
			& {} \leq
			\sum_{k = 1}^2
			\left\|\ntgram^{(k)}(\bX;\paramall_s)-\ntgram^{(k)}(\bX;\paramall_0)\right\|_2
			\\
			& {} \leq
			\sum_{k = 1}^2
			\left(\frac{n}{d}\sum_{j=1}^{\nnodes}\lamj^{(k)} c_{\nnodes,j}
			+
			\frac{2 n}{d} \,
			\sqrt{\sum_{j=1}^{\nnodes}\lamj^{(k)}c_{\nnodes,j}}\right)
			\\
			& {} \leq
			\frac{n}{d}\sum_{j=1}^{\nnodes}\lamj c_{\nnodes,j}
			+
			\frac{2\sqrt{2} \cdot n}{d} \,
			\sqrt{\sum_{j=1}^{\nnodes}\lamj c_{\nnodes,j}}
		\end{align*}
		and
		\begin{align*}
			\eigmin(\ntgram(\bX;\paramall_s))
			& {} \geq  \eigmin(\ntgram^{(1)}(\bX;\paramall_0))-
			\left(\frac{n\gamma}{d \nnodes}\sum_{j=1}^{\nnodes} c_{\nnodes,j}
			+
			\frac{2 n \gamma}{d \nnodes^{1/2}}
			\sqrt{\sum_{j=1}^{\nnodes} c_{\nnodes,j}}\right)
			\\
			& {} \geq \frac{\gamma\kappa_n}{2} -
			\frac{\gamma\kappa_n}{4} \cdot
			\left(\frac{1}{\nnodes} \cdot \frac{4n}{d\kappa_n}\sum_{j=1}^{\nnodes} c_{\nnodes,j}
			+
			\frac{1}{\nnodes^{1/2}} \cdot
			\frac{8n}{d\kappa_n}
			\sqrt{\sum_{j=1}^{\nnodes} c_{\nnodes,j}}\right)
			\\
			& {} \geq \frac{\gamma\kappa_n}{2} -
			\frac{\gamma \kappa_n}{4}
			= \frac{\gamma \kappa_n}{4}.
		\end{align*}
		We now apply \cref{lem:exp decay} with $\zeta$ being set to $\frac{\gamma \kappa_n}{2}$, which gives
		$$
		L_\nnodes(\paramall_{t_0})\leq e^{- (\gamma\kappa_n t_0)/2}L_\nnodes(\paramall_0)
		$$
		and
		\begin{align*}
			\|\param_{t_0j} -\param_{0j}\| \leq \sqrt{\frac{n\lamj}{\din}} \left\|\by-\bfu_0\right\|\frac{4}{\gamma\kappa_n} = R'_{\nnodes,j} \quad\text{for all $j \in [\nnodes]$}.
		\end{align*}
		We have just shown that all the four properties in the lemma hold for $t_0$.
		
		It remains to prove \cref{eqn:putting-together-via-contradiction-argument:ReLU:0} under the event $A$ and the assumption that $R'_{\nnodes,j} < R_{\nnodes,j}$ for all $j \in [\nnodes]$ holds on this event.
		Suppose that  \cref{eqn:putting-together-via-contradiction-argument:ReLU:0} fails for some $j \in [\nnodes]$.
		Let
		\[
		t_1 = \inf \left\{t   \;\left|\;  \|\paramj -\param_{0j}\| > R_{\nnodes,j} \text{ for some $j \in [\nnodes]$} \right.\right\}.
		\]
		Then, by the continuity of $\param_{tj}$ on $t$, we have
		\[
		\|\param_{sj} - \param_{0j}\| \leq R_{\nnodes,j}\quad \text{for all $j \in [\nnodes]$ and $0\leq s\leq t_1$}
		\]
		and for some $j_0 \in [\nnodes]$,
		\begin{equation}
			\label{eqn:putting-together-via-contradiction-argument:ReLU:1}
			\|\param_{t_1j_0} - \param_{0j_0}\| = R_{\nnodes,j_0}.
		\end{equation}
		Thus, by the argument that we gave in the previous paragraph, we have
		\[
		\|\param_{t_1j} - \param_{0j}\| \leq R'_{\nnodes,j} \quad \text{for all $j \in [\nnodes]$}.
		\]
		In particular, $\|\param_{t_1j_0} - \param_{0j_0}\| \leq R'_{\nnodes,j_0}$.
		But this contradicts our assumption $R'_{\nnodes,j_0} < R_{\nnodes,j_0}$.
	\end{proof}

 \subsection{Lemma bounding the NTK change and minimum eigenvalue  - Smooth activation case}

We now give  a version of \cref{lem:3.2:ReLU} for the smooth activation case (that is, under \cref{assump:activation}). The proof of this version is similar to the one for Lemma 5 in \cite{Oymak2020}, and uses the three equations (364-366) in \cite{Bartlett2021}.
	\begin{lemma}\label{lem:3.2:smooth-activation}
		Assume that \cref{assump:data,assump:activation,assump:init_V} hold. Let $c_{\nnodes,j}>0$ for every $j \in [\nnodes]$. Then, for any fixed $\paramall=(\param_1^\top,\ldots,\param_\nnodes^\top)^\top$, if it satisfies
		$$
		\|\param_{0j}-\paramj\| \leq \frac{c_{\nnodes,j}}{2}\quad \text{for all $j \in [\nnodes]$,}
		$$
		we have
		\[
		\left\| \ntgram^{(k)}(\mathbf{X};\paramall)-\ntgram^{(k)}(\mathbf{X};\paramall_{0})\right\|_2
		\leq
		\frac{nM^{2}}{4d^{2}}
		\sum_{j=1}^{\nnodes}\lamj^{(k)}c_{\nnodes,j}^{2}+\frac{nM}{d^{3/2}}
		\sqrt{\sum_{j=1}^{\nnodes}\lamj^{(k)}c_{\nnodes,j}^{2}}
		\qquad\text{for all $k \in [2]$}
		\]
		and
		\begin{align}\label{eq:smooth-activation}
			\eigmin(\ntgram(\bX;\paramall))\geq
			\eigmin(\ntgram^{(1)}(\bX;\paramall_0)) -
			\left(\frac{nM^{2}\gamma}{4d^{2}\nnodes}
			\sum_{j=1}^{\nnodes}c_{\nnodes,j}^{2}+\frac{nM \gamma}{d^{3/2}\nnodes^{1/2}}
			\sqrt{\sum_{j=1}^{\nnodes}c_{\nnodes,j}^{2}}\right).
		\end{align}
	\end{lemma}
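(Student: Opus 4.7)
The plan is to mirror the proof of \cref{lem:3.2:ReLU}, but replace the probabilistic sign-flip bound on $\relu'$ (which used the $\mathrm{erf}$ estimate of Gaussian anti-concentration) with a deterministic Lipschitz bound coming from $|\relu''| \leq M$. This is precisely what makes the smooth-activation version of the lemma a deterministic statement, as opposed to a high-probability one.

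First I would invoke the same matrix inequality used in the ReLU proof: for any real matrices $A$ and $B$ of matching shape,
\[
\|AA^\top - BB^\top\|_2 \leq \|A-B\|_2\bigl(\|A-B\|_2 + 2\|B\|_2\bigr).
\]
Applying this with $A = \nabla_\paramall f_\nnodes^{(k)}(\bX;\paramall)$ and $B = \nabla_\paramall f_\nnodes^{(k)}(\bX;\paramall_0)$ (as in \cref{eqn:3.2:ReLU:1}) reduces the problem to controlling $\|B\|_2$ and the gradient difference $\|A - B\|_2$. The bound $\|B\|_2^2 \leq \|B\|_F^2 \leq \frac{n\gamma_k}{\din}$ (where $\gamma_1=\gamma$, $\gamma_2=1-\gamma$) follows verbatim from \cref{eqn:3.2:ReLU:2}, using $|\relu'|\leq 1$, $\|\bx_i\|\leq 1$, and $\sum_j \lamj^{(k)} \leq \gamma_k$.

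The crucial step is the gradient-difference bound, and here I would use smoothness instead of the measure-theoretic argument from the ReLU case. By the mean value theorem and $|\relu''(\cdot)|\leq M$,
\[
\bigl|\relu'(Z_j(\bx_i;\paramall)) - \relu'(Z_j(\bx_i;\paramall_0))\bigr|
\leq \frac{M}{\sqrt{\din}}\,\|\param_j-\param_{0j}\|\,\|\bx_i\|
\leq \frac{M c_{\nnodes,j}}{2\sqrt{\din}}.
\]
Substituting this into the Frobenius-norm identity analogous to \cref{eqn:3.2:ReLU:3} gives
\[
\|A-B\|_F^2 \leq \frac{1}{\din}\sum_{i=1}^n\sum_{j=1}^{\nnodes} \|\bx_i\|^2\, \lamj^{(k)}\,\frac{M^2 c_{\nnodes,j}^2}{4\din}
\leq \frac{nM^2}{4\din^2}\sum_{j=1}^\nnodes \lamj^{(k)} c_{\nnodes,j}^2.
\]
Plugging $\|A-B\|_2 \leq \|A-B\|_F$ and the bound on $\|B\|_2$ into the matrix identity and simplifying yields exactly the claimed inequality on $\|\ntgram^{(k)}(\bX;\paramall) - \ntgram^{(k)}(\bX;\paramall_0)\|_2$.

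Finally, for \cref{eq:smooth-activation} on $\eigmin(\ntgram(\bX;\paramall))$, I would apply the Weyl-type inequality $\eigmin(\ntgram(\bX;\paramall)) \geq \eigmin(\ntgram^{(1)}(\bX;\paramall)) \geq \eigmin(\ntgram^{(1)}(\bX;\paramall_0)) - \|\ntgram^{(1)}(\bX;\paramall) - \ntgram^{(1)}(\bX;\paramall_0)\|_2$ (as at the end of the ReLU proof), and specialise the bound just derived to $k=1$, using $\lamj^{(1)} = \gamma/\nnodes$ and $\gamma_1=\gamma$. The proof is not conceptually hard; the main obstacle is bookkeeping, namely keeping the constants $M$, $\gamma_k$, and $c_{\nnodes,j}$ aligned through the $AA^\top - BB^\top$ expansion so that the two terms in the final bound emerge with the correct coefficients $\tfrac{nM^2}{4\din^2}$ and $\tfrac{nM}{\din^{3/2}}$.
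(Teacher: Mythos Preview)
Your proposal is correct and follows essentially the same approach as the paper: the same $\|AA^\top - BB^\top\|_2$ expansion, the same Frobenius-norm bound on $\|B\|_2$, the replacement of the ReLU sign-flip/anti-concentration step by the deterministic Lipschitz estimate $|\relu'(Z_j(\bx_i;\paramall)) - \relu'(Z_j(\bx_i;\paramall_0))| \leq \tfrac{M}{\sqrt{\din}}\|\param_j-\param_{0j}\|\|\bx_i\|$, and the same Weyl-type argument for \cref{eq:smooth-activation}. The constants you track line up exactly with the paper's (the paper in fact obtains the slightly sharper cross term with an extra $\sqrt{\gamma_k}$, then drops it using $\gamma_k\leq 1$).
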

	Note that this lemma has a deterministic conclusion, although its original counterpart (\cref{lem:3.2:ReLU}) has a probabilistic one.
	\begin{proof}
	        The beginning part of the proof is essentially an abbreviated version of the beginning part of the proof of \cref{lem:3.2:ReLU}. This repetition is intended to help the reader by not forcing her or him to look at the proof of \cref{lem:3.2:ReLU} beforehand.
	
		For $k \in [2]$, let
		\begin{align*}
			f_{\nnodes}^{(k)}(-;\paramall) & : \R^\din \to \R,
			&
			f_{\nnodes}^{(k)}(\mathbf{x};\paramall) & =\sum_{j=1}^{\nnodes}\sqrt{\lamj^{(k)}}\aj\relu(Z_{j}(\mathbf{x};\paramall)),
		\end{align*}
		and define $\nabla_{\paramall}f_{\nnodes}^{(k)}(\mathbf{X};\paramall)$ to be the $n$-by-$(pd)$
		matrix whose $i$-th row is the $md$-dimensional row vector
		$(\nabla_{\paramall}f_{\nnodes}^{(k)}(\mathbf{x}_{i};\paramall))^{\top}$.
		
		For all $k \in [2]$, we have
		\begin{align}
                    \nonumber
                    &
				\left\|\ntgram^{(k)}(\mathbf{X};\paramall)-\ntgram^{(k)}(\mathbf{X};\paramall_{0})\right\|_2
                    \\
                    \nonumber
				& \qquad {} =
				\left\|
				\nabla_{\paramall}f_{\nnodes}^{(k)}(\mathbf{X};\paramall)\nabla_{\paramall}f_{\nnodes}^{(k)}(\mathbf{X};\paramall)^{\top}
				-\nabla_{\paramall}f_{\nnodes}^{(k)}(\mathbf{X};\paramall_{0})\nabla_{\paramall}f_{\nnodes}^{(k)}(\mathbf{X};\paramall_{0})^{\top}
				\right\|_2
				\\
                    \label{eq:ntker-k-bound}
				& \qquad {} \leq
				\left\|
				\nabla_{\paramall}f_{\nnodes}^{(k)}(\mathbf{X};\paramall)-
				\nabla_{\paramall}f_{\nnodes}^{(k)}(\mathbf{X};\paramall_{0})\right\|_2^{2}
				\\
                    \nonumber
				& \qquad
				\phantom{{} \leq {}}
				{} + 2
				\left\| \nabla_{\paramall}f_{\nnodes}^{(k)}(\mathbf{X};\paramall_{0})\right\|_2
				\left\| \nabla_{\paramall}f_{\nnodes}^{(k)}(\mathbf{X};\paramall)-\nabla_{\paramall}f_{\nnodes}^{(k)}(\mathbf{X};\paramall_{0})\right\|_2.
		\end{align}
		To see why this inequality holds, see the proof of \cref{lem:3.2:ReLU}.
		We bound the two terms
		$\left\| \nabla_{\paramall}f_{\nnodes}^{(k)}(\mathbf{X};\paramall_{0})\right\|_2 $ and
		$\left\| \nabla_{\paramall}f_{\nnodes}^{(k)}(\mathbf{X};\paramall)-\nabla_{\paramall}f_{\nnodes}^{(k)}(\mathbf{X};\paramall_{0})\right\|_2$
		in \cref{eq:ntker-k-bound}. We bound the first term as follows:
		\begin{align*}
			\left\| \nabla_{\paramall}f_{\nnodes}^{(k)}(\mathbf{X};\paramall_{0})\right\|_2^{2}
			\leq
			\left\| \nabla_{\paramall}f_{\nnodes}^{(k)}(\mathbf{X};\paramall_{0})\right\|_{F}^{2}
			&
			{} =
			\sum_{i=1}^{n}\sum_{j=1}^{\nnodes}\left\|{\nabla_{\paramj} f_{\nnodes}^{(k)}(\mathbf{x}_{i};\paramall_{0})}\right\|^{2}
			\\
			&
			{} =
			\sum_{i=1}^{n}\sum_{j=1}^{\nnodes}\lamj^{(k)}\left| \relu'(Z_{j}(\mathbf{x}_{i};\paramall_{0}))\right|^{2}\frac{\left\Vert
				\mathbf{x}_{i}\right\Vert ^{2}}{d}\\
			&  \leq\frac{n}{d}\sum_{j=1}^{\nnodes}\lamj^{(k)}\leq\frac{n}{d}\gamma_k
		\end{align*}
  { where $\gamma_1=\gamma$ and $\gamma_2=1-\gamma$.}
		The second inequality uses the assumption that $|\relu'(x)| \leq 1$ for all $x \in \mathbb{R}$ and $\|\bx_i\| \leq 1$ for all $i \in [n]$.
		The third inequality holds because $\sum_{j = 1}^{\nnodes} \lamj^{(k)} \leq \sum_{j=1}^{\nnodes} \lamj = 1$.
		For the second term, we recall that
		$\left|\relu''(x)\right| \leq M$ and so $\relu'$ is $M$-Lipschitz, and also that
		$Z_{j}(\mathbf{x}_{i};\paramall)=\frac{1}{\sqrt{d}}\param_{j}^{\top}\mathbf{x}_{i}$. Using these facts, we
		derive an upper bound for the second term as follows:
		\begin{align*}
			& \left\| \nabla_{\paramall}f_{\nnodes}^{(k)}(\mathbf{X};\paramall)-\nabla_{\paramall} f_{\nnodes}^{(k)}(\mathbf{X};\paramall_{0})\right\|_2^{2}
                \\
			& \qquad
			{} \leq
			\left\| \nabla_{\paramall}f_{\nnodes}^{(k)}(\mathbf{X};\paramall)-\nabla_{\paramall}f_{\nnodes}^{(k)}(\mathbf{X};\paramall_{0})\right\| _{F}^{2}
			\\
			& \qquad
			{} =
			\sum_{i=1}^{n}\sum_{j=1}^{\nnodes}
			\left\|
			{\nabla_{\paramj} f_{\nnodes}^{(k)}(\mathbf{x}_{i};\paramall)}
			- {\nabla_{\paramj} f_{\nnodes}^{(k)}(\mathbf{x}_{i};\paramall_{0})}
			\right\|^{2}
			\\
			& \qquad
			{} =
			\sum_{i=1}^{n}\sum_{j=1}^{\nnodes}
			\left\|
			\sqrt{\lamj^{(k)}}\aj\frac{\mathbf{x}_{i}}{\sqrt{d}}
			\left[  \relu'(Z_{j}(\mathbf{x}_{i};\paramall))-\relu'(Z_{j}(\mathbf{x}_{i};\paramall_{0}))\right]
			\right\|^{2}
			\\
			& \qquad
			{} =
			\frac{1}{d}\sum_{i=1}^{n}
			\left\| \mathbf{x}_{i} \right\|^{2}
			\sum_{j=1}^{\nnodes}
			\lamj^{(k)}\left[  \relu'(Z_{j}(\mathbf{x}_{i};\paramall))-\relu'(Z_{j}(\mathbf{x}_{i};\paramall_{0}))\right]^{2}
			\\
			& \qquad
			{} \leq
			\frac{1}{d}\sum_{i=1}^{n}\sum_{j=1}^{\nnodes}\lamj^{(k)}
			\left[\relu'(Z_{j}(\mathbf{x}_{i};\paramall))-\relu'(Z_{j}(\mathbf{x}_{i};\paramall_{0}))\right]^{2}
			\\
			& \qquad
			{} \leq
			\frac{M^{2}}{d^{2}}\sum_{i=1}^{n}\sum_{j=1}^{\nnodes}
			\lamj^{(k)}\left(\left(\param_{j}-\param_{0j}\right)^{\top}\mathbf{x}_{i}\right)^{2}
			\\
			& \qquad
			{} \leq
			\frac{nM^{2}}{d^{2}}\sum_{j=1}^{\nnodes}\lamj^{(k)}
			\left\| \param_{j}-\param_{0j}\right\|^{2}
			\\
			& \qquad
			{} \leq
			\frac{nM^{2}}{4d^{2}}\sum_{j=1}^{\nnodes}
			\lamj^{(k)} c_{\nnodes,j}^{2}.
		\end{align*}
		The second to last step uses the Cauchy-Schwartz inequality, and the last step uses
		our assumption that $\left\|\param_{j}-\param_{0j}\right\| \leq\frac{ c_{\nnodes,j}}{2}$
		for all $j \in [\nnodes]$. From the derived bounds on the first and second terms in the last line of \cref{eq:ntker-k-bound}, it follows that
		\begin{align*}
			\left\|
			\ntgram^{(k)}(\mathbf{X};\paramall)-\ntgram^{(k)}(\mathbf{X};\paramall_{0})
			\right\|_2
			&
			{} \leq
			\frac{nM^{2}}{4 d^{2}} \sum_{j=1}^{\nnodes}\lamj^{(k)}c_{\nnodes,j}^{2}
			+
			2 \sqrt{\frac{n}{d}\gamma_k}
			\sqrt{\frac{nM^2}{4d^2}\sum_{j=1}^{\nnodes}\lamj^{(k)}c_{\nnodes,j}^2}
			\\
			&
			{} =
			\frac{nM^{2}}{4 d^{2}} \sum_{j=1}^{\nnodes}\lamj^{(k)}c_{\nnodes,j}^{2}
			+
			\frac{nM}{d^{3/2}}
			\sqrt{\gamma_k\sum_{j=1}^{\nnodes}\lamj^{(k)}c_{\nnodes,j}^2}.
		\end{align*}
		
		Finally, as noted in the proof of \cref{lem:3.2:ReLU}, we have
		\begin{align*}
			\eigmin(\ntgram(\bX;\paramall))
			& {} \geq
			\eigmin(\ntgram^{(1)}(\bX;\paramall))
			\\
			& {} \geq
			\eigmin(\ntgram^{(1)}(\bX;\paramall_0)) -
			\left\|\ntgram^{(1)}(\bX;\paramall) - \ntgram^{(1)}(\bX;\paramall_0)\right\|_2.
		\end{align*}
		Thus,
		\[
		\eigmin(\ntgram(\bX;\paramall))
		\geq
		\eigmin(\ntgram^{(1)}(\bX;\paramall_0)) -
		\left(
		\frac{nM^{2} \gamma}{4 d^{2}\nnodes} \sum_{j=1}^{\nnodes}c_{\nnodes,j}^{2}
		+
		\frac{nM \gamma}{d^{3/2}\nnodes^{1/2}}
		\sqrt{\sum_{j=1}^{\nnodes}c_{\nnodes,j}^2}
		\right).
		\]
	\end{proof}

\subsection{Lemma on a sufficient condition for \cref{thm:upper-bound:smooth-activation} - Smooth activation case}

We now give  a version of \cref{lem:putting-together-via-contradiction-argument:ReLU} for the smooth activation case (i.e., under \cref{assump:activation}). It brings together the results from \cref{prop:ntk-initialisation,lem:exp decay,lem:3.2:smooth-activation}, and identifies a sufficient condition for \cref{thm:upper-bound:relu}, which corresponds to the condition in Lemma 3.4 in~\cite{Du2019}.
 \begin{lemma}
		\label{lem:putting-together-via-contradiction-argument:smooth-activation}
		Assume that \cref{assump:data,assump:activation,assump:init_V} hold. Let $\delta \in (0,1)$, and $c_{\nnodes,j} > 0$ for all $j \in [\nnodes]$. Assume that $\gamma > 0$ and
		\[
		\nnodes \geq
		\max\left(\frac{8n\log \frac{2n}{\delta}}{d \kappa_n},\;
		\frac{nM^{2}\delta^2}{8d^{2}\kappa_n} \sum_{j=1}^{\nnodes}c_{\nnodes,j}^{2},\;
		\frac{4 n^2M^2\delta^2}{d^{3}\kappa^2_n}\sum_{j=1}^{\nnodes}c_{\nnodes,j}^2
		\right).
		\]
		For each $j \in [\nnodes]$, define
		\[
		R'_{\nnodes,j} = \sqrt{\frac{n\lamj}{\din}} \left\|\by-\bfu_0\right\|\frac{4}{\gamma \kappa_n}
		\quad\text{and}\quad
		R_{m,j} = \frac{\delta c_{\nnodes,j}}{8}.
		\]
  If $R'_{\nnodes,j} <  R_{\nnodes,j}$ for all $j \in [\nnodes]$ with probability at least $1-\frac{\delta}{2}$, then on an event with probability at least $1-\delta$, we have that for all $j \in [\nnodes]$, $R'_{\nnodes,j} <  R_{\nnodes,j}$ and the following properties also hold for all $t \geq 0$:
		\begin{enumerate}
			\item[(a)] $\eigmin(\ntgram(\bX;\paramall_t)) \geq \frac{\gamma \kappa_n}{4}$;
			\item[(b)] $L_\nnodes(\paramall_t) \leq e^{-(\gamma \kappa_n t)/2} L_\nnodes(\paramall_0)$;
			\item[(c)] $\|\param_{tj} - \param_{0j}\| \leq R'_{\nnodes,j}$ for all $j \in [\nnodes]$; and
			\item[(d)] $\|\ntgram(\bX;\paramall_t)-\ntgram(\bX;\paramall_0)\|_2 \leq
			\frac{nM^{2}\delta^2}{8^2d^{2}}
			\sum_{j=1}^{\nnodes}\lamj c_{\nnodes,j}^{2}+ \frac{nM\delta}{2^{3/2}d^{3/2}}
			\sqrt{\sum_{j=1}^{\nnodes}\lamj c_{\nnodes,j}^{2}}$.
		\end{enumerate}
	\end{lemma}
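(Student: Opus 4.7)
The proof follows the same contradiction/continuity template as \cref{lem:putting-together-via-contradiction-argument:ReLU}, with \cref{lem:3.2:smooth-activation} replacing \cref{lem:3.2:ReLU}. A key simplification is that \cref{lem:3.2:smooth-activation} has a \emph{deterministic} conclusion, so only two probabilistic events are involved: the event $A''$ on which \cref{prop:ntk-initialisation} applies, giving $\eigmin(\ntgram^{(1)}(\bX;\paramall_0))>\gamma\kappa_n/2$, and the hypothesised event $A'$ on which $R'_{\nnodes,j}<R_{\nnodes,j}$ for all $j\in[\nnodes]$. Taking $\delta/2$ as the failure probability for \cref{prop:ntk-initialisation} (which is why the assumption on $\nnodes$ contains $\log(2n/\delta)$ rather than $\log(4n/\delta)$) and using a union bound, both events hold simultaneously on an event $A$ of probability at least $1-\delta$.

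The main work is the contradiction argument on $A$. Define $t_1=\inf\{t\geq 0:\|\param_{tj}-\param_{0j}\|>R_{\nnodes,j}\text{ for some }j\in[\nnodes]\}$ and suppose, for contradiction, that $t_1<\infty$. For every $0\le s\le t_1$ and every $j$, we then have $\|\param_{sj}-\param_{0j}\|\le R_{\nnodes,j}=\delta c_{\nnodes,j}/8$, so the hypothesis $\|\param_{sj}-\param_{0j}\|\le c'_{\nnodes,j}/2$ of \cref{lem:3.2:smooth-activation} is satisfied with $c'_{\nnodes,j}:=\delta c_{\nnodes,j}/4$. Substituting this choice of $c'_{\nnodes,j}$ into \cref{lem:3.2:smooth-activation} for $k=1,2$ and summing via the triangle inequality (using $\sqrt{a}+\sqrt{b}\le\sqrt{2(a+b)}$ to combine the $k=1$ and $k=2$ square-root terms) yields, for every $0\le s\le t_1$, the NTG-change bound claimed in (d). Using the same substitution in \cref{eq:smooth-activation} and the lower bound on $\eigmin(\ntgram^{(1)}(\bX;\paramall_0))$ from $A''$, one checks that the three conditions on $\nnodes$ in the statement are exactly what is needed to absorb the two error terms by $\gamma\kappa_n/4$, giving $\eigmin(\ntgram(\bX;\paramall_s))\ge\gamma\kappa_n/4$.

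With this uniform spectral lower bound on $[0,t_1]$, we apply \cref{lem:exp decay} with $\zeta=\gamma\kappa_n/2$, obtaining
\[
L_\nnodes(\paramall_{t_1})\le e^{-\gamma\kappa_n t_1/2}L_\nnodes(\paramall_0),\qquad
\|\param_{t_1 j}-\param_{0j}\|\le R'_{\nnodes,j}\ \text{for all }j\in[\nnodes].
\]
Since we are on $A'$, $R'_{\nnodes,j}<R_{\nnodes,j}$ for all $j$, so $\|\param_{t_1 j}-\param_{0j}\|<R_{\nnodes,j}$ for every $j$. But continuity of $t\mapsto\param_{tj}$ together with the definition of $t_1$ forces $\|\param_{t_1 j_0}-\param_{0j_0}\|=R_{\nnodes,j_0}$ for some $j_0$, a contradiction. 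Hence $t_1=\infty$, i.e.\ (c) holds for all $t\ge0$, and the argument of the preceding paragraph, applied at an arbitrary $t\ge0$ in place of $t_1$, then yields (a), (b), (d) simultaneously.

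The main obstacle is bookkeeping rather than ideas: one must verify that the three lower bounds on $\nnodes$ in the hypothesis are precisely what is required to make both error terms in \cref{eq:smooth-activation} smaller than $\gamma\kappa_n/4$ after the substitution $c'_{\nnodes,j}=\delta c_{\nnodes,j}/4$, and that the constants produced by combining the $k=1$ and $k=2$ contributions (in particular the $\sqrt{2}$ factor from the square-root sum) reproduce exactly the $8^{-2}$ and $2^{-3/2}$ constants in (d).
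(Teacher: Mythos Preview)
Your proposal is correct and follows essentially the same approach as the paper's own proof: define the good event $A$ as the intersection of the hypothesised event $A'$ with the event from \cref{prop:ntk-initialisation} instantiated at level $\delta/2$ (exploiting that \cref{lem:3.2:smooth-activation} is deterministic, so no third event is needed), then run the continuity/contradiction argument with \cref{lem:3.2:smooth-activation} applied with $c'_{\nnodes,j}=\delta c_{\nnodes,j}/4$ and \cref{lem:exp decay} with $\zeta=\gamma\kappa_n/2$. One small expository slip: only the \emph{second and third} lower bounds on $\nnodes$ are used to absorb the two error terms in \cref{eq:smooth-activation} by $\gamma\kappa_n/8$ each; the first lower bound is purely for \cref{prop:ntk-initialisation}, as you yourself noted earlier.
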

	\begin{proof}
		The proof is very similar to that of \cref{lem:putting-together-via-contradiction-argument:ReLU}, although the concrete bounds in these proofs
		differ due to the differences between \cref{lem:3.2:ReLU} and \cref{lem:3.2:smooth-activation}.

Suppose $R'_{\nnodes,j} < R_{\nnodes,j}$ for all $j \in [\nnodes]$ on some event $A'$ having probability  at least $1 - \frac{\delta}{2}$. Also, we would like to instantiate \cref{prop:ntk-initialisation} with $\delta/2$, so that its claim holds with probability at least $1 - \frac{\delta}{2}$.
  Let $A$ be the intersection of $A'$ with the event that  the claim in \cref{prop:ntk-initialisation} holds with $\delta/2$. By the union bound, $A$ has probability at least $1-\delta$.
    We will show that on the event $A$, the four claimed properties of the lemma hold.
		
		It will be sufficient to show that
		\begin{equation}
			\label{eqn:putting-together-via-contradiction-argument:smooth-activation:0}
			\|\param_{sj} -\param_{0j}\| \leq R_{m,j} \quad\text{for all $s \geq 0$}.
		\end{equation}
		To see why doing so is sufficient, pick an arbitrary $t_0 \geq 0$, and assume the above inequality for all $s \geq 0$.
		Then, by the event $A$ and \cref{lem:3.2:smooth-activation},  for all $0 \leq s \leq t_0$, we have the following upper bound on the change of the Gram matrix from
		time $0$ to $s$, and the following lower bound on the smallest eigenvalue of $\ntgram(\bX;\paramall_s)$:
		\begin{align*}
			& \left\|\ntgram(\bX;\paramall_s)-\ntgram(\bX;\paramall_0)\right\|_2
			\\
                & \qquad {} \leq
			\left\|\ntgram^{(1)}(\bX;\paramall_s)-\ntgram^{(1)}(\bX;\paramall_0)\right\|_2
			+
			\left\|\ntgram^{(2)}(\bX;\paramall_s)-\ntgram^{(2)}(\bX;\paramall_0)\right\|_2
			\\
			& \qquad {} \leq
			\frac{nM^{2}\delta^2}{64 d^{2}}\sum_{j=1}^{\nnodes}\lamj c_{\nnodes,j}^{2}
			+
			\frac{nM\delta}{2^{3/2}d^{3/2}}\sqrt{\sum_{j=1}^{\nnodes}\lamj c_{\nnodes,j}^{2}}
		\end{align*}
		and
		\begin{align*}
			\eigmin(\ntgram(\bX;\paramall_s))
			& {} \geq  \eigmin(\ntgram^{(1)}(\bX;\paramall_0))-
			\left(
			\frac{nM^{2}\delta^2 \gamma}{64 d^{2}\nnodes} \sum_{j=1}^{\nnodes}c_{\nnodes,j}^{2}
			+
			\frac{nM \delta {\color{blue}\gamma}}{4d^{3/2}\nnodes^{1/2}}
			\sqrt{\sum_{j=1}^{\nnodes}c_{\nnodes,j}^2}
			\right)
			\\
			& {} > \frac{\gamma\kappa_n}{2} - \frac{\gamma\kappa_n}{4}
			\left(
			\frac{1}{\nnodes} \cdot \frac{nM^{2}\delta^2}{16d^{2}\kappa_n} \sum_{j=1}^{\nnodes}c_{\nnodes,j}^{2}
			+
			\frac{1}{\nnodes^{1/2}} \cdot
			\frac{ nM\delta}{d^{3/2}\kappa_n}
			\sqrt{\sum_{j=1}^{\nnodes}c_{\nnodes,j}^2}
			\right)
			\\
			& {} \geq \frac{\gamma\kappa_n}{2} -
			\frac{\gamma \kappa_n}{4} \left(\frac{1}{2} + \frac{1}{2}\right)
			= \frac{\gamma \kappa_n}{4}.
		\end{align*}
		We now apply the version of \cref{lem:exp decay} for the analytic activation $\relu$, with $\zeta$ being set to $\frac{\gamma \kappa_n}{2}$. This application gives
		$$
		L_\nnodes(\paramall_{t_0})\leq e^{- (\gamma\kappa_n t_0)/2}L_\nnodes(\paramall_0)
		$$
		and
		\begin{align*}
			\|\param_{t_0j} -\param_{0j}\| \leq \sqrt{\frac{n\lamj}{\din}} \left\|\by-\bfu_0\right\|\frac{4}{\gamma\kappa_n} = R'_{\nnodes,j} \quad\text{for all $j \in [\nnodes]$}.
		\end{align*}
		We have just shown that all the four properties in the lemma hold for $t_0$.
		
		It remains to prove \cref{eqn:putting-together-via-contradiction-argument:smooth-activation:0} under the event $A$. Suppose that  \cref{eqn:putting-together-via-contradiction-argument:smooth-activation:0} fails for some $j \in [\nnodes]$. Let
		\[
		t_1 = \inf \left\{t   \;\left|\;  \|\paramtj -\param_{0j}\| > R_{m,j} \text{ for some $j \in [\nnodes]$}\right.\right\}.
		\]
		Then, by the continuity of $\param_{tj}$ on $t$, we have
		\[
		\|\param_{sj} - \param_{0j}\| \leq R_{m,j}\quad \text{for all $j \in [\nnodes]$ and $0\leq s\leq t_1$}
		\]
		and for some $j_0 \in [\nnodes]$,
		\begin{equation}
			\label{eqn:putting-together-via-contradiction-argument:1}
			\|\param_{t_1j_0} - \param_{0j_0}\| = R_{\nnodes,j_0}.
		\end{equation}
		Thus, by the argument that we gave in the previous paragraph, we have
		\[
		\|\param_{t_1j} - \param_{0j}\| \leq R'_{\nnodes,j} \quad \text{for all $j \in [\nnodes]$}.
		\]
		In particular, $\|\param_{t_1j_0} - \param_{0j_0}\| \leq R'_{\nnodes,j_0}$.
		But this contradicts our assumption $R'_{\nnodes,j_0} < R_{\nnodes,j_0}$.
	\end{proof}

 \section{Proof of \cref{thm:upper-bound:relu} on the global convergence of gradient flow (ReLU case)}
 \label{sec:proofboundrelu}
 The proof of \cref{thm:upper-bound:relu} essentially follows \cref{lem:putting-together-via-contradiction-argument:ReLU}, which itself follows from the secondary \cref{prop:ntk-initialisation,lem:exp decay,lem:3.2:ReLU}, derived  in \cref{sec:secondarythm:init,sec:secondarythm:dynamics}.  Pick $\delta \in (0,1)$.
	Let
	\[
	D =
	\sqrt{n^2\left(C^2 + \frac{1}{d}\right) \frac{2 \cdot 512^2}{\gamma^2\delta^5 \kappa_n^2 d}}
	\]
	where $C$ is the assumed upper bound on the $|y_i|$'s.
	Assume $\gamma > 0$ and
	\begin{align*}
		\nnodes &
		\geq
		\max\left(
		\left(\frac{8n \log \frac{4n}{\delta}}{\kappa_n d}\right),\;
		\left(\frac{8nD}{d\kappa_n }\right)^2,\;
		\left(\frac{16^2n^2D}{d^2\kappa_n^2}\right)^2
		\right)
	\end{align*}
	and set $c_{\nnodes,j}$ as follows:
	\[
	c_{\nnodes,j}
	=
	\sqrt{\lamj} \cdot
	\sqrt{n^2\left(C^2 + \frac{1}{d}\right) \frac{2 \cdot 512^2}{\gamma^2\delta^5 \kappa_n^2 d}}
	=
	\sqrt{\lamj} \cdot D.
	\]
	Note that
	\begin{align*}
		\left(\frac{8n}{d\kappa_n}\sum_{j=1}^{\nnodes} c_{\nnodes,j}\right)^2
		=
		\left(\frac{8nD}{d\kappa_n}\right)^2 \cdot \left(\sum_{j=1}^{\nnodes} \sqrt{\lamj}\right)^2
		& {} \leq
		\left(\frac{8nD}{d\kappa_n}\right)^2 \cdot \left(\sum_{j=1}^{\nnodes} \lamj\right) \cdot \nnodes
		\\
		& {} =
		\left(\frac{8nD}{d\kappa_n}\right)^2 \cdot \nnodes
		{} \leq \nnodes^2,
	\end{align*}
	and also that
	\begin{align*}
		\left(
		\frac{16^2n^2}{d^2\kappa_n^2}
		\sum_{j=1}^{\nnodes} c_{\nnodes,j}\right)^2
		=
		\left(
		\frac{16^2n^2D}{d^2\kappa_n^2}
		\right)^2
		\cdot
		\left(\sum_{j=1}^{\nnodes} \sqrt{\lamj}\right)^2
		& \leq
		\left(
		\frac{16^2n^2D}{d^2\kappa_n^2 }
		\right)^2
		\cdot
		\left(\sum_{j=1}^{\nnodes} \lamj\right)
		\cdot
		\nnodes
		\\
		& {} =
		\left(
		\frac{16^2n^2D}{d^2\kappa_n^2 }
		\right)^2
		\cdot
		\nnodes
		\leq
		\nnodes^2.
	\end{align*}
	Thus,
	\[
	\nnodes \geq \max\left(
	\left(\frac{8n \log \frac{4n}{\delta}}{d \kappa_n}\right),\;
	\left(\frac{8n}{d\kappa_n}\sum_{j=1}^{\nnodes} c_{\nnodes,j}\right),\;
	\left(\frac{16^2n^2}{d^2\kappa_n^2} \sum_{j=1}^{\nnodes} c_{\nnodes,j}\right)\right).
	\]
	As a result, we can now employ \cref{lem:putting-together-via-contradiction-argument:ReLU}. Thus,
	if we find an event $A'$ such that the probability of $A'$ is at least $1-(\delta/2)$ and
	under $A'$, we have $R'_{\nnodes,j} < R_{\nnodes,j}$, then the conclusion of  \cref{lem:putting-together-via-contradiction-argument:ReLU} holds.
In particular, we may further calculate conclusions (c) and (d)  of  \cref{lem:putting-together-via-contradiction-argument:ReLU} as
	\begin{align*}
		\|\param_{tj} - \param_{0j}\|
		\leq
		R'_{\nnodes,j}	<
		R_{\nnodes,j}
		=
		\frac{\delta^2 c_{\nnodes,j}}{64}
		& {} =
		\frac{\delta^2}{64} \cdot
		\sqrt{\lamj} \cdot
		\sqrt{n^2\left(C^2 + \frac{1}{d}\right) \frac{2 \cdot 512^2}{\gamma^2\delta^5 \kappa_n^2 d}}
		\\
		& {} =
		\frac{8n}{\kappa_n d^{1/2}} \cdot
		\sqrt{\left(C^2 + \frac{1}{d}\right) \frac{2}{\gamma^2\delta}} \cdot
		\sqrt{\lamj},
	\end{align*}
	and
	\begin{align*}
		\|\ntgram(\bX;\paramall_t)-\ntgram(\bX;\paramall_0)\|_2
		& {} \leq \frac{n}{d}\sum_{j=1}^{\nnodes}\lamj c_{\nnodes,j}
		+
		\frac{2\sqrt{2} \cdot n}{d} \,
		\sqrt{\sum_{j=1}^{\nnodes}\lamj c_{\nnodes,j}}
		\\
		& {} = \frac{n}{d} \cdot D \cdot \sum_{j=1}^{\nnodes}\lamj^{3/2}
		+
		\frac{2\sqrt{2} \cdot n}{d} \cdot \sqrt{D} \cdot
		\sqrt{\sum_{j=1}^{\nnodes}\lamj^{3/2}}
		\\
		& {} = \frac{n}{d} \cdot \sqrt{n^2\left(C^2 + \frac{1}{d}\right) \frac{2 \cdot 512^2}{\gamma^2\delta^5 \kappa_n^2 d}}
		\cdot \sum_{j=1}^{\nnodes}\lamj^{3/2}
		\\
		& \quad {} +
		\frac{2\sqrt{2} \cdot n}{d} \cdot \left(n^2\left(C^2 + \frac{1}{d}\right) \frac{2 \cdot 512^2}{\gamma^2\delta^5 \kappa_n^2 d}\right)^{1/4}
		\cdot
		\sqrt{\sum_{j=1}^{\nnodes}\lamj^{3/2}}
		\\
		& {} = \frac{512n^2}{\kappa_nd^{3/2}} \cdot \sqrt{\left(C^2 + \frac{1}{d}\right) \frac{2}{\gamma^2\delta^5}}
		\cdot \sum_{j=1}^{\nnodes}\lamj^{3/2}
		\\
		& \quad {} +
		\frac{64 n^{3/2}}{\kappa_n^{1/2}d^{5/4}} \cdot \left(\left(C^2+ \frac{1}{d}\right) \frac{2}{\gamma^2\delta^5}\right)^{1/4}
		\cdot
		\sqrt{\sum_{j=1}^{\nnodes}\lamj^{3/2}}.
	\end{align*}
	
	It remains to find such an event $A'$. Start by noting that
	\begin{align*}
		\mathbb{E}[\left\|\by-\bfu_0\right\|^2]
		& = \sum_{i = 1}^n \left(y_i^2 - 2y_i\mathbb{E}[f_{\nnodes}(\bfx_i;\paramall_0)] + \mathbb{E}[f_{\nnodes}(\bfx_i;\paramall_0)^2]\right)
		\\
		& = \sum_{i = 1}^n \left(y_i^2 - 2y_i \cdot 0 + \mathbb{E}\left[\frac{1}{d} \sum_{j=1}^{\nnodes} \lamj  (\param_j^\top \bfx_i)^2 \ind_{\{\param_j^\top \bfx_i \geq 0\}} \right]\right)
		\\
		& = \sum_{i = 1}^n \left(y_i^2 + \frac{1}{d} \sum_{j=1}^{\nnodes} \lamj \mathbb{E}\left[(\param_j^\top \bfx_i)^2 \ind_{\{\param_j^\top \bfx_i \geq 0\}} \right]\right)
		\\
		& \leq n\left(C^2 + \frac{1}{d}\right).
	\end{align*}
	Thus, by Markov inequality,  with probability at least $1 - (\delta/2)$,
	\[
	\left\|\by-\bfu_0\right\|^2 < n\left(C^2 + \frac{1}{d}\right) \frac{2}{\delta}.
	\]
	Let $A'$ be the corresponding event for the above inequality. Then, under $A'$, we have
	\begin{align*}
		R'_{\nnodes,j}
		& = \sqrt{\frac{n\lamj}{\din}} \left\|\by-\bfu_0\right\|\frac{4}{\gamma \kappa_n}
		\\
		& <
		\sqrt{\frac{n\lamj}{\din}} \cdot  \sqrt{n\left(C^2 + \frac{1}{d}\right) \frac{2}{\delta}} \cdot \frac{4}{\gamma \kappa_n}
		\\
		& =
		\sqrt{\lamj} \cdot  \sqrt{n^2\left(C^2 + \frac{1}{d}\right) \frac{2 \cdot 4^2}{\gamma^2 \delta \kappa_n^2 d}}
		\\
		& =
		\frac{\delta^2 c_{\nnodes,j}}{128}
		< \frac{\delta^2 c_{\nnodes,j}}{64} = R_{\nnodes,j}.
	\end{align*}
	Thus, $A'$ is the desired event.

\section{Proof of \cref{thm:upper-bound:smooth-activation} on the global convergence of gradient flow (smooth case)}
 \label{sec:proofboundsmooth}

		The proof of the theorem is similar to that of \cref{thm:upper-bound:relu}. It derives from \cref{lem:putting-together-via-contradiction-argument:smooth-activation}, which itself follows from the secondary \cref{prop:ntk-initialisation,lem:exp decay,lem:3.2:smooth-activation}, derived  in \cref{sec:secondarythm:init,sec:secondarythm:dynamics}.
  Recall that
	\begin{equation*}
	C_1 = \sup_{c \in (0,1]}\mathbb{E}[\relu(cz)^2]
	\end{equation*}
	where the expectation is taken over the real-valued random variable $z$ with the distribution $\mathcal{N}(0,1/d)$.
	To see that $C_1$ is finite, note that since $|\relu'(x)| \leq 1$ for all $x \in \mathbb{R}$, we have
	\[
	|\relu(cz) - \relu(0)| \leq |cz| \ \text{for all $c \in (0,1]$}.
	\]
	Thus, for every $c \in (0,1]$,
	\[
	\relu(0) - |cz| \leq \relu(cz) \leq \relu(0) + |cz|,
	\]
	which implies that
	\begin{align*}
	\mathbb{E}[\relu(cz)^2]
	& {} \leq \relu(0)^2 + 2 |\relu(0)| \cdot |c| \cdot \mathbb{E}[|z|] + c^2\mathbb{E}[z^2]
	\\
	& {} \leq \relu(0)^2 + 2 |\relu(0)| \cdot \mathbb{E}[|z|] + \mathbb{E}[z^2].
	\end{align*}
	As a result, $\mathbb{E}[\relu(cz)^2]$ is bounded, so $C_1$ is finite.

  Pick $\delta \in (0,1)$.
		Assume $\gamma > 0$ and
		\begin{align*}
			\nnodes &
			\geq
			\max\left(
			\left(\frac{8n}{\kappa_n\din} \cdot \log \frac{2n}{\delta}\right),\;
			\left(\frac{2^{10} n^3M^{2}}{\kappa_n^3d^3} \cdot  \frac{C^2 + C_1}{\gamma^2\delta}\right),\;
			\left(\frac{2^{15} n^4M^2}{\kappa^4_n d^4} \cdot \frac{C^2 + C_1}{\gamma^2\delta}\right)
			\right)
		\end{align*}
		and instantiate \cref{lem:putting-together-via-contradiction-argument:smooth-activation} using the below $c_{\nnodes,j}$:
		\[
		c_{\nnodes,j}
		=
		\sqrt{\lamj} \cdot
		\sqrt{n^2\left(C^2 + C_1\right) \frac{2 \cdot 64^2}{\gamma^2\delta^3 \kappa_n^2 d}}
		\]
		where $C$ is the assumed upper bound on the $|y_i|$'s. Note that
		\begin{align*}
			\frac{nM^{2}\delta^2}{8d^{2}\kappa_n} \sum_{j=1}^{\nnodes}c_{\nnodes,j}^{2}
			& {} =
			\frac{nM^{2}\delta^2}{8d^{2}\kappa_n} \sum_{j=1}^{\nnodes} \left(\lamj \cdot n^2\left(C^2 + C_1\right) \frac{2 \cdot 64^2}{\gamma^2\delta^3 \kappa_n^2 d}\right)
			\\
			& {} =
			\frac{nM^{2}\delta^2}{8d^{2}\kappa_n} \cdot n^2\left(C^2 + C_1\right) \frac{2 \cdot 64^2}{\gamma^2\delta^3 \kappa_n^2 d} \cdot \sum_{j=1}^{\nnodes} \lamj
			\\
			& {} =
			\frac{2^{10} n^3M^{2}}{\kappa_n^3d^3} \times  \frac{C^2 + C_1}{\gamma^2\delta}
		\end{align*}
		and
		\begin{align*}
			\frac{4n^2M^2\delta^2}{d^{3}\kappa^2_n}\sum_{j=1}^{\nnodes}c_{\nnodes,j}^2
			& {} =
			\frac{4n^2M^2\delta^2}{ d^{3}\kappa^2_n} \sum_{j=1}^{\nnodes} \left(\lamj \cdot n^2\left(C^2 + C_1\right) \frac{2 \cdot 64^2}{\gamma^2\delta^3 \kappa_n^2 d}\right)
			\\
			& {} =
			\frac{4n^2M^2\delta^2}{d^{3}\kappa^2_n} \cdot n^2\left(C^2 + C_1\right) \frac{2 \cdot 64^2}{\gamma^2\delta^3 \kappa_n^2 d} \cdot \sum_{j=1}^{\nnodes} \lamj
			\\
			& {} =
			\frac{2^{15} n^4M^2}{\kappa^4_n d^4} \times \frac{C^2 + C_1}{\gamma^2\delta}.
		\end{align*}
		Thus,
		\[
		\nnodes \geq
		\max\left(\frac{8n\log \frac{2n}{\delta}}{d \kappa_n},\;
		\frac{nM^{2}\delta^2}{8d^{2}\kappa_n} \sum_{j=1}^{\nnodes}c_{\nnodes,j}^{2},\;
		\frac{4 n^2M^2\delta^2}{ d^{3}\kappa^2_n}\sum_{j=1}^{\nnodes}c_{\nnodes,j}^2
		\right).
		\]
		This allows us to employ \cref{lem:putting-together-via-contradiction-argument:smooth-activation}. Hence, it is sufficient
		to find an event $A'$ such that the probability of $A'$ is at least $1-(\delta/2)$ and
		under $A'$, we have $R'_{\nnodes,j} < R_{m,j}$.
		The desired conclusion then follows from the conclusion of  \cref{lem:putting-together-via-contradiction-argument:smooth-activation},
		and the below calculations: if  $\|\param_{tj} - \param_{0j}\| \leq R'_{\nnodes,j}$ and $R'_{\nnodes,j} < R_{m,j}$, then
		\begin{align*}
			\|\param_{tj} - \param_{0j}\|
			<
			R_{m,j}
			& {} = \frac{\delta c_{\nnodes,j}}{8}
			\\
			& {} = \frac{\delta}{8} \cdot  \sqrt{\lamj} \cdot
			\sqrt{n^2\left(C^2 + C_1\right) \frac{2 \cdot 64^2}{\gamma^2\delta^3 \kappa_n^2 d}}
			\\
			& {} =   \sqrt{\lamj} \times \frac{n}{\kappa_n d^{1/2}}
			\sqrt{\frac{128(C^2 + C_1)}{\gamma^2\delta}},
		\end{align*}
		and the upper bound on $\|\ntgram(\bX;\paramall_t)-\ntgram(\bX;\paramall_0)\|_2$ in the conclusion of  \cref{lem:putting-together-via-contradiction-argument:smooth-activation} can be rewritten to
		\begin{align*}
			& \|\ntgram(\bX;\paramall_t)-\ntgram(\bX;\paramall_0)\|_2
			\\
			& \qquad {} \leq
			\frac{nM^{2}\delta^2}{8^2d^{2}}
			\sum_{j=1}^{\nnodes}\lamj c_{\nnodes,j}^{2}+
			\frac{nM\delta}{2^{3/2}d^{3/2}}
			\sqrt{\sum_{j=1}^{\nnodes}\lamj c_{\nnodes,j}^{2}}
			\\
			& \qquad {} =
			\frac{nM^{2}\delta^2}{4^3d^{2}}
			\sum_{j=1}^{\nnodes}\lamj\left(\lamj n^2 \frac{(C^2 + C_1) 2 \cdot 64^2}{\gamma^2\delta^3 \kappa_n^2 d}\right)
                \\
                & \qquad\qquad
			+
			\frac{nM\delta}{2^{3/2}d^{3/2}}
			\sqrt{\sum_{j=1}^{\nnodes}\lamj\left(\lamj n^2 \frac{(C^2+C_1)2 \cdot 64^2}{\gamma^2\delta^3 \kappa_n^2 d}\right)}
			\\
			& \qquad {} =
			\left(
			\frac{n^3M^{2}}{\kappa_n^2 d^{3}}
			\sum_{j=1}^{\nnodes}\lamj^2 \frac{2^7 (C^2 + C_1)}{\gamma^2\delta}\right)
			+
			\frac{n^2M}{\kappa_n d^2}
			\sqrt{\sum_{j=1}^{\nnodes}\lamj^2 \frac{2^{10}(C^2+C_1)}{\gamma^2\delta}}.
		\end{align*}
		
		Note that
		\begin{align*}
			\mathbb{E}[\left\|\by-\bfu_0\right\|^2]
			& = \sum_{i = 1}^n \left(y_i^2 - 2y_i\mathbb{E}[f_{\nnodes}(\bfx_i;\paramall_0)] + \mathbb{E}[f_{\nnodes}(\bfx_i;\paramall_0)^2]\right)
			\\
			& = \sum_{i = 1}^n \left(y_i^2 - 2y_i \cdot 0 + \mathbb{E}\left[\sum_{j=1}^{\nnodes} \lamj  \relu(Z_j(\bfx_i;\paramall_0))^2 \right]\right)
			\\
			& = \sum_{i = 1}^n \left(y_i^2 + \sum_{j=1}^{\nnodes} \lamj \mathbb{E}\left[ \relu(Z_j(\bfx_i;\paramall_0))^2 \right]\right)
			\\
			& \leq n\left(C^2 + C_1\right).
		\end{align*}
		Thus, by Markov inequality,  with probability at least $1 - (\delta/2)$,
		\[
		\left\|\by-\bfu_0\right\|^2 < n\left(C^2 + C_1\right) \frac{2}{\delta}.
		\]
		Let $A'$ be the corresponding event for the above inequality. Then, under $A'$, we have
		\begin{align*}
			R'_{\nnodes,j}
			& = \sqrt{\frac{n\lamj}{\din}} \left\|\by-\bfu_0\right\|\frac{4}{\gamma \kappa_n}
			\\
			& <
			\sqrt{\frac{n\lamj}{\din}} \cdot  \sqrt{n\left(C^2 + C_1\right) \frac{2}{\delta}} \cdot \frac{4}{\gamma \kappa_n}
			\\
			& =
			\sqrt{\lamj} \cdot  \sqrt{n^2\left(C^2 + C_1\right) \frac{2 \cdot 4^2}{\gamma^2 \delta \kappa_n^2 d}}
			\\
			& =
			\frac{\delta c_{\nnodes,j}}{16}
			< \frac{\delta c_{\nnodes,j}}{8} = R_{m,j}.
		\end{align*}
		Thus, $A'$ is the desired event.

\section{Proof of \cref{thm:convergence-gd:smooth-case} on the global convergence of gradient descent (smooth activation)}
\label{appendix:global-convergence-gd}

Our convergence proof follows the structure of the convergence proof of \cite[Theorem 5.1]{Du2019a} with necessary modifications, which in particular account for the changing weights and Gram matrices in our setup.

\subsection{Sketch of the proof}

The proof is by induction on the number of gradient-update steps $s$.
Here is a sketch of the proof for the inductive case. We start by decomposing the error at step $s+1$:
\begin{align}
\nonumber
\Vert \mathbf{y}-\mathbf{u}_{s+1}\Vert^{2}
& {} =
\Vert(\mathbf{y}-\mathbf{u}_s)-(\mathbf{u}_{s+1}-\mathbf{u}_{s})\Vert^{2}
\\
\nonumber
& {} =
\Vert \mathbf{y}-\mathbf{u}_s \Vert^{2}
-2(\mathbf{y}-\mathbf{u}_s)^{\top}(\mathbf{u}_{s+1}-\mathbf{u}_{s})
+ \Vert \mathbf{u}_{s+1}-\mathbf{u}_{s} \Vert^{2}
\\
\label{eqn:gd-inductive:top-level-decomposition}
& {} =
\Vert \mathbf{y}-\mathbf{u}_s\Vert^{2}
-2(\mathbf{y}-\mathbf{u}_s)^{\top}\mathbf{I}_{1}
-2(\mathbf{y}-\mathbf{u}_s)^{\top}\mathbf{I}_{2}+\left\Vert \mathbf{u}_{s+1}-\mathbf{u}_{s}\right\Vert ^{2},
\end{align}
where $\mathbf{I}_1 = \eta \ntgram(\bX;\paramall_s)(\mathbf{y}-\mathbf{u}_s)$
and $\mathbf{I}_2 = (\mathbf{u}_{s+1} - \mathbf{u}_{s} - \mathbf{I}_1)$. We can then show that with high probability, both the third and the fourth terms in \cref{eqn:gd-inductive:top-level-decomposition} are $O(\eta^2)\Vert \mathbf{y} - \mathbf{u}_s\Vert^2$, so that the sum of these terms can be bounded from above by $(\eta \gamma \kappa_n/4)\Vert \mathbf{y} - \mathbf{u}_s\Vert^2$ if $\eta$ is sufficiently small.
On the other hand,
the second term can be bounded using the minimum eigenvalue of the positive definite Gram matrix:
\begin{align*}
-2 (\mathbf{y}-\mathbf{u}_s)^{\top}\mathbf{I}_{1}
& {} =
\left(
-2\eta (\mathbf{y}-\mathbf{u}_s)^{\top}\ntgram(\bX;\paramall_s)(\mathbf{y} - \mathbf{u}_s)
\right)
\\
& {} \leq
-2 \eta\eigmin(\ntgram(\bX;\paramall_s))\Vert \mathbf{y}-\mathbf{u}_s \Vert^{2}.
\end{align*}
We will show that if the network is large enough, with high probability,
$-2 \eta\eigmin(\ntgram(\bX;\paramall_s))$ in the above upper bound is at most
$-3\eta \gamma \kappa_n / 4$. Putting all these together gives the required bound: with high probability,
\begin{align*}
\Vert \mathbf{y} - \mathbf{u}_{s+1}\Vert^2
& {} \leq
\Vert \mathbf{y}-\mathbf{u}_s\Vert^{2}
-2(\mathbf{y}-\mathbf{u}_s)^{\top}\mathbf{I}_{1}
-2(\mathbf{y}-\mathbf{u}_s)^{\top}\mathbf{I}_{2}+\left\Vert \mathbf{u}_{s+1}-\mathbf{u}_{s}\right\Vert ^{2}
\\
& {} \leq
\Vert \mathbf{y}-\mathbf{u}_s\Vert^{2}
-\frac{3\eta\gamma\kappa_n}{4}\Vert \mathbf{y} - \mathbf{u}_s\Vert^2
+\frac{\eta\gamma\kappa_n}{4}\Vert \mathbf{y} - \mathbf{u}_s\Vert^2
\\
& {} \leq
\left(1 - \frac{\eta \gamma \kappa_n}{2}\right) \Vert \mathbf{y} - \mathbf{u}_s\Vert^2
\\
& {} \leq
\left(1 - \frac{\eta \gamma \kappa_n}{2}\right)^{s+1} \Vert \mathbf{y} - \mathbf{u}_0\Vert^2.
\end{align*}
The step of upper-bounding $-2 \eta\eigmin(\ntgram(\bX;\paramall_s))$ by $-3\eta\gamma \kappa_n / 4$ is where we have to account for the changing weights and Gram matrix, and this is where the difference between our proof and that of \cite{Du2019a} lies.

As mentioned already, the Gram matrix $\ntgram(\bX;\paramall_s)$ changes during gradient descent even when the network is very wide, but we will show that despite these changes, its minimum eigenvalue remains lower-bounded by $3 \gamma \kappa_n / 8$ with high probability. This can be done using the decomposition $\ntgram=\ntgrampart1+\ntgrampart2$ in \cref{eq:decompositionNTG} from our proof sketch of the global convergence of gradient flow. At a high level, the reasoning goes like this. The induction hypothesis implies that the weight change $\Vert \paramsj - \param_{0j} \Vert$ is $O(\sqrt{\lamj})$, which is small enough to guarantee that $\ntgrampart1(\bX;\paramall_{s'})$ remains almost constant during training for a large network. This, in turn, implies that the minimum eigenvalue of $\ntgrampart1(\bX;\paramall_{s})$ is lower-bounded by $3 \gamma \kappa_n / 8$ with high probability. Since $\eigmin(\ntgram(\bX;\paramall_s)) \geq \eigmin(\ntgrampart1(\bX;\paramall_s))$, we get the desired upper bound.

\subsection{Two key lemmas}

Before proving the theorem, we show two useful facts. Let
$\mathbf{u}(\paramall)$ be the $n$-dimensional vector
\[
(f_\nnodes(\mathbf{x}_1;\paramall),\ldots,f_\nnodes(\mathbf{x}_n;\paramall))^\top
\]
which consists of the network outputs on the training inputs under the parameters $\paramall$. Note that for each gradient-update step $s \in \mathbb{N} \cup \{0\}$, the vector $\mathbf{u}(\paramall_s)$ is equal to $\mathbf{u}_s$, the notation that we have been using in the main text of the paper. We also define $\mathbf{u}'(\paramall)$ to be the following $n$-by-$\nnodes$ matrix:
\[
\mathbf{u}^\prime(\paramall) = \frac{\partial \mathbf{u}}{\partial \paramall}.
\]
For each $s \in \mathbb{N} \cup \{0\}$, let $\ntgram(s) = \ntgram(\mathbf{X};\paramall_s)$ and
\[
\widetilde{\mathbf{u}}_{s+1}
= \left(\mathbf{u}_s-\eta\frac{d\mathbf{u}_t}{dt}\Big|_{\paramall_t=\paramall_s}\right)
= \left(\mathbf{u}_s-\eta\ntgram(s)(\mathbf{u}_s-\mathbf{y})\right)
\]
be the Euler discretisation of the gradient
flow of the output. Here $\eta > 0$ is the learning rate.

\begin{lemma}
\label{lemma:loss-derivative-bound}
For all $\paramall$ and $j \in [\nnodes]$,
\[
\left\Vert \frac{\partial L_\nnodes(\paramall)}{\partial \param_j} \right\Vert
\leq
\frac{\sqrt{\lamj n}}{\sqrt{d}} \Vert \mathbf{y} - \mathbf{u}(\paramall)\Vert.
\]
\end{lemma}
\begin{proof}
\begin{align*}
\left\Vert \frac{\partial L_\nnodes(\paramall)}{\partial \param_j} \right\Vert
& {} =
\left\Vert
  \sum_{i = 1}^n (u(\paramall)_i - y_i)
    \times \sqrt{\lamj} a_j
    \times \sigma'\left(\frac{\param_j^\top \mathbf{x}_i}{\sqrt{d}} \right)
    \times \frac{\mathbf{x}_i}{\sqrt{d}}
\right\Vert
\\
& {} \leq
  \sum_{i = 1}^n
  \left\Vert (u(\paramall)_i - y_i)
    \times \sqrt{\lamj} a_j
    \times \sigma'\left(\frac{\param_j^\top \mathbf{x}_i}{\sqrt{d}} \right)
    \times \frac{\mathbf{x}_i}{\sqrt{d}}
\right\Vert
\\
& {} \leq
  \frac{\sqrt{\lamj}}{\sqrt{d}} \times
  \sum_{i = 1}^n
  |u(\paramall)_i - y_i|
\\
& {} \leq
  \frac{\sqrt{\lamj n}}{\sqrt{d}} \Vert \mathbf{y} - \mathbf{u}(\paramall)\Vert.
\end{align*}
\end{proof}

The next lemma gives an upper bound on $\Vert \mathbf{y} - \mathbf{u}_{s+1}\Vert$. As we will show shortly, this upper bound will play a crucial role in the proof of \cref{thm:convergence-gd:smooth-case}.

\begin{lemma}
\label{lemma:gd-smooth-inductivecase}
Assume \cref{assump:data,assump:activation,assump:init_V}. Then, for all $s \in \mathbb{N} \cup \{0\}$, we have
\begin{equation}
\label{eqn:gd-smooth-inductivecase0}
\Vert \mathbf{y} - \mathbf{u}_{s+1}\Vert^2
 \leq
\left(1 - 2\eta\eigmin(\ntgram(s)) + \frac{2 \eta^2 M n^{3/2}}{d^2} \Vert \mathbf{y} - \mathbf{u}_s\Vert + \frac{\eta^2n^2}{d^2} \right) \times \Vert \mathbf{y} - \mathbf{u}_s\Vert^2.
\end{equation}
\end{lemma}
\begin{proof}
Write
\[
\mathbf{u}_{s+1}-\mathbf{u}_s
=
\underset{\mathbf{I}_{1}}{\underbrace{\widetilde{\mathbf{u}}_{s+1}-\mathbf{u}_s}}
+
\underset{\mathbf{I}_{2}}{\underbrace{\mathbf{u}_{s+1}-\widetilde{\mathbf{u}}_{s+1}}}.
\]
Then, we have
\begin{align*}
\left\Vert \mathbf{y}-\mathbf{u}_{s+1}\right\Vert^{2}
& {} =
\left\Vert(\mathbf{y}-\mathbf{u}_s)-(\mathbf{u}_{s+1}-\mathbf{u}_s)\right\Vert^{2}
\\
& {} =
\left\Vert \mathbf{y}-\mathbf{u}_s\right\Vert^{2}
-
2(\mathbf{y}-\mathbf{u}_s)^{\top}(\mathbf{u}_{s+1}-\mathbf{u}_s)
+
\left\Vert \mathbf{u}_{s+1} - \mathbf{u}_s\right\Vert^{2}
\\
&{} =
\left\Vert \mathbf{y}-\mathbf{u}_s\right\Vert^{2}
-
2(\mathbf{y}-\mathbf{u}_s)^{\top}\mathbf{I}_{1}
-
2(\mathbf{y}-\mathbf{u}_s)^{\top}\mathbf{I}_{2}
+
\left\Vert \mathbf{u}_{s+1}-\mathbf{u}_s\right\Vert^{2}.
\end{align*}
Since the Gram matrix $\ntgram(s)$ is positive definite and $\eta > 0$, we have
\begin{align*}
(\mathbf{y}-\mathbf{u}_s)^{\top}\mathbf{I}_{1}
 {} =
(\mathbf{y}-\mathbf{u}_s)^{\top}\left(\widetilde{\mathbf{u}}_{s+1}-\mathbf{u}_s\right)
& {} =
\eta(\mathbf{y}-\mathbf{u}_s)^{\top}\ntgram(s)(\mathbf{y}-\mathbf{u}_s)
\\
& {} \geq
\eta\eigmin(\ntgram(s))\left\Vert \mathbf{y}-\mathbf{u}_s\right\Vert^{2}.
\end{align*}
We now get a bound on $\mathbf{I}_{2}$. Note that
$\ntgram(s) = \mathbf{u}^{\prime}_s(\mathbf{u}^{\prime}_s)^{\top}$
where $\mathbf{u}^{\prime}_s=\mathbf{u}^\prime(\paramall_s) = \frac{\partial\mathbf{u}}{\partial\paramall}\big|_{\paramall=\paramall_s}$. Let
\[
L_\nnodes'(\paramall)
= \frac{\partial L_\nnodes(\paramall)}{\partial \paramall}
= \sum_{i = 1}^n (u(\paramall)_{i} - y_i) u'(\paramall)_{i}
= \mathbf{u}'(\paramall)^\top (\mathbf{u}(\paramall) - \mathbf{y})
\]
and
\[
L_\nnodes'(s) = L_\nnodes'(\paramall_s).
\]
Then,
\begin{align*}
\mathbf{I}_{2}
& {} =
\mathbf{u}_{s+1}-\mathbf{u}_s + \eta\mathbf{u}^{\prime}_s(\mathbf{u}^{\prime}_s)^{\top}(\mathbf{u}_s-\mathbf{y})
\\
& {} =
\left(-\int_{r=0}^{\eta}\Big(\mathbf{u}^{\prime}\big(\paramall_s-rL_\nnodes^{\prime}(s)\big)\Big) L_\nnodes^{\prime}(s)\,dr\right)
+
\eta\mathbf{u}^{\prime}_s(\mathbf{u}^{\prime}_s)^{\top}(\mathbf{u}_s-\mathbf{y})
\\
& {} =
\int_{r=0}^{\eta}
\Big( \mathbf{u}^{\prime}_s -\mathbf{u}^{\prime}(\paramall_s - rL_\nnodes^{\prime}(s))\Big)L_\nnodes^{\prime}(s)\,  dr.
\end{align*}
Also,
\[
\left\Vert L_\nnodes^{\prime}(s)\right\Vert
= \left\Vert \sum_{i=1}^{n} (y_{i}-u_{si})u_{si}^\prime\right\Vert
 \leq\sum_{i=1}^{n}\left\vert y_{i}-u_{si}\right\vert \left\Vert u_{si}^{\prime}\right\Vert
\]
and
\[
\left\Vert u_{si}^{\prime} \right\Vert ^{2}
=
\sum_{j=1}^{m}
\lamj a_{j}^{2}\left(\sigma^{\prime}\left(\frac{\param_{sj}^\top \mathbf{x}_i}{\sqrt{d}}\right)\right)^{2}\frac{\Vert \mathbf{x}_{i}\Vert^{2}}{d}
\leq
\frac{1}{d},
\]
since $\sum_j \lamj = 1$, $a_j \in \{-1,+1\}$,  $\sigma'$ is $1$-Lipschitz, and $\Vert \mathbf{x}_i\Vert \leq 1$.
Hence, by Cauchy-Schwarz,
\[
\left\Vert L^{\prime}_\nnodes(s)\right\Vert
\leq
\frac{1}{\sqrt{d}}\sum_{i=1}^{n}\left\vert y_{i}-u_{si}\right\vert
\leq
\frac{\sqrt{n}}{\sqrt{d}}\left\Vert \mathbf{y}-\mathbf{u}_s\right\Vert.
\]

Let $\paramall_{(s,r)}=\paramall_s - rL^{\prime}_\nnodes(s)$. For $j \in [\nnodes]$, write $\param_{(s,r)j}$ for the part of $\paramall_{(s,r)}$ going to the $j$-th node.
Then, for all $i \in [n]$,
\begin{align*}
\left\Vert u^{\prime}_{si} - u^{\prime}(\paramall_{(s,r)})_i\right\Vert ^{2}
& {}=
\sum_{j=1}^{m}\lamj a_{j}^{2}
\left(
  \sigma^{\prime}\left(\frac{\param_{sj}^\top \mathbf{x}_{i}}{\sqrt{d}}\right)
  - \sigma^{\prime}\left(\frac{\param_{(s,r)j}^\top \mathbf{x}_i}{\sqrt{d}}\right)
\right)^2
\frac{\left\Vert \mathbf{x}_{i} \right\Vert^{2}}{d}
\\
&  {} \leq
M^{2}
\sum_{j=1}^{m}\lamj a_{j}^{2}
\left(\left(\param_{sj} -\param_{(s,r)j}\right)^{\top}\mathbf{x}_{i}\right)^{2}
\frac{\Vert \mathbf{x}_i\Vert^2}{d^{2}}
\\
&  {} \leq
\frac{M^{2}}{d^2}
\sum_{j=1}^{m}\lamj
\left\Vert \param_{sj} -\param_{(s,r)j} \right\Vert^2
\\
& {} \leq
\frac{M^{2}}{d^{2}}\left\Vert \paramall_s-\paramall_{(s,r)}\right\Vert^{2}.
\end{align*}
The first inequality follows from the $M$-Lipschitz continuity of $\sigma'$, and the next inequality
from $a_j \in \{-1,1\}$, $\|\mathbf{x}_i\|\le 1$, and Cauchy-Schwartz. The last inequality
uses the fact that $\sum_j\lamj= 1$.
Finally, for all $0 \leq r \leq \eta$,
\[
\left\Vert \paramall_s -\paramall_{(s,r)}\right\Vert =r\left\Vert L^{\prime}_\nnodes(s)\right\Vert
\leq\eta\frac{\sqrt{n}}{\sqrt{d}}\left\Vert \mathbf{y}-\mathbf{u}_s\right\Vert.
\]
Thus,
\begin{align*}
\Vert \mathbf{I}_{2} \Vert^2
& {} =
\sum_{i = 1}^n
\left(
\int_{r=0}^{\eta}
\Big( u_{si}^{\prime}- u^{\prime}(\paramall_{(s,r)})_i\Big)^\top L^{\prime}_\nnodes(s)\,  dr
\right)^2
\\
& {} \leq
\sum_{i=1}^n
\left(
\int_{r=0}^{\eta}
\left|\Big( u_{si}^{\prime}-u^{\prime}(\paramall_{(s,r)})_i\Big)^\top L^{\prime}_\nnodes(s)\right|  dr
\right)^2
\\
& {} \leq
\sum_{i=1}^n
\left(
\int_{r=0}^{\eta}
\Vert u_{si}^{\prime} - u^{\prime}(\paramall_{(s,r)})_i\Vert \times \Vert L^{\prime}_\nnodes(s)\Vert\,  dr
\right)^2
\\
& {} \leq
\sum_{i=1}^n
\left(
\int_{r=0}^{\eta}
\frac{\eta M \sqrt{n}}{d^{3/2}} \Vert \mathbf{y} - \mathbf{u}_s\Vert
\times \frac{\sqrt{n}}{\sqrt{d}} \Vert \mathbf{y} - \mathbf{u}_s\Vert\, dr
\right)^2
\\
& {} =
\frac{\eta^4 M^2n^3}{d^4} \Vert \mathbf{y} - \mathbf{u}_s\Vert^4
\\
& {} =
\left( \frac{\eta^2 M n^{3/2}}{d^2} \left\Vert \mathbf{y}-\mathbf{u}_s\right\Vert ^{2}\right)^2.
\end{align*}
As the upper bound depends quadratically on $\eta$, we can choose it
small enough for gradient descent to converge, as we will show in the proof of \cref{thm:convergence-gd:smooth-case} in the next subsection.

Recall that $\Vert \mathbf{y} - \mathbf{u}_{s+1}\Vert^2$ can be expressed as the sum of four terms:
\begin{equation}
\label{eqn:gd-smooth-inductivecase1}
\left\Vert \mathbf{y}-\mathbf{u}_{s+1}\right\Vert ^{2}
=
\left\Vert \mathbf{y}-\mathbf{u}_s\right\Vert ^{2}
-2(\mathbf{y}-\mathbf{u}_s)^{\top}\mathbf{I}_{1}
-2(\mathbf{y}-\mathbf{u}_s)^{\top}\mathbf{I}_{2}+\left\Vert \mathbf{u}_{s+1}-\mathbf{u}_s\right\Vert ^{2}.
\end{equation}
Thus far, we have bounded the second and third terms on the RHS of \cref{eqn:gd-smooth-inductivecase1}:
\begin{align*}
-2(\mathbf{y}-\mathbf{u}_s)^{\top}\mathbf{I}_{1}
& \leq
- 2\eta \eigmin(\ntgram(s)) \Vert \mathbf{y} - \mathbf{u}_s \Vert^2,
\\
-2(\mathbf{y}-\mathbf{u}_s)^{\top}\mathbf{I}_{2}
& \leq
2 \Vert \mathbf{y} - \mathbf{u}_s\Vert \Vert \mathbf{I}_2\Vert
\leq
\left(\frac{2\eta^2 M n^{3/2}}{d^2} \Vert \mathbf{y} - \mathbf{u}_s\Vert^3\right).
\end{align*}
These bounds lead to the first three terms in the claimed upper bound of \cref{eqn:gd-smooth-inductivecase0}. It remains to get an  appropriate upper bound of the fourth term on the RHS of \cref{eqn:gd-smooth-inductivecase1}.

Using the bound on the derivative of the loss in \cref{lemma:loss-derivative-bound}, we complete the proof:
\begin{align*}
\left\Vert \mathbf{u}_{s+1}-\mathbf{u}_s\right\Vert^{2}
& {} =
\sum_{i=1}^n ( u_{(s+1)i} - u_{si})^2
\\
& {} =
\sum_{i=1}^n
  \left( \sum_{j = 1}^\nnodes \sqrt{\lamj} a_j
    \left(
      \sigma\left(\frac{\param_{(s+1)j}^\top \mathbf{x}_i}{\sqrt{d}}\right)
      - \sigma\left(\frac{\param_{sj}^\top \mathbf{x}_i}{\sqrt{d}}\right)
    \right)
  \right)^2
\\
& {} \leq
\sum_{i=1}^n
  \left( \sum_{j = 1}^\nnodes \sqrt{\lamj} a_j
    \left|
      \sigma\left(\frac{\param_{(s+1)j}^\top \mathbf{x}_i}{\sqrt{d}}\right)
      - \sigma\left(\frac{\param_{sj}^\top \mathbf{x}_i}{\sqrt{d}}\right)
    \right|
  \right)^2
\\
& {} \leq
\sum_{i=1}^n
  \left( \sum_{j = 1}^\nnodes \sqrt{\lamj} a_j
    \left|
      \frac{\param_{(s+1)j}^\top \mathbf{x}_i}{\sqrt{d}}
      - \frac{\param_{sj}^\top \mathbf{x}_i}{\sqrt{d}}
    \right|
  \right)^2
\\
& {} \leq
\sum_{i=1}^n
  \left( \sum_{j = 1}^\nnodes  \frac{\sqrt{\lamj} a_j}{\sqrt{d}}
    \Vert \param_{(s+1)j} - \param_{sj} \Vert \Vert \mathbf{x}_i \Vert
  \right)^2
\\
& {} \leq
\left(\sum_{i=1}^n \Vert \mathbf{x}_i \Vert^2\right) \times
  \left( \sum_{j = 1}^\nnodes  \frac{\sqrt{\lamj} a_j}{\sqrt{d}}
    \times
    \Vert \param_{(s+1)j} - \param_{sj} \Vert
  \right)^2
\\
& {} \leq
n \times
  \left( \sum_{j = 1}^\nnodes  \frac{\sqrt{\lamj} a_j}{\sqrt{d}}
    \times
    \left\Vert \eta \frac{\partial L_\nnodes(\paramall_s)}{\partial \param_{sj}} \right\Vert
  \right)^2
\\
& {} \leq
n \times
  \left( \sum_{j = 1}^\nnodes  \frac{\sqrt{\lamj} a_j}{\sqrt{d}}
    \times
    \frac{\eta\sqrt{\lamj n}}{\sqrt{d}} \Vert \mathbf{y} - \mathbf{u}_s\Vert
  \right)^2
\\
& \quad {} \leq
\frac{\eta^2 n^2}{d^2} \Vert \mathbf{y} - \mathbf{u}_s\Vert^2
  \left( \sum_{j = 1}^\nnodes  \lamj
  \right)^2
\\
& \quad {} =
\frac{\eta^2 n^2}{d^2} \Vert \mathbf{y} - \mathbf{u}_s\Vert^2.
\end{align*}
\end{proof}

\subsection{Proof of \cref{thm:convergence-gd:smooth-case}}
Using the lemmas we have just shown, we will prove global convergence of gradient descent.
Recall the assumed bound $C$ on $|y_i|$ for every $i \geq 1$ in \cref{assump:data}, and also
\[
C_1 = \sup_{c \in (0,1]} \mathbb{E}[\sigma(cz)^2]
\]
where the expectation is taken over the real-valued random variable $z$ distributed as $\mathcal{N}(0,1/d)$. As shown in \cref{sec:proofboundsmooth}, $C_1$ is finite.

By the argument in \cref{sec:proofboundsmooth} again, there exists an event $E_1$ such that $E_1$
happens with probability at least $1 - (\delta/2)$ and conditioned on $E_1$, we have
\begin{equation}
\label{eqn:convergence-gd:smooth-case:pf1}
\Vert \mathbf{y} - \mathbf{u}_0 \Vert < \sqrt{n (C^2 + C_1) \frac{2}{\delta}}.
\end{equation}
Meanwhile, by \cref{prop:ntk-initialisation}, there is an event $E_2$ such that $E_2$ happens with probability at least $1 - (\delta/2)$ and
conditioned on $E_2$, we have
\begin{equation}
\label{eqn:convergence-gd:smooth-case:pf0}
\eigmin(\ntgram(0)) > \frac{\gamma \kappa_n}{2}.
\end{equation}
Let $E_3$ be the event that is the conjunction of $E_1$ and $E_2$. This event happens with probability at least $1 - \delta$, and under this event, \cref{eqn:convergence-gd:smooth-case:pf0,eqn:convergence-gd:smooth-case:pf1} both hold.

Condition on $E_3$. We prove the inequality in \cref{eqn:convergence-gd:smooth-case:0} by induction on $s$. The base case of $s = 0$ is immediate. To prove the inductive case, assume that $s \geq 1$, and that the inequality in  \cref{eqn:convergence-gd:smooth-case:0} holds for all $s' = 0,1,\ldots, s-1$.

Let $\alpha = \eta \gamma \kappa_n/2$ and $\beta = (1 - \alpha)^{1/2}$ and
\begin{align*}
c_{\nnodes,j} & = \frac{\eta n}{1-\beta} \sqrt{\frac{8\lamj(C^2+C_1)}{\delta d}}.
\end{align*}
Then,
\begin{align*}
\sum_{j = 1}^\nnodes c_{\nnodes,j}^2
& {} =
\left(\frac{\eta^2 n^2}{(1-\beta)^2} \frac{8(C^2+C_1)}{\delta d} \sum_{j = 1}^\nnodes \lamj\right)
=
\left(\frac{\eta^2 n^2}{(1-\beta)^2} \frac{8(C^2+C_1)}{\delta d}\right).
\end{align*}
Note that for all $j \in [\nnodes]$,
\begin{align*}
\Vert \param_{sj} - \param_{0j} \Vert
& {} \leq
\sum_{s' = 0}^{s-1} \Vert \param_{(s'+1)j} - \param_{s'j} \Vert
\\
& {} \leq
\sum_{s' = 0}^{s-1} \eta \left\Vert \frac{\partial L_\nnodes(\paramall_{s'})}{\partial \param_{s'j}} \right\Vert
\\
& {} \leq
\sum_{s' = 0}^{s-1} \eta \sqrt{\frac{\lamj n}{d}} \Vert \mathbf{y} - \mathbf{u}_{s'} \Vert
\\
& {} \leq
\eta \sqrt{\frac{\lamj n}{d}}
\sum_{s' = 0}^{s-1}  (1-\alpha)^{s'/2}\Vert \mathbf{y} - \mathbf{u}_0 \Vert
\\
& {} \leq
\frac{\eta}{1-\beta}
 \sqrt{\frac{\lamj n}{d}}
\Vert \mathbf{y} - \mathbf{u}_0 \Vert
\\
& {} \leq
\frac{\eta}{1-\beta}
\sqrt{\frac{\lamj n}{d}}
\sqrt{n (C^2 + C_1) \frac{2}{\delta}}
\\
& {} =
\frac{1}{2}
\times
\frac{\eta n}{1-\beta}
\sqrt{\frac{8\lamj(C^2+C_1)}{\delta d}}
{} =
\frac{c_{\nnodes,j}}{2}
\end{align*}
where the third inequality uses the bound shown in \cref{lemma:loss-derivative-bound}, the fourth inequality follows
from the induction hypothesis, and the sixth inequality uses the bound in \eqref{eqn:convergence-gd:smooth-case:pf1}.
Thus,
by \cref{lem:3.2:smooth-activation} with $c_{\nnodes,j}$ from above and the lower bound on the minimum eigenvalue in \cref{eqn:convergence-gd:smooth-case:pf0}, we have
\begin{align*}
& \eigmin(\ntgram(s))
\\
& \qquad {} \geq
\eigmin(\ntgram^{(1)}(\mathbf{X};\mathbf{W}_0))
-
\left(
\frac{n M^2 \gamma}{4d^2 m} \sum_{j=1}^\nnodes c_{\nnodes,j}^2 + \frac{nM\gamma}{d^{3/2}m^{1/2}} \sqrt{\sum_{j=1}^\nnodes c_{\nnodes,j}^2}
\right)
\\
& \qquad {} =
\frac{\gamma \kappa_n}{2}
-
\left(
\frac{n M^2 \gamma}{4d^2 m} \left(\frac{\eta^2 n^2}{(1-\beta)^2} \frac{8(C^2+C_1)}{\delta d}\right)
+ \frac{nM\gamma}{d^{3/2}m^{1/2}} \sqrt{\frac{\eta^2 n^2}{(1-\beta)^2} \frac{8(C^2+C_1)}{\delta d}}
\right)
\\
& \qquad {} =
\frac{\gamma \kappa_n}{2}
-
\left(
\frac{2 \eta^2 n^3 M^2 \gamma (C^2+C_1)}{d^3 m (1 - \beta)^2 \delta}
+ \frac{\sqrt{8} \eta n^2M\gamma (C^2 + C_1)^{1/2}}{d^2m^{1/2}(1-\beta)\delta^{1/2}}
\right).
\end{align*}
Meanwhile, by \cref{lemma:gd-smooth-inductivecase}, the induction hypothesis, and \cref{eqn:convergence-gd:smooth-case:pf1},
\begin{align*}
& \Vert \mathbf{y} - \mathbf{u}_{s+1}\Vert^2
\\
&
\ \
{} \leq
\left(1 - 2\eta\eigmin(\ntgram(s)) + \frac{2 \eta^2 M n^{3/2}}{d^2} \Vert \mathbf{y} - \mathbf{u}_s\Vert + \frac{\eta^2n^2}{d^2} \right) \Vert \mathbf{y} - \mathbf{u}_s\Vert^2
\\
&
\ \
{} \leq
\left(1 - 2\eta\eigmin(\ntgram(s)) + \frac{2 \eta^2 M n^{3/2}}{d^2} (1-\alpha)^{s/2}\Vert \mathbf{y} - \mathbf{u}_0\Vert + \frac{\eta^2n^2}{d^2} \right)  \Vert \mathbf{y} - \mathbf{u}_s\Vert^2
\\
&
\ \
{} \leq
\left(1 - 2\eta\eigmin(\ntgram(s)) + \frac{2 \eta^2 M n^{3/2}}{d^2} (1-\alpha)^{s/2}\sqrt{n (C^2 + C_1) \frac{2}{\delta}} + \frac{\eta^2n^2}{d^2} \right)  \Vert \mathbf{y} - \mathbf{u}_s\Vert^2.
\end{align*}
Thus, we can complete the proof of this inductive case if we show that
\[
\left(
 2\eta\eigmin(\ntgram(s)) - \frac{2 \eta^2 M n^{3/2}}{d^2} (1-\alpha)^{s/2}\sqrt{n (C^2 + C_1) \frac{2}{\delta}} -
 \frac{\eta^2n^2}{d^2}\right)
 \geq
 \frac{\eta \gamma \kappa_n}{2}
\]
which is equivalent to
\[
 \eigmin(\ntgram(s))
 \geq
 \left(
 \frac{\eta M n^{3/2}}{d^2} (1-\alpha)^{s/2}\sqrt{n (C^2 + C_1) \frac{2}{\delta}}
 +
 \frac{\eta n^2}{2d^2}
 +
 \frac{\gamma \kappa_n}{4}\right).
\]
We will show this sufficient condition by proving the following stronger inequality (stronger because of the lower bound on
$\eigmin(\ntgram(s))$ that we have derived above):
\begin{multline*}
\frac{\gamma \kappa_n}{2}
-
\left(
\frac{2 \eta^2 n^3 M^2 \gamma (C^2+C_1)}{d^3 \nnodes (1 - \beta)^2 \delta}
+ \frac{\sqrt{8} \eta n^2M\gamma (C^2 + C_1)^{1/2}}{d^2\nnodes^{1/2}(1-\beta)\delta^{1/2}}
\right)
\\
{} \geq
 \left(
 \frac{\eta M n^{3/2}}{d^2} (1-\alpha)^{s/2}\sqrt{n (C^2 + C_1) \frac{2}{\delta}}
 +
 \frac{\eta n^2}{2d^2}
 +
 \frac{\gamma \kappa_n}{4}\right),
\end{multline*}
which is equivalent to
\begin{align*}
\frac{\gamma \kappa_n}{4}
\geq
 \bigg(
 \frac{2 \eta^2 n^3 M^2 \gamma (C^2+C_1)}{d^3 \nnodes (1 - \beta)^2 \delta}
 & {} +
 \frac{\sqrt{8} \eta n^2M\gamma (C^2 + C_1)^{1/2}}{d^2\nnodes^{1/2}(1-\beta)\delta^{1/2}}
 \\
 & {} +
 \frac{\eta M n^{3/2}}{d^2} (1-\alpha)^{s/2}\sqrt{n (C^2 + C_1) \frac{2}{\delta}}
 +
 \frac{\eta n^2}{2d^2}\bigg).
\end{align*}
But the four summands on the RHS of the above inequality are at most $\gamma \kappa_n / 16$ by
the assumed upper bound on $\eta$, the assumed lower bound on $\nnodes$, and the fact that $(1-\alpha) \leq 1$.
Thus, the inequality from above holds, as desired.

\section{Proofs of the results of \cref{sec:featurelearning} on feature learning (smooth case)}
\label{sec:featurelearning-proof}

\subsection{Proofs of \Cref{sec:featurelearninglinear} (linear activation)}

\subsubsection{Proof of \cref{thm:featurelearninglinear}}

\label{sec:app:linear}

Consider a linear activation $\sigma(z)=z$. The model is therefore defined as
\begin{align*}
f_{\nnodes}(\bx;\paramall) &= \frac{1}{\sqrt{\din}}\sum_{j=1}^{\nnodes} \sqrt{\lamj}  \aj \paramj^\top \bx.
\end{align*}

The objective function in \cref{eq:objectivefunction} can be written as
\begin{equation}
L_\nnodes(\paramall) =\frac{1}{2}\|\by - \bfA \paramall  \|^2\label{eq:objectivefunctionlinear}
\end{equation}
where $\by=(y_1,\ldots,y_n)^\top$ and $\bfA$ is the $n\times \nnodes \din$ matrix defined by
\begin{align*}
\bfA
&  =
\frac{1}{\sqrt{d}}\left(
\begin{array}[c]{ccc}%
\sqrt{\lambda_{m,1}}a_1 \bx_{1}^{\top} & \ldots & \sqrt{\lambda_{m,m}}a_m \bx_{1}^{\top}\\
\vdots &  & \vdots \\
\sqrt{\lambda_{m,1}}a_1 \bx_{n}^{\top} & \ldots & \sqrt{\lambda_{m,m}}a_m \bx_{n}^{\top}%
\end{array}
\right)
=
\frac{1}{\sqrt{d}}
(\bfB \otimes \bX),
\end{align*}
where $\otimes$ denotes the Kronecker product and $\bfB=(\sqrt{\lambda_{m,1}}a_1 \ldots \sqrt{\lambda_{m,m}}a_m) \in \mathbb{R}^{1\times m}$. We sometimes view $\bfB$ as a row vector and write $\bfB^\top$ to mean the corresponding $m$-dimensional (column) vector. Let
\[
\bX = \bfU \bfD \bfV^\top
\]
be a reduced SVD of the data matrix $\bX$, where $\bfU$ is a $n\times k$ matrix with orthonormal columns, $\bfD$ is a diagonal $k\times k$ matrix, $\bfV$ is a $d\times k$ matrix with orthonormal columns, and $k\leq \min(n,\din)$ is the rank of $\bfX$. Define
\[
	\bfV'={\frac{1}{\sqrt{\sum_{j=1}^m \lamj}}} \,(\bfB^\top \otimes \bfV) \in \mathbb{R}^{md \times k}.
\]
Note that $\bfV'$ has orthonormal columns as
\[
	(\bfV')^\top \bfV' ={\frac{1}{{\sum_{j=1}^m \lamj}}}\sum_{j=1}^m \lamj a_j^2 (\bfV^\top \bfV) = I_k.
\]
Therefore,
\[
\bfA = \bfU \left(\frac{{ \sqrt{\sum_{j=1}^m \lamj}}}{\sqrt{d}} \bfD \right) (\bfV')^\top
\]
is the reduced SVD of $\bfA$, and
\[
	\bfA \bfA^\top
	= \frac{\sum_{j=1}^m \lamj}{\din} \bX\bX^\top
	= \frac{\sum_{j=1}^m \lamj}{\din} \bfU \bfD^2 \bfU^\top.
\]
If $k<md$, let $\bfV'_\bot$ be a matrix in $\mathbb{R}^{md \times (md - k)}$ that makes the
$md\times md$ matrix $(\bfV',\bfV'_\bot)$ orthonormal; otherwise, let $\bfV'_\bot$ be the $md$ dimensional zero vector.

The solution of \cref{eq:objectivefunctionlinear} under gradient flow or gradient descent with the initialisation $\paramall_0$ is given by
\[
\paramall_\infty
= \bfA^\dagger \bfy + \bfV'_\bot (\bfV'_\bot)^\top \paramall_0
= \frac{\sqrt{d}}{{\sqrt{\sum_j \lamj}}} \bfV'\bfD^{-1} \bfU^\top \bfy + \bfV'_\bot (\bfV'_\bot)^\top \paramall_0
\]
where $(-)^\dagger$ is the Moore-Penrose inverse operator. Also,
\[
\paramall_0 = \bfV'(\bfV')^\top \paramall_0 + \bfV'_\bot (\bfV'_\bot)^\top \paramall_0.
\]
From these facts, we can derive a formula that describes the changes in weights during the training based on gradient flow or gradient descent:
\begin{align*}
\paramall_\infty - \paramall_0
& = \frac{\sqrt{d}}{{\sqrt{\sum_j \lamj}}} \left(\bfV'\bfD^{-1} \bfU^\top \bfy\right) + (\bfV'_\bot (\bfV'_\bot)^\top \bfW_0) - \bfW_0
\\
& = \left(\bfB^\top \otimes \frac{\sqrt{d}}{{{\sum_j \lamj}}} \left(\bfV \bfD^{-1} \bfU^\top \bfy\right)\right) - (\bfV' (\bfV')^\top \paramall_0)
\\
& = {\frac{1}{\sum_j \lamj}} \left(\bfB^\top \otimes \left(\bfbeta_\infty - \bfV \bfV^\top \bfbeta_0\right)\right)
\end{align*}
where $\bfbeta_0 = \sum_{j = 1}^m \sqrt{\lambda_{m,j}} a_j \param_{0 j}$ and
\[
\bfbeta_\infty = \sqrt{d} \bfX^\dagger \bfy = \sqrt{d} \left(\bfV \bfD^{-1} \bfU^\top \bfy\right)
\]
is the minimum-norm minimiser of $\frac{1}{2}\|\bfy - \frac{1}{\sqrt{d}} \bfX \bfbeta\|^2$. It follows that
\[
\param_{\infty j} - \param_{0 j} =\frac{\sqrt{\lamj}}{{\sum_k \lambda_{m,k}}} a_j (\bfbeta_\infty - \bfV \bfV^\top \bfbeta_0).
\]

\subsubsection{Proof of \Cref{thm:featurelearninglinear-infinite}}

First note that, under the scaling \eqref{eq:lamj}, $$\sum_{k=1}^\nnodes \lambda_{m,k} \Bigr(\relu(Z_{k}(\bx;\paramall_0))\Bigr)^2\to
\frac{\gamma}{d} \mathbb E\Bigr[\Bigr(\bx^\top \param_{01}\Bigr)^2\Bigr] + \frac{(1-\gamma)}{d}\sum_{k=1}^\infty  \widetilde\lambda_{k} \Bigr(\bx^\top \param_{0k}\Bigr)^2 $$ almost surely as $\nnodes\to\infty$; hence the denominator in \eqref{def:fl} is of order 1. Similarly, under the mean-field scaling,
$$\nnodes\times \sum_{k=1}^\nnodes \frac{1}{m^2} \Bigr(\relu(Z_{k}(\bx;\paramall_0))\Bigr)^2\to
\frac{1}{d} \mathbb E\Bigr[\Bigr(\bx^\top \param_{01}\Bigr)^2\Bigr]
$$ almost surely as $\nnodes\to\infty$; hence the denominator in \eqref{def:fl} is of order $1/\nnodes$. For the numerator, from \cref{eq:featurelearninglinear-weights}, we have
\begin{align*}
\sum_{j=1}^\nnodes \lamj \Bigr( \relu(Z_{j}(\bx;\paramall_\infty))- \relu(Z_{j}(\bx;\paramall_0))\Bigr)^2&=\frac{1}{d}\frac{\sum_{j=1}^\nnodes \lamj^2}{(\sum_{k=1}^m\lambda_{m,k})^2} \Bigr( \bx^\top(\bfbeta_\infty-\bfV\bfV^\top\bfbeta_0)\Bigr)^2.
\end{align*}
Under the scaling \eqref{eq:lamj}, $\frac{\sum_{j=1}^\nnodes \lamj^2}{(\sum_{k}\lambda_{m,k})^2}=\sum_{j=1}^\nnodes \lamj^2\to (1-\gamma)^2\sum_{j\geq 1}\widetilde\lambda_j^2$. Hence feature learning occurs if and only if $\gamma<1$. Under the mean-field scaling, $\frac{\sum_{j=1}^\nnodes \lamj^2}{(\sum_{k}\lambda_{m,k})^2}=1/m$. Hence feature learning occurs. Additionally, as the $(\lamj)_{j\geq 1}$ are ordered, we have
\begin{align*}
\max_{j=1,\ldots,\nnodes} \lamj \Bigr( \relu(Z_{j}(\bx;\paramall_\infty))- \relu(Z_{j}(\bx;\paramall_0))\Bigr)^2 &=\max_{j=1,\ldots,\nnodes} \frac{1}{d}\frac{\lambda_{m,j}^2}{{(\sum_{k}\lambda_{m,k})^2}}  \Bigr( \bx^\top(\bfbeta_\infty-\bfV\bfV^\top\bfbeta_0)\Bigr)^2\\
&= \frac{\lambda_{m,1}^2}{d{(\sum_{k}\lambda_{m,k})^2}}  \Bigr( \bx^\top(\bfbeta_\infty-\bfV\bfV^\top\bfbeta_0)\Bigr)^2.
\end{align*}
Under the scaling \eqref{eq:lamj}, $\lambda_{m,1}^2\to (1-\gamma)\widetilde\lambda_1$, with $\widetilde\lambda_1>0$, hence non-uniform feature learning occurs if and only if $\gamma<1$. Under mean-field scaling, $\lambda_{m,1}^2/{(\sum_{k}\lambda_{m,k})^2}=1/m^2=o(1/m)$, hence non-uniform feature learning does not occur. Additionally, by \cref{eq:featurelearninglinear-weights} in \cref{thm:featurelearninglinear}, we have
		\begin{align*}
			\sqrt{\lambda_{m,j}} a_j \param_{\infty j}
			& {} =
			\sqrt{\lambda_{m,j}} a_j\param_{0 j} + {{\frac{\lamj}{\sum_k \lambda_{m,k}}} }  (\bfbeta_\infty - \bfV \bfV^\top \bfbeta_0)\\
			&{} =
			{{\frac{\lamj}{\sum_k \lambda_{m,k}}} }\left(\bfbeta_\infty - \bfV \bfV^\top \sum_{k\neq j} \sqrt{\lambda_{m,k}} a_k \param_{0k}\right)
			+
			\sqrt{\lambda_{m,j}} \left(I_\din-{{\frac{\lamj}{\sum_k \lambda_{m,k}}} }\lamj \bfV \bfV^\top\right) a_j\param_{0 j}.
		\end{align*}
		The right-hand side is the sum of two independent Gaussian random vectors, and is therefore a Gaussian random vector, with mean ${{\frac{\lamj}{\sum_k \lambda_{m,k}}} } \bfbeta_\infty$ and covariance matrix ${{\frac{\lambda_{m,j}^2}{(\sum_k \lambda_{m,k})^2}} }({{\sum_k \lambda_{m,k}}}-\lambda_{m,j})(\bfV\bfV^\top)^2+\lambda_{m,j}\left(I_\din-{{\frac{\lamj}{\sum_k \lambda_{m,k}}} } \bfV \bfV^\top\right)^2=\lamj (I_\din-{{\frac{\lamj}{\sum_k \lambda_{m,k}}} } \bfV\bfV^\top )$. The distributional convergence in \cref{eq:convergencewinf} then follows from Slutsky's theorem.

\subsubsection{Proof of \Cref{thm:featurelearninglinearpruning}}

		Using  Markov and Cauchy-Schwarz inequalities,
		\[
		\Pr\left(\left|\widetilde f_{m,\rho}(\bx;\paramall_\infty)-f_{m}(\bx;\paramall_\infty)\right| > \varepsilon\right)
		\;\leq\;
		\frac{\|\bx\|}{\varepsilon\sqrt{\din}} \times
		\mathbb{E}\left[\left\|\sum_{j>\lfloor \rho m \rfloor} \sqrt{\lambda_{m,j}} a_j \param_{\infty j} \right\|\right].
		\]
		Meanwhile, we have
		\begin{align*}
			&
			\left\|\sum_{j>\lfloor \rho m \rfloor} \sqrt{\lambda_{m,j}} a_j \param_{\infty j} \right\|
			\\
			&
			\qquad {}\leq
			\left\|\sum_{j>\lfloor \rho m \rfloor} \lamj  \left(\bfbeta_\infty - \bfV \bfV^\top \sum_{k\neq j} \sqrt{\lambda_{m,k}} a_k \param_{0k}\right)\right\|
			+
			\left\|\sum_{j>\lfloor \rho m \rfloor}\sqrt{\lambda_{m,j}} \left(I_{\din}-\lamj \bfV \bfV^\top\right) a_j\param_{0 j}\right\|
			\\
			&
			\qquad {}\leq
			\sum_{j>\lfloor \rho m \rfloor} \lamj  \left(\left\| \bfbeta_\infty \right\| + \left\| \bfV \bfV^\top \sum_{k\neq j} \sqrt{\lambda_{m,k}} a_k \param_{0k}\right\|\right)
			+
			\left\|\sum_{j>\lfloor \rho m \rfloor}\sqrt{\lambda_{m,j}} \left(I_{\din}-\lamj \bfV \bfV^\top\right) a_j\param_{0 j}\right\|.
		\end{align*}
		Also,
		\begin{align*}
			\mathbb{E}\left[ \left\| \bfV \bfV^\top \sum_{k\neq j} \sqrt{\lambda_{m,k}} a_k \param_{0k}\right\| \right]
			& {} \leq
			\sqrt{\mathbb{E}\left[ \left\| \bfV \bfV^\top \sum_{k\neq j} \sqrt{\lambda_{m,k}} a_k \param_{0k}\right\|^2 \right]}
			\\
			& {} =
			\sqrt{(1-\lamj)\trace(\bfV \bfV^\top)}
			\\
			& {} \leq
			\sqrt{\din},
		\end{align*}
		and
		\begin{align*}
			\mathbb{E} \left[ \left\|\sum_{j>\lfloor \rho m \rfloor}\sqrt{\lambda_{m,j}} \left(I_{\din}-\lamj \bfV \bfV^\top\right) a_j\param_{0 j}\right\|\right]
			& {} \leq
			\sqrt{\mathbb{E}\left[ \left\|\sum_{j>\lfloor \rho m \rfloor}\sqrt{\lambda_{m,j}} \left(I_{\din}-\lamj \bfV \bfV^\top\right) a_j\param_{0 j}\right\|^2
				\right]}
			\\
			& {} =
			\sqrt{\sum_{j>\lfloor \rho m \rfloor} \lambda_{m,j} \trace\left(\left(I_{\din}-\lamj \bfV \bfV^\top\right)^2\right)}
			\\
			& {} =
			\sqrt{\sum_{j>\lfloor \rho m \rfloor} \lambda_{m,j}  \trace\left(I_{\din} -2\lamj \bfV\bfV^\top + \lamj^2 \bfV \bfV^\top\right)}
			\\
			& {} \leq
			\sqrt{\sum_{j>\lfloor \rho m \rfloor} \lambda_{m,j} \times \din \times (1-\lamj)^2}
			\\
			& {} \leq
			\sqrt{\din \sum_{j>\lfloor \rho m \rfloor} \lambda_{m,j}}.
		\end{align*}
		By combining the above inequalities, we obtain the desired result:
		\begin{align*}
			\Pr\left(\left|\widetilde f_{m,\rho}(\bx;\paramall_\infty)-f_{m}(\bx;\paramall_\infty)\right| > \varepsilon\right)
			\;\leq\;
			\frac{\|\bx\|}{\varepsilon\sqrt{\din}}
			\left( \left(\left\| \bfbeta_\infty \right\|+\sqrt{d}\right)\left(\sum_{j>\lfloor \rho m \rfloor} \lambda_{m,j}\right)
			+ \sqrt{{\din}\sum_{j>\lfloor \rho m \rfloor} \lambda_{m,j}}
			\right).
		\end{align*}

\subsection{Proofs of \cref{sec:featurelearningnonlinear} (nonlinear activation)}

\subsubsection{Proof of \cref{thm:featurelearning-smooth}}

Our proof of \cref{thm:featurelearning-smooth} relies on the following observation on the linear combinations
of continuous independent real-valued random variables.

\begin{lemma}
	\label{lem:random-outputs:asli}
	Let $z_1,\ldots,z_n$ be independent continuous real-valued random variables.
	Let $\mathcal{B} \subset \mathbb{R}$ be a finite subset of the real numbers such that $\mathcal{B} \not= \{0\}$.
	Then, almost surely,
	\[
	\min_{\mathbf{b} \in \mathcal{B}^n \setminus \{ 0,...,0\}} \left| \sum_{i=1}^n b_i z_i \right| > 0.
	\]
\end{lemma}
\begin{proof}
	Denote $\mathcal{S} =\mathcal{B}^n \setminus \{ 0,...,0\}$. For any $\mathbf{b}=(b_1,...,b_n) \in \mathcal{S}$,
	$ \sum_{i=1}^n b_i z_i $, so that $\Pr(\sum_{i=1}^n b_i z_i =0)=0$. Hence, since $\mathcal{S}$ is finite,
	\begin{align*}
	\Pr\left( \min_{ \mathbf{b} \in \mathcal{S}} \left| \sum_{i=1}^n b_i z_i \right| = 0 \right)
	& =
	\Pr\left( \bigcup_{\mathbf{b}\in\mathcal{S}} \left\{ \sum_{i=1}^n b_i z_i = 0\right\} \right)
	\\
	&\leq
	\sum_{\mathbf{b}\in\mathcal{S}} \Pr\left(\sum_{i=1}^n b_i z_i = 0 \right) \\
	&=
	0.
	\end{align*}
\end{proof}

The proof also uses the our globally-made standard assumption that for every random variable $Z \sim \mathcal{N}(0,s^2)$ for some $s > 0$, the expectation
$\mathbb{E}[\sigma(Z)^2]$ is finite and greater than $0$.

\paragraph{Proof of \cref{thm:featurelearning-smooth}.}
	Since non-uniform feature learning implies feature learning, we prove the former only.
	We start by showing that the denominator
	$\sum_{j=1}^\nnodes \lamj (\sigma(Z_j(\bx_i;\paramall_0)))^2$ in the condition for
	non-uniform feature learning converges to a
	positive finite value almost surely as $m$ tends to $\infty$. To
	see this, note
	\begin{align*}
	& \lim_{\nnodes \to \infty} \sum_{j=1}^\nnodes \lamj (\sigma(Z_j(\bx_i;\paramall_0)))^2
	 = \lim_{\nnodes \to \infty}
	\sum_{j=1}^\nnodes
		\left(\gamma \cdot \frac{1}{\nnodes} + (1-\gamma) \cdot \frac{\tillam_j}{\sum_{j' = 1}^\nnodes \tillam_{j'}}\right)
		\sigma(Z_j(\bx_i;\paramall_0))^2
	\\
	& \qquad\qquad\qquad {} =
	\left(\gamma \cdot
	\lim_{\nnodes \to \infty}
	\sum_{j=1}^\nnodes \frac{1}{\nnodes} \sigma(Z_j(\bx_i;\paramall_0))^2
	\right)
	+
	\left((1-\gamma) \cdot
	\frac{
		\lim_{\nnodes \to \infty}
		\sum_{j = 1}^\nnodes \tillam_j \sigma(Z_j(\bx_i;\paramall_0))^2
	}{
		\lim_{\nnodes \to \infty}
		\sum_{j' = 1}^\nnodes \tillam_{j'}
	}\right)
	\\
	& \qquad\qquad\qquad {} =
	\gamma \cdot \mathbb{E}_{Z\sim \mathcal{N}(0,\|\bx_i\|^2/d)}\left[\sigma(Z)^2\right]
	+
	(1-\gamma) \cdot
		\sum_{j = 1}^\infty \tillam_j \sigma(Z_j(\bx_i;\paramall_0))^2.
	\end{align*}
	The expectation in the first summand is positive and finite by
	our globally-made assumption on the activation function $\sigma$. Also, the infinite sum in the second
	summand is positive almost surely because it is greater than
	$\tillam_1\sigma(Z_1(\bx_i;\paramall_0))^2$
	but  $\tillam_1\sigma(Z_1(\bx_i;\paramall_0))^2$
	 is almost surely positive; $\tillam_1 > 0$ and $\sigma(Z_1(\bx_i;\paramall_0))$
	 is almost surely non-zero due to the injectivity of $\sigma$
	and the continuity of the random variable $Z_1(\bx;\paramall_0)$.
	Furthermore, the sum is almost surely finite as well, because its expectation is
	$\mathbb{E}_{Z\sim \mathcal{N}(0,\|\bx_i\|^2/d)}[\sigma(Z)^2]$ which is finite
	by our globally-made assumption on the activation function $\sigma$. Thus, the limit of the denominator
	is positive and finite almost surely.
	
	Since the denominator in the condition of non-uniform feature learning
	converges to a positive finite value almost surely, the condition holds if
	\begin{equation}
        \label{eqn:featurelearning-smooth:proof:0}
        \liminf_{\nnodes \to \infty} \left(\max_{j \in [\nnodes]}\lamj\left(\sigma(Z_j(\bx_i;\paramall_1)) - \sigma(Z_j(\bx_i;\paramall_0))\right)^2\right) > 0
        \quad\text{almost surely}.
	\end{equation}
	Note that
	the limit here is not redundant since $\paramall_1$ depends on $\nnodes$.
	The new condition in \cref{eqn:featurelearning-smooth:proof:0} can be simplified further.
	It holds whenever
	\begin{equation}
		\label{eq:featurelearning-smooth-activation0}
		\liminf_{\nnodes \to \infty} \Big(Z_1(\bx_i;\paramall_1) - Z_1(\bx_i;\paramall_0)\Big)^2 > 0.
	\end{equation}
	To see this, note that by the assumption of the theorem and the inverse function theorem,
	$\sigma^{-1}$ is a well-defined continuous function and also that
	$Z_1(\bx_i;\paramall_0)$ does not depend on $\nnodes$.
	As a result, the inequality in \cref{eq:featurelearning-smooth-activation0} implies
	\begin{equation}
		\label{eq:featurelearning-smooth-activation1}
		\liminf_{\nnodes \to \infty} \Big(\sigma(Z_1(\bx_i;\paramall_1)) - \sigma(Z_1(\bx_i;\paramall_0))\Big)^2 > 0,
	\end{equation}
	because otherwise some subsequence of $(\sigma(Z_1(\bx_i;\paramall_1)))_{\nnodes}$
	would converge to $\sigma(Z_1(\bx_i;\paramall_0))$
	as $\nnodes$ tends to $\infty$, but then by the continuity of $\sigma^{-1}$,
	the corresponding subsequence of $(Z_1(\bx_i;\paramall_1))_{\nnodes}$
	would converge to $Z_1(\bx_i;\paramall_0)$,
	which contradicts \cref{eq:featurelearning-smooth-activation0}. Now
	using \cref{eq:featurelearning-smooth-activation1}, the assumption $\gamma > 0$,
	and the fact that $\tillam_1 > 0$,
	we can prove the condition in \cref{eqn:featurelearning-smooth:proof:0} as follows:
	\begin{align*}
		\liminf_{\nnodes \to \infty} \max_{j \in [\nnodes]} \lamj \Big(\sigma(Z_j(\bx_i;\paramall_1)) - \sigma(Z_j(\bx_i;\paramall_0))\Big)^2
		& {} \geq
		\liminf_{\nnodes \to \infty} \lambda_{\nnodes,1}\Big(\sigma(Z_1(\bx_i;\paramall_1)) - \sigma(Z_1(\bx_i;\paramall_0))\Big)^2
		\\
		& {} \geq
		\liminf_{\nnodes \to \infty} (1-\gamma)\tillam_1\Big(\sigma(Z_1(\bx_i;\paramall_1)) - \sigma(Z_1(\bx_i;\paramall_0))\Big)^2
		\\
		& {} > 0.
	\end{align*}
	
	We now show that \cref{eq:featurelearning-smooth-activation0} holds almost surely.
	Note that
	\begin{align*}
		\Big(Z_1(\bx_i;\paramall_1) - Z_1(\bx_i;\paramall_0)\Big)^2
		& {} =
		\left(\frac{\param_{11}^\top \bx_i}{\sqrt{\din}} - \frac{\param_{01}^\top \bx_i}{\sqrt{\din}}\right)^2
		\\
		& {} =
		\frac{1}{\din} \left(\eta \left(\left. \nabla_{\param_{tj}} L(\paramall_t)\right|_{t = 0}\right)^\top \bx_i\right)^2
		\\
		& {} =
		\frac{\eta^2}{\din}\left(
		\sum_{i' = 1}^n y_{i'} \sqrt{\lambda_{\nnodes,1}} a_1
        \sigma'\left(\frac{\param_{0j}^\top \bx_{i'}}{\sqrt{d}}\right)
		\frac{\bx_{i'}^\top \bx_i}{\sqrt{d}}
		\right)^2
		\\
		& {} =
		\frac{\eta^2\lambda_{\nnodes,1}}{d^2}\left(
			\sum_{i' = 1}^n y_{i'}
			\left(\sigma'\left(\frac{\param_{0j}^\top \bx_{i'}}{\sqrt{d}}\right)
			\bx_{i'}^\top \bx_i\right)
		\right)^2
		\\
		& {} \geq
		\frac{\eta^2(1-\gamma)\tillam_{1}}{d^2}\left(
			\sum_{i' = 1}^n y_{i'}
			\left(\sigma'\left(\frac{\param_{0j}^\top \bx_{i'}}{\sqrt{d}}\right)
			\bx_{i'}^\top \bx_i\right)
		\right)^2.
	\end{align*}
	Since the lower bound from above does not depend on $\nnodes$, we have
	\[
	\liminf_{\nnodes \to \infty} \Big(Z_1(\bx_i;\paramall_1) - Z_1(\bx_i;\paramall_0)\Big)^2
	\geq	
	\frac{\eta^2(1-\gamma)\tillam_{1}}{d^2}\left(
		\sum_{i' = 1}^n y_{i'}
		\left(\sigma'\left(\frac{\param_{0j}^\top \bx_{i'}}{\sqrt{d}}\right)
		\bx_{i'}^\top \bx_i\right)
	\right)^2.
	\]
	Since $\eta^2(1-\gamma)\tillam_{1} / d^2$ is positive,
	this lower bound is positive almost surely whenever the summation
	inside the square is positive almost surely.
	
	Conditioning on
	$\param_{0j}$ and noting that $
	\sigma'\left(\frac{\param_{0j}^\top \bx_{i}}{\sqrt{d}}\right)
	\|\bx_{i}\|^2 > 0$,
	we have by \cref{lem:random-outputs:asli} that almost surely
	\[	\sum_{i' = 1}^n y_{i'}
	\left(\sigma'\left(\frac{\param_{0j}^\top \bx_{i'}}{\sqrt{d}}\right)
	\bx_{i'}^\top \bx_i\right) > 0.\]
	We may use this lemma since the $y_{i'}$'s are independent from $\param_{0j}$ and so their distributions are unaffected by the conditioning. Now note that this almost-sure positivity of the summation
	holds regardless of which value the conditioned $\param_{0j}$ takes.
	Thus, the summation is positive almost surely without the conditioning. This completes the proof.

\subsubsection{Proof of \cref{thm:weight-norm-square-change-smooth}}

The proof of the theorem uses the following lemma on quadratic combinations of continuous independent random variables.

\begin{lemma}
	\label{lem:random-outputs:as-quadratic}
	Let $z_1,\ldots,z_n$ be continuous independent real-valued random variables.
	Let $B$ be an $n$-by-$n$ real-valued matrix such that $B_{ii} \neq 0$ for some $i \in [n]$. Then, almost surely,
	\[
		\left| \sum_{i=1}^n \sum_{i' = 1}^n z_iz_{i'} B_{ii'} \right| > 0.
	\]
\end{lemma}
\begin{proof}
	Let $i \in [n]$ such that $B_{ii} \neq 0$. Then, when viewed as a polynomial on $z_i$,
	\[
		\sum_{i=1}^n \sum_{i' = 1}^n z_iz_{i'} B_{ii'}
	\]
	is a quadratic polynomial with a non-zero coefficient for the term $z_i^2$. As a result, the
	zero set of this polynomial on $z_i$ has measure zero with respect to Lebesgue measure, that is,
	the Lebesgue measure of the set
	\[
	\left\{z_i \ \left| \ \sum_{i=1}^n \sum_{i' = 1}^n z_iz_{i'} B_{ii'} = 0\right.\right\} \subseteq \R
	\]
	is zero (because the zero set of any analytic function has zero Lebesgue measure).
	Furthermore, $z_i$ is a continuous random variable, and so we have
	\[
	\mathbb{E}\left[\left.\ind_{\left\{ \sum_{i=1}^n \sum_{i' = 1}^n z_iz_{i'} B_{ii'} = 0\right\}}\ \right|\ \left\{\left.z_{i'} \,\right|\, i' \in [n], i' \neq i\right\}\right] = 0.
	\]
	As a result,
	\begin{align*}
		\Pr\left(\sum_{i=1}^n \sum_{i' = 1}^n z_iz_{i'} B_{ii'} = 0\right)
		& =
		\mathbb{E}\left[\ind_{\left\{ \sum_{i=1}^n \sum_{i' = 1}^n z_iz_{i'} B_{ii'} = 0\right\}}\right]
		\\
		& =
		\mathbb{E}\left[
		\mathbb{E}\left[\left.\ind_{\left\{ \sum_{i=1}^n \sum_{i' = 1}^n z_iz_{i'} B_{ii'} = 0\right\}}\ \right|\ \left\{\left.z_{i'} \,\right|\, i' \in [n], i' \neq i\right\}\right]\right]
		\\
		& =
		\mathbb{E}[0] = 0.
	\end{align*}
	This proves the claim of the lemma.
\end{proof}

\paragraph{Proof of \cref{thm:weight-norm-square-change-smooth}.}
We first compute a lower bound of the squared norm of the gradient, which does not depend on $\nnodes$.
\begin{align*}
	\left\|\left.\nabla_{\param_{tj}} L(\paramall_t)\right|_{t=0}\right\|^2
    & {} =
	\left\|\sum_{i = 1}^n y_i \sqrt{\lamj} \aj
        \sigma'\left(\frac{\param_{0j}^\top \bx_i}{\sqrt{d}}\right)
		\frac{\bx_{i}}{\sqrt{d}}\right\|^2
    \\
	& {} =
	\frac{\lamj}{d}
	\left\|\sum_{i = 1}^n y_i
        \sigma'\left(\frac{\param_{0j}^\top \bx_i}{\sqrt{d}}\right)
		\bx_{i}\right\|^2
	\\
	& {} \geq
	\frac{(1-\gamma)\tillam_j}{d}
	\left\|\sum_{i = 1}^n y_i
		\sigma'\left(\frac{\param_{0j}^\top \bx_i}{\sqrt{d}}\right)
		\bx_{i}\right\|^2
	\\
    & {} =
    \frac{(1-\gamma)\tillam_j}{d}
	\left|
	\sum_{k = 1}^\din \sum_{i = 1}^n \sum_{i'=1}^n
		y_iy_{i'}
		\sigma'\left(\frac{\param_{0j}^\top \bx_i}{\sqrt{d}}\right)
		\sigma'\left(\frac{\param_{0j}^\top \bx_{i'}}{\sqrt{d}}\right)
		x_{ik}x_{i'k}
	\right|
	\\
    & {} =
    \frac{(1-\gamma)\tillam_j}{d}
	\left|\sum_{i = 1}^n \sum_{i'=1}^n
		y_iy_{i'}
		\left(\bx_{i}^\top \bx_{i'}
			\sigma'\left(\frac{\param_{0j}^\top \bx_i}{\sqrt{d}}\right)
			\sigma'\left(\frac{\param_{0j}^\top \bx_{i'}}{\sqrt{d}}\right)
		\right)
	\right|.
\end{align*}
Thus,
\[
	\liminf_{\nnodes \to \infty}\left\|\left.\nabla_{\param_{tj}} L(\paramall_t)\right|_{t=0}\right\|^2
	\geq
    \frac{(1-\gamma)\tillam_j}{d}
	\left|\sum_{i = 1}^n \sum_{i'=1}^n
		y_iy_{i'}
		\left(\bx_{i}^\top \bx_{i'}
			\sigma'\left(\frac{\param_{0j}^\top \bx_i}{\sqrt{d}}\right)
			\sigma'\left(\frac{\param_{0j}^\top \bx_{i'}}{\sqrt{d}}\right)
		\right)
	\right|.
\]
But by assumption, $((1-\gamma)\tillam_j) / d$ is positive. The other factor in the lower bound
is also positive with probability one. To see this, note that the $y_i$'s in the factor are continuous independent random variables, independent also from $\param_{0j}$, and
\[
\|\bx_i\|^2 \sigma'\left(\frac{\param_{0j}^\top \bx_i}{\sqrt{d}}\right)^2 > 0\ \text{ for all $i \in [n]$},
\]
due to \cref{assump:data} and the assumption that $\sigma'>0$. As a result,
conditioned on $\param_{0j}$,
by \cref{lem:random-outputs:as-quadratic},
the factor is positive almost surely with respect to the
conditional distributions of the $y_i$'s, which are the same as
the original unconditional distributions of them due to
the independence of the $y_i$'s with respect to $\param_{0j}$. Since
this positivity holds regardless of which value $\param_{0j}$ takes,
it also holds without the conditioning on $\param_{0j}$.
This completes the proof.

\section{Proofs of the results of \cref{sec:featurelearning-relu} on feature learning (ReLU case)}
\label{sec:featurelearning-relu-proofs}

\subsection{Proof of \cref{thm:featurelearningrelu-general}}

Our proof relies on a few lemmas.

\begin{lemma}
	\label{lem:relu-zeroed-initialisation:gradient}
	Assume \cref{assump:zeroed-initialisation}. If the activation function $\sigma$ is ReLU, we have
	\[
		\left.\nabla_{\paramtj} L(\paramall_t)\right|_{t=0}
		=
		\left(-
		\sqrt{\frac{\lamj}{d}}\aj \sum_{i = 1}^\nnodes y_i \sigma'(Z_j(\bx_i;\paramall_0))\bx_i\right)
		=
		\left(-
		\sqrt{\frac{\lamj}{d}}\aj \sum_{i = 1}^\nnodes \ind_{\{\param_{0j}^\top \bx_i \geq 0\}} y_i\bx_i\right)
	\]
\end{lemma}
\begin{proof}
	The lemma follows from a straightforward calculation using the fact that $f_\nnodes(\bx;\paramall_0) = 0$ for all $\bx \in \R^d$.
\end{proof}

\begin{lemma}
	\label{lem:relu-zeroed-initialisation:error}
	Assume \cref{assump:data,assump:zeroed-initialisation,assump:random-outputs}.
	Then, we have that for all $\nnodes$, $j \in [\nnodes]$, and $i \in [n]$,
	\begin{align*}
		\lamj \Big(\sigma(Z_j(\bx_i;\paramall_1)) - \sigma(Z_j(\bx_i;\paramall_0))\Big)^2
		& {} \geq	
		\ind_{\{\param_{0j}^\top \bx_{i} \geq 0\}}
		\cdot
		\min\left\{
			\frac{\eta^2 c^2 (1-\gamma)^2\tillam_j^2}{d^2},
			\frac{(1-\gamma)\tillam_j (\param_{0j}^\top\bx_i)^2}{d}
		\right\}
	\end{align*}
	where $c$ depends only on the inputs/outputs (in particular, not depending on $\nnodes$)
	and is almost surely strictly positive (almost surely, with respect to the input/output).
\end{lemma}
\begin{proof}
	Using \cref{lem:relu-zeroed-initialisation:gradient}, we can compute
	\begin{align*}
		& {} \sigma(Z_j(\bx_i;\paramall_1)) - \sigma(Z_j(\bx_i;\paramall_0))
		\\
		& \qquad {} =
		\sigma(Z_j(\bx_i;\paramall_1)) - \sigma(Z_j(\bx_i;\paramall_0))
		\\
		& \qquad {} =
		\sigma\left(
			\frac{1}{\sqrt{d}}\left(
					\param_{0j}
					+ \eta\sqrt{\frac{\lamj}{d}}a_j
						\sum_{i' = 1}^n \ind_{\{\param_{0j}^\top \bx_{i'} \geq 0\}} y_{i'} \bx_{i'}
			\right)^\top \bx_i
		\right)
		-
		\sigma\left(\frac{\param_{0j}^\top \bx_i}{\sqrt{d}}\right)
		\\
		& \qquad {} =
		\sigma\left(
			\frac{\param_{0j}^\top \bx_i}{\sqrt{d}}
				+
				\eta\frac{\sqrt{\lamj}}{d}a_j
				\sum_{i' = 1}^n
					\left(\ind_{\{\param_{0j}^\top \bx_{i'} \geq 0\}} \bx_{i'}^\top \bx_i\right)
					y_{i'}
		\right)
		-
		\sigma\left(\frac{\param_{0j}^\top \bx_i}{\sqrt{d}}\right).	
	\end{align*}
	Denoting $\delta_i = ((\eta\sqrt{\lamj}a_j)/d)\sum_{i' = 1}^n (\ind_{\{\param_{0j}^\top \bx_{i'} \geq 0\}} \bx_{i'}^\top \bx_i)y_{i'}$, we have that
	\begin{align*}
		\left| \sigma(Z_j(\bx_i;\paramall_1)) - \sigma(Z_j(\bx_i;\paramall_0)) \right|
		&{} = \left| \sigma\left(\frac{\param_{0j}^\top \bx_i}{\sqrt{d}} + \delta_i \right)
		-
		\sigma\left(\frac{\param_{0j}^\top \bx_i}{\sqrt{d}}\right)	\right| \\
		&{} \geq  \ind_{\{\param_{0j}^\top \bx_{i} \geq 0\}} \left| \sigma\left(\frac{\param_{0j}^\top \bx_i}{\sqrt{d}} + \delta_i \right)
		-
		\sigma\left(\frac{\param_{0j}^\top \bx_i}{\sqrt{d}}\right)	\right| \\
		&{} \geq \ind_{\{\param_{0j}^\top \bx_{i} \geq 0\}} \left( \ind_{\{\param_{0j}^\top \bx_{i} + \sqrt{d} \delta_i \geq 0\}} \left| \delta_i \right| + \ind_{\{\param_{0j}^\top \bx_{i} + \sqrt{d} \delta_i < 0\}} \left| \frac{\param_{0j}^\top \bx_i}{\sqrt{d}} \right| \right).
	\end{align*}
	We then get
	\begin{align*}
		\lamj \Big(\sigma(Z_j(\bx_i;\paramall_1)) - \sigma(Z_j(\bx_i;\paramall_0))\Big)^2 \geq \ind_{\{\param_{0j}^\top \bx_{i} \geq 0\}} \lambda_{m,j} \min\left\{
				\delta_i^2,
				\frac{(\param_{0j}^\top\bx_i)^2}{d}.
			\right\}
	\end{align*}
	Now, notice that
	\begin{align*}
		\ind_{\{\param_{0j}^\top \bx_{i} \geq 0\}} \delta_i^2
		&{} =
		\frac{\eta^2}{d^2} \lambda_{m,j} \ind_{\{\param_{0j}^\top \bx_{i} \geq 0\}}  \left( \sum_{i' = 1}^n \left(\ind_{\{\param_{0j}^\top \bx_{i'} \geq 0\}} \bx_{i'}^\top \bx_i\right) y_{i'} \right)^2 	
		\\
		&{} =
		\frac{\eta^2}{d^2} \lambda_{m,j} \ind_{\{\param_{0j}^\top \bx_{i} \geq 0\}} \left(\|\bx_i\|^2y_i +  \sum_{i' \in \{1,...,n\}\setminus\{i\} } \left(\ind_{\{\param_{0j}^\top \bx_{i'} \geq 0\}}\bx_{i'}^\top \bx_i\right)y_{i'} \right)^2
		\\
		&{} \geq
		\frac{\eta^2}{d^2} \lambda_{m,j} \ind_{\{\param_{0j}^\top \bx_{i} \geq 0\}} \left( \min_{\mathbf{b} \in \mathcal{S}} \left| \sum_{i'=1}^n b_{i'} y_{i'} \right| \right)^2
	\end{align*}
	where $\mathcal{S} = \mathcal{B}^n \setminus \{0,...,0\}$ and $\mathcal{B}$ is given by
	\[
	\mathcal{B} = \left\{\left.u \cdot v\ \right|\ u \in \{0,1\}, v \in \left\{\bx_1^\top \bx_{i}, \ldots, \bx_n^\top \bx_{i} \right\}\right\}
	\]
	Note that the $y_i$'s are continuous and independent random variables by \cref{assump:random-outputs}.
	Thus, by \cref{lem:random-outputs:asli},
	with probability one,
	\[
		c = \min_{\mathbf{b} \in \mathcal{S}} \left| \sum_{i'=1}^n b_{i'} y_{i'} \right| > 0.
	\]
	Putting everything together, we finally get
	\begin{align*}
	\lamj \Big(\sigma(Z_j(\bx_i;\paramall_1)) - \sigma(Z_j(\bx_i;\paramall_0))\Big)^2	
	& \geq	
	\ind_{\{\param_{0j}^\top \bx_{i} \geq 0\}}
	\cdot
	\min\left\{
		\frac{\eta^2 c^2 \lambda_{m,j}^2}{d^2},
		\frac{\lambda_{m,j}(\param_{0j}^\top\bx_i)^2}{d}
	\right\}
	\\
	& \geq
	\ind_{\{\param_{0j}^\top \bx_{i} \geq 0\}}
	\cdot
	\min\left\{
		\frac{\eta^2 c^2 (1-\gamma)^2\tillam_j^2}{d^2},
		\frac{(1-\gamma)\tillam_j(\param_{0j}^\top\bx_i)^2}{d}
	\right\},
	\end{align*}
	which concludes the proof.
\end{proof}

\paragraph{Proof of \cref{thm:featurelearningrelu-general}.}
	Note that the condition for non-uniform feature learning in \cref{eqn:featurelearningrelu-general:0b}
	implies that for feature learning in \cref{eqn:featurelearningrelu-general:0}. Thus, we
	will prove only the former condition.

	By our setup, we have that $\lambda_{\nnodes,1}\ge \ldots \ge \lambda_{\nnodes,k} \geq (1-\gamma)\tillam_{k} > 0$ for all $\nnodes$.
	Also, by \cref{lem:relu-zeroed-initialisation:error}, we have that for $1\le j\le k$,
	\[
		\lambda_{m,j} \Big(\sigma(Z_j(\bx_i;\paramall_1)) - \sigma(Z_j(\bx_i;\paramall_0))\Big)^2
		\geq
		\ind_{\left\{\param_{0j}^\top \bx_{i} \geq 0\right\}}
		\cdot
		\min\left\{
			\frac{\eta^2 c^2 (1-\gamma)^2\tillam_j^2}{d^2},\,
			\frac{(1-\gamma)\tillam_j (\param_{0j}^\top\bx_i)^2}{d}
		\right\}.
	\]
	Thus, for all $\nnodes \geq k$,
	\begin{multline*}
		\max_{j \in [\nnodes]} \left(\lamj \Big(\sigma(Z_j(\bx_i;\paramall_1)) - \sigma(Z_j(\bx_i;\paramall_0))\Big)^2\right)
		\\
		{} \geq
		\max_{j \in [k]}
		\left(\ind_{\left\{\param_{0j}^\top \bx_{i} \geq 0\right\}}
		\cdot
		\min\left\{
			\frac{\eta^2 c^2 (1-\gamma)^2\tillam_j^2}{d^2},\,
			\frac{(1-\gamma)\tillam_j (\param_{0j}^\top\bx_i)^2}{d}\right\}\right),
	\end{multline*}
	which implies
	\begin{multline}
		\label{eq:featurelearningrelu-general:lower-bound1}
		\liminf_{\nnodes \to \infty} \left(\max_{j \in [\nnodes]}
			\left(\lamj \Big(\sigma(Z_j(\bx_i;\paramall_1)) - \sigma(Z_j(\bx_i;\paramall_0))\Big)^2\right)\right)
		\\
		{} \geq
		\max_{j \in [k]}\left(
		\ind_{\left\{\param_{0j}^\top \bx_{i} \geq 0\right\}}
		\cdot
		\min\left\{
			\frac{\eta^2 c^2 (1-\gamma)^2\tillam_j^2}{d^2},\,
			\frac{(1-\gamma)\tillam_j (\param_{0j}^\top\bx_i)^2}{d}
		\right\}\right).
	\end{multline}
	We will show that for all $\delta \in (0,1/2)$, with probability at least $1-(1/2 + \delta)^k$,
	the lower bound in \cref{eq:featurelearningrelu-general:lower-bound1}
	is positive and
	\begin{equation}
		\label{eq:featurelearningrelu-general:lower-bound1b}
		0 < \sum_{j = 1}^\infty \lamj \sigma(Z_j(\bx_i;\paramall_0))^2 < \infty.
	\end{equation}
	This will prove the claim of the theorem.
	
	Pick $\delta \in (0,1/2)$. Let $E$ be the event $\{c > 0\}$. Then, $\Pr(E) = 1$ by \cref{lem:random-outputs:asli}.
	Note that the first argument of the minimum in the lower bound of \cref{eq:featurelearningrelu-general:lower-bound1}
	is positive on the event $E$.
	Let $\epsilon > 0$ be a positive constant such that
	\begin{equation}
		\label{eq:featurelearningrelu-general:lower-bound2}
		\Pr(\param_{0j}^\top \bx_{i} \geq \epsilon) \geq \frac{1}{2} - \delta
		\ \text{for all $j\le k$},
	\end{equation}
	which is possible since each $\param_{0j}^\top \bx_{i}$ is a centred normal random variable with variance $\|\bx_i\|^2 > 0$.
	Define $E'_\delta$ be the event $\bigcup_{j=1}^k\{\param_{0j}^\top \bx_{i} \geq \epsilon\}$. Then, since $\param_{01}^\top \bx_{i},
	\ldots,\param_{0k}^\top \bx_i$ are independent and the lower bound in \cref{eq:featurelearningrelu-general:lower-bound2}
	holds, we have
	\[
		\Pr(E \cap E'_\delta) \geq 1-(1/2 + \delta)^k.
	\]
	Now condition on $E \cap E'_\delta$.
	Then, there exists some $j \leq k$ such that $\param_{0j}^\top \bx_{i} \geq \epsilon$. Thus,
	the lower bound in \cref{eq:featurelearningrelu-general:lower-bound1} is positive as shown below:
	\begin{align*}
		&
		\max_{j' \in [k]}\left(
			\ind_{\left\{\param_{0j'}^\top \bx_{i} \geq 0\right\}}
			\cdot
			\min\left\{
				\frac{\eta^2 c^2 (1-\gamma)^2\tillam_{j'}^2}{d^2},\,
				\frac{(1-\gamma)\tillam_{j'} (\param_{0j'}^\top\bx_i)^2}{d}
			\right\}
		\right)
		\\
		& \qquad\qquad {} \geq
			\ind_{\left\{\param_{0j}^\top \bx_{i} \geq 0\right\}}
			\cdot
			\min\left\{
				\frac{\eta^2 c^2 (1-\gamma)^2\tillam_j^2}{d^2},
				\frac{(1-\gamma)\tillam_j (\param_{0j}^\top\bx_i)^2}{d}
			\right\}
		\\
		& \qquad\qquad {} \geq
			\min\left\{
				\frac{\eta^2 c^2 (1-\gamma)^2\tillam_j^2}{d^2},
				\frac{(1-\gamma)\tillam_j \epsilon^2}{d}
				\right\}
		\\
		& \qquad\qquad {} > 0.
	\end{align*}
	Thus, with probability at least $1-(1/2 + \delta)^k$, we have
	\[
		\liminf_{\nnodes \to \infty} \left(\max_{j \in [\nnodes]} \lamj \Big(\sigma(Z_j(\bx_i;\paramall_1)) - \sigma(Z_j(\bx_i;\paramall_0))\Big)^2\right) > 0.
	\]
	Also, under the same conditioning, we have
	\begin{align*}
		\sum_{j' = 1}^\infty \lambda_{\nnodes,j'} \cdot \sigma(Z_{j'}(\bx_i;\paramall_0))^2
		& {} \geq
		\lamj \cdot \sigma(Z_j(\bx_i;\paramall_0))^2
		\\
		& {} =
		\frac{\lamj}{d} \cdot \ind_{\{\paramj^\top \bx_i \geq 0\}} \cdot (\paramj^\top \bx_i)^2
		\\
		& {} \geq
		\frac{(1-\gamma)\tillam_j}{d} \cdot \ind_{\{\paramj^\top \bx_i \geq 0\}} \cdot (\paramj^\top \bx_i)^2
		\\
		& {} \geq
		\frac{(1-\gamma)\tillam_j}{d} \cdot \epsilon^2
		\\
		& > 0.
	\end{align*}
	Furthermore, without any conditioning, we have
	\[
		\sum_{j' = 1}^\infty \lambda_{\nnodes,j'} \cdot \sigma(Z_{j'}(\bx_i;\paramall_0))^2 < \infty
	\]
	almost surely, because again without any conditioning, the usual expectation of the
	right-hand side of the above inequality is finite as shown below:
	\begin{align*}
		\mathbb{E}\left[\sum_{j' = 1}^\infty \lambda_{\nnodes,j'} \cdot \sigma(Z_{j'}(\bx_i;\paramall_0))^2\right]
		=
		\sum_{j' = 1}^\infty \lambda_{\nnodes,j'} \cdot \mathbb{E}\left[\sigma(Z_{j'}(\bx_i;\paramall_0))^2\right]
		& {} =
		\sum_{j' = 1}^\infty \lambda_{\nnodes,j'} \cdot \frac{\|\bx_i\|^2}{2d}
		\\
		& {} = \frac{\|\bx_i\|^2}{2d} < \infty.
	\end{align*}
	Thus, \cref{eq:featurelearningrelu-general:lower-bound1b} holds with probability
	at least $1-(1/2 + \delta)^k$. This completes the proof of the theorem.

\subsection{Proof of \cref{thm:weight-norm-square-change-relu}}

	We first compute a lower bound for the squared norm of the gradient, which does not depend on $\nnodes$.
	\begin{align*}
		\left\|\left.\nabla_{\param_{tj}} L(\paramall_t)\right|_{t=0}\right\|^2
		& {} =
		\left\|\sum_{i = 1}^n y_i \sqrt{\lamj} \aj
			\sigma'\left(\frac{\param_{0j}^\top \bx_i}{\sqrt{d}}\right)
			\frac{\bx_{i}}{\sqrt{d}}\right\|^2
		\\
		& {} =
		\frac{\lamj}{d}
		\left\|\sum_{i = 1}^n y_i
			\sigma'\left(\frac{\param_{0j}^\top \bx_i}{\sqrt{d}}\right)
			\bx_{i}\right\|^2
		\\
		& {} \geq
		\frac{(1-\gamma)\tillam_j}{d}
		\left\|\sum_{i = 1}^n y_i
			\sigma'\left(\frac{\param_{0j}^\top \bx_i}{\sqrt{d}}\right)
			\bx_{i}\right\|^2
		\\
		& {} =
		\frac{(1-\gamma)\tillam_j}{d}
		\left|
		\sum_{k = 1}^\din \sum_{i = 1}^n \sum_{i'=1}^n
			y_iy_{i'}
			\sigma'\left(\frac{\param_{0j}^\top \bx_i}{\sqrt{d}}\right)
			\sigma'\left(\frac{\param_{0j}^\top \bx_{i'}}{\sqrt{d}}\right)
			x_{ik}x_{i'k}
		\right|
		\\
		& {} =
		\frac{(1-\gamma)\tillam_j}{d}
		\left|\sum_{i = 1}^n \sum_{i'=1}^n
			y_iy_{i'}
			\left(\bx_{i}^\top \bx_{i'}
				\sigma'\left(\frac{\param_{0j}^\top \bx_i}{\sqrt{d}}\right)
				\sigma'\left(\frac{\param_{0j}^\top \bx_{i'}}{\sqrt{d}}\right)
			\right)
		\right|
		\\
		\\
		& {} =
		\frac{(1-\gamma)\tillam_j}{d}
		\left|\sum_{i = 1}^n \sum_{i'=1}^n
			y_iy_{i'}
			\left(\bx_{i}^\top \bx_{i'}
				\ind_{\left\{\param_{0j}^\top \bx_i \geq 0\right\}}
				\ind_{\left\{\param_{0j}^\top \bx_{i'} \geq 0\right\}}
			\right)
		\right|.
	\end{align*}
	Thus,
	\[
		\liminf_{\nnodes \to \infty}\left\|\left.\nabla_{\param_{tj}} L(\paramall_t)\right|_{t=0}\right\|^2
		\geq
		\frac{(1-\gamma)\tillam_j}{d}
		\left|\sum_{i = 1}^n \sum_{i'=1}^n
			y_iy_{i'}
			\left(\bx_{i}^\top \bx_{i'}
				\ind_{\{\param_{0j}^\top \bx_i \geq 0\}}
				\ind_{\{\param_{0j}^\top \bx_{i'} \geq 0\}}
			\right)
		\right|.
	\]
	But by assumption, the factor $((1-\gamma)\tillam_j) / d$ in the lower bound is always positive.
	The claim of the theorem follows from the property that the other factor in the lower bound
	is also positive with probability at least $1/2$. In the rest of the proof, we will show
	why this is so.

	 Note that
	\[
	\|\bx_{i}\|^2
	\ind_{\left\{\param_{0j}^\top \bx_i \geq 0\right\}} > 0.
	\]
	if and only if $\param_{0j}^\top \bx_i \geq 0$.

	Condition on $\param_{0j}$ and recall that
	the $y_i$'s are continuous independent
	real-valued random variables which are also independent from $\param_{0j}$, thus their distributions are unaffected by the conditioning.
	If $\param_{0j}^\top \bx_i \geq 0$, the inequality
	\[
	\left|\sum_{i = 1}^n \sum_{i'=1}^n
	y_iy_{i'}
	\left(\bx_{i}^\top \bx_{i'}
		\ind_{\{\param_{0j}^\top \bx_i \geq 0\}}
		\ind_{\{\param_{0j}^\top \bx_{i'} \geq 0\}}
	\right)
	\right|
	> 0
	\]
	holds almost surely. Since $\param_{0j}^\top \bx_i$
	holds with probability $1/2$, the above inequality holds unconditionally with probability at least $1/2$, as desired.

\section{Additional experimental results (smooth activation)}
\label{appendix_results}

We provide here additional results for the experiments described in \cref{sec:experiments}.

\subsection{Regression}
In Figures \ref{fig:concrete_all}, \ref{fig:energy_all}, \ref{fig:airfoil_all} and \ref{fig:plant_all} we respectively provide the detailed results for the datasets \texttt{concrete}, \texttt{energy},  \texttt{airfoil} and \texttt{plant}.

\begin{figure}
    \centering
    \includegraphics[width=0.4\linewidth]{figures/swish/concrete/trainrisk.pdf}
    \includegraphics[width=0.4\linewidth]{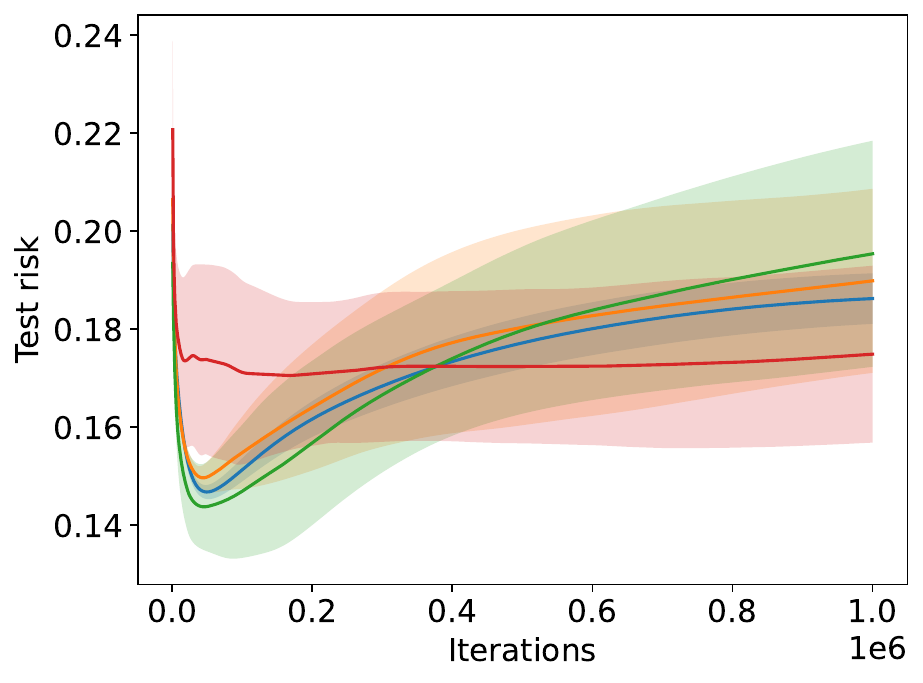}
    \includegraphics[width=0.3\linewidth]{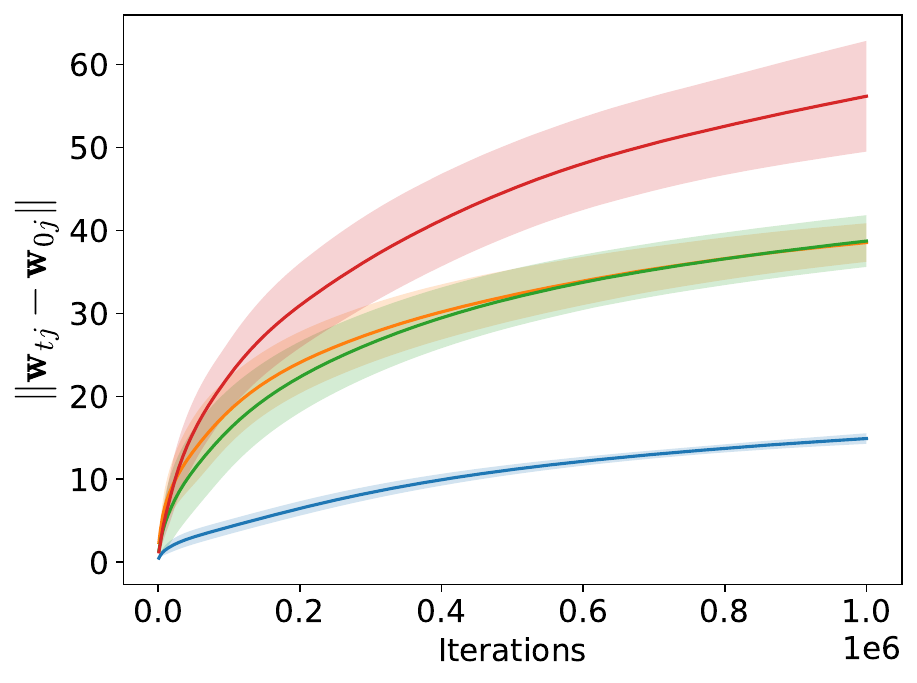}
    \includegraphics[width=0.3\linewidth]{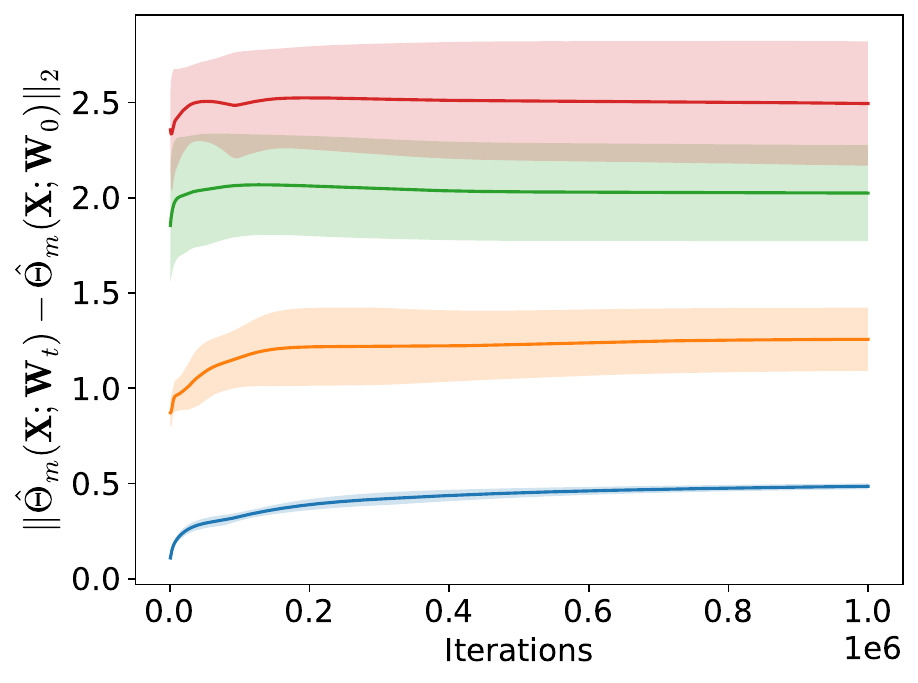}
    \includegraphics[width=0.3\linewidth]{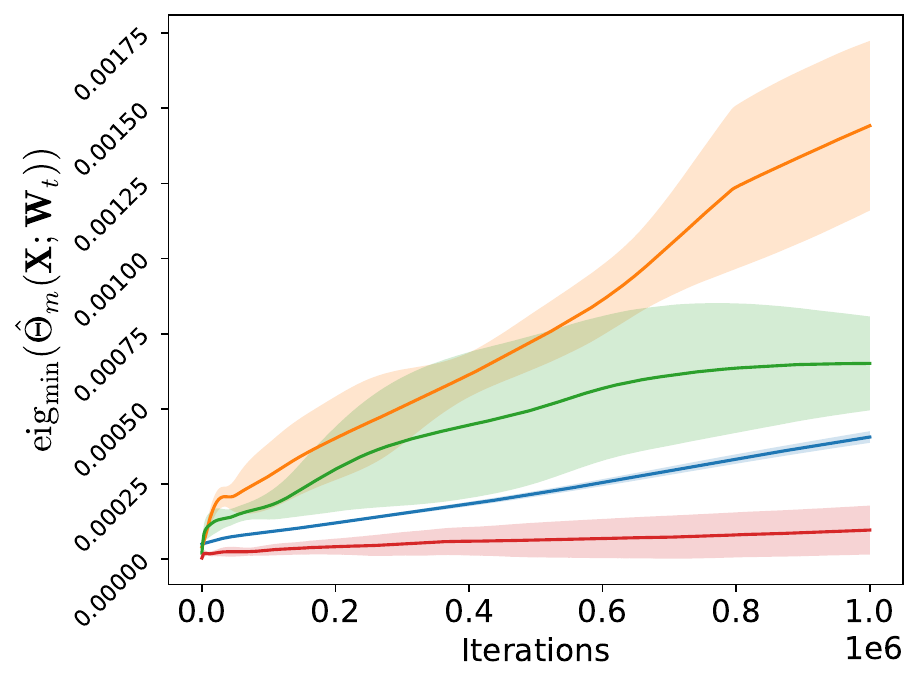}
    \includegraphics[width=0.4\linewidth]{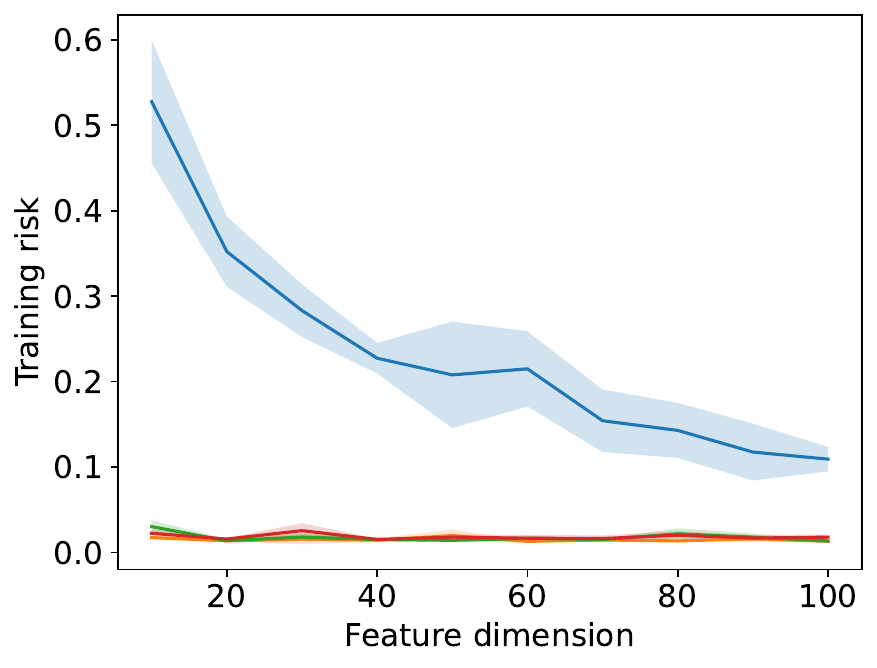}
    \includegraphics[width=0.4\linewidth]{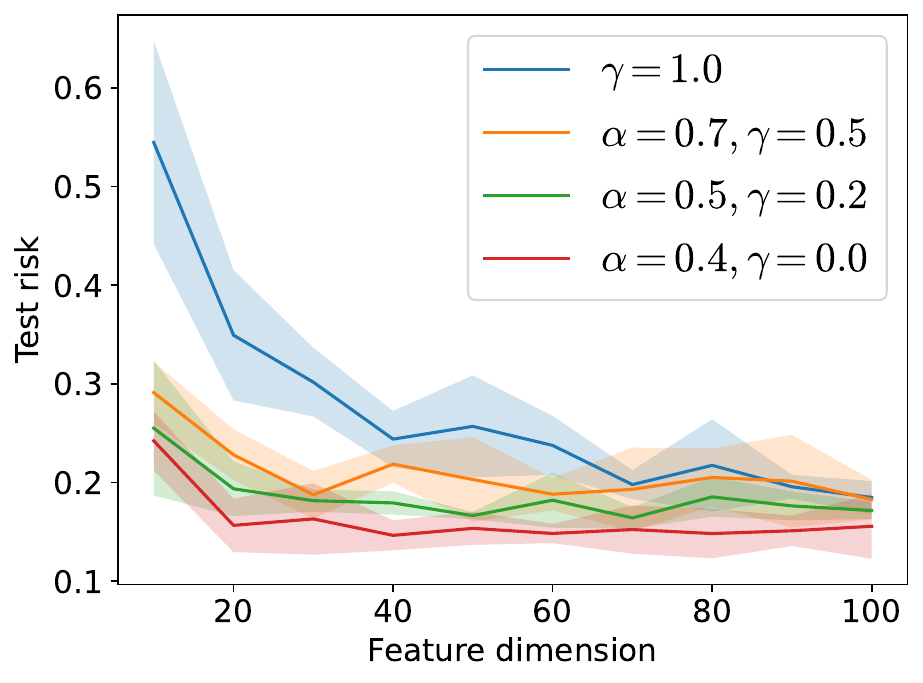}
    \caption[Results for the \texttt{concrete} dataset (swish)]{Results for the \texttt{concrete} dataset (swish). From left to right and top to bottom, 1) training risks, 2) test risks, 3) differences in weight norms $\Vert \mathbf{w}_{tj} - \mathbf{w}_{0j}\Vert$ with $j$'s being the neurons having the maximum difference at the end of the training, 4) difference in NTG matrices, 5) minimum NTG eigenvalues, 6) training risks for transfer learning, and 7) test risks for transfer learning.}
    \label{fig:concrete_all}
\end{figure}

\begin{figure}
    \centering
    \includegraphics[width=0.4\linewidth]{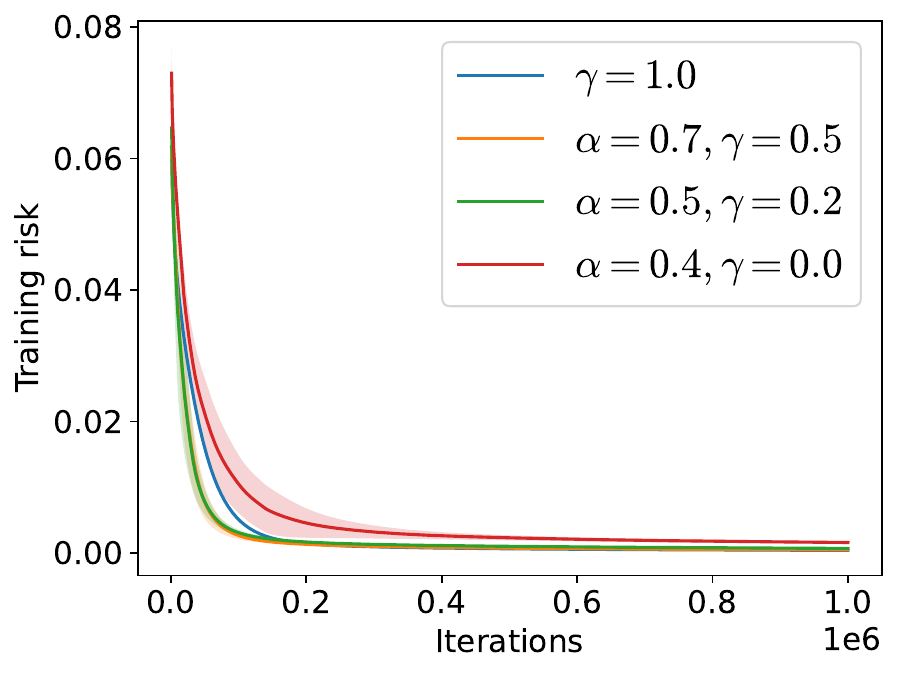}
    \includegraphics[width=0.4\linewidth]{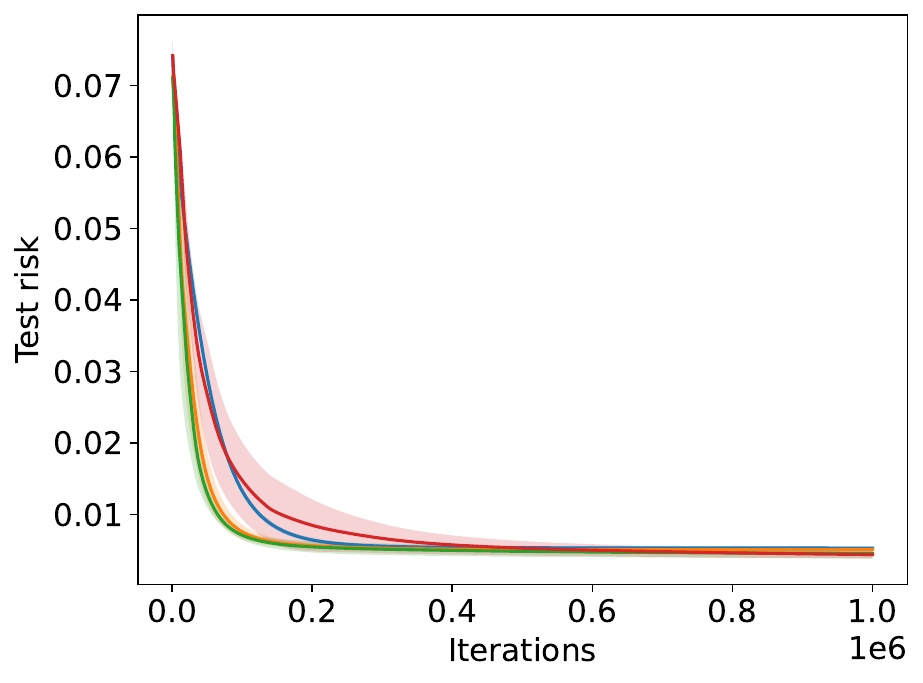}
    \includegraphics[width=0.3\linewidth]{figures/swish/energy/wdiff.pdf}
    \includegraphics[width=0.3\linewidth]{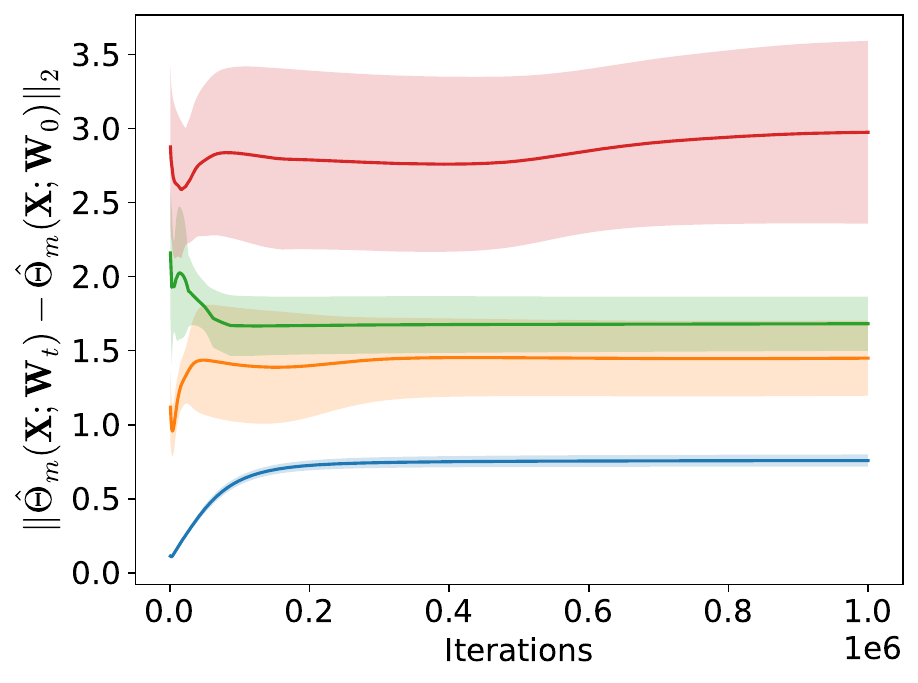}
    \includegraphics[width=0.3\linewidth]{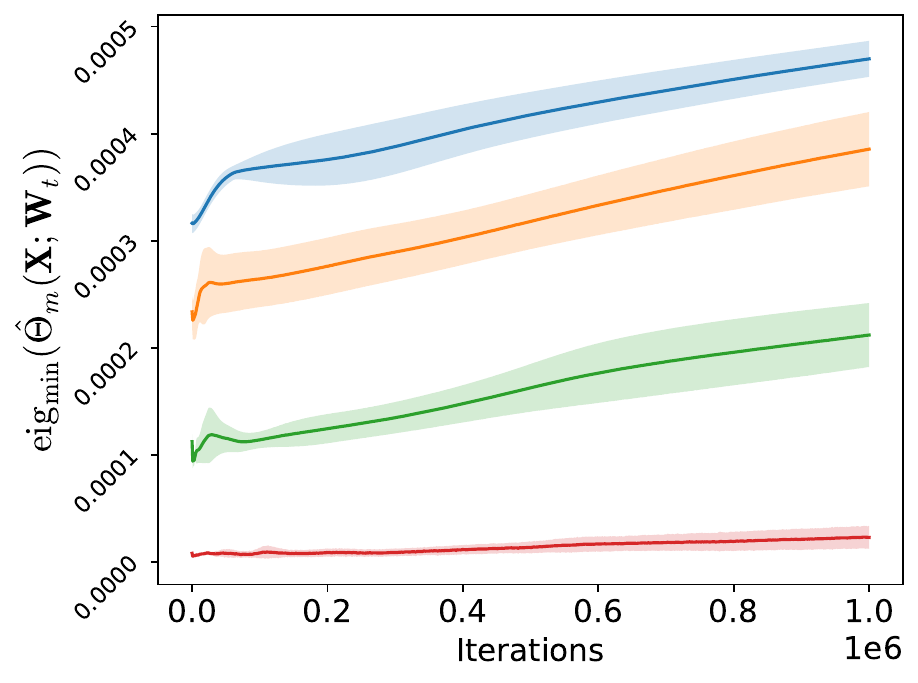}
    \includegraphics[width=0.4\linewidth]{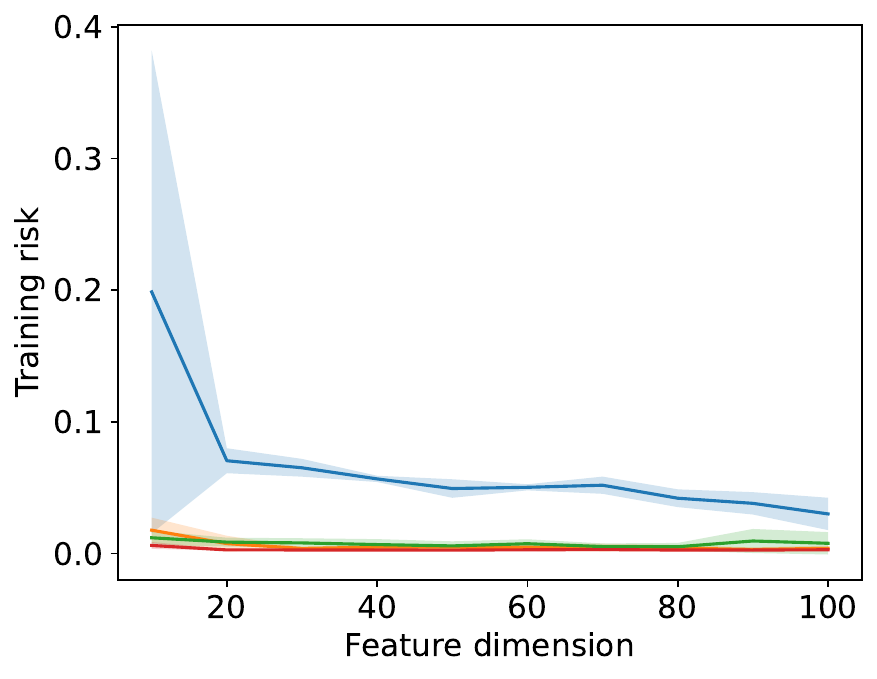}
    \includegraphics[width=0.4\linewidth]{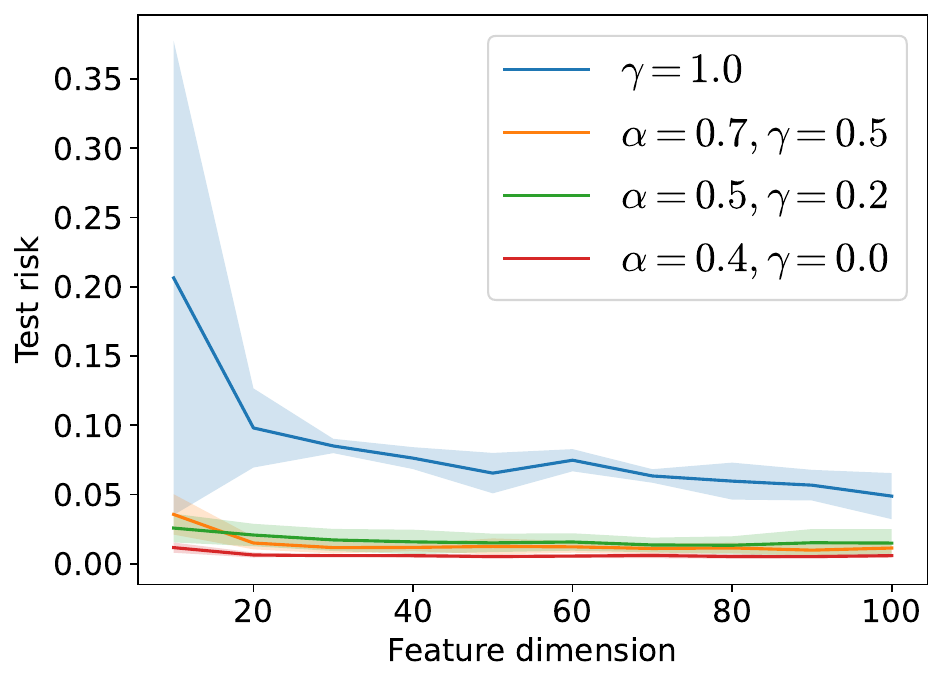}
    \caption[Results for the \texttt{energy} dataset (swish)]{Results for the \texttt{energy} dataset (swish).  From left to right and top to bottom, 1) training risks, 2) test risks, 3) differences in weight norms $\Vert \mathbf{w}_{tj} - \mathbf{w}_{0j}\Vert$ with $j$'s being the neurons having the maximum difference at the end of the training, 4) difference in NTG matrices, 5) minimum NTG eigenvalues, 6) training risks for transfer learning, and 7) test risks for transfer learning.}
    \label{fig:energy_all}
\end{figure}

\begin{figure}
    \centering
    \includegraphics[width=0.4\linewidth]{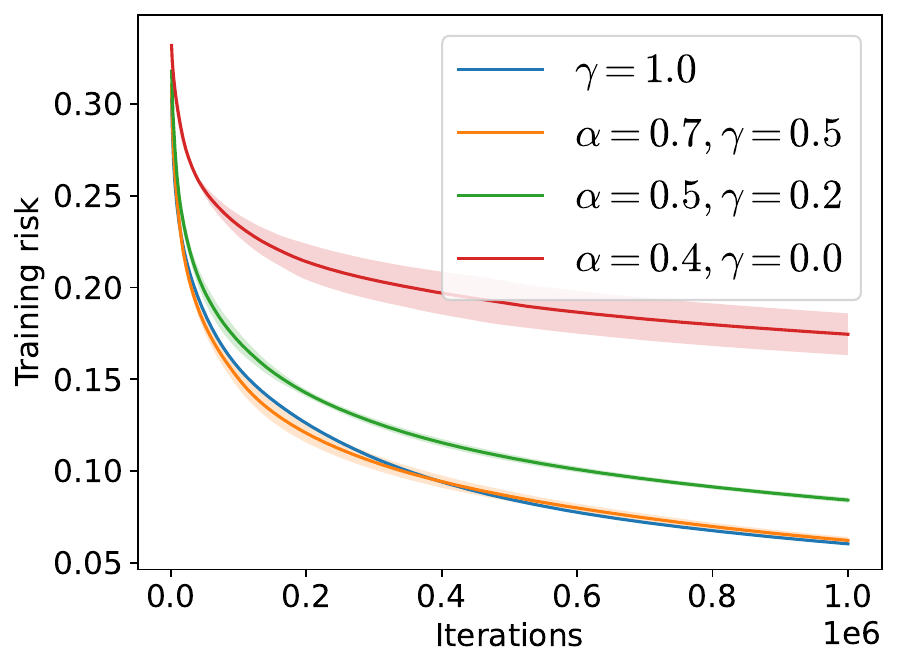}
    \includegraphics[width=0.4\linewidth]{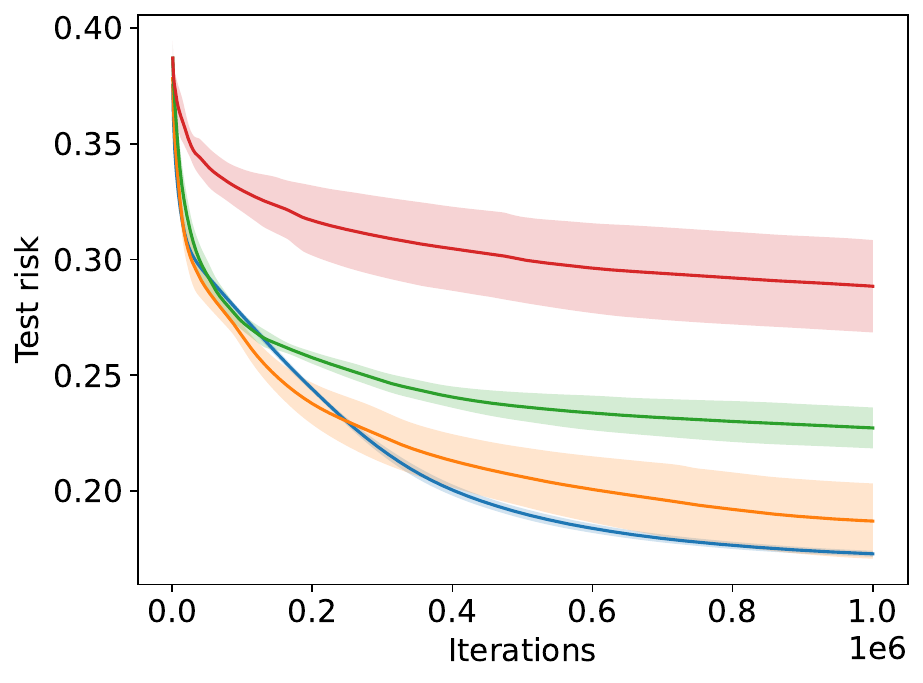}
    \includegraphics[width=0.3\linewidth]{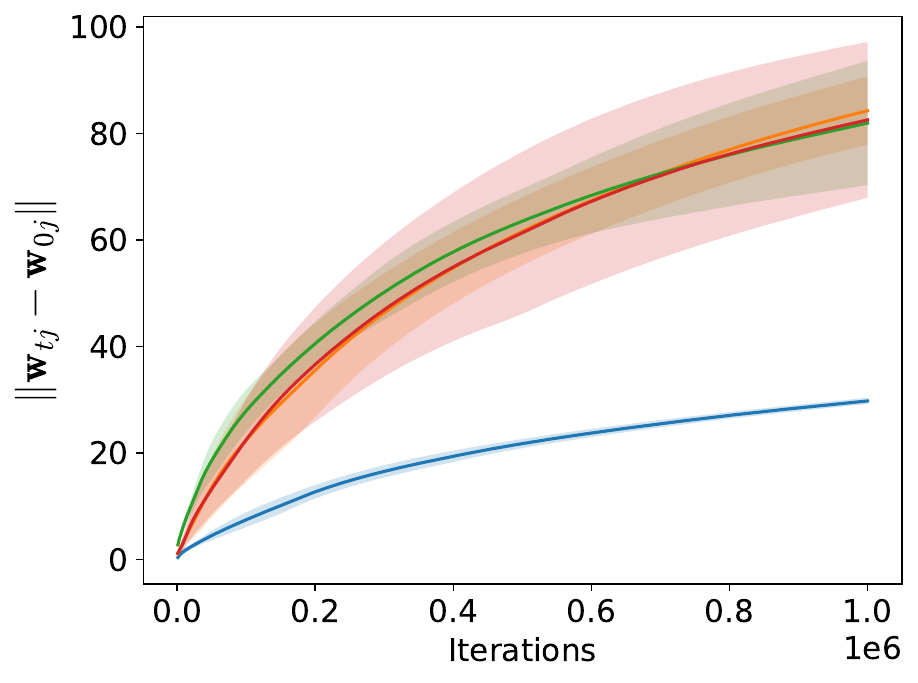}
    \includegraphics[width=0.3\linewidth]{figures/swish/airfoil/ntkdiff.pdf}
    \includegraphics[width=0.3\linewidth]{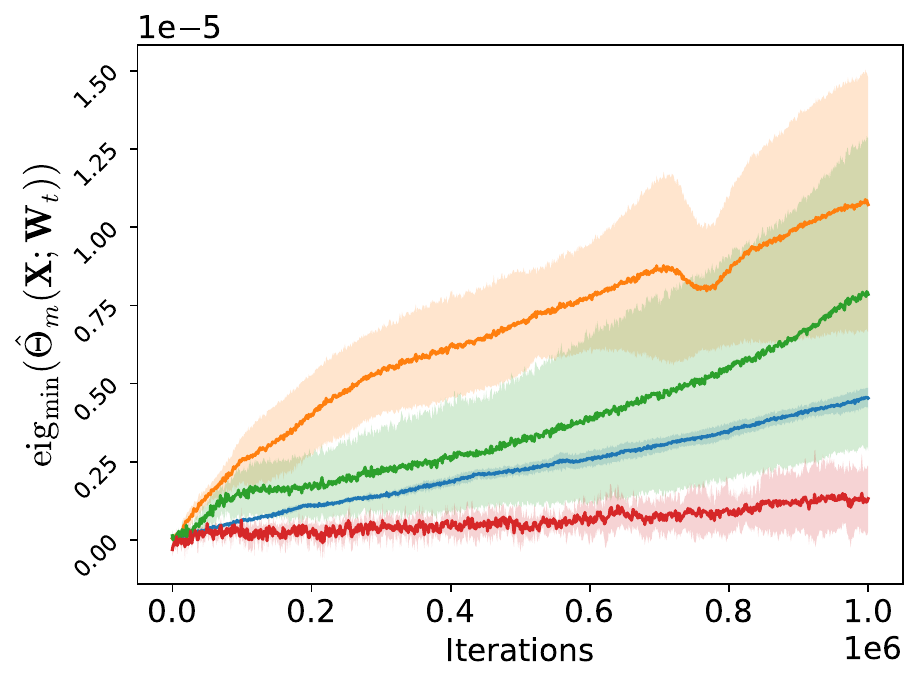}
    \includegraphics[width=0.4\linewidth]{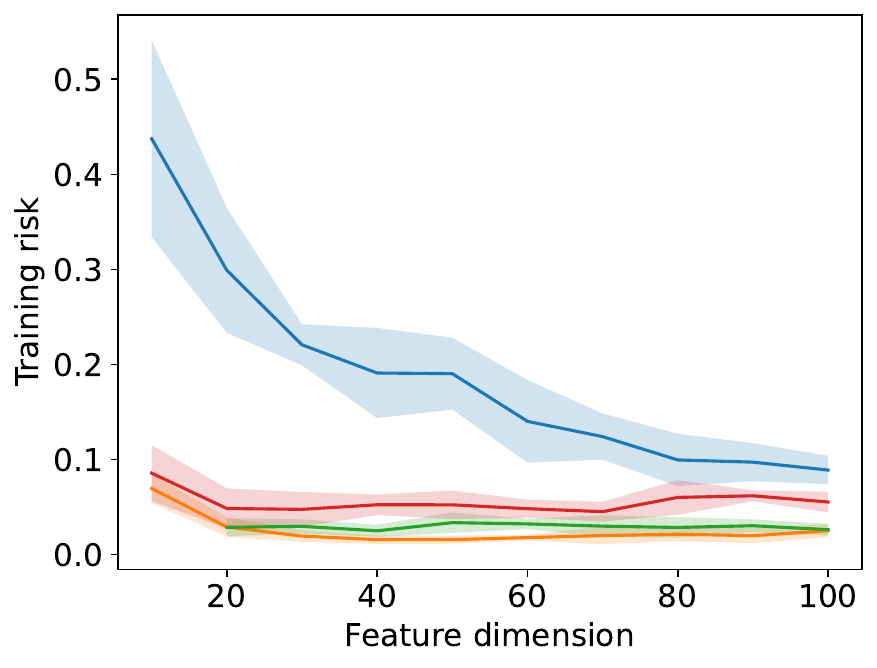}
    \includegraphics[width=0.4\linewidth]{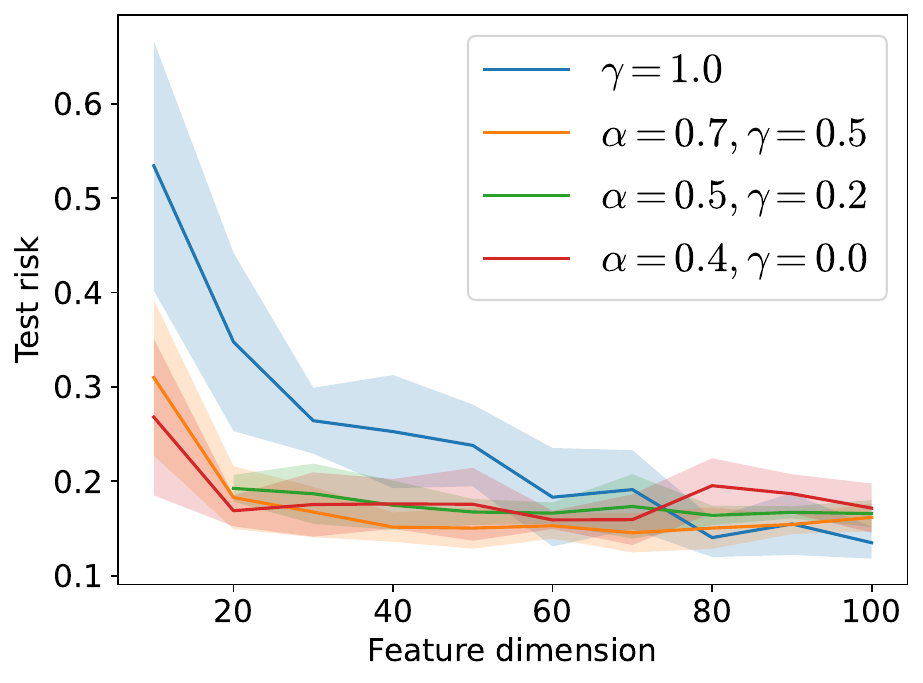}
    \caption[Results for the \texttt{airfoil} dataset (swish)]{Results for the \texttt{airfoil} dataset (swish).  From left to right and top to bottom, 1) training risks, 2) test risks, 3) differences in weight norms $\Vert \mathbf{w}_{tj} - \mathbf{w}_{0j}\Vert$ with $j$'s being the neurons having the maximum difference at the end of the training, 4) difference in NTG matrices, 5) minimum NTG eigenvalues, 6) training risks for transfer learning, and 7) test risks for transfer learning.}
    \label{fig:airfoil_all}
\end{figure}

\begin{figure}
    \centering
    \includegraphics[width=0.4\linewidth]{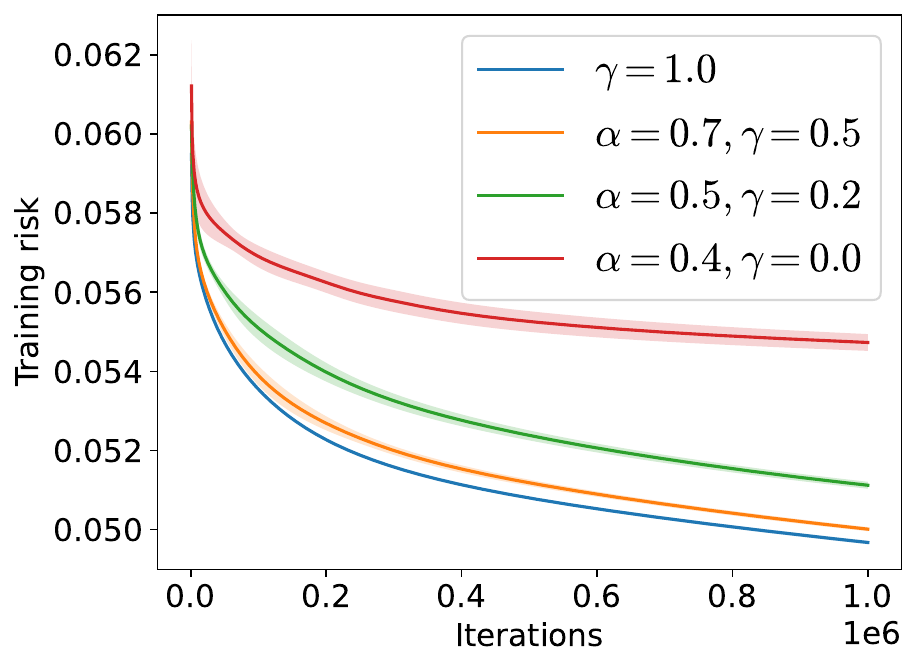}
    \includegraphics[width=0.4\linewidth]{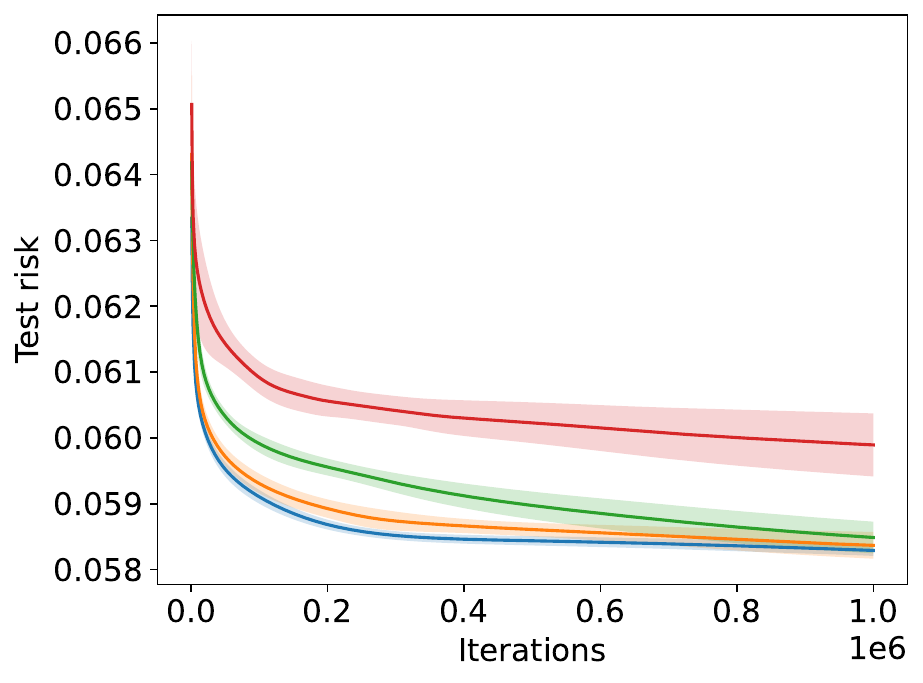}
    \includegraphics[width=0.3\linewidth]{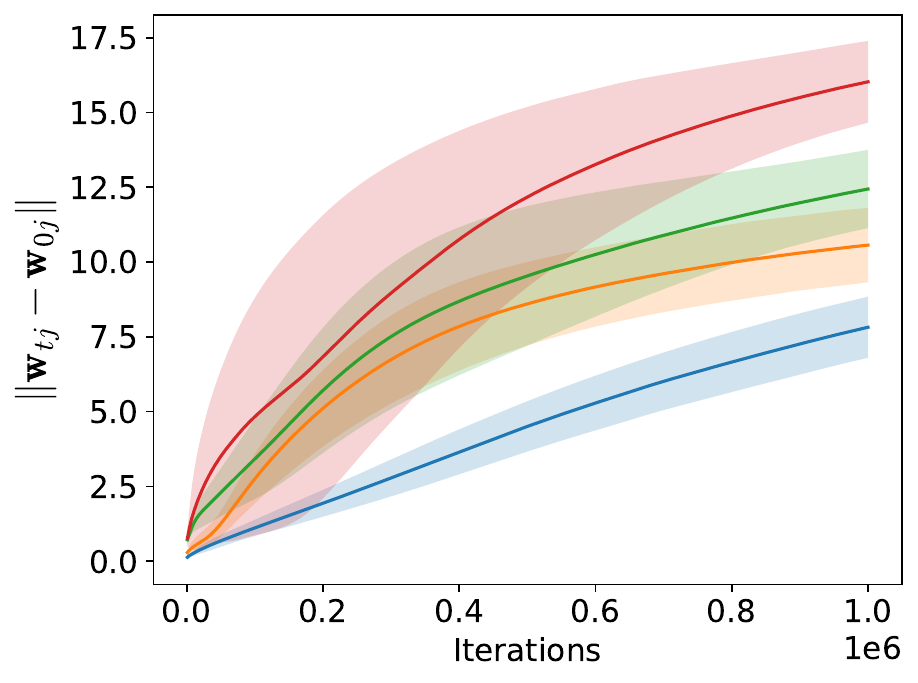}
    \includegraphics[width=0.3\linewidth]{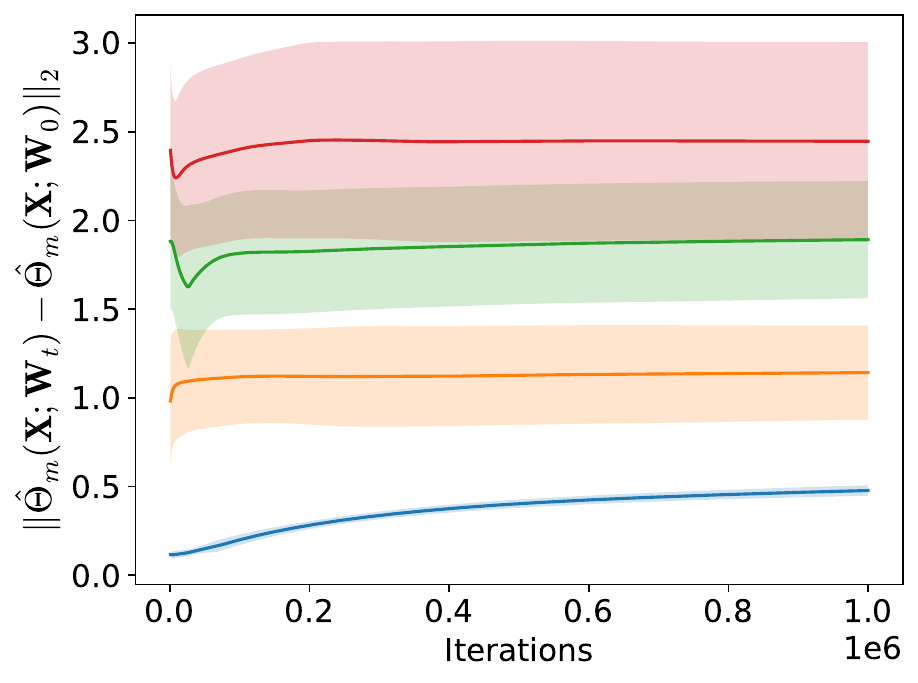}
    \includegraphics[width=0.3\linewidth]{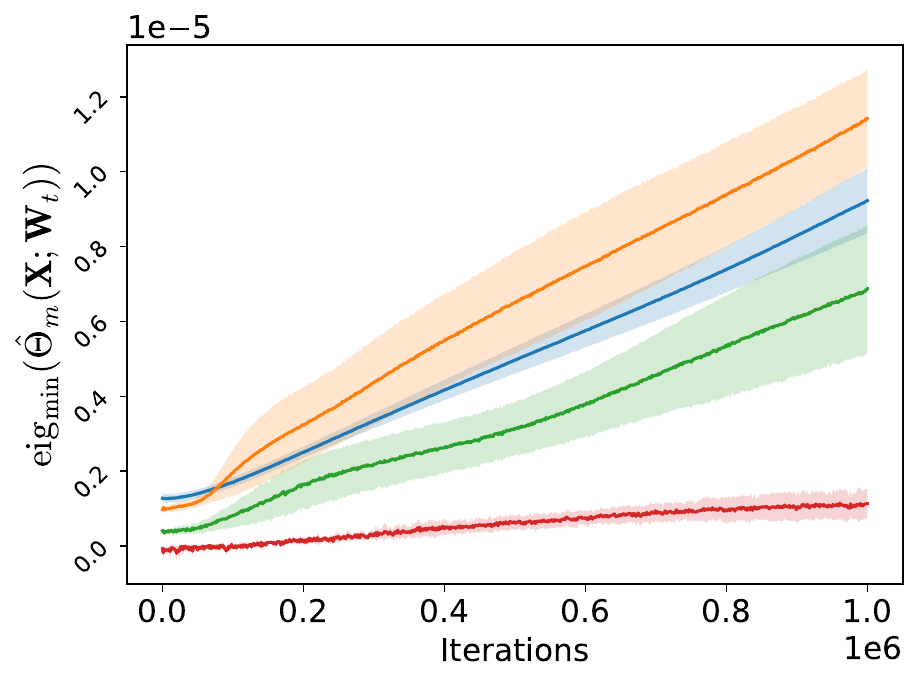}
    \includegraphics[width=0.4\linewidth]{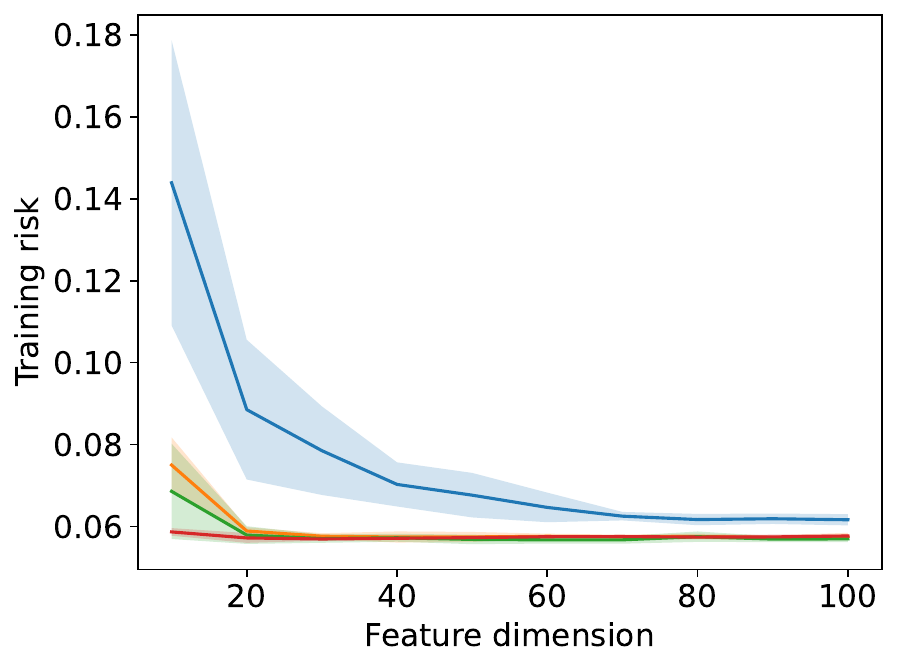}
    \includegraphics[width=0.4\linewidth]{figures/swish/plant/transfer_testrisk.pdf}
    \caption[Results for the \texttt{plant} dataset (swish)]{Results for the \texttt{plant} dataset (swish).  From left to right and top to bottom, 1) training risks, 2) test risks, 3) differences in weight norms $\Vert \mathbf{w}_{tj} - \mathbf{w}_{0j}\Vert$ with $j$'s being the neurons having the maximum difference at the end of the training, 4) difference in NTG matrices, 5) minimum NTG eigenvalues, 6) training risks for transfer learning, and 7) test risks for transfer learning.}
    \label{fig:plant_all}
\end{figure}

\subsection{Classification}
We provide in \cref{fig:mnist_all} detailed results for the MNIST dataset, and in \cref{fig:cifar10} results for the CIFAR--10 dataset. In \cref{fig:different_gamma}, we provide further details on the individual impact of the parameter $\gamma\in[0,1]$. Recall that the smaller the value of $\gamma$, the more asymmetry is introduced, where $\gamma=1$ recovers the iid model. We can see from the experiments that pruning performance is improved as $\gamma$ becomes smaller.



\begin{figure}
    \centering
    \includegraphics[width=0.4\linewidth]{figures/swish/mnist/trainacc.pdf}
    \includegraphics[width=0.4\linewidth]{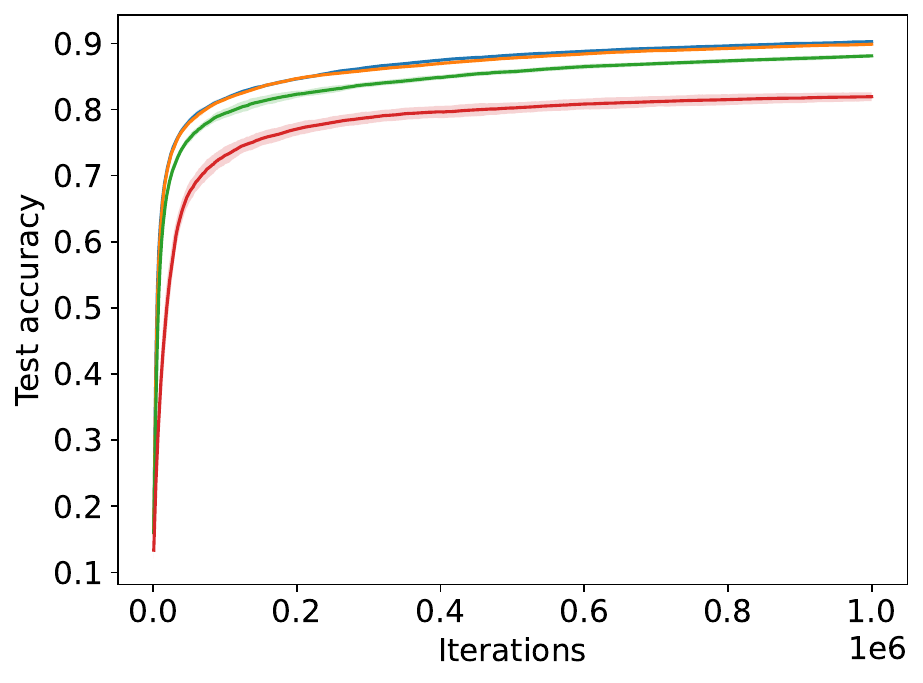}
    \includegraphics[width=0.3\linewidth]{figures/swish/mnist/wdiff.pdf}
    \includegraphics[width=0.3\linewidth]{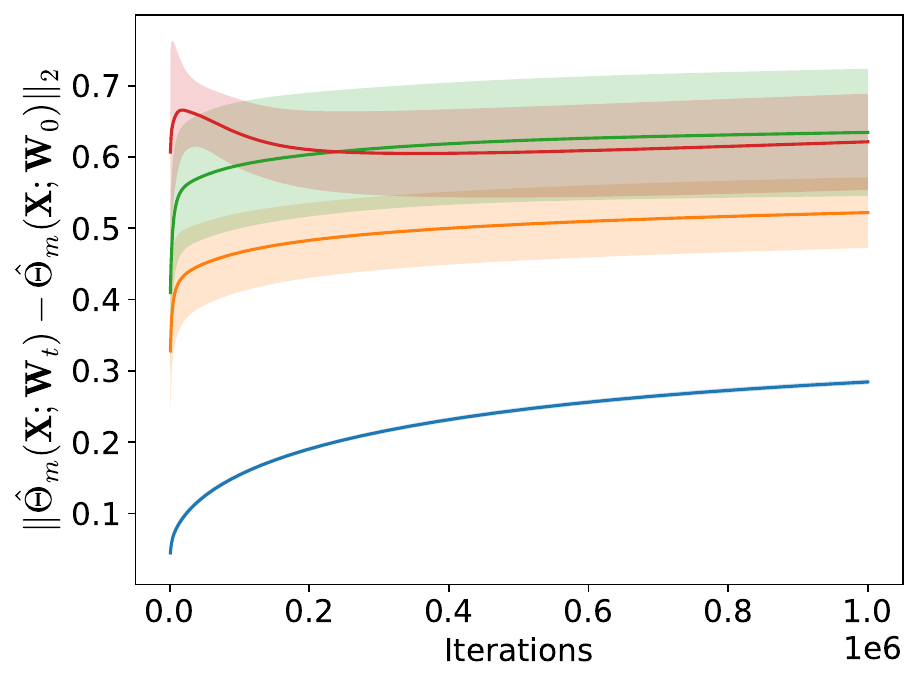}
    \includegraphics[width=0.3\linewidth]{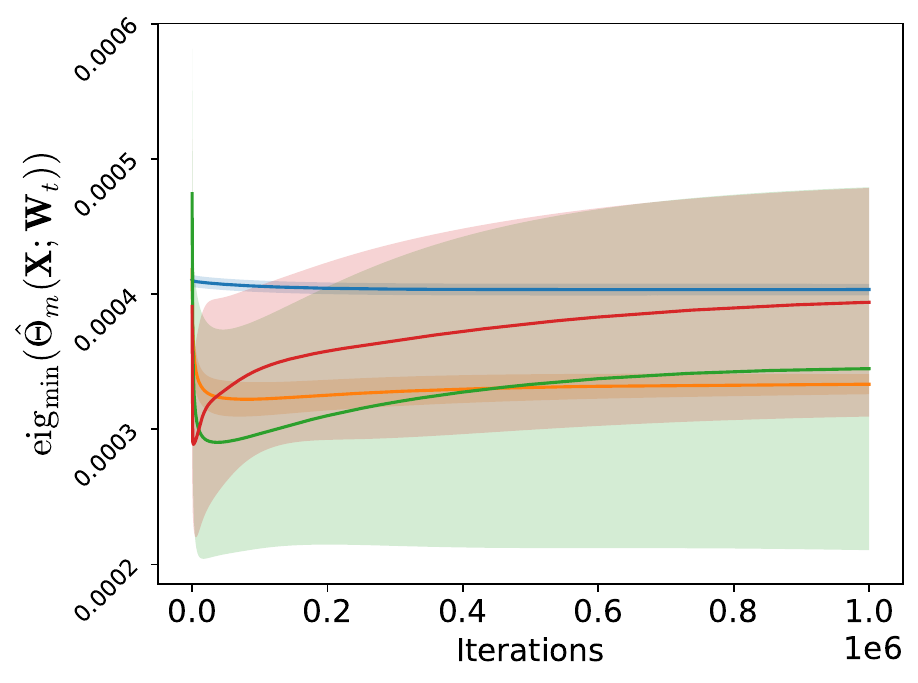}
    \includegraphics[width=0.4\linewidth]{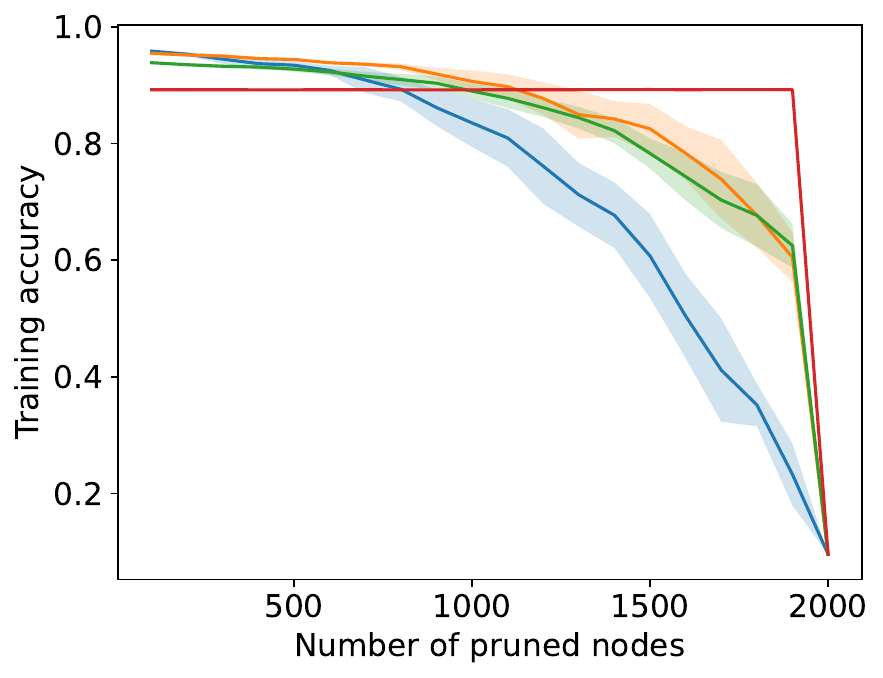}
    \includegraphics[width=0.4\linewidth]{figures/swish/mnist/prune_testacc.pdf}
    \includegraphics[width=0.4\linewidth]{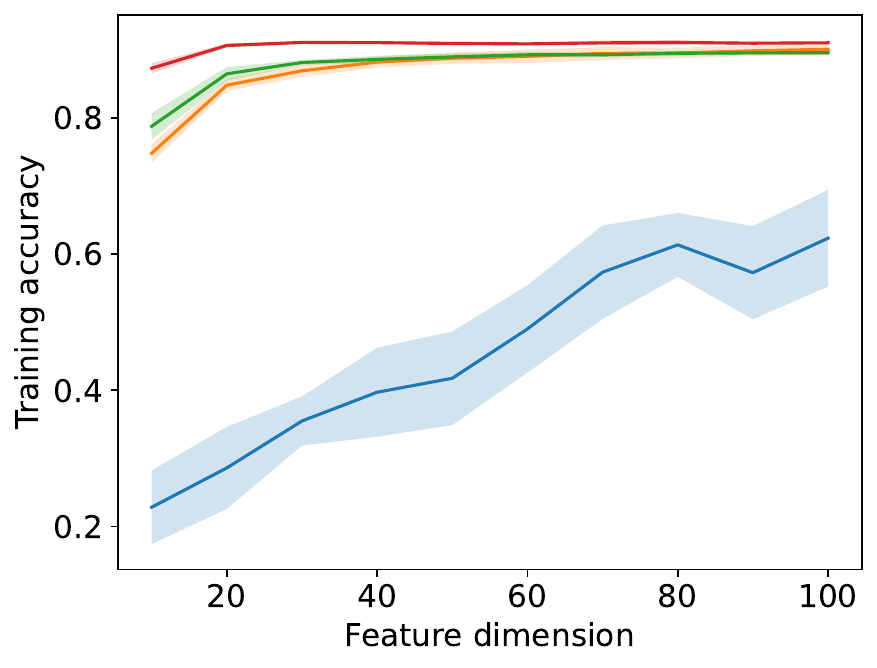}
    \includegraphics[width=0.4\linewidth]{figures/swish/mnist/transfer_testacc.pdf}
    \caption[Results for the \texttt{MNIST} dataset (swish)]{Results for the \texttt{MNIST} dataset (swish). From left to right and top to bottom, 1) training risks, 2) test risks, 3) differences in weight norms $\Vert \mathbf{w}_{tj} - \mathbf{w}_{0j}\Vert$ with $j$'s being the neurons having the maximum difference at the end of the training, 4) difference in NTG matrices, 5) minimum NTG eigenvalues, 6) training accuracies for pruning, 7) test accuracies for pruning, 8) training accuracies for transfer learning, and 9) test accuracies for transfer learning.}
    \label{fig:mnist_all}
\end{figure}

\begin{figure*}
    \centering
    \includegraphics[width=0.4\linewidth]{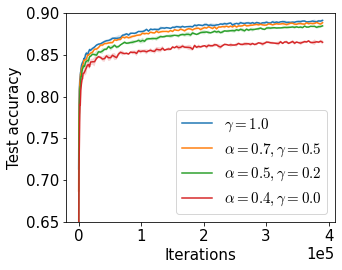}
    \includegraphics[width=0.4\linewidth]{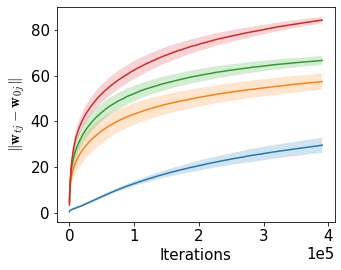}
    \includegraphics[width=0.4\linewidth]{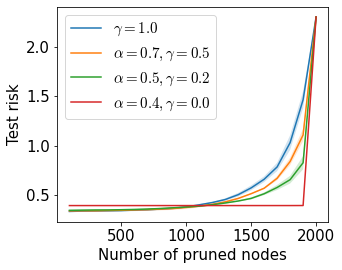}
    \includegraphics[width=0.4\linewidth]{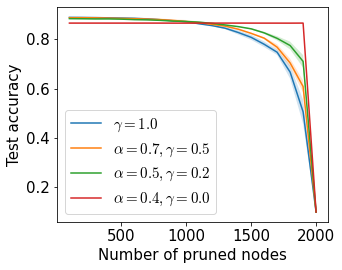}
    \caption[Results for the \texttt{CIFAR--10} dataset (swish)]{Results for the \texttt{CIFAR--10} dataset (swish). From left to right and top to bottom, 1) test accuracies through training, 2) differences in weight norms $\Vert \mathbf{w}_{tj} - \mathbf{w}_{0j}\Vert$ with $j$'s being the neurons having the maximum difference at the end of the training, 3) test risks of the pruned models, and 4) test accuracies of the pruned models.}
    \label{fig:cifar10}
    \vspace{-1em}
\end{figure*}

\begin{figure*}
    \centering
    \includegraphics[width=0.4\linewidth]{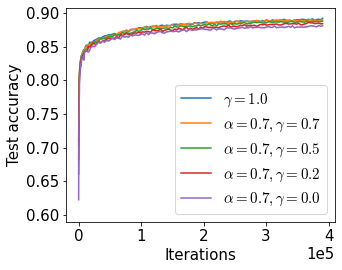}
    \includegraphics[width=0.4\linewidth]{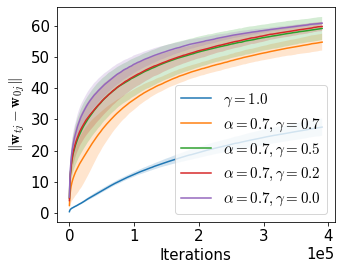}
    \includegraphics[width=0.4\linewidth]{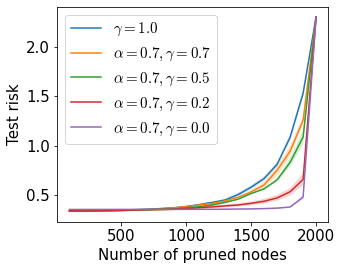}
    \includegraphics[width=0.4\linewidth]{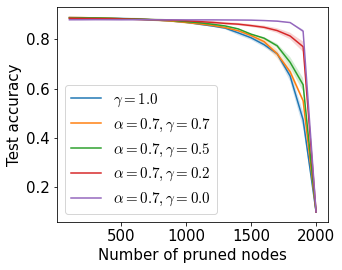}
    \caption[Results for the \texttt{CIFAR--10} dataset (swish). Impact of the parameter $\gamma$.]{Results for the \texttt{CIFAR--10} dataset (swish). Impact of the parameter $\gamma$.  From left to right and top to bottom, 1) test accuracies through training, 2) differences in weight norms $\Vert \mathbf{w}_{tj} - \mathbf{w}_{0j}\Vert$ with $j$'s being the neurons having the maximum difference at the end of the training, 3) test risks of the pruned models, and 4) test accuracies of the pruned models.}
    \label{fig:different_gamma}
    \vspace{-1em}
\end{figure*}

\section{Experimental results for the ReLU activation function}
\label{sec:additionalexperiments-relu}

We provide here additional experimental results, as in \cref{appendix_results}, but with a different activation function. The experimental setting is the same as described in \cref{sec:experiments}, except that the swish activation function is replaced by the ReLU function. Although our theory does not cover the convergence of GD with the ReLU, the experimental results obtained in this section are quantitatively similar to those obtained with the swish function.

\subsection{Regression}
In Figures \ref{fig:concrete_all_relu}, \ref{fig:energy_all_relu}, \ref{fig:airfoil_all_relu} and \ref{fig:plant_all_relu} we respectively provide detailed results for the datasets \texttt{concrete}, \texttt{energy},  \texttt{airfoil} and \texttt{plant}.

\begin{figure}
    \centering
    \includegraphics[width=0.4\linewidth]{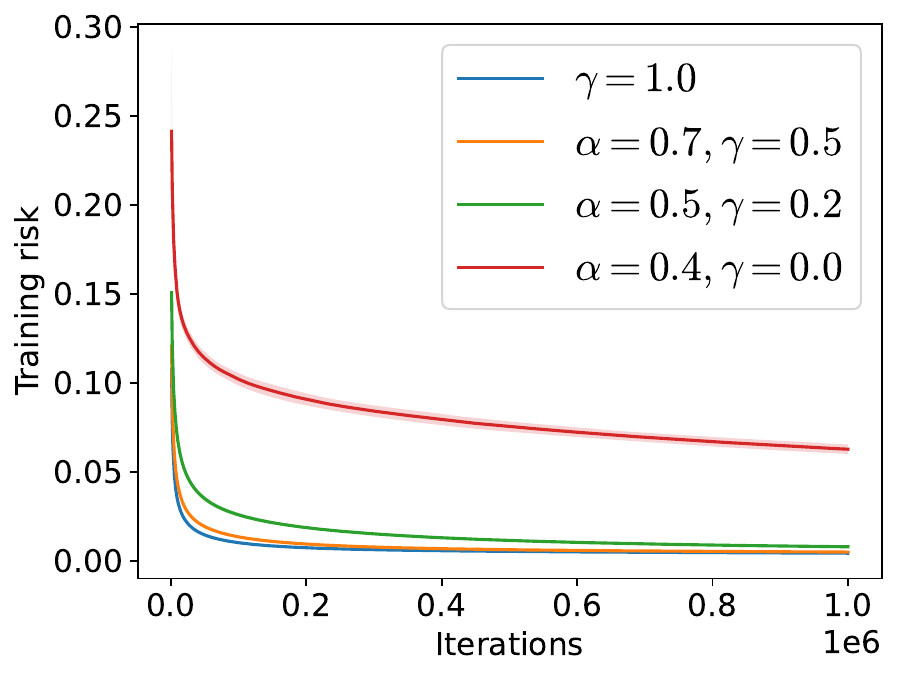}
    \includegraphics[width=0.4\linewidth]{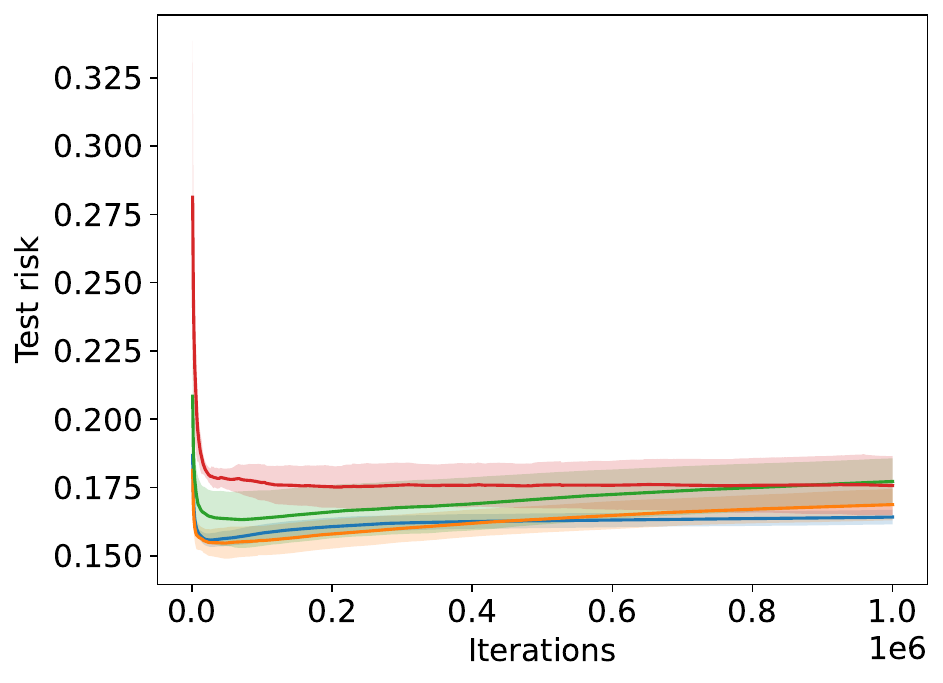}
    \includegraphics[width=0.32\linewidth]{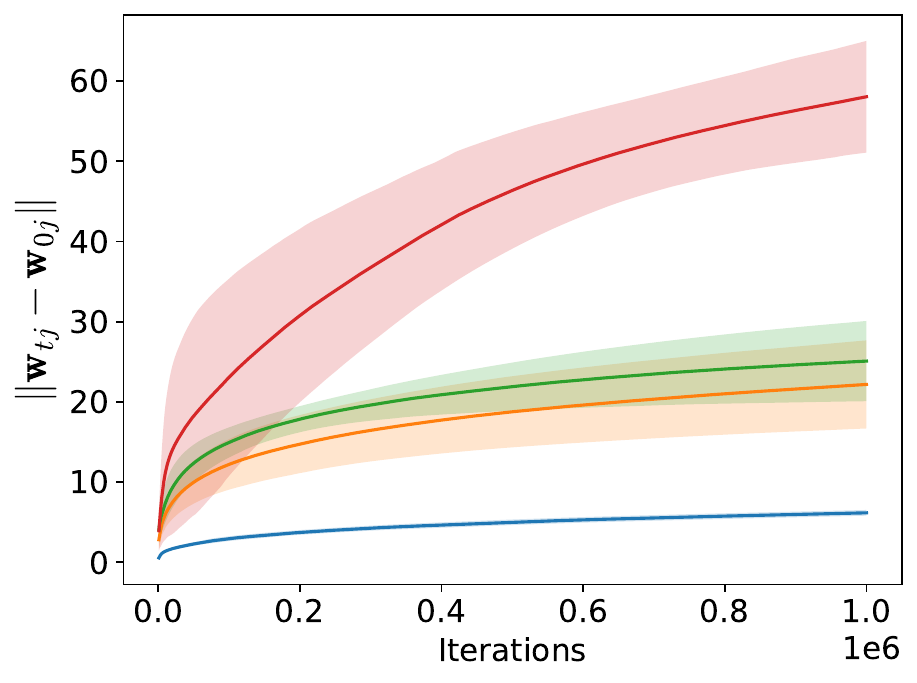}
    \includegraphics[width=0.32\linewidth]{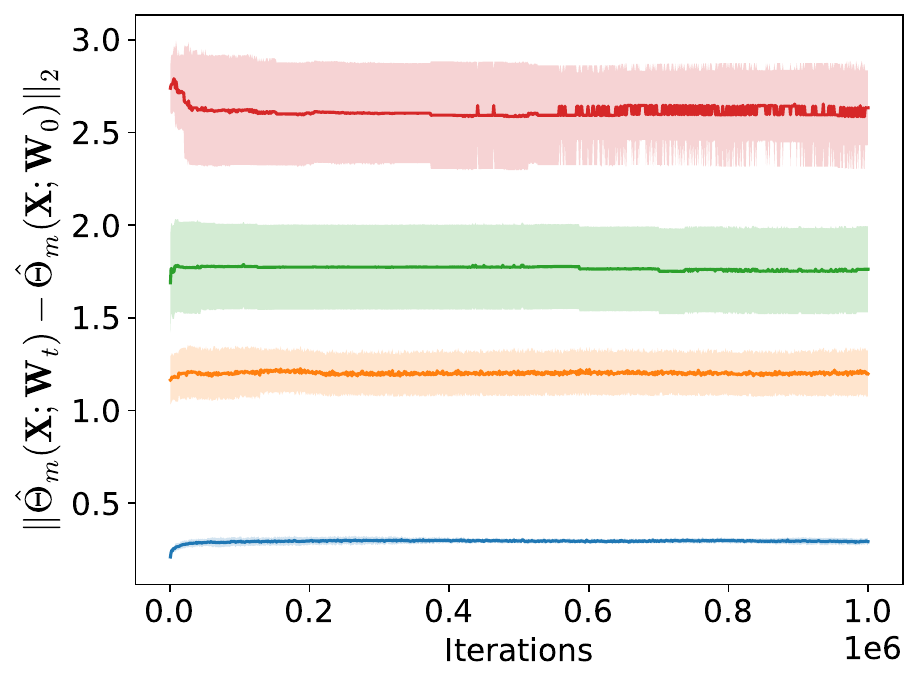}
    \includegraphics[width=0.32\linewidth]{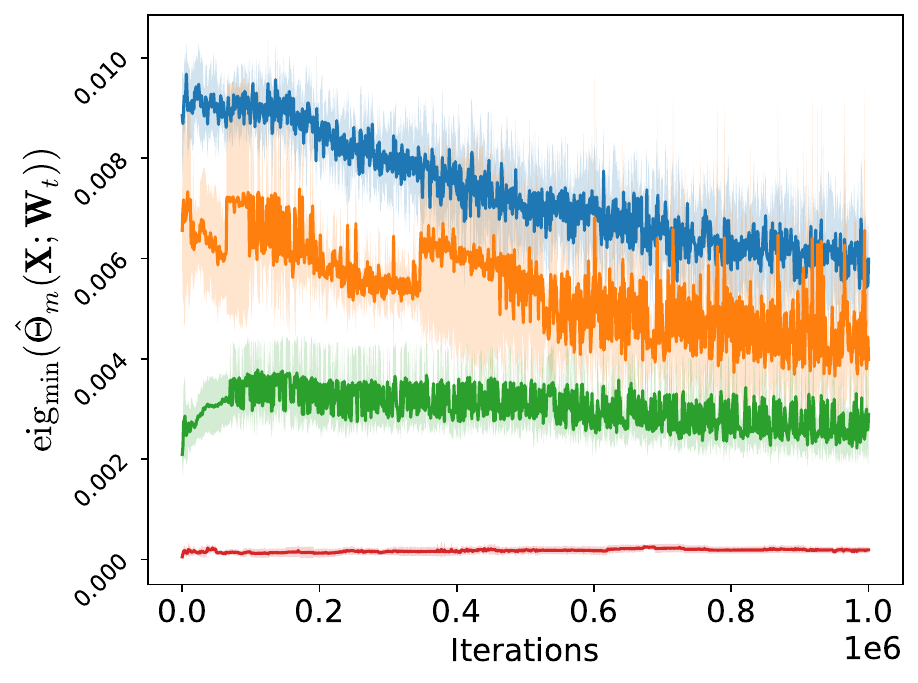}
    \includegraphics[width=0.4\linewidth]{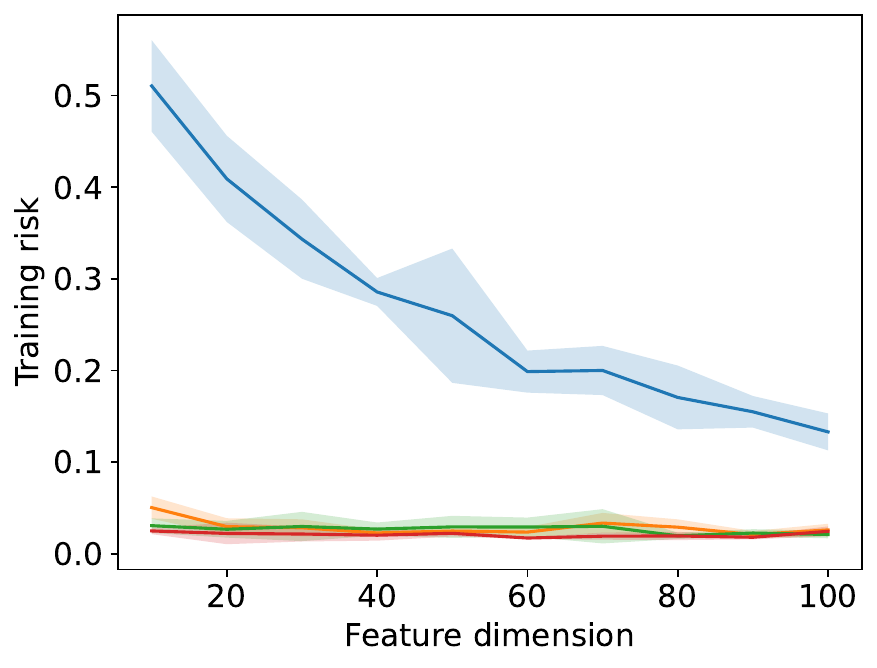}
    \includegraphics[width=0.4\linewidth]{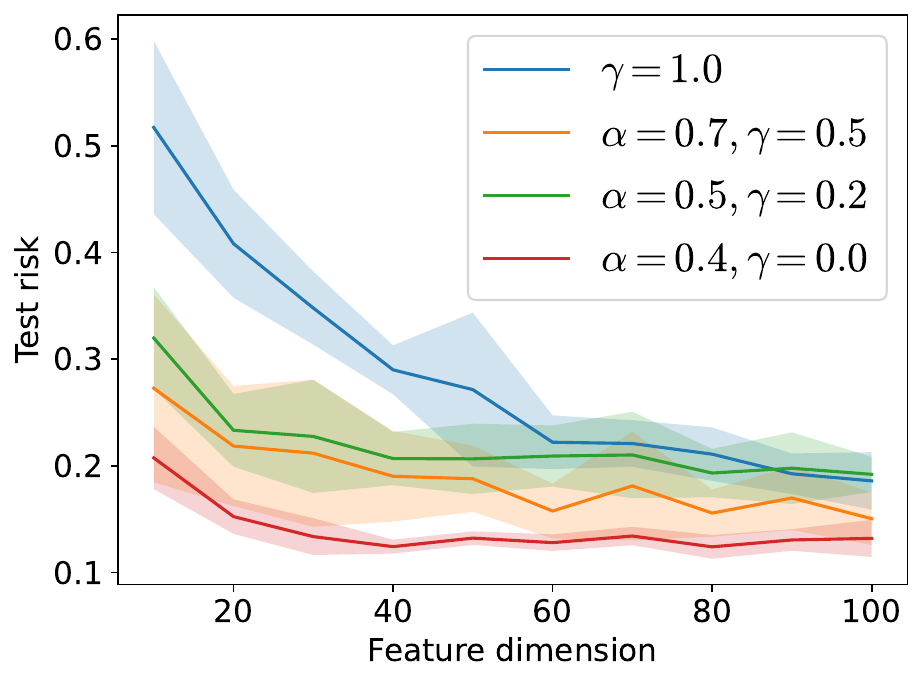}
    \caption[Results for the \texttt{concrete} dataset (ReLU)]{Results for the \texttt{concrete} dataset (ReLU).  From left to right and top to bottom, 1) training risks, 2) test risks, 3) differences in weight norms $\Vert \mathbf{w}_{tj} - \mathbf{w}_{0j}\Vert$ with $j$'s being the neurons having the maximum difference at the end of the training, 4) difference in NTG matrices, 5) minimum NTG eigenvalues, 6) training risks for transfer learning, and 7) test risks for transfer learning.}
    \label{fig:concrete_all_relu}
\end{figure}

\begin{figure}
    \centering
    \includegraphics[width=0.4\linewidth]{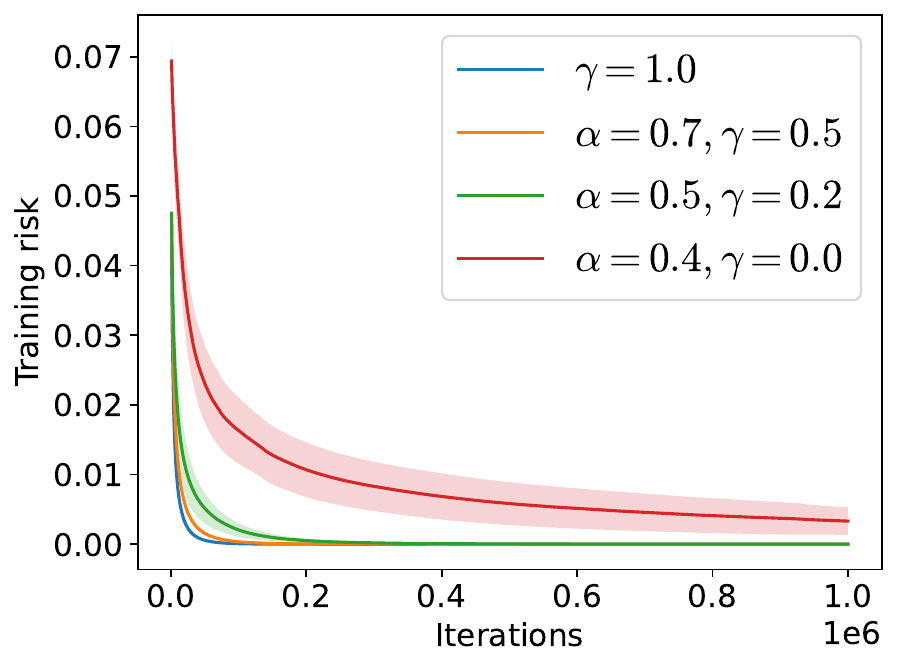}
    \includegraphics[width=0.4\linewidth]{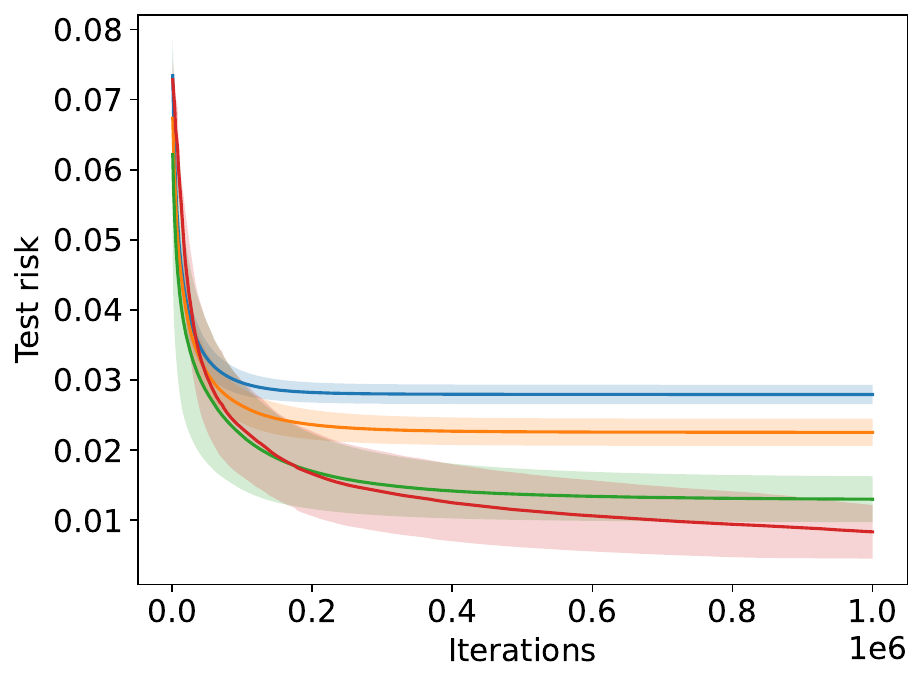}
    \includegraphics[width=0.32\linewidth]{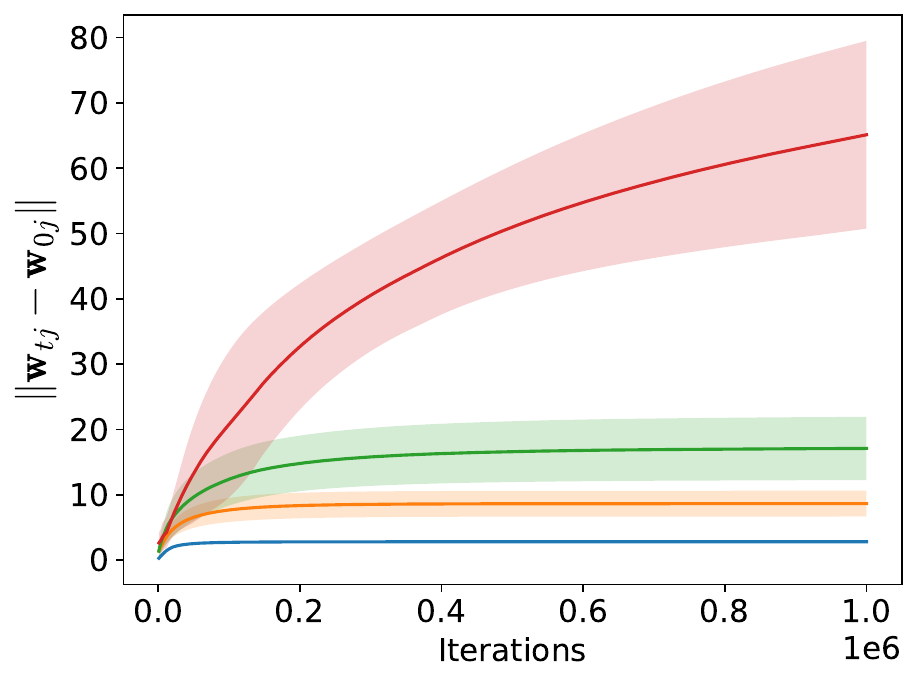}
    \includegraphics[width=0.32\linewidth]{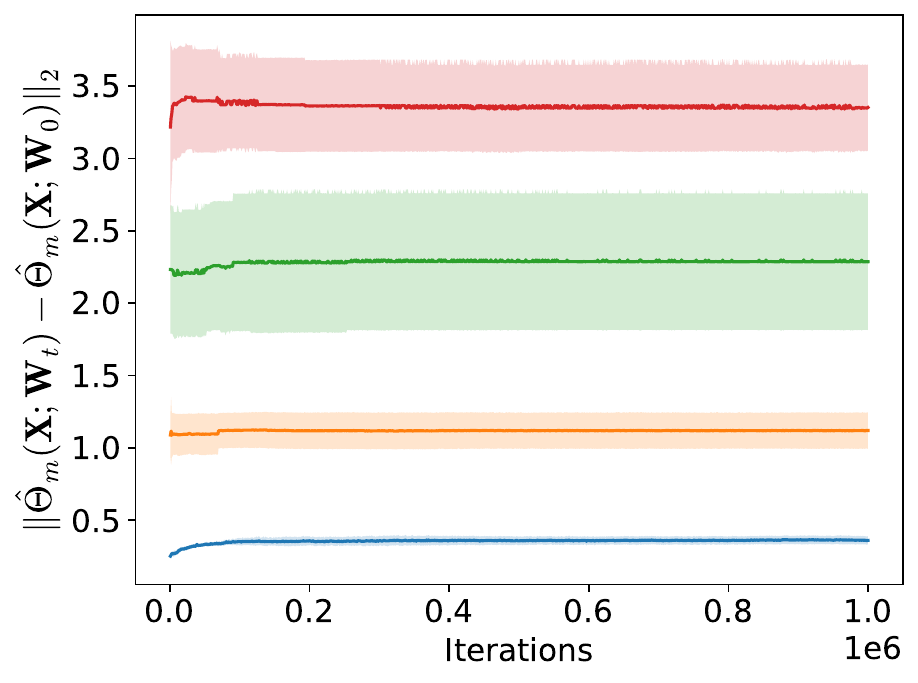}
    \includegraphics[width=0.32\linewidth]{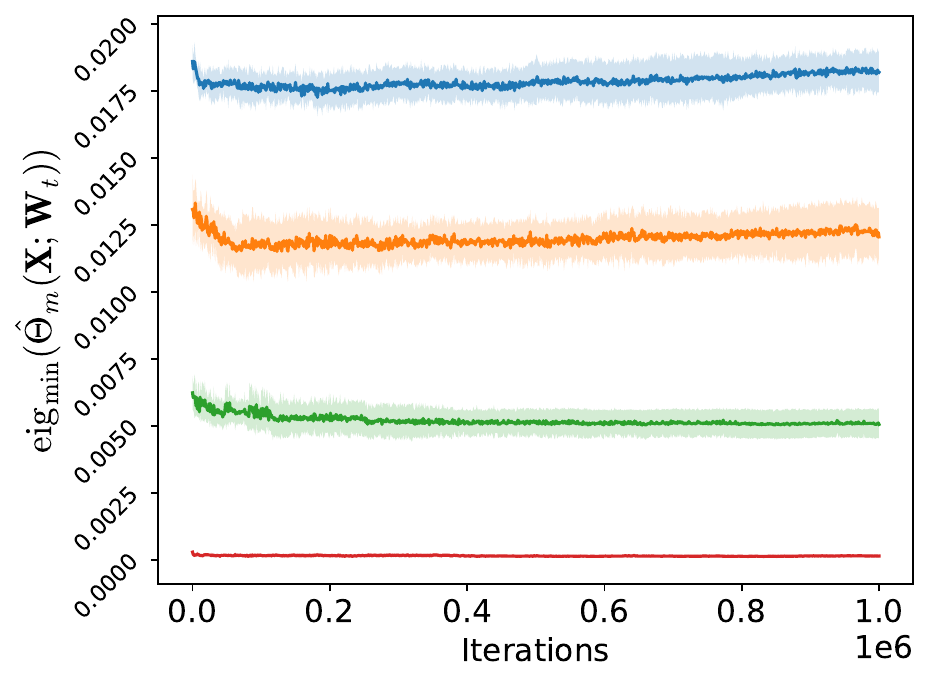}
    \includegraphics[width=0.4\linewidth]{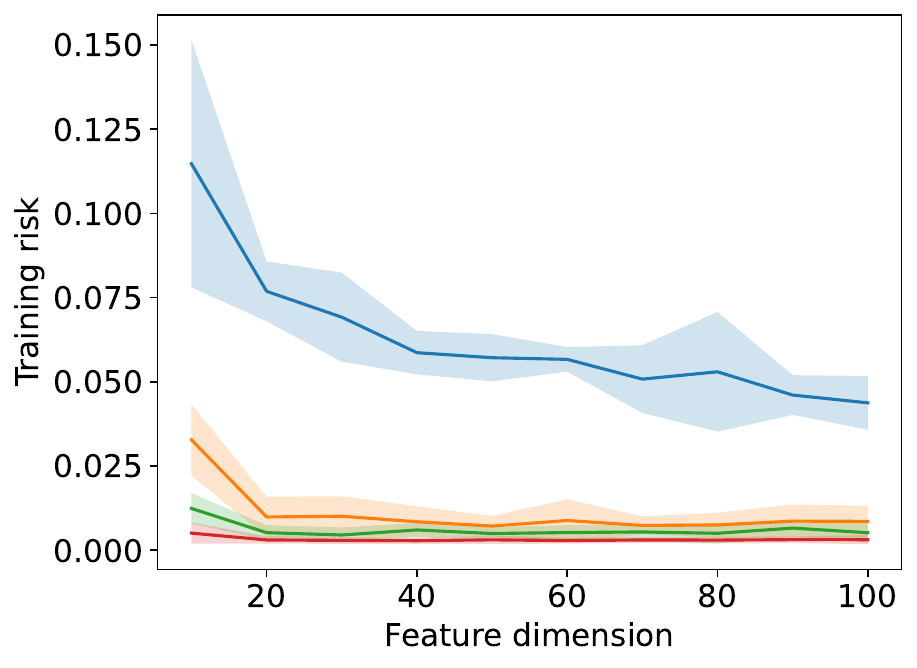}
    \includegraphics[width=0.4\linewidth]{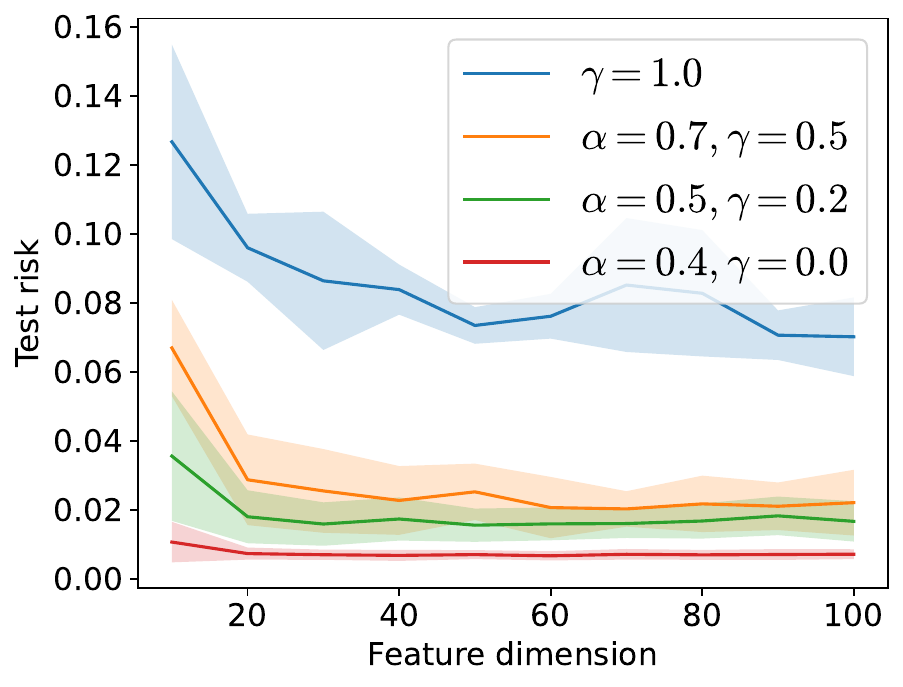}
    \caption[Results for the \texttt{energy} dataset (ReLU)]{Results for the \texttt{energy} dataset (ReLU).  From left to right and top to bottom, 1) training risks, 2) test risks, 3) differences in weight norms $\Vert \mathbf{w}_{tj} - \mathbf{w}_{0j}\Vert$ with $j$'s being the neurons having the maximum difference at the end of the training, 4) difference in NTG matrices, 5) minimum NTG eigenvalues, 6) training risks for transfer learning, and 7) test risks for transfer learning.}
    \label{fig:energy_all_relu}
\end{figure}

\begin{figure}
    \centering
    \includegraphics[width=0.4\linewidth]{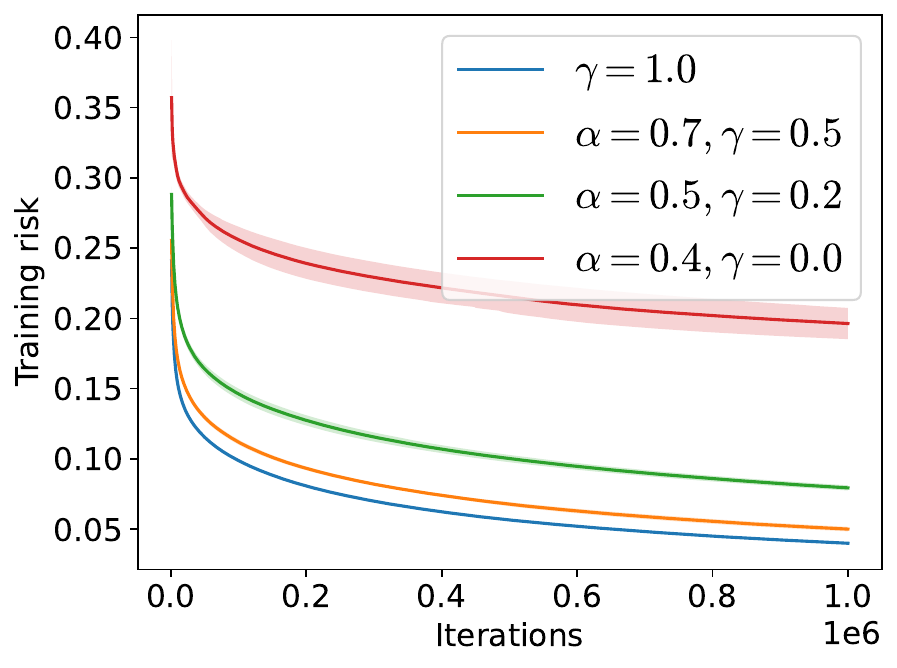}
    \includegraphics[width=0.4\linewidth]{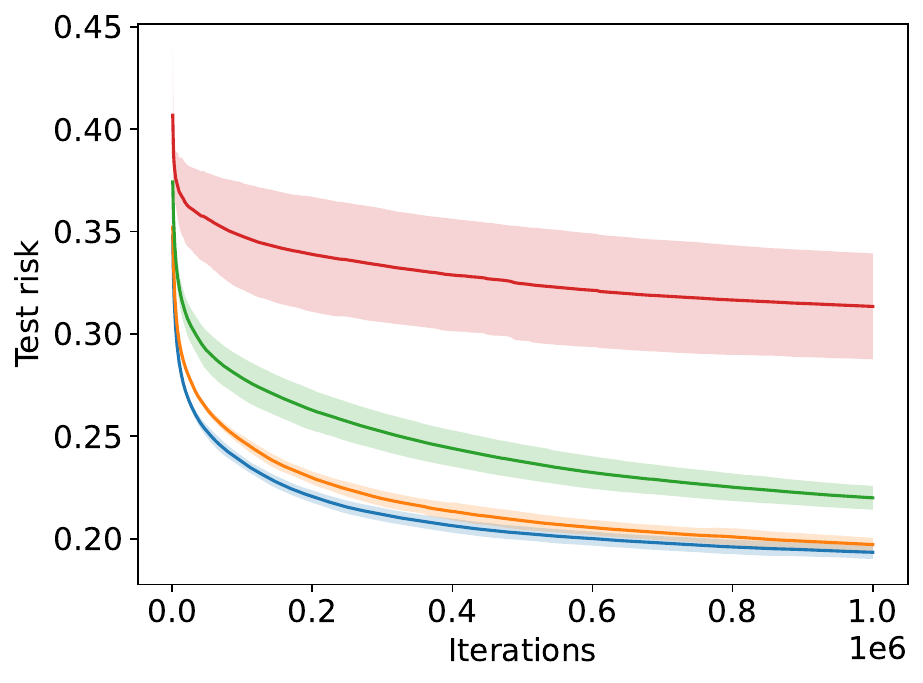}
    \includegraphics[width=0.32\linewidth]{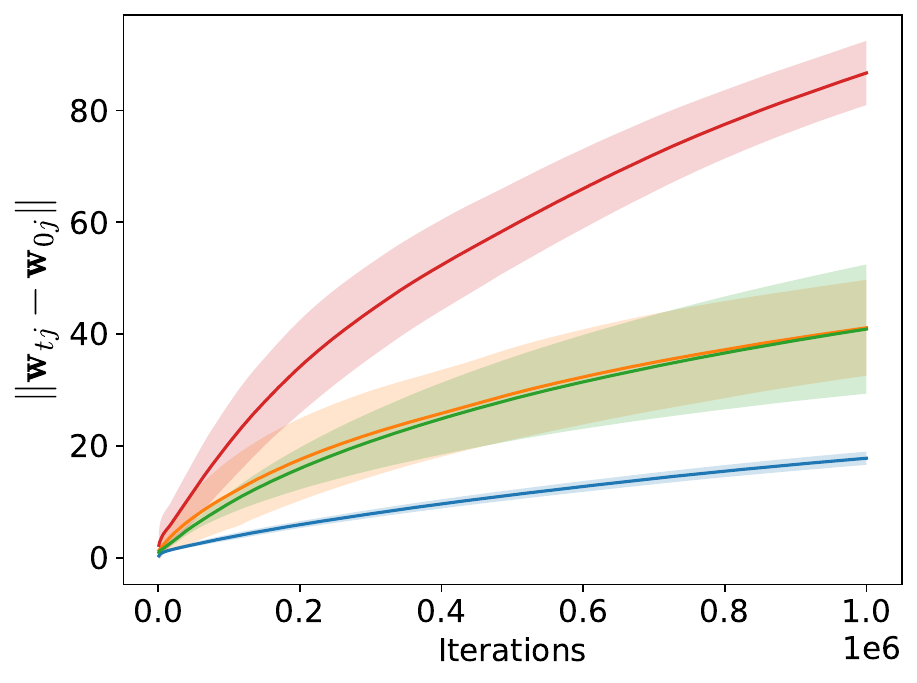}
    \includegraphics[width=0.32\linewidth]{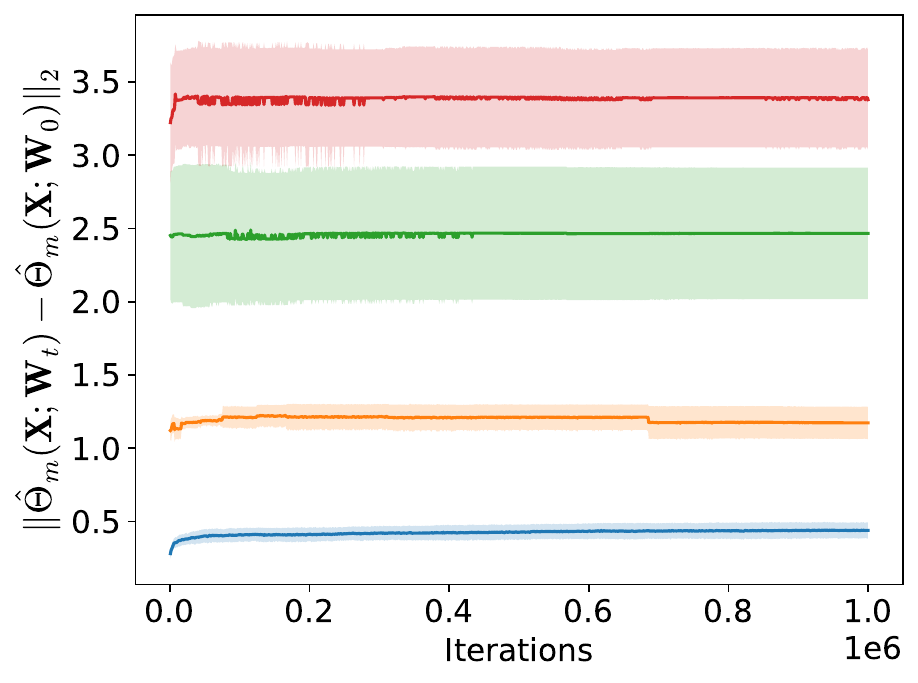}
    \includegraphics[width=0.32\linewidth]{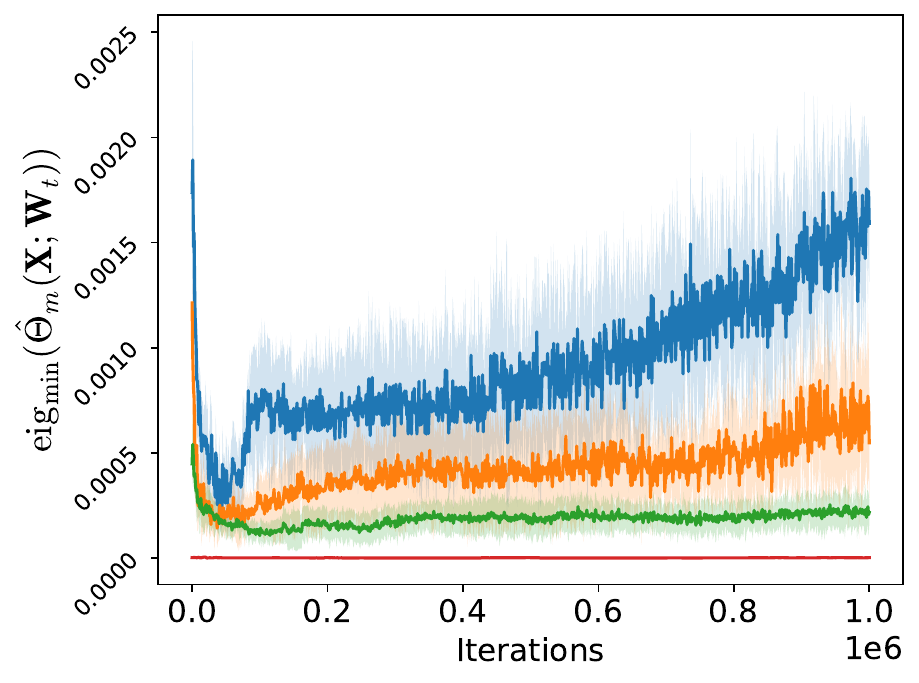}
    \includegraphics[width=0.4\linewidth]{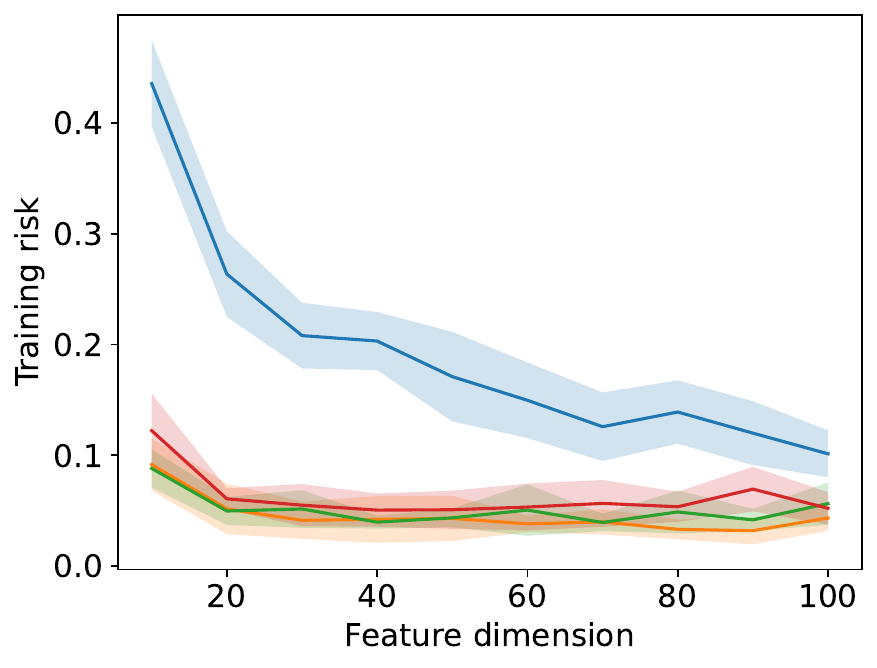}
    \includegraphics[width=0.4\linewidth]{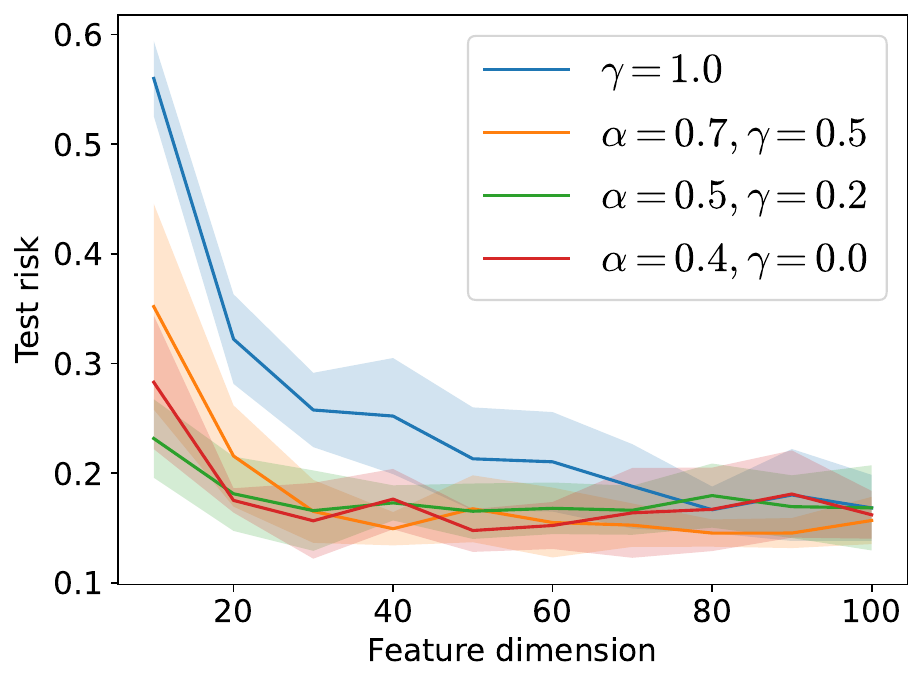}
    \caption[Results for the \texttt{airfoil} dataset (ReLU)]{Results for the \texttt{airfoil} dataset (ReLU).  From left to right and top to bottom, 1) training risks, 2) test risks, 3) differences in weight norms $\Vert \mathbf{w}_{tj} - \mathbf{w}_{0j}\Vert$ with $j$'s being the neurons having the maximum difference at the end of the training, 4) difference in NTG matrices, 5) minimum NTG eigenvalues, 6) training risks for transfer learning, and 7) test risks for transfer learning.}
    \label{fig:airfoil_all_relu}
\end{figure}

\begin{figure}
    \centering
    \includegraphics[width=0.4\linewidth]{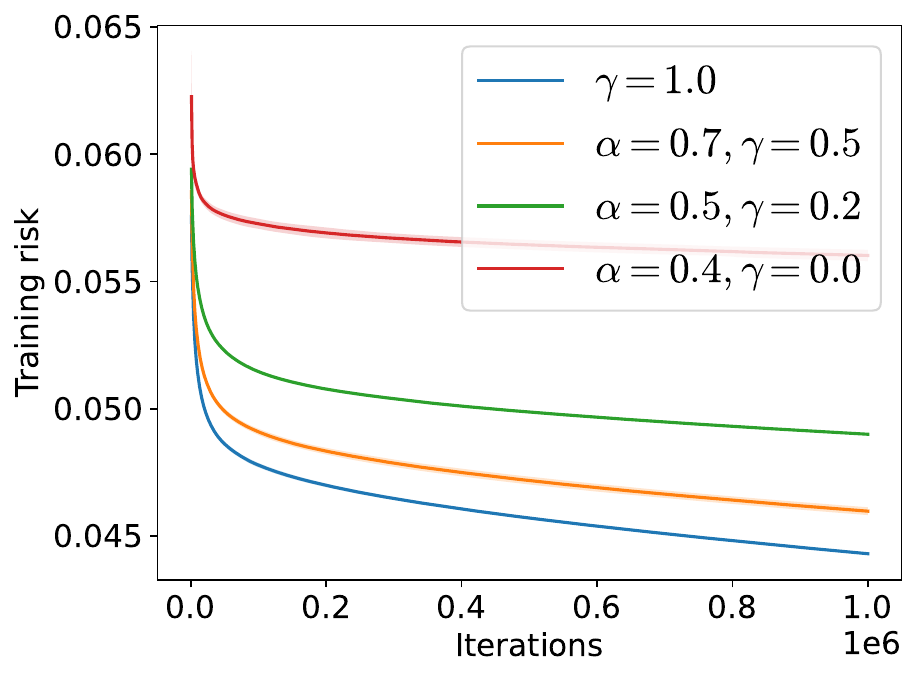}
    \includegraphics[width=0.4\linewidth]{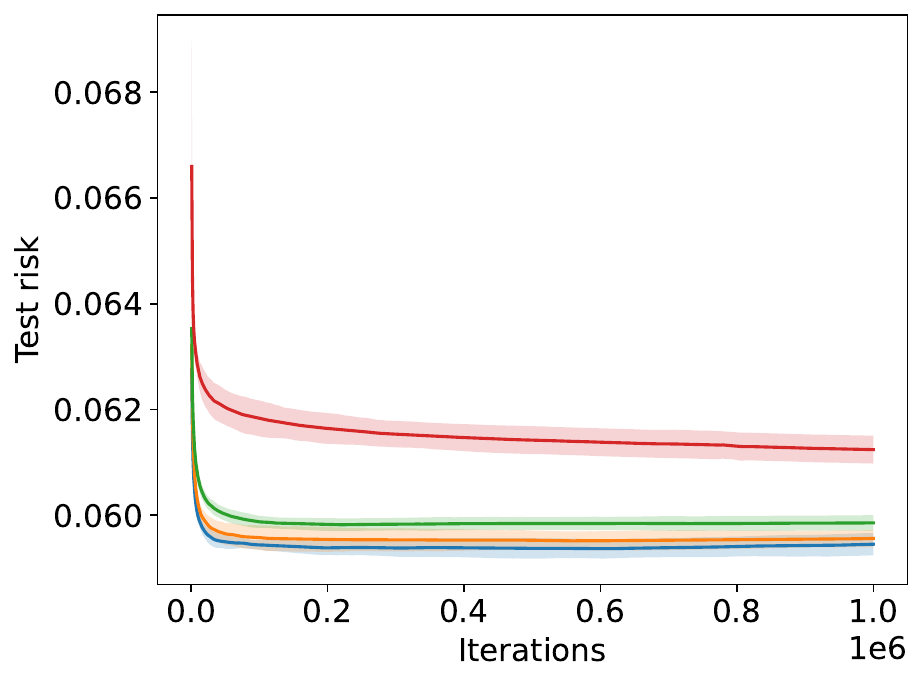}
    \includegraphics[width=0.32\linewidth]{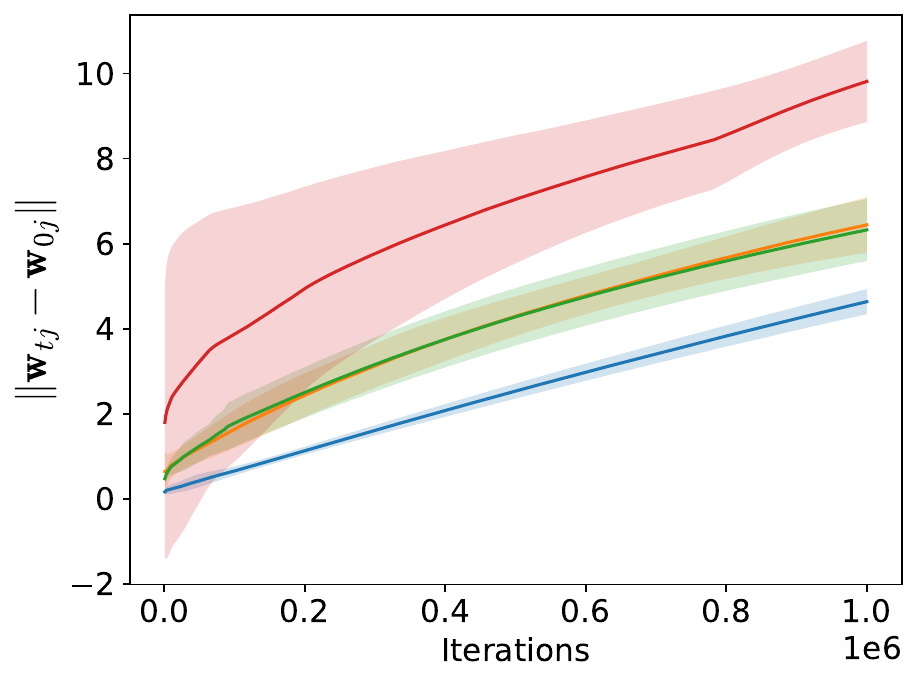}
    \includegraphics[width=0.32\linewidth]{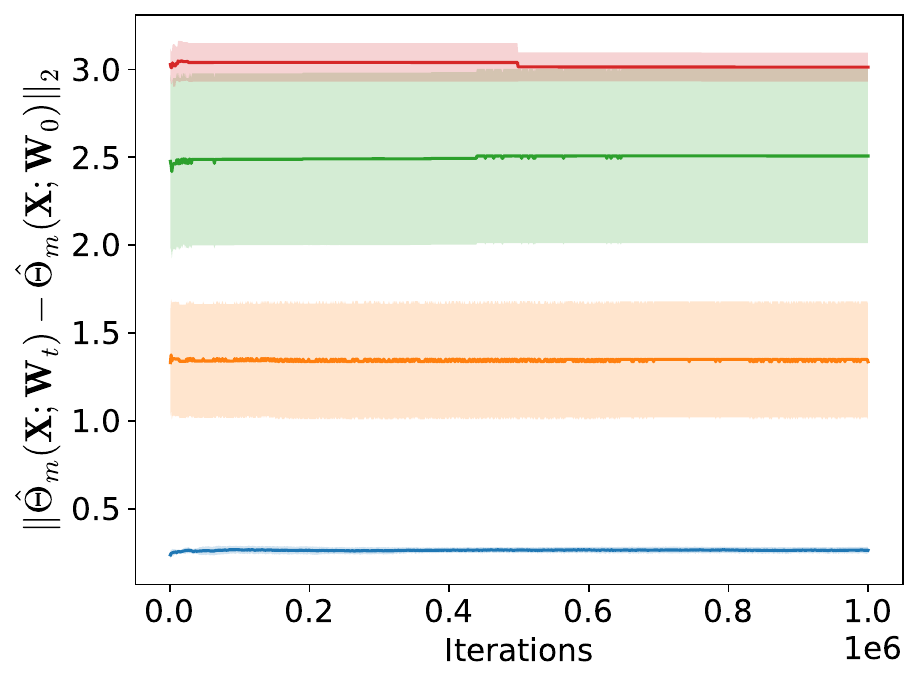}
    \includegraphics[width=0.32\linewidth]{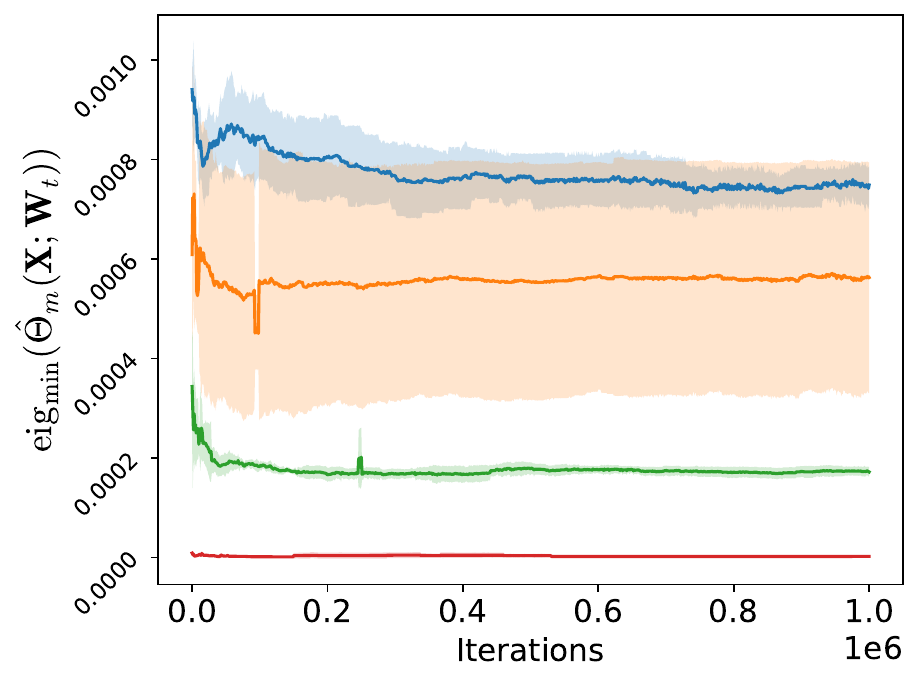}
    \includegraphics[width=0.4\linewidth]{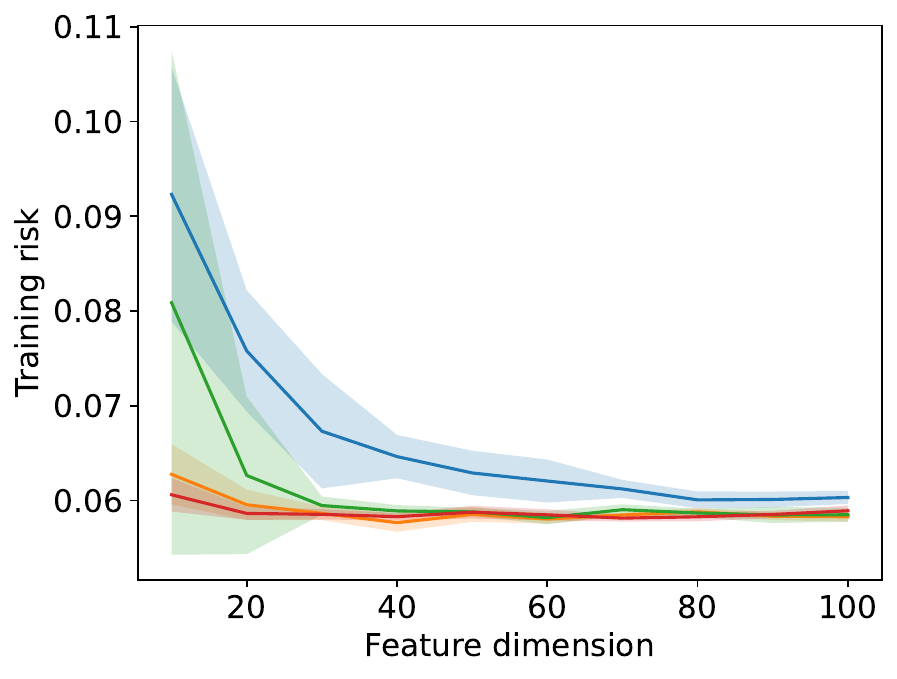}
    \includegraphics[width=0.4\linewidth]{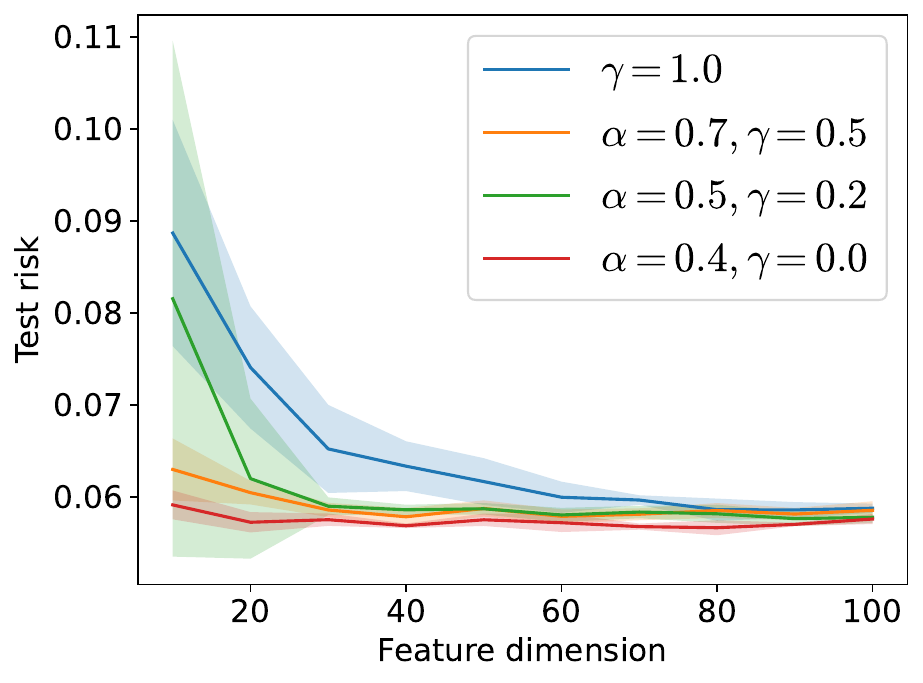}
    \caption[Results for the \texttt{plant} dataset (ReLU)]{Results for the \texttt{plant} dataset (ReLU).  From left to right and top to bottom, 1) training risks, 2) test risks, 3) differences in weight norms $\Vert \mathbf{w}_{tj} - \mathbf{w}_{0j}\Vert$ with $j$'s being the neurons having the maximum difference at the end of the training, 4) difference in NTG matrices, 5) minimum NTG eigenvalues, 6) training risks for transfer learning, and 7) test risks for transfer learning.}
    \label{fig:plant_all_relu}
\end{figure}

\subsection{Classification}

We provide in Figures \ref{fig:mnist_all_relu}, \ref{fig:cifar10_relu} and \ref{fig:cifar100_relu} detailed results for respectively the MNIST, CIFAR10 and CIFAR100 experiments.

\begin{figure}
    \centering
    \includegraphics[width=0.4\linewidth]{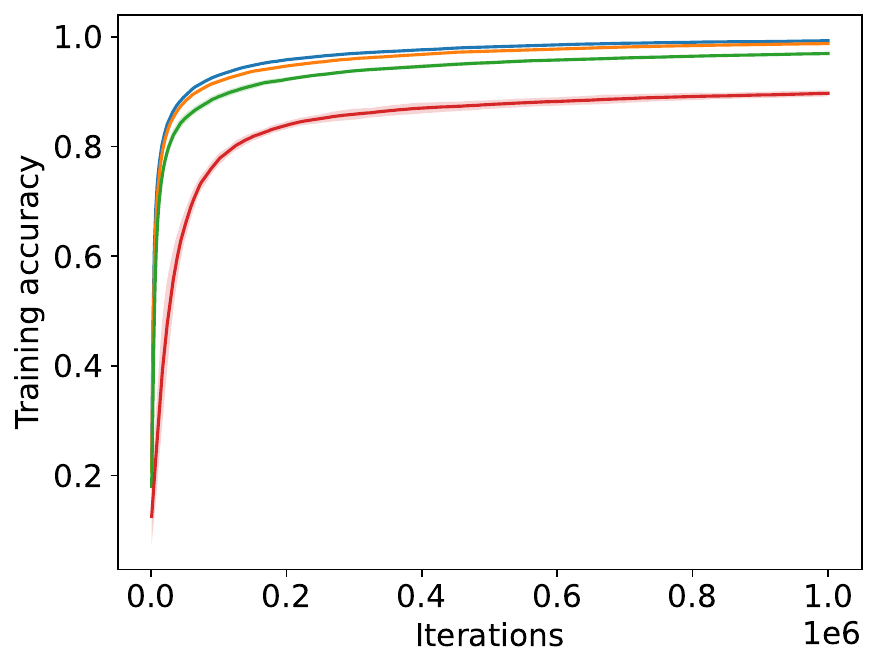}
    \includegraphics[width=0.4\linewidth]{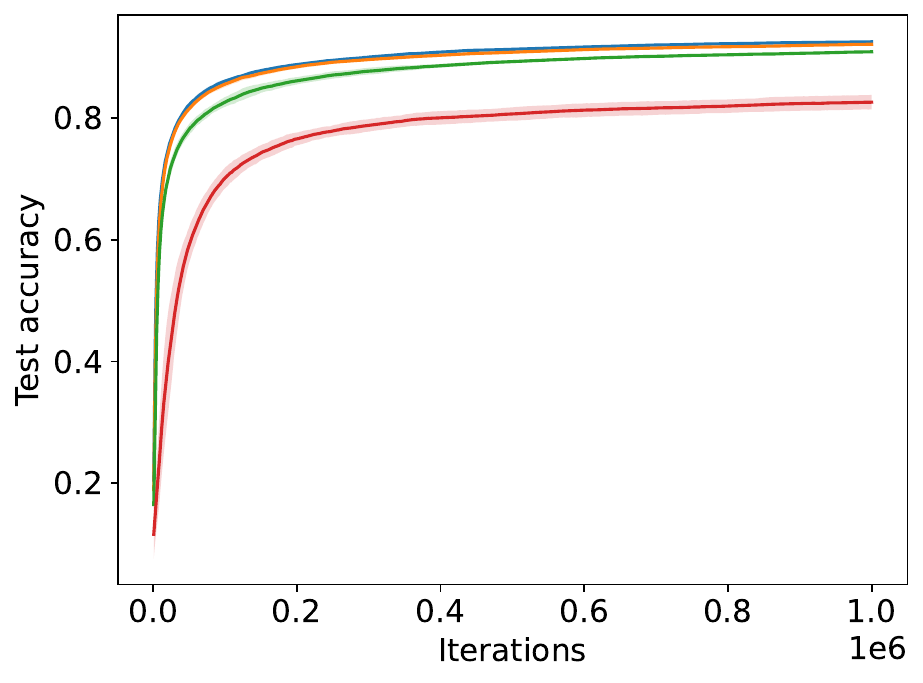}
    \includegraphics[width=0.32\linewidth]{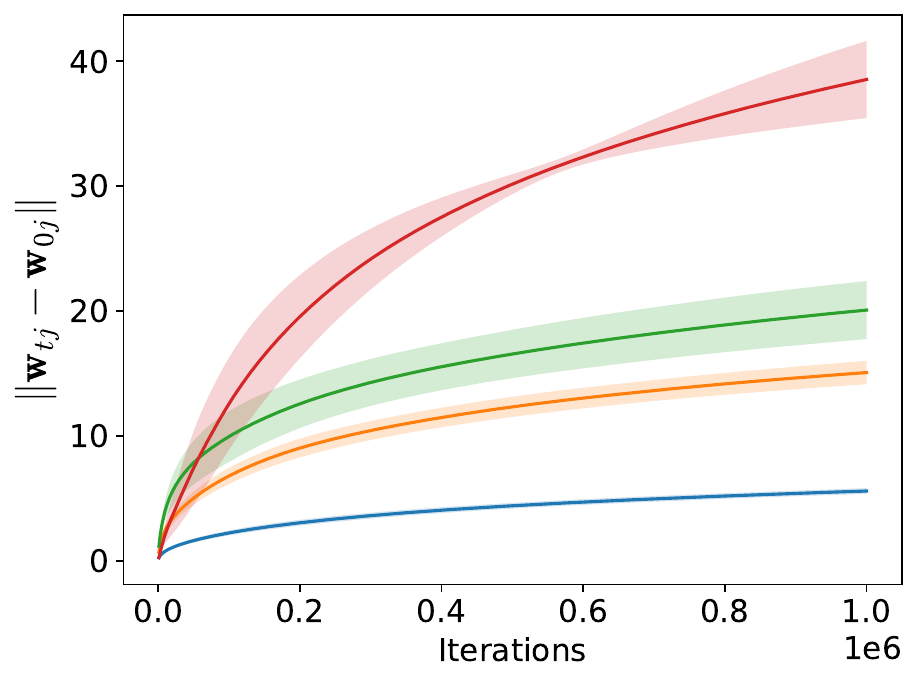}
    \includegraphics[width=0.32\linewidth]{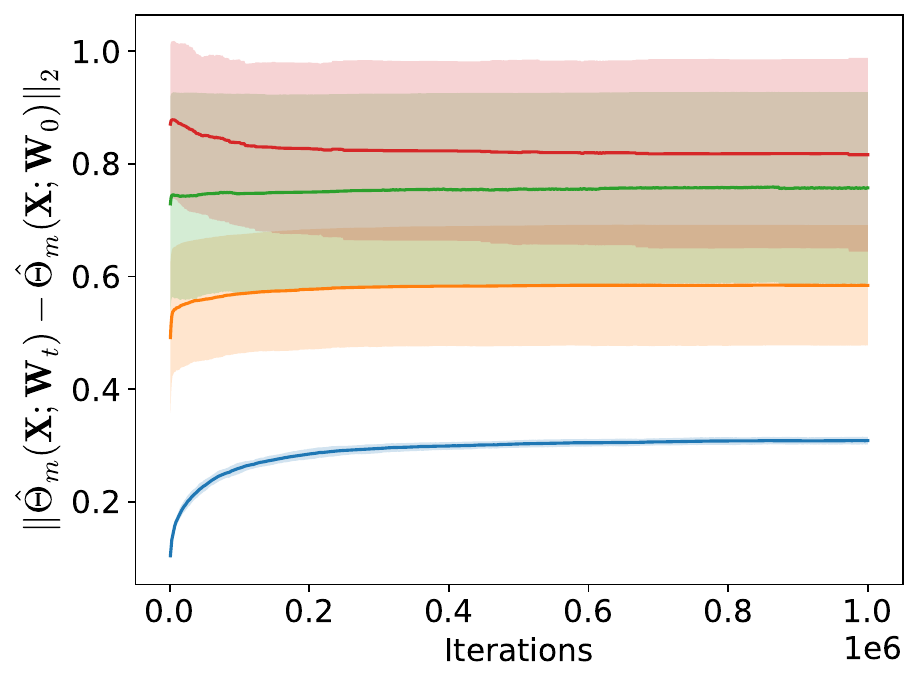}
    \includegraphics[width=0.32\linewidth]{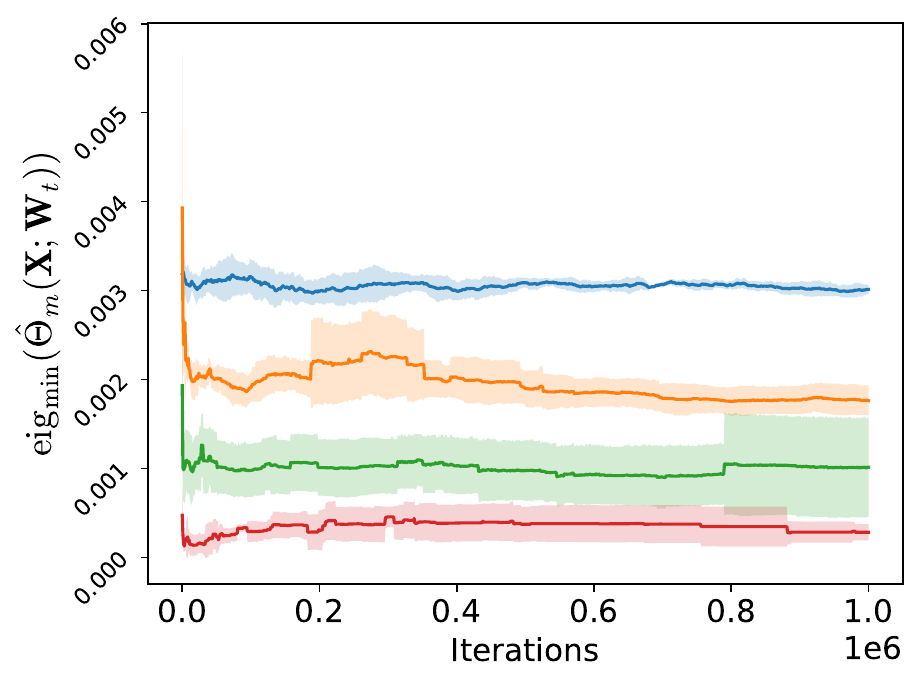}
    \includegraphics[width=0.4\linewidth]{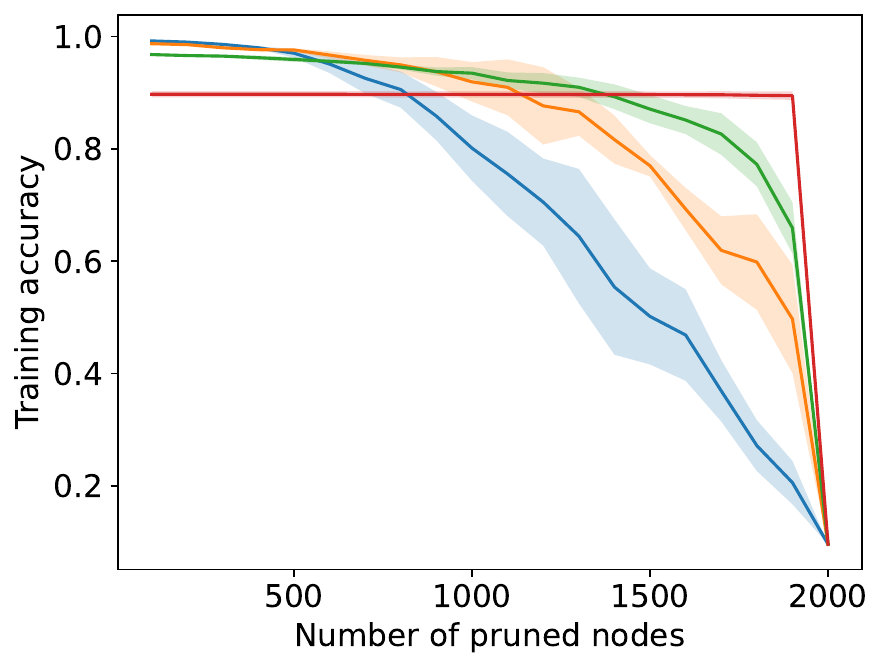}
    \includegraphics[width=0.4\linewidth]{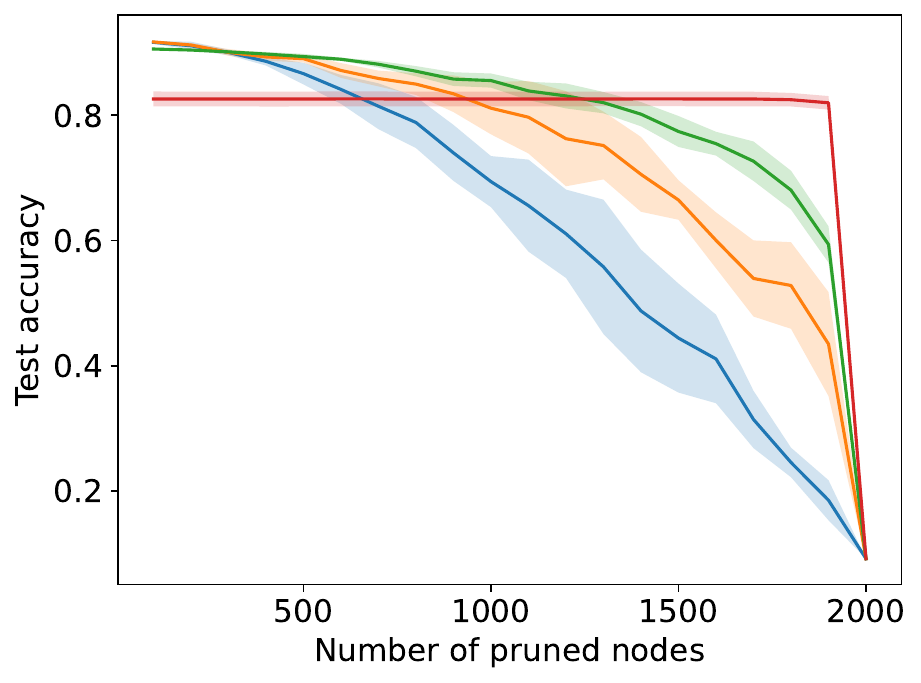}
    \includegraphics[width=0.4\linewidth]{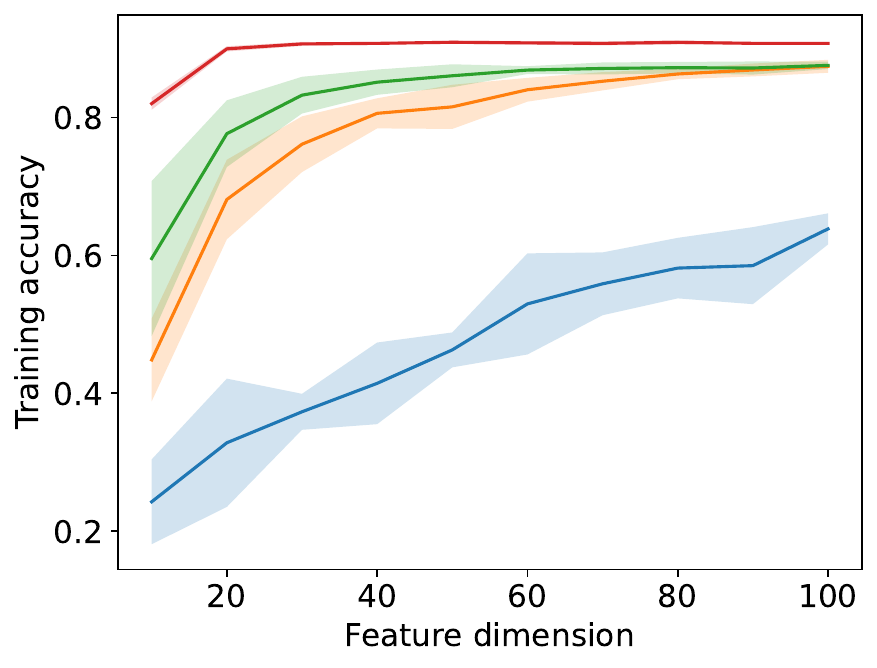}
    \includegraphics[width=0.4\linewidth]{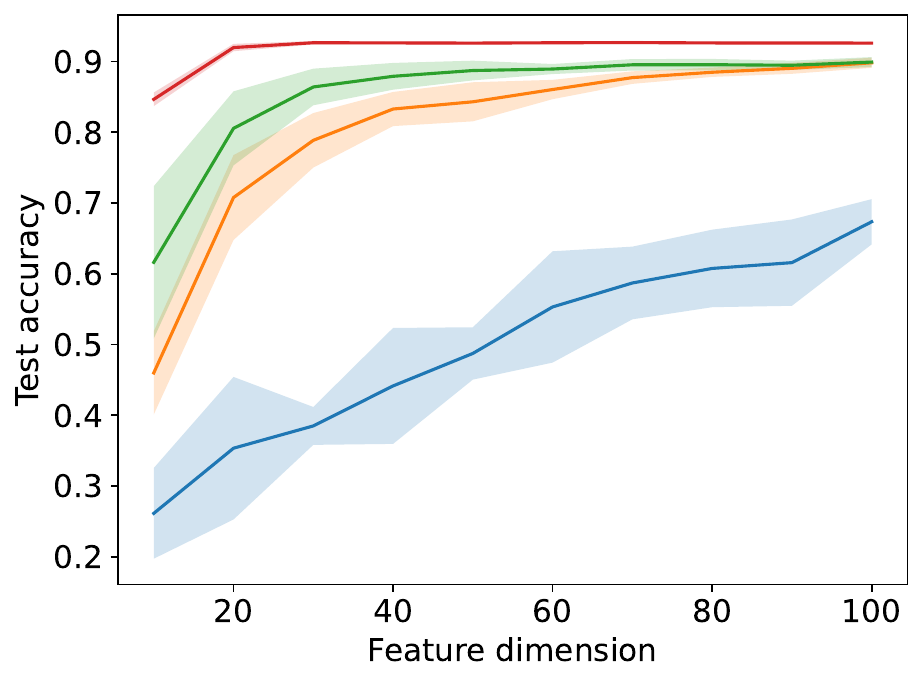}
    \caption[Results for the \texttt{MNIST} dataset (ReLU)]{Results for the \texttt{MNIST} dataset (ReLU). From left to right and top to bottom, 1) training risks, 2) test risks, 3) differences in weight norms $\Vert \mathbf{w}_{tj} - \mathbf{w}_{0j}\Vert$ with $j$'s being the neurons having the maximum difference at the end of the training, 4) difference in NTG matrices, 5) minimum NTG eigenvalues, 6) training accuracies for pruning, 7) test accuracies for pruning, 8) training accuracies for transfer learning, and 9) test accuracies for transfer learning.}
    \label{fig:mnist_all_relu}
\end{figure}

\begin{figure*}
    \centering
    \includegraphics[width=0.4\linewidth]{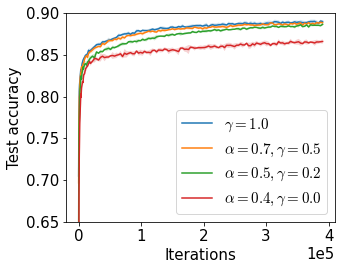}
    \includegraphics[width=0.4\linewidth]{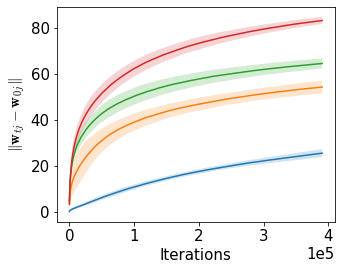}
    \includegraphics[width=0.4\linewidth]{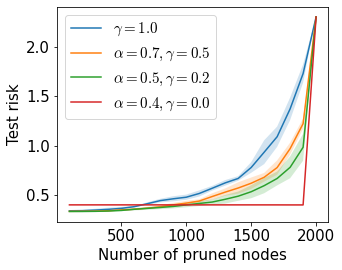}
    \includegraphics[width=0.4\linewidth]{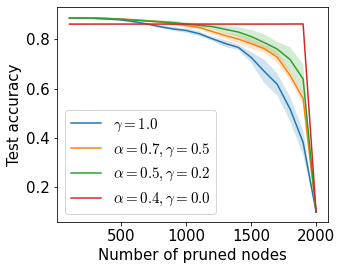}
    \caption[Results for the \texttt{CIFAR--10} dataset (ReLU)]{Results for the \texttt{CIFAR--10} dataset (ReLU). From left to right and top to bottom, 1) test accuracies through training, 2) differences in weight norms $\Vert \mathbf{w}_{tj} - \mathbf{w}_{0j}\Vert$ with $j$'s being the neurons having the maximum difference at the end of the training, 3) test risks of the pruned models, and 4) test accuracies of the pruned models.}
    \label{fig:cifar10_relu}
    \vspace{-1em}
\end{figure*}

\begin{figure*}
    \centering
    \includegraphics[width=0.4\linewidth]{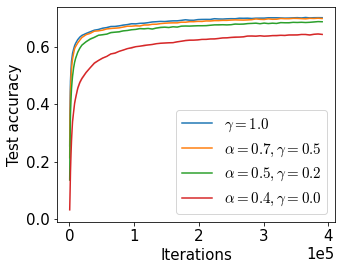}
    \includegraphics[width=0.4\linewidth]{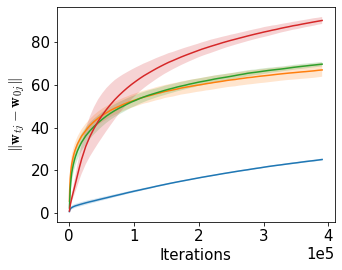}
    \includegraphics[width=0.4\linewidth]{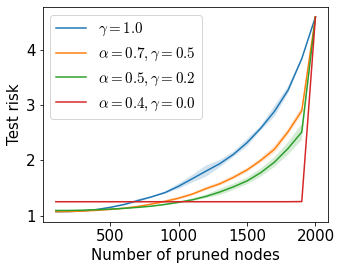}
    \includegraphics[width=0.4\linewidth]{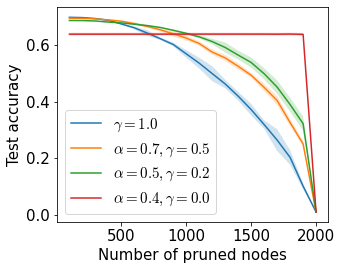}
    \caption[Results for the \texttt{CIFAR--100} dataset (ReLU)]{Results for the \texttt{CIFAR--100} dataset (ReLU). From left to right and top to bottom, 1) test accuracies through training, 2) differences in weight norms $\Vert \mathbf{w}_{tj} - \mathbf{w}_{0j}\Vert$ with $j$'s being the neurons having the maximum difference at the end of the training, 3) test risks of the pruned models, and 4) test accuracies of the pruned models.}
    \label{fig:cifar100_relu}
    \vspace{-1em}
\end{figure*}

\newpage

\section{Visualisation of the learned features.}

This section aims at visualizing the main features learned in the MNIST and CIFAR experiments reported in the main text. Inspired by \cite{Yang2021}, we plot the first two PCA components of the learned features for MNIST (Figure \ref{fig:mnist_features}) and CIFAR10 (Figure \ref{fig:cifar_features}) datasets.  For the MNIST dataset, as in \cite{Yang2021}, the figures show that the features are quasi-random with the symmetric NTK setting, while there is more separation under the asymmetric scaling. For the CIFAR10 experiment, which uses pre-trained features on ImageNet, the features of the symmetric NTK are similar to those of the pre-trained features. The features obtained by PCA better differentiates between the class.

\begin{figure}
    \centering
    \includegraphics[width=0.9\linewidth]{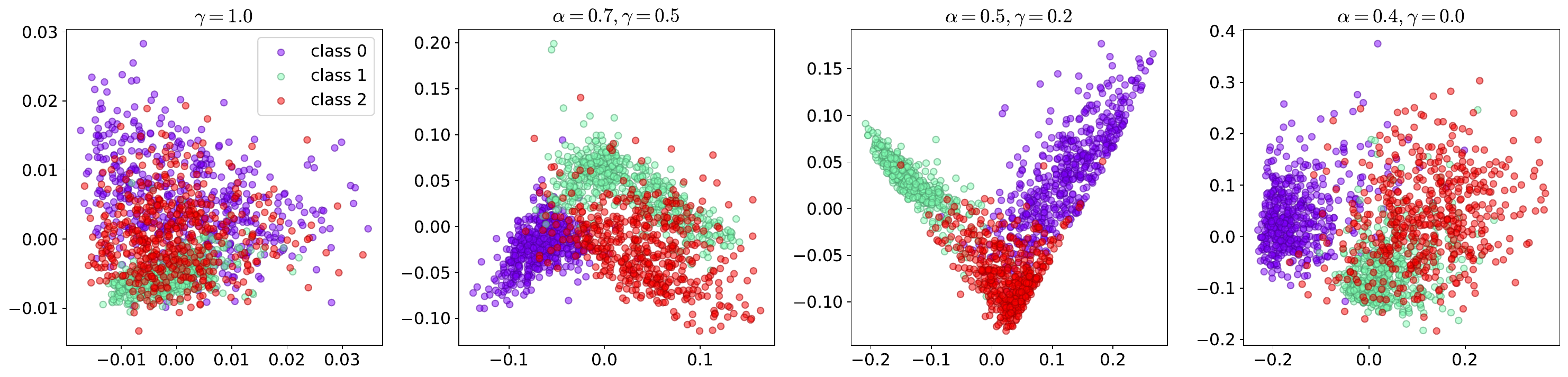}
    \caption{Visualisation of features for MNIST data. We use the top two PCA components to plot the points on a 2D space.}
    \label{fig:mnist_features}
\end{figure}
\begin{figure}
    \centering
    \includegraphics[width=0.9\linewidth]{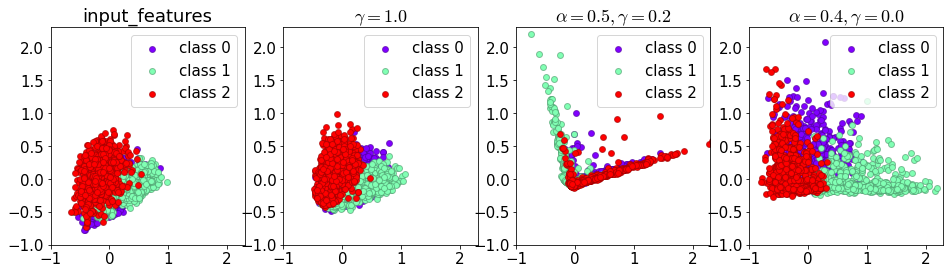}
    \caption{Visualisation of features learnt for the Cifar10 experiment. The models are trained by taking as input the hidden representation of a ResNet18 trained on ImageNet (first figure on the left). We use the top two PCA components to plot the points on a 2D space.}
    \label{fig:cifar_features}
\end{figure}

\section{Hyper-parameter transfer.}
When scaling-up neural networks, hyper-parameters tuning becomes prohibitively expensive. In practice, one performs hyper-parameter optimization on a smaller version of the model, and uses (transfers) the found values for training the larger model. However, this requires stability of the optimal parameters. As identified in \cite{yang2022tensor}, the standard pytorch implementation is not stable as the width increases, which can be a major challenge to scale-up models. In this section, we empirically show that the asymmetrical parameterization enjoys stability of the optimal learning rate. We train FFNN with a single hidden layer on Cifar10 for different width $P=1024, 2048, 4096$. We compare the standard Pytorch parameterization with the asymmetrical one ($\gamma=0.2$, $\alpha=0.5$). The results are reported in \ref{fig:lr_stability}. As expected, in the standard parameterization, the optimal learning rate shifts; as the width increases, the optimal learning rate becomes smaller. On the other hand, with the asymmetrical scaling, the optimal learning rate remains stable as the width increases

\begin{figure}
    \centering
    \includegraphics[width=0.35\linewidth]{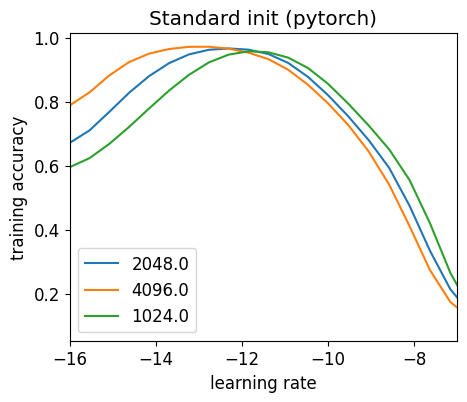}
    \includegraphics[width=0.35\linewidth]{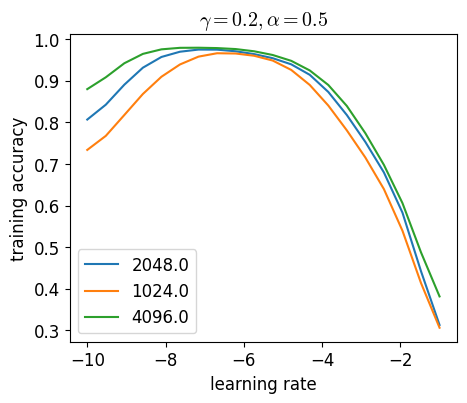}
    \includegraphics[width=0.35\linewidth]{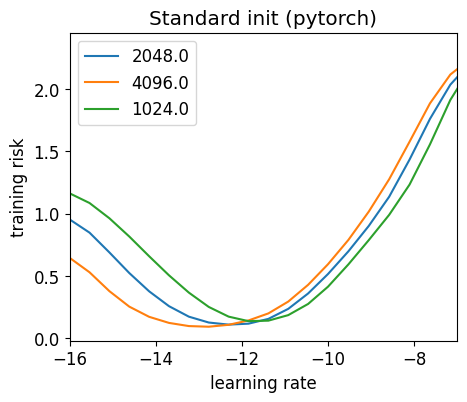}
    \includegraphics[width=0.35\linewidth]{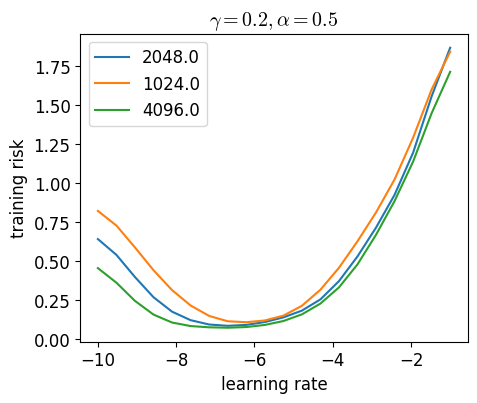}
    \caption{Stability of the optimal learning rate as the width increases. Training error in terms of (top) accuracy (bottom) cross-entropy for (left) standard parameterisation and (right) asymmetrical parameterisation.}
    \label{fig:lr_stability}
\end{figure}

\end{document}